\documentclass[11pt,twoside]{article}

\newif\ifhidecomments
\hidecommentsfalse

\usepackage[margin=1in]{geometry}
\usepackage[utf8]{inputenc}
\usepackage[T1]{fontenc}
\usepackage{twemojis}

\usepackage{amsmath,amsthm,amssymb,amsfonts}
\usepackage{mathtools}
\usepackage{mathrsfs}
\usepackage{bm}
\usepackage{bbm}
\usepackage{nicefrac}
\usepackage{nccmath} 
\usepackage{scalerel}
\allowdisplaybreaks

\makeatletter
\def\thm@space@setup{%
  \thm@preskip=0.3\baselineskip
  \thm@postskip=0.3\baselineskip
}
\makeatother

\usepackage[dvipsnames,svgnames,table]{xcolor}
\definecolor{darkblue}{rgb}{0,0,1}

\usepackage{epsf}          
\usepackage{graphicx}
\usepackage{tikz}
\usetikzlibrary{calc,arrows.meta,decorations.pathreplacing}

\usepackage{caption}
\usepackage{subcaption}    
\usepackage[section]{placeins}

\usepackage[skip=1.5mm,indent=0.5cm]{parskip}

\AtBeginEnvironment{align}{\addtolength{\jot}{1mm}}
\AtBeginEnvironment{align*}{\addtolength{\jot}{1mm}}
\makeatletter
\AtBeginEnvironment{cases}{%
  \let\orig@arraycr\@arraycr
  \def\@arraycr{\orig@arraycr[1.5mm]}%
}
\makeatother

\makeatletter
\renewcommand\section{\@startsection{section}{1}{\z@}%
  {-3.5ex plus -1ex minus -.2ex}
  {2ex plus .3ex}
  {\normalfont\Large\bfseries}}

\renewcommand\subsection{\@startsection{subsection}{2}{\z@}%
  {-3.0ex plus -1ex minus -.2ex}
  {1.5ex plus .2ex}
  {\normalfont\large\bfseries}}

\renewcommand\subsubsection{\@startsection{subsubsection}{3}{\z@}%
  {-2.0ex plus -1ex minus -.2ex}
  {1.8ex plus .2ex}
  {\normalfont\normalsize\bfseries}}
\makeatother

\usepackage{comment}
\usepackage{textcomp}
\usepackage{float}
\usepackage{array}
\usepackage{enumitem}
\usepackage{needspace}
\usepackage{etoolbox}
\AtBeginEnvironment{table}{\captionsetup{justification=centering,singlelinecheck=false}}
\usepackage{fancybox}      
\usepackage{tcolorbox}
\usepackage{lipsum}

\setlist[itemize]{%
  itemsep=0.25\baselineskip,
  topsep=0.25\baselineskip,
  parsep=1.5mm,
  partopsep=0pt
}
\setlist[enumerate]{%
  itemsep=0.25\baselineskip,
  topsep=0.25\baselineskip,
  parsep=1.5mm,
  partopsep=0pt
}

\usepackage[max2,noblocks]{authblk}
\usepackage{footmisc}

\usepackage{booktabs}
\usepackage{longtable}
\usepackage{makecell}
\usepackage{multirow}
\usepackage{tabu}
\usepackage{threeparttable}
\usepackage{threeparttablex}
\usepackage{xltabular}
\usepackage{adjustbox}

\newcolumntype{L}[1]{>{\raggedright\arraybackslash}p{#1}}
\newcolumntype{C}[1]{>{\centering\arraybackslash}p{#1}}

\usepackage{algorithm}
\usepackage{algpseudocode}

\makeatletter
\newenvironment{breakablealgorithm}
  {%
   \par\noindent
   \refstepcounter{algorithm}%
   \hrule height.6pt\relax
   \renewcommand{\caption}[2][\relax]{%
     \noindent\textbf{\fname@algorithm~\thealgorithm.} ##2\par
     \ifx\relax##1\relax
       \addcontentsline{loa}{algorithm}{\protect\numberline{\thealgorithm}##2}%
     \else
       \addcontentsline{loa}{algorithm}{\protect\numberline{\thealgorithm}##1}%
     \fi
     \vspace{2pt}\hrule\vspace{2pt}%
   }%
  }{%
   \vspace{3pt}\hrule\par
  }
\makeatother

\makeatletter
\renewcommand{\fnum@algorithm}{\fname@algorithm}
\makeatother

\usepackage[nottoc]{tocbibind}
\usepackage[toc,page,header]{appendix}
\usepackage[round]{natbib}
\usepackage{aliascnt}

\usepackage[
  colorlinks=true,
  citecolor={blue!60!black},
  urlcolor={blue!70!black},
  linkcolor={red!60!black},
  breaklinks=true,
  hypertexnames=false
]{hyperref}
\usepackage{orcidlink}
\usepackage{cleveref}
\usepackage{crossreftools}

\newlength\bshft
\def\fakebold#1{\ThisStyle{\ooalign{$\SavedStyle#1$\cr%
  \kern-\bshft$\SavedStyle#1$\cr%
  \kern\bshft$\SavedStyle#1$}}}

\newenvironment{fminipage}%
  {\begin{Sbox}\begin{minipage}}%
  {\end{minipage}\end{Sbox}\fbox{\TheSbox}}

\newcommand{\mcD}{\mathcal{D}}

\newcommand{\mcO}{\mathcal{O}}

\newcommand{\E}{\mathbb{E}}

\newcommand{\bbP}{\mathbb{P}}
\renewcommand{\P}{\mathbb{P}}

\newcommand{\R}{\mathbb{R}}

\newcommand{\1}{\ensuremath{{\sf (i)}}}
\newcommand{\2}{\ensuremath{{\sf (ii)}}}
\newcommand{\3}{\ensuremath{{\sf (iii)}}}

\DeclareMathOperator{\var}{var}

\DeclareMathOperator*{\argmin}{argmin}

\newcommand{\TV}{d_{\text{TV}}}

\NewDocumentCommand{\norm}{o m}{%
  \ensuremath{%
    \IfNoValueTF{#1}
      {\left\lVert #2 \right\rVert}
      {#1\lVert #2 #1\rVert}%
  }%
}
\newcommand{\ip}[2]{\left\langle #1,#2 \right\rangle}

\newcommand{\indep}{\perp\hspace{-2.3mm}\perp}

\newtheorem{theorem}{Theorem}

\newaliascnt{corollary}{theorem}
\newtheorem{corollary}[corollary]{Corollary}
\aliascntresetthe{corollary}

\newaliascnt{lemma}{theorem}
\newtheorem{lemma}[lemma]{Lemma}
\aliascntresetthe{lemma}

\newaliascnt{proposition}{theorem}
\newtheorem{proposition}[proposition]{Proposition}
\aliascntresetthe{proposition}

\newtheorem{claim}{Claim}

\newtheorem{remark}{Remark}

\crefname{theorem}{theorem}{theorems}
\Crefname{theorem}{Theorem}{Theorems}

\crefname{lemma}{lemma}{lemmas}
\Crefname{lemma}{Lemma}{Lemmas}

\crefname{proposition}{proposition}{propositions}
\Crefname{proposition}{Proposition}{Propositions}

\crefname{corollary}{corollary}{corollaries}
\Crefname{corollary}{Corollary}{Corollaries}

\crefname{claim}{claim}{claims}
\Crefname{claim}{Claim}{Claims}

\crefname{assumption}{assumption}{assumptions}
\Crefname{assumption}{Assumption}{Assumptions}

\Crefname{equation}{Eq.}{Eqs.}

\makeatletter
\newcommand{\ostar}{\mathbin{\mathpalette\make@circled\star}}
\newcommand{\make@circled}[2]{%
  \ooalign{$\m@th#1\smallbigcirc{#1}$\cr\hidewidth$\m@th#1#2$\hidewidth\cr}%
}
\newcommand{\smallbigcirc}[1]{%
  \vcenter{\hbox{\scalebox{0.77778}{$\m@th#1\bigcirc$}}}%
}
\makeatother

\NewDocumentCommand{\tens}{e{_^}}{%
  \mathbin{\mathop{\otimes}\displaylimits
    \IfValueT{#1}{_{#1}}
    \IfValueT{#2}{^{#2}}%
  }%
}

\newlength{\widebarargwidth}
\newlength{\widebarargheight}
\newlength{\widebarargdepth}

\newcommand{\BER}{\ensuremath{\mathrm{Ber}}}

\newcommand{\thetastar}{\ensuremath{\theta^{\star}}}

\long\def\ignore#1{}

\definecolor{sapphire}{rgb}{0.03, 0.15, 0.4}            
\definecolor{skobeloff2}{rgb}{0.0, 0.45, 0.55}          
\definecolor{mediumorchid}{rgb}{0.73, 0.33, 0.83}       
\definecolor{candyapplered}{rgb}{1.0, 0.03, 0.0}         
\definecolor{cadmiumred}{rgb}{0.89, 0.0, 0.13}           
\definecolor{darkpastelred}{rgb}{0.76, 0.23, 0.13}       
\definecolor{darkcoral}{rgb}{0.8, 0.26, 0.17}            
\definecolor{limegreen}{rgb}{0.2, 0.64, 0.1}           
\definecolor{darkspringgreen}{rgb}{0.09, 0.45, 0.27}   
\definecolor{darkbrown}{rgb}{0.4, 0.26, 0.13}          
\definecolor{mahogany}{rgb}{0.75, 0.25, 0.0}           
\definecolor{pumpkin}{rgb}{1.0, 0.46, 0.09}            
\definecolor{tangerine}{rgb}{0.95, 0.52, 0.0}        
\definecolor{ochreyellow}{rgb}{0.8, 0.34, 0.0}     

\newcommand{\blue}[1]{\textcolor{blue}{#1}}

\newcommand{\red}[1]{\textcolor{red}{#1}}
\newcommand{\dsgreen}[1]{\textcolor{darkspringgreen}{#1}}
\newcommand{\brown}[1]{\textcolor{brown}{#1}}
\newcommand{\magenta}[1]{\textcolor{magenta}{#1}}

\ifhidecomments
    \newcommand{\apcomment}[1]{}
    \newcommand{\kav}[1]{}
    \newcommand{\jrc}[1]{}
    \newcommand{\jrcupdate}[1]{}
    \newcommand{\rs}[1]{}
\else
    \newcommand{\apcomment}[1]{{\noindent\bfseries\blue{AP --- #1}}}
    \newcommand{\kav}[1]{{\noindent\bfseries\brown{KAV --- #1}}}
    \newcommand{\jrc}[1]{\noindent{\bfseries\red{JRC --- #1}}}
    \newcommand{\jrcupdate}[1]{\noindent{\bfseries\dsgreen{JRC --- #1}}}
    \newcommand{\rs}[1]{\noindent{\bfseries\magenta{RS --- #1}}}
\fi

\usepackage{xcolor}
\usepackage{graphicx}
\usepackage{pgfplots}
\usepackage{pgfplotstable}
\usepgfplotslibrary{fillbetween}
\usetikzlibrary{plotmarks}

\pgfplotsset{compat=1.18}

\providecommand{\norm}[1]{\lVert#1\rVert_2}
\providecommand{\thetastar}{\theta^\star}

\definecolor{CompleteDataColour}{HTML}{9E9E9E}
\definecolor{CompleteCaseColour}{HTML}{E69F00}
\definecolor{ZeroColour}{HTML}{D55E00}
\definecolor{GaussianMeanColour}{HTML}{0072B7}
\definecolor{GaussianMLEColour}{HTML}{56B4E9}
\definecolor{ProposedColour}{HTML}{009F73}

\pgfplotsset{
    discard unless method/.style={
        x filter/.code={
            \edef\MethodInCSV{\thisrowno{1}}
            \edef\RequestedMethod{"#1"}
            \ifx\MethodInCSV\RequestedMethod
            \else
                
            \fi
        }
    }
}

\newcommand{\PlotBand}[4]{%
    \addplot[
        name path=#2upper,
        draw=none,
        forget plot
    ]
    table[
        col sep=comma,
        x index=0,
        y index=4,
        discard unless method={#3}
    ]{#1};

    \addplot[
        name path=#2lower,
        draw=none,
        forget plot
    ]
    table[
        col sep=comma,
        x index=0,
        y index=3,
        discard unless method={#3}
    ]{#1};

    \addplot[
        fill=#4,
        fill opacity=0.15,
        draw=none,
        forget plot
    ]
    fill between[of=#2upper and #2lower];
}

\newcommand{\PlotCurve}[7]{%
    \addplot[
        color=#3,
        #4,
        line width=#6,
        mark=#5,
        mark size=#7,
        mark options={
            solid,
            draw=#3,
            fill=#3,
            line width=0.5pt
        }
    ]
    table[
        col sep=comma,
        x index=0,
        y index=2,
        discard unless method={#2}
    ]{#1};
}

\newcommand{\MaybeLegend}[2]{%
    \ifnum#1=1\relax
        \addlegendentry{#2}%
    \fi
}

\newcommand{\PlotAllMethods}[3]{%
    \PlotBand{#1}{#2cd}{Complete-data MLE}{CompleteDataColour}
    \PlotBand{#1}{#2cc}{Complete-case MLE}{CompleteCaseColour}
    \PlotBand{#1}{#2zi}{Zero imputation}{ZeroColour}
    \PlotBand{#1}{#2gm}{Gaussian mean imputation}{GaussianMeanColour}
    \PlotBand{#1}{#2gl}{Gaussian MLE}{GaussianMLEColour}
    \PlotBand{#1}{#2sa}{Proposed SA}{ProposedColour}

    \PlotCurve{#1}{Complete-data MLE}
        {CompleteDataColour}{solid}{diamond*}{1.2pt}{1.9pt}
    \MaybeLegend{#3}{Oracle logistic}

    \PlotCurve{#1}{Complete-case MLE}
        {CompleteCaseColour}
        {dash pattern=on 7pt off 3pt}
        {x}{0.72pt}{1.8pt}
    \MaybeLegend{#3}{Complete-case MLE}

    \PlotCurve{#1}{Zero imputation}
        {ZeroColour}
        {dash pattern=on 2pt off 2pt}
        {triangle*}{0.72pt}{1.8pt}
    \MaybeLegend{#3}{Zero imputation}

    \PlotCurve{#1}{Gaussian mean imputation}
        {GaussianMeanColour}
        {dash pattern=on 1pt off 3pt on 4pt off 3pt}
        {asterisk}{0.72pt}{2.1pt}
    \MaybeLegend{#3}{Gaussian mean imputation}

    \PlotCurve{#1}{Gaussian MLE}
        {GaussianMLEColour}
        {densely dotted}
        {square*}{0.72pt}{1.7pt}
    \MaybeLegend{#3}{Gaussian MLE}

    \PlotCurve{#1}{Proposed SA}
        {ProposedColour}{solid}{*}{1.2pt}{1.8pt}
    \MaybeLegend{#3}{Proposed algorithm}
}

\newif\ifshowtcb
\showtcbfalse

\ifshowtcb
\else
  \renewenvironment{tcolorbox}[1][]{}{}
\fi

\newcommand{\PopOp}{\Psi}
\newcommand{\StochOp}{\widehat{\PopOp}}

\begin{document}

\begin{center}

\vspace*{2sp}
{\LARGE\bfseries
\makebox[\linewidth][c]{Assumption-lean logistic regression with missing covariates}
}
\vspace*{1.5mm}

{\large{
\begin{tabular}{ccc}
Jyotishka Ray Choudhury$^\star$, Kabir Aladin Verchand$^\circ$, \\[0.5mm]
Richard J. Samworth$^\ddagger$, Ashwin Pananjady$^{\star, \dagger}$
\end{tabular}
}}
\vspace*{4mm}

\begin{tabular}{c}
$^\star$School of Industrial and Systems Engineering, Georgia Institute of Technology \\[0.5mm]
$^\circ$Department of Data Sciences and Operations, University of Southern California \\[0.5mm]
$^\ddagger$Statistical Laboratory, University of Cambridge \\[0.5mm]
$^\dagger$School of Electrical and Computer Engineering, Georgia Institute of Technology
\end{tabular}

\vspace*{4mm}

\today

\vspace*{5.5mm}

\begin{abstract}
Missing covariates are frequently encountered in supervised learning problems, and classical methods for estimation using such data use carefully chosen imputation schemes for missing data, or likelihood approximations that lead to nonconvex $M$-estimation problems. These methods and their relatives are suitable for scenarios in which the covariate distribution is known (or can be reliably estimated), and more broadly, have enjoyed tremendous success in linear models. But even in basic nonlinear problems such as logistic regression in moderate dimensions, such methods can experience drastic failure modes when the covariate distribution is unknown.

Motivated by the need for reliable alternatives, we consider the problem of parameter estimation in logistic regression with missing covariates.
Crucially, we operate in the assumption-lean setting where the covariate distribution is unknown (but bounded). We design a stochastic approximation method that is based on $Z$-estimation with a novel monotone operator, and establish that our algorithm is computationally efficient and achieves provable signal recovery at parametric rates under the hypothesis that covariates are missing completely at random. Our theory sharply characterizes the $\ell_2^2$ risk of the estimator in terms of the missingness profile, accommodating heterogeneous observation probabilities. Importantly, it shows that our method always outperforms the de facto ``complete-case'' estimator that ignores observations with any missing data. Even in the setting with homogeneous missingness (in which each covariate is observed independently with probability $q$), our bounds exhibit intricate and nonstandard dependence on $q$ that can yield significant improvements over using only complete cases. We complement our upper bounds with new information-theoretic lower bounds that show that this intricate dependence on $q$ is fundamental in a minimax sense.
\end{abstract}
\end{center}

\tableofcontents


\section{Introduction}
\label{sec:intro}

Regression methods are often developed under the assumption that all covariates are fully observed. In practice, however, missing covariates arise routinely in almost every domain of application; for example, 
through non-response in surveys and observational studies~\citep{Dardanoni2011impute}, 
measurement failures in sensor networks~\citep{ZhangEtAl2013imputevsignoresensor,DecorteEtAl2024wirelesssensor}, 
integration of data collected from multiple sources~\citep{YuLiShenLiu2020multimodal,XueQu02021multisource},
and in electronic health records~\citep{WellsEtAl2013missingEHR,Groenwold2020EHR}, among others. 
Classical approaches for handling missing covariates include 
complete-case analysis~\citep{GreenlandFinkle1995epidemiologic,KoulEtAl-2012-transfercomplete,RossEtAl-2020-completecase}, 
imputation methods such as mean or regression imputation~\citep{Wilks1932missing,Buck-1960-regressionimpute, AfifiElashoff1966missing, LeMorvanEtAl2021imputationpredict,AymeEtAl2023imputation,BertsimasEtAl2024simpleimputation}, 
and model-based procedures---both Bayesian~\citep{IbrahimChenLipsitz2002bayesianGLM,HuangChenIbrahim2005bayesianGLM,KalayliogluOzturk2013bayesianGLM} and likelihood-based~\citep{Anderson-1957-MLEmissing,Ibrahim1990incompleteGLM,Little1992missingX}---that specify a distribution for the missing covariates.
In particular, a common strategy is to first construct a completed version of the covariates or a functional of the covariates, either through direct imputation or using an estimated model for the missing values, and then apply some procedure to the completed data. 
While variants of such strategies have been shown to provably succeed in linear models~\citep[e.g.,][]{loh2012high,datta2017cocolasso,KabirEtAl2020imputationHDLR,AymeEtAl2023imputation}, they can behave poorly in nonlinear regression problems~\citep[see, e.g.,][]{VachSchumacher1993logistic,LoboEtAl2024primerclassification,ma2026deep}, especially in high dimensions. 

To see the core issue with, e.g., imputation, consider a well-specified logistic regression model with response $Y \in \{-1,+1\}$, covariate $X \in \mathbb{R}^d$ with $\mathbb{E}[X] = 0$, 
and regression parameter $\thetastar \in \Theta \subseteq \mathbb{R}^d$:
\begin{equation}
\label{eq:logistic-def-basic}
    \mathbb P\bigl(Y=1 \bigm| X\bigr)
    =
    \sigma\bigl(X^\top\thetastar\bigr),
    \qquad
    \sigma(t):=\frac{1}{1+e^{-t}},
\end{equation}
and suppose that the missing coordinates are replaced by their expected value zero. If the covariates are missing completely at random (MCAR), then there is a masking vector $M$ independent of $X,Y$, and we observe~$M$ and $Z := X \odot M$ instead of $X$. Logistic regression applied to $Z$ targets the population minimizer
\begin{equation}
\label{eq:zero-impute-obj}
    \theta_{\scriptscriptstyle\mathrm{ZI}}
    \in
    \argmin_{\theta \in \Theta} \,
    \underbrace{
    \mathbb{E}\left[
        \log\bigl\{1+\exp(-YZ^\top\theta)\bigr\}
    \right]}_{\mathscr{L}_{\scalebox{.9}{$\scriptscriptstyle\mathrm{ZI}$}}(\theta)}.
\end{equation}

Generally speaking, the estimate $\theta_{\scriptscriptstyle\mathrm{ZI}}$ will not coincide with the true parameter $\thetastar$. Indeed, at the true parameter, the gradient of $\mathscr{L}_{\scriptscriptstyle\mathrm{ZI}}$ satisfies
\begin{align*}
    \nabla \mathscr{L}_{\scriptscriptstyle\mathrm{ZI}}(\thetastar)
    &=
    \mathbb{E}\biggl[
        -YZ \cdot \frac{\exp(-YZ^\top\thetastar)}{1+\exp(-YZ^\top\thetastar)}
    \biggr] \\ 
    &=
    -\,\mathbb{E}\left[
        YZ \cdot \sigma(-YZ^\top\thetastar)
    \right] \\ 
    &=
    \mathbb{E}\Big[
        Z\cdot\mathbb{E}\bigl[
            -Y\sigma(-YZ^\top\thetastar) \bigm| X,M
        \bigr]
    \Big]
    =
    \mathbb{E}\Big[
        Z\Big\{
            \sigma(Z^\top\thetastar) - \sigma(X^\top\thetastar)
        \Big\}
    \Big],
\end{align*}
which is non-zero in general. Thus, zero-imputing the covariates and minimizing the function $\mathscr{L}_{\scriptscriptstyle\mathrm{ZI}}$ may no longer target the original complete-data parameter $\theta^\star$. Notably, this discrepancy occurs at the population level, i.e., even if we had access to an infinite amount of data.

The source of this population-level discrepancy is the delicate interaction between missingness and the nonlinearity of the logistic link function $\sigma(\cdot)$. Concretely, for a fixed missingness pattern, let $S \subseteq [d]$ denote the set of observed coordinates. By the tower property of conditional expectation, 
\begin{align}
    \mathbb P(Y=1\mid X_S)
    &=
    \mathbb{E}\bigl[
        \mathbb P(Y=1\mid X) \bigm| X_S
    \bigr]
    =
    \mathbb{E}\Bigl[
        \sigma\big(X_S^\top\thetastar_S + X_{S^c}^\top\thetastar_{S^c}\big) \Bigm| X_S
    \Bigr],
\end{align}
which is generally not itself a logistic function of $X_S$ since 
$
\mathbb{E}\big[\sigma\bigl(h(X)\bigr) \,|\, X_S\big] \neq \sigma\big(\mathbb{E}\,\bigl[h(X) \,|\, X_S\bigr]\big).
$ 
Any imputation-based procedure uses a surrogate (e.g.~$Z$ in zero-imputation) with the hope that the response can be modeled via logistic regression using this surrogate and the same population parameter. But such a surrogate may not be obtainable via imputation alone.

As an alternative approach, if the law $P_X$ of the covariates is known, then one may attempt to integrate over the missing coordinates, either explicitly or through likelihood-based procedures such as the expectation--maximization (EM) algorithm \citep{DempsterLairdRubin1977EMalgo,Wu1983EMconvergence} or its variants (see, e.g., \citet{Roche2012EMvariants}) to approximate $\P(Y=1 \,|\, X_S)$. If $P_X$ is unknown, however, then estimating the conditional distribution of the missing coordinates given the observed covariates corresponds to a density estimation problem, which can be prohibitive in high dimensions~\citep{HasminskiiIbragimov1990densityestm}. While the curse of dimensionality can be avoided by assuming some parametric form for $P_X$ (such as Gaussianity), procedures designed under this assumption may fail when the assumption is violated. Indeed, most real data sets are non-Gaussian and there is a need for procedures that do not make strong parametric assumptions on the covariate distribution.

\subsection{A motivating numerical experiment}

As an illustration of the difficulties in this problem, \Cref{fig:intro-simulation} considers a well-specified logistic model with bounded, non-Gaussian covariates in dimension $d$ and independent coordinate-wise homogeneous MCAR missingness in which each coordinate is observed i.i.d. (and independently of the data) with probability $q$. We consider two settings, viz. $(d,q) = (10,0.40)$ and $(100,0.85)$; full details of this experiment are provided in \Cref{sec:non-Gaussian-simulation-details}. 

With dotted lines, we plot the estimation error of four existing procedures:
(a) \emph{Complete-case logistic regression}, which performs vanilla logistic regression on only those observations where there is no missingness; (b) \emph{Zero-imputation} followed by logistic regression, as described in \Cref{eq:zero-impute-obj}; (c)~\emph{Gaussian conditional-mean imputation}, which imputes the missing coordinates by their conditional means assuming the data is Gaussian with the same (known) first two moments, so $X_{\mathrm{mis}}$ is imputed with $\mathbb{E}_G[X_{\mathrm{mis}} \mid X_{\mathrm{obs}}]$, followed by logistic regression; and (d) \emph{Gaussian MLE}, which uses the same working model, but instead numerically integrates the logistic likelihood over the conditional distribution of the missing covariates, with contribution $\mathbb{E}_G[\sigma(YX^\top\theta)\mid X_{\mathrm{obs}}]$ to the observed likelihood. 
In addition, the \emph{complete-data MLE}, which assumes oracle access to the covariates (without missingness), is included as an oracle benchmark and its performance is plotted with the gray solid line.


\begin{figure}[H]
    \centering

    \def\LeftCSV{%
        sim-results/%
        non_gaussian_estimation_summary_d_10_q_0.4_seed_2026.csv}
    \def\RightCSV{%
        sim-results/%
        non_gaussian_estimation_summary_d_100_q_0.85_seed_2026.csv}

    \def\PanelGap{0.015\textwidth}

    \pgfplotslegendfromname{nonGaussianLegend}

    \vspace{-2mm}

    \noindent
    \begin{minipage}[t]{0.016\textwidth}
        \vspace{0pt}
        \parbox[c][1.90in][c]{\linewidth}{%
            \centering
            \rotatebox{90}{%
                {\footnotesize $\|\widehat\theta-\thetastar\|_{2}$}%
            }%
        }
    \end{minipage}%
    \begin{minipage}[t]{0.482\textwidth}
        \vspace{0pt}
        \raggedleft

        \begin{tikzpicture}[trim axis left,trim axis right]
            \begin{axis}[
                width=0.84\linewidth,
                height=1.90in,
                scale only axis,
                xmode=log,
                ymode=log,
                log basis x=10,
                log basis y=10,
                xmin=180,
                xmax=1.1e6,
                ymin=1e-2,
                ymax=3,
                axis lines=box,
                xlabel={Sample size $(n)$},
                xtick pos=bottom,
                ytick pos=left,
                yticklabel pos=left,
                xtick={
                    200,500,1000,2000,5000,10000,
                    20000,50000,100000,250000,500000,1000000
                },
                xticklabels={
                    {},{},{$10^3$},{},{},{$10^4$},
                    {},{},{$10^5$},{},{},{$10^6$}
                },
                grid=major,
                major grid style={
                    draw=gray!18,
                    line width=0.3pt
                },
                axis line style={
                    draw=black!55,
                    line width=0.5pt
                },
                tick style={black!55},
                tick align=outside,
                tick label style={font=\scriptsize},
                label style={font=\footnotesize},
                unbounded coords=discard,
                filter discard warning=false,
                legend to name=nonGaussianLegend,
                legend columns=3,
                legend cell align=left,
                legend style={
                    draw=none,
                    font=\footnotesize,
                    /tikz/every even column/.append style={
                        column sep=0.9em
                    },
                    row sep=0.15em
                }
            ]
                \PlotAllMethods{\LeftCSV}{left}{1}
            \end{axis}
        \end{tikzpicture}

        \par\smallskip
        \makebox[\linewidth][r]{%
            \makebox[0.84\linewidth][c]{%
                \footnotesize
                \textbf{(a)}\; $d=10,\,q=0.40$%
            }%
        }
    \end{minipage}%
    \hspace{\PanelGap}%
    \begin{minipage}[t]{0.482\textwidth}
        \vspace{0pt}
        \raggedright

        \begin{tikzpicture}[trim axis left,trim axis right]
            \begin{axis}[
                width=0.84\linewidth,
                height=1.90in,
                scale only axis,
                xmode=log,
                ymode=log,
                log basis x=10,
                log basis y=10,
                xmin=180,
                xmax=1.1e6,
                ymin=1e-2,
                ymax=3,
                axis lines=box,
                xlabel={Sample size $(n)$},
                xtick pos=bottom,
                ytick pos=right,
                yticklabel pos=right,
                xtick={
                    200,500,1000,2000,5000,10000,
                    20000,50000,100000,250000,500000,1000000
                },
                xticklabels={
                    {},{},{$10^3$},{},{},{$10^4$},
                    {},{},{$10^5$},{},{},{$10^6$}
                },
                grid=major,
                major grid style={
                    draw=gray!18,
                    line width=0.3pt
                },
                axis line style={
                    draw=black!55,
                    line width=0.5pt
                },
                tick style={black!55},
                tick align=outside,
                tick label style={font=\scriptsize},
                label style={font=\footnotesize},
                unbounded coords=discard,
                filter discard warning=false
            ]
                \PlotAllMethods{\RightCSV}{right}{0}
            \end{axis}
        \end{tikzpicture}

        \par\smallskip
        \makebox[\linewidth][l]{%
            \makebox[0.84\linewidth][c]{%
                \footnotesize
                \textbf{(b)}\; $d=100,\,q=0.85$%
            }%
        }
    \end{minipage}

\caption[]{
\emph{Estimation error comparison under MCAR covariate missingness with coordinate-wise observation probability $q$.} The oracle logistic MLE has access to the complete, unmasked covariates, and thus serves as the gold standard. The remaining curves correspond to the complete-case MLE, logistic MLE after zero or Gaussian conditional-mean imputation, the observed-data MLE under a moment-matched Gaussian model, and our proposed SA-based estimator.
The curves plot the RMSE $\|\widehat\theta-\thetastar\|_2$ over $100$ replications against the sample size $n$. Shaded regions give pointwise $95\%$ confidence intervals across replications. 
}
\label{fig:intro-simulation}
\end{figure}

As shown by the figure, the oracle estimator is consistent in both experiments, so it is possible to perform consistent estimation at least when there is no missingness. However, all the imputation-based and Gaussian likelihood-based procedures suffer a substantial estimation error even for large sample sizes and are inconsistent in these experiments. As described above, this is because these procedures need not target $\thetastar$ for a general covariate distribution. The complete-case MLE avoids this form of misspecification, but discards every incomplete observation. However, under coordinate-wise MCAR, a covariate vector is fully observed only with probability $q^d$, which can be extremely small when $q$ is small or $d$ is large. Consequently, using only complete cases leads to poor performance in both cases in Figure~\ref{fig:intro-simulation}.

Given that existing methods fail even in this simple example, the following basic question forms the primary motivation of this paper:
\begin{quote}
\centering
\emph{
Can the parameter $\thetastar$ of the original  logistic regression model be estimated from incomplete covariates without modeling or estimating the unknown covariate distribution, and what is the intrinsic statistical cost of missingness?
}
\end{quote}

We study this question under a missing-completely-at-random (MCAR) mechanism with each coordinate observed independently, given that it appears to be unresolved even in this basic setting. We allow for heterogeneous missingness probabilities across coordinates. Our aim is to understand whether the original logistic parameter $\thetastar$ can be recovered without any knowledge of the covariate distribution, and how the estimation error depends on the severity of missingness. Ideally, we would like both information-theoretic and methodological answers to these questions. Indeed, one of our main contributions---described in detail in \Cref{sec:contributions} below---is an algorithm (plotted with a green solid line in \Cref{fig:intro-simulation}) that achieves consistent estimation when existing methods fail. This estimator only has access to missing data, and yet performs comparably to the oracle MLE that performs logistic regression using the covariates without missingness.

\subsection{Contributions and organization} \label{sec:contributions}

Our main contributions are summarized below.
\begin{enumerate}[label=\arabic*.]

\item We propose an algorithm for estimating the logistic regression parameter $\thetastar$ from incomplete covariates with MCAR missingness. The algorithm is assumption-lean, in the sense that it does not rely on any information about the covariate distribution and so does not impute missing entries. Instead, we formulate a $Z$-estimation problem by constructing a monotone operator at the population level, ensuring that the unique zero of this operator is given by the desired signal $\thetastar$. To obtain a data-driven estimation procedure, we construct unbiased estimates of this operator from each incomplete observation and utilize these estimates within a projected stochastic approximation algorithm (\Cref{alg:Proj-SA}).

\item We provide a non-asymptotic analysis of the resulting estimator under the coordinate-wise MCAR missingness mechanism. Under the assumptions that $\norm{X}_2\le1$, $\E[XX^\top]\succeq\mu I$, and $\norm{\thetastar}_2\le R$, we obtain an error bound that depends on the interaction between dimension, the constants $(R, \mu)$, and the observation probabilities. Notably, our estimator is computationally efficient and always improves on the complete-case estimator, in all regimes of the problem. We also showcase interpretable improvements over complete-case estimation when there is homogeneous missingness in high-dimensional problems. These results are presented in \Cref{subsec:main-upper}.

\item We establish a minimax lower bound over the above class of covariate distributions, which matches our upper bound up to universal constants when the missingness mechanism is homogeneous. 
This lower bound is the result of one of our main technical innovations:
we carefully construct hypotheses conforming to well-specified logistic regression models such that these are indistinguishable when too few covariates are observed.
Notably, our construction utilizes a Boolean function to conceal carefully designed parameter perturbations in the appropriate likelihood ratios under missingness. 
The main lower bound result is presented in \Cref{subsec:main-lower}.

\end{enumerate}

The rest of this paper is organized as follows. In \Cref{sec:main}, we formally introduce the setup, develop our distribution-free stochastic approximation procedure, state an upper bound on the risk of the algorithm, and establish a matching information-theoretic minimax lower bound.
The proof of the main upper bound result is given in \Cref{sec:proof-upper}, while \Cref{sec:proof-lower} proves the minimax lower bound. We conclude with a discussion of possible extensions in Section~\ref{sec:discussion}. Proofs of supporting technical lemmas are deferred to the appendices.

\subsection{Related work}\label{sec:related}

The missing data literature is vast, and we cannot hope to survey it here. We restrict our discussion to related approaches, both to the logistic regression problem with missing data and simpler variants. Our discussion is organized under three headings. 

\paragraph{Likelihood-based methods and their relatives.}  When data are missing completely at random (or more broadly, missing at random~\citep{seaman2013meant}), the missingness mechanism is ignorable and likelihood-based methods can be utilized based on the observed data.  We direct the reader to the textbook by~\citet[Part II]{little2019statistical} for an exhaustive overview of likelihood-based methods in the presence of missing data. Since the negative log-likelihood is typically nonconvex in most of these problems, the most prevalent technique for optimizing it is the EM algorithm developed by~\citet{DempsterLairdRubin1977EMalgo}; see also~\citet{Wu1983EMconvergence,Ibrahim1990incompleteGLM,horton1999maximum}.  While valid, likelihood-based methods suffer from two distinct issues: (i) they require knowledge of the distribution of the covariates and (ii) since the optimization problem is nonconvex, it may be difficult to guarantee that we reach statistically useful solutions in high dimensions.  Indeed, even if the covariate distribution is known and algorithms such as EM~\citep{DempsterLairdRubin1977EMalgo,Wu1983EMconvergence} can be employed, their convergence properties are delicate; in high dimensions, provably optimal solutions are known to be attainable only in special cases such as the linear model with Gaussian covariates~\citep{balakrishnan2017statistical}.  Aside from EM being analyzable, the linear model is also special for other reasons. For example, the negative log-likelihood under Gaussian noise is quadratic in the covariates, and good optimization algorithms can be designed provided the covariance matrix can be estimated well~\citep{lounici2014high}. In particular, it is now well known that computationally efficient optimization for the linear model with missing data need not rely on estimating the full joint distribution of the covariates~\citep{loh2012high,risebrow2026semi}. Another example of such a phenomenon is in learning Ising models with missing data~\citep{goel2019learning}.

\paragraph{Imputation-based approaches.} Most of the existing work on logistic regression (or more broadly nonlinear regression) with missing covariates operates in the low-dimensional and/or asymptotic regime.  In this regime, a complete-case analysis is consistent and the focus is on valid inference and efficiency.  As such, variants of multiple imputation~\citep{rubin1978multiple,van2011mice} form the gold standard.  In frequentist approaches, this requires the computation of an initial consistent estimate of the coefficients~\cite[\S 14.2]{tsiatis2006semiparametric} which is in turn used to form several complete data sets~\citep[see][for asymptotic properties]{wang1998large}.  Importantly, these approaches are not distribution-free and typically require knowledge of the joint distribution of the covariates in order to perform imputation.  More recently, approaches inspired by prediction-powered inference (PPI)~\citep{angelopoulos2023prediction} have been proposed to leverage black-box methods for imputation.  For instance,~\citet{zhao2025imputation}, \citet{chen2025unified}, \citet{kluger2025prediction}, \citet{duan2025imputation} and \citet{xu2025blockwise} propose variants of this theme when covariates can be missing.  We note that these works consider inference of the population coefficients and are focused on using black-box imputation to improve upon a complete-case analysis.  By contrast, and as we establish shortly,
our estimator is never worse than the complete-case estimator, and when the dimension is large, can bypass the curse of dimensionality incurred by a complete-case analysis. Thus, we offer a significant, imputation-free improvement over complete-case analysis under MCAR covariates.  

\paragraph{Semiparametric approaches.}
The PPI-inspired methods~\citep[e.g.,][]{zhao2025imputation,chen2025unified} are members of a much broader class of semiparametric methods, which form an attractive alternative to likelihood-based methods.
In particular,~\citet{robins1994estimation} propose a general class of augmented inverse propensity weighting (AIPW) methods. It is known that among AIPW methods, there always exists a semiparametrically efficient estimator; however, in general this estimator may not admit a closed form and may be difficult to compute.  In order to deal with this, it is common to restrict these estimators in some way~\citep[see, e.g.,][\S 4.1]{sun2018inverse}.  Interestingly, the estimator that we propose---based on a particular estimating equation---can be understood as a member of the AIPW family. It is simple to write down, computationally efficient to solve, and is optimal in a minimax sense.  We refer the reader to~\citet{tsiatis2006semiparametric} for a textbook treatment of these approaches in the more general setting of coarsened data and~\citet{stefanski1985covariate,bickel1987efficient} for similar semiparametric methods in the related error-in-variables model for logistic regression.

\subsection{Notation}

For a positive integer $k$, we write $[k]:=\{1,\ldots,k\}$. For vectors $u,v\in\R^d$, we use $\ip{u}{v}:=u^\top v$ to denote their Euclidean inner product. We use $\norm{u}$ and $\norm{u}_2$ interchangeably to denote the Euclidean $(\ell_2)$ norm of vector $u$; no other norms are used in the paper. For $S\subseteq[d]$, we write $u_S:=(u_j)_{j\in S}$ and $S^c:=[d]\setminus S$. We let $e_j$ denote the $j$-th standard basis vector, $\mathbf{1}_k$ the all-ones vector in $\R^k$, and $I_k$ the $k\times k$ identity matrix, where we write $I$ when the dimensions are clear from context. For symmetric matrices $A,B$, we write $A\succeq B$ to mean that $A-B$ is positive semi-definite. We write $\lambda_{\min}(A)$ for the smallest eigenvalue of $A$.
For a random variable $W$, $\mathsf{Law}(W)$ denotes its distribution. 
We write $\mathcal{P}(\mathcal{X})$ for the set of probability measures on a measurable space $\mathcal{X}$; $\BER(q)$ for the Bernoulli distribution with success probability $q$; $U\indep V$ for independence; and $\mathbbm{1}\{\mathcal{E}\}$ for the indicator of an event $\mathcal{E}$. For vectors $a,b$ having the same dimension, $a \odot b$ represents their Hadamard product (i.e. coordinate-wise multiplication).
For probability measures $P$ and $Q$, we write $\TV(P,Q)$ for their total variation distance; and when $P \ll Q$, we write $\chi^2(P,Q)$ for their $\chi^2$--divergence; $P^{\otimes n}$ denotes the $n$--fold product measure corresponding to the joint law of $n$ independent observations from $P$.
Throughout, we denote universal constants by $c, C, c_0, C_0, c_1, C_1, c_2, C_2$, etc., whose values may change from line to line depending on the context, unless otherwise specified. For non-negative quantities $A,B$, we write $A \lesssim B$ if $A \le cB$ for a universal constant $c > 0$, and define $A \gtrsim B$ analogously; we write $A \asymp B$ when both $A \lesssim B$ and $A \gtrsim B$ hold simultaneously.


\section{Methodology and main results}
\label{sec:main}

We now formally set up the model, beginning at the population level. Let $X$ denote a covariate vector in dimension $d:=\operatorname{dim}(X)$ and let $Y\in\{-1,1\}$ denote the corresponding response. 
We do not assume that the covariate distribution $P_X$ is known; instead we assume that it comes from a flexible class of distributions. For $\mu>0$, let $\mathcal{P}_{\mu}$ denote the collection of distributions of finite-dimensional covariate vectors satisfying
\begin{equation}
\label{eq:def-P-mu}
    \mathcal{P}_{\mu}
    :=
    \Bigl\{
        \mathsf{Law}(X):
        \norm{X}_2\le1\ \text{a.s.},
        \ \lambda_{\min}\bigl(\E[XX^\top]\bigr)\ge\mu
    \Bigr\}.
\end{equation}
Throughout, we assume that $P_X\in\mathcal{P}_{\mu}$. Given $X$, we assume that the response $Y$ is drawn from a well-specified logistic regression model with true parameter $\thetastar \in \R^{d}$. Concretely, we have
\[
    \bbP(Y=y\mid X)
    =
    \frac{\exp\left(\frac{1}{2}y\ip{X}{\thetastar}\right)}
    {\exp\left(\frac{1}{2}\ip{X}{\thetastar}\right)+\exp\left(-\frac{1}{2}\ip{X}{\thetastar}\right)}.
\]
We also assume that the true parameter $\thetastar$ is bounded. Concretely, for any $R>0$, define
\[
    \Theta_R:=\bigl\{\theta\in\R^{d} : \norm{\theta}_2\le R\bigr\},
\]
 and assume that $\thetastar\in\Theta_R$. 

By definition, if $P_X \in \mathcal{P}_{\mu}$, then 
\begin{equation}
\label{ineq:dimension-mu-relation}
    \operatorname{dim}(X) \cdot \mu = d \mu
    \le
    \operatorname{tr}\bigl(\E[XX^\top]\bigr)
    =
    \E\norm{X}_2^2
    \le1.
\end{equation}
In particular, the dimension and curvature constant $\mu$ are implicitly related even though the definition of $\mathcal{P}_{\mu}$ is dimension-free. Accordingly, while our main theorem is stated with explicit dimension dependence, it will be convenient later in the paper to parameterize problem instances using only the curvature constant~$\mu$.

We assume that the response $Y$ is always observed, while  coordinates of $X$ may be missing independently of one another (and independently of the data). 
In particular, for each coordinate $j\in[d]$, we assume a coordinate-wise missing-completely-at-random (MCAR) mechanism, under which the $j$-th observation indicator is denoted by\footnote{In the missing data literature, it is common to denote missing values with explicit indicators such as $\star$ or $\mathtt{NA}$. However, for notational convenience, we work in this paper with a mask vector $M$ and a zero-imputed covariate vector $Z$, noting that this is an equivalent representation of an incomplete covariate vector.}
\[
    M_j:=\mathbbm{1}\{X_j ~\text{is observed}\} \sim\BER(q_j),
\]
where the probability $q_j \in (0,1]$ with which the $j$-th entry is observed is assumed to be known. The masks $M_1,\ldots,M_d$ are independent across coordinates, and $M=(M_1,\ldots,M_d)$ is independent of the data $(X,Y)$. 

We represent the observed covariates via a pair $(M,Z)$, where $Z:=M\odot X$; i.e.,
\[
    Z_j=M_jX_j,\quad\text{for all}\,~ j\in[d].
\]
Thus, for each $j \in [d]$, $Z_j=X_j$ whenever the $j$-th covariate is observed (equivalently, $M_j = 1$), and $Z_j=0$ if it is missing (equivalently, $M_j = 0$). Since $M$ is observed alongside $Z$, the pair $(M,Z)$ contains the same information as the corresponding partially observed covariate vector. Accordingly, we represent each observed (possibly incomplete) data point by $(Y,M,Z)$.

Suppose  $\bigl(X^{(i)},Y^{(i)},M^{(i)},Z^{(i)}\bigr)_{i= 1}^n$ are independent copies of $(X,Y,M,Z)$, and that we observe $\bigl(Y^{(i)},M^{(i)},Z^{(i)}\bigr)_{i= 1}^n$. Our goal is to estimate $\thetastar$ in $\ell_2$ norm. In the next section, we describe an algorithm for performing this task.

\subsection{Distribution-free estimation via stochastic approximation}
\label{subsec:projSA}

Our starting point is to construct a population estimating equation for $\thetastar$ that does not require knowledge of the covariate distribution, i.e. a distribution-free $Z$-estimator~\citep[see, e.g.,][]{robins1994estimation,ReillyPepe-1995-meanscore,TsiatisMa-2004-semiparametric}. In particular, we seek an operator $\PopOp:\Theta_R\to\mathbb R^d$ with the following two properties:
\begin{enumerate}[label=(\roman*)]
    \item $\PopOp(\thetastar)=0$, with $\thetastar$ being the unique root of $\PopOp$ over $\Theta_R$;
    \item $\PopOp(\theta)$ admits an unbiased stochastic estimate based only on a realized sample $(Y,M,Z)$, without modeling or estimating the distribution of $X$.
\end{enumerate}

We choose the following operator:
\begin{equation}
\label{eq:A-theta}
    \PopOp(\theta)
    :=
    \E\Bigl[
        -YX\exp\Bigl\{-\mfrac{1}{2} Y\ip{X}{\theta}\Bigr\}
    \Bigr],
    \qquad\text{for}\,~\theta\in\Theta_R.
\end{equation}

As we show in Lemma~\ref{lem:A-properties} in the appendix, $\PopOp$ is strongly monotone on $\Theta_R$ and has a zero at $\thetastar$. It thus follows that $\widetilde{\theta} = \thetastar$ is the unique solution to the variational inequality
\begin{align} 
\label{eq:population-VI}
\bigl \langle \PopOp(\widetilde{\theta}), \theta - \widetilde{\theta} \bigr \rangle \geq 0, \qquad \text{for all}\,~\theta \in \Theta_R. 
\end{align}

If we can evaluate $\PopOp$ at any point, then there are many (first-order) methods that can solve the variational inequality~\eqref{eq:population-VI}~\citep[see, e.g.][]{Lan2020firstorder}.
However, $\PopOp(\theta)$ cannot be evaluated directly from an incomplete observation (since it involves the full covariate vector $X$) and so our goal is to compute an unbiased estimate. The key observation that enables such an estimate is that the exponential term in $\PopOp(\theta)$ factorizes multiplicatively across coordinates. In particular, its $j$-th entry is given by
\begin{align} 
\label{eq:op-expand}
\bigl[\PopOp(\theta)\bigr]_j = \mathbb{E} \Bigl[-Y X_j \exp\Bigl( -\mfrac{1}{2} Y\theta_j X_j \Bigr) \prod_{k \neq j} \exp\Bigl( -\mfrac{1}{2} Y\theta_k X_k \Bigr)\Bigr].
\end{align}
Now for any Borel measurable function $g_j : \R \to \R$, we have\footnote{Note that there are several choices of an unbiased estimator, including the rescaled zero-imputation $\frac{M_j}{q_j} \cdot g_j(Z_j)$. Our choice is inspired by the fact that we use products of the individual estimators $\widehat{f}_j$~\eqref{eq:f-hat} to obtain our final estimate, so we would like our estimator for the exponential function to evaluate to $1$ when data is unobserved.}
\begin{align}
\label{eq:gjXj-generic-exp}
    \E\Biggl[\;
        \underbrace{g_j(0)+\frac{M_j}{q_j}\bigl\{g_j(Z_j)-g_j(0)\bigr\}}_{=:\, \widehat{g}_j(M_j,Z_j;q_j)} \,\bigg|\, X_j
    \Biggr]
    =
    g_j(0)+\frac{1}{q_j}\cdot q_j \bigl\{g_j(X_j)-g_j(0)\bigr\} + 0
    =
    g_j(X_j).
\end{align}

Therefore, $\widehat{g}_j(M_j,Z_j;q_j)$ is an unbiased estimator of $g_j(X_j)$, conditional on $X_j$. Since the masks $M_1,\ldots,M_d$ are independent, products of these estimators remain unbiased for the corresponding products of the population quantities. Applying this observation to the expansion in \Cref{eq:op-expand} with 
$g_k(X_k) = \exp\bigl\{-\tfrac{1}{2} Y\theta_k X_k\bigr\}$
and
$g_j(X_j) = -Y X_j \exp\bigl\{-\tfrac{1}{2} Y\theta_j X_j\bigr\}$
allows us to construct an unbiased estimate using only $(Y,M,Z)$, as we now describe. For $\theta = (\theta_1, \ldots, \theta_d) \in \Theta_R$ and each $j\in[d]$, define the coordinate-wise, scalar estimates
\begin{equation}
\label{eq:f-hat}
    \widehat{f}_j(\theta)
    :=
    1 + \frac{M_j}{q_j}
    \Bigl[\exp\Bigl\{-\mfrac{1}{2}Y\theta_j Z_j\Bigr\}-1\Bigr].
\end{equation}
\indent
Next, for each $j\in[d]$, define 
\begin{equation}
\label{eq:Ak-hat}
    \bigl[\StochOp(\theta; Y, M, Z)\bigr]_j
    :=
    - Y\frac{Z_j}{q_j}
    \exp\Bigl\{-\mfrac{1}{2} Y\theta_j Z_j\Bigr\}\,
    \prod_{k\ne j}\widehat{f}_k(\theta),
\end{equation}
and let
\begin{equation}
\label{eq:A-hat}
    \StochOp(\theta; Y, M, Z)
    :=
    \Bigl(
        \bigl[\StochOp(\theta; Y, M, Z)\bigr]_1,\ldots,\bigl[\StochOp(\theta; Y, M, Z)\bigr]_d
    \Bigr).
\end{equation}
Then $\StochOp(\theta; Y, M, Z)$ is a distribution-free, conditionally unbiased estimate of the population operator $\PopOp(\theta)$; see \Cref{lem:A-hat-properties}(a). It depends only on the observed data and the  observation probabilities~$q_1, \ldots, q_d$, and does not require any model for the covariate distribution $P_X$. We employ $\StochOp$ to devise a simple, online, projected stochastic approximation algorithm for estimating $\thetastar$. 

For $t=1,2,\ldots,n$, let $\bigl(X^{(t)},Y^{(t)},M^{(t)},Z^{(t)}\bigr)$ be independent copies of $(X,Y,M,Z)$, where at iteration $t$, only $\bigl(Y^{(t)},M^{(t)},Z^{(t)}\bigr)$ is observed. 
Starting from an arbitrary $\widehat\theta^{(0)} \in \Theta_R$, for each $t = 0,1,\ldots,n-1$, we consider the update
\begin{equation}
\label{eq:Proj-SA}
    \widehat\theta^{(t+1)}
    =
    \Pi_{\Theta_R}
    \left\{
        \widehat\theta^{(t)} - \eta_t \,\StochOp\bigl(\widehat\theta^{(t)}; Y^{(t+1)}, M^{(t+1)}, Z^{(t+1)}\bigr)
    \right\},
\end{equation}
where $\Pi_{\Theta_R}$ denotes the $\ell_2$--projection operator onto $\Theta_R$, given by $\Pi_{\Theta_R}(g) = \argmin_{h \in \Theta_R} \norm{g-h}_2$, and $(\eta_t)$ is a sequence of step sizes. For each $t = 0,1,\ldots, n-1$, we prescribe 
\begin{equation}
\label{eq:def-eta-alpha}
\eta_t := \frac{1}{\alpha_R (t+2)}, \qquad\text{where}\,~ \alpha_R:=\frac{\mu}{2\cosh(R/2)}.
\end{equation}

Our procedure, summarized in \Cref{alg:Proj-SA} below, uses a single observation at each iteration. 
After processing $n$ observations, the corresponding output of the algorithm is $\widehat\theta^{(n)}$.

\vspace{-2.5mm}
\begin{breakablealgorithm}
\small
\caption{Projected SA for distribution-free logistic regression with MCAR missing covariates}
\label{alg:Proj-SA}
\begin{algorithmic}[1]
\Require Observations $(Y^{(i)},M^{(i)},Z^{(i)})_{i=1}^n$, $R > 0$, observation probability vector $\boldsymbol{q} = (q_1, \ldots, q_d)$, and curvature constant $\mu > 0$.
\vspace{1mm}
\State Initialize $\widehat\theta^{(0)}\in\Theta_R$ and set $\alpha_R:=\mu/\{2\cosh(R/2)\}$.
\vspace{1mm}
\For{$t=0,1,\ldots,n-1$}
\vspace{1mm}
    \State Set $\eta_t\gets\{\alpha_R(t+2)\}^{-1}$
    \vspace{1mm}
    \State For each $j\in[d]$, set
    \[
        \widehat f_{t+1,j}\bigl(\widehat\theta^{(t)}\bigr)
        :=
        1+\frac{M^{(t+1)}_j}{q_j}
        \Bigl[
            \exp\Bigl\{-\mfrac{1}{2} Y^{(t+1)}\widehat\theta^{(t)}_{j}Z^{(t+1)}_j\Bigr\}-1
        \Bigr]
    \]
    \State For each $j\in[d]$, set
    \[
        \bigl[\StochOp_{t+1}\bigl(\widehat\theta^{(t)}\bigr)\bigr]_j
        :=
        -Y^{(t+1)}\frac{Z^{(t+1)}_j}{q_j} \,
        \exp\Bigl\{-\mfrac{1}{2} Y^{(t+1)} \widehat\theta^{(t)}_j Z^{(t+1)}_j\Bigr\} \,
        \prod_{k\ne j}\widehat f_{t+1,k}\bigl(\widehat\theta^{(t)}\bigr)
    \]
    \State Set $\StochOp_{t+1}\bigl(\widehat\theta^{(t)}\bigr) := \Bigl(\bigl[\StochOp_{t+1}\bigl(\widehat\theta^{(t)}\bigr)\bigr]_1,\ldots,\bigl[\StochOp_{t+1}\bigl(\widehat\theta^{(t)}\bigr)\bigr]_d\Bigr)$
    \vspace{1mm}
    \State Update~ $\widehat\theta^{(t+1)}\gets\Pi_{\Theta_R}\bigl\{\widehat\theta^{(t)} - \eta_t\,\StochOp_{t+1}\bigl(\widehat\theta^{(t)}\bigr)\bigr\}$
    \vspace{2mm}
\EndFor
\vspace{1mm}
\State \Return $\widehat\theta^{(n)}$
\vspace{1mm}
\end{algorithmic}
\end{breakablealgorithm}
\vspace{1mm}

\begin{remark}[An underlying population objective]
\label{rem:Phi-objective}
Although our methodology is formulated directly in terms of the estimating equation $\PopOp(\thetastar)=0$, the operator $\PopOp$ is also the gradient of a natural convex population objective. In particular, if we define
\begin{equation}
\label{eq:Phi-theta}
    \Phi(\theta)
    :=
    2\,\E\Bigl[
        \exp\Bigl\{-\mfrac{1}{2}Y\ip{X}{\theta}\Bigr\}
    \Bigr],
    \qquad\text{for}\,~\theta\in\Theta_R,
\end{equation}
then $\nabla\Phi(\theta)=\PopOp(\theta)$. The function $\Phi$ is the population exponential loss, a classical convex surrogate in binary classification; see, e.g., \citet{FriedmanHastieTibshirani2000logreg,Zhang2004classification,BartlettJordanMcAuliffe2006}. Thus, our stochastic approximation procedure can equivalently be viewed as projected stochastic gradient descent on $\Phi$, where we use the stochastic gradient $\StochOp(\theta; Y, M, Z)$ as an unbiased estimator of $\nabla \Phi(\theta)$.
\end{remark}

\begin{remark}
The radius $R$ should be viewed as an a priori upper bound on the size of the regression parameter, rather than as a quantity that must be known exactly. In particular, if $\norm{\thetastar}_2\le R'$ and only an upper bound $R\ge R'$ is available, then \Cref{alg:Proj-SA} may simply be run over $\Theta_R$. The choice of $R$ enters only through the projection set and the choice of the step size; the construction of the stochastic operator $\StochOp$ itself does not depend on $R$.
\end{remark}

\subsection{Upper bounds on the risk of the algorithm}
\label{subsec:main-upper}

We next establish a finite-sample guarantee for the projected SA procedure in our setting where the data is missing completely at random but with heterogeneous missingness across coordinates. 
Denote the vector of observation probabilities by $\boldsymbol{q} := (q_1, \ldots, q_d)$. 

It is useful to set up some notation before we state our theorem. For $q \in (0,1]$, define the function
$\phi_q: [0, 1] \to [0,\log(1/q)]$ as
\begin{align} \label{eq:phi-function}
\phi_q(s) := \log \biggl(1 + \frac{1 - q}{q} \cdot s^2 \biggr).
\end{align}
Clearly, $\phi_q$ is non-decreasing on the interval $[0, 1]$, with $\phi_q(0) = 0$ and $\phi_q(1) = \log(1/q)$. With this notation in hand, define the non-negative scalars $b_1(\boldsymbol{q},R), \ldots, b_d(\boldsymbol{q}, R)$ via the following optimization problems, where the vector $\boldsymbol{s} = (s_1, \ldots, s_d)$ collects the decision variables:
\begin{align} 
\label{eq:bk}
b_j(\boldsymbol{q}, R)
\,:= 
\sup_{\substack{\boldsymbol{s} \in [0,1]^d: \\ \sum_{k = 1}^d s_k \leq R/2}} \;  \frac{1}{q_j} \exp\biggl\{\,\sum_{k \neq j} \phi_{q_k}(s_k) \biggr\}, \qquad j \in [d].
\end{align}
Since $\sup_{s \in [0,1]} \phi_{q_j}(s) = \log(1/q_j)$, we have $b_j(\boldsymbol{q}, R) \leq (\prod_{k = 1}^d q_k)^{-1}$ for all $j \in [d]$, and the upper bound is achieved if we are able to set $\boldsymbol{s} = \mathbf{1}$. The value of $\boldsymbol{s}$ that saturates the upper bound is only admitted by the constraint $\sum_{k = 1}^d s_j \leq R/2$ if  $d \leq R/2$, i.e. in sufficiently low-dimensional problems. 
On the other hand, in higher dimensional problems (and depending on the value of $\boldsymbol{q}$) each scalar $b_j(\boldsymbol{q}, R)$ may be much smaller than the upper bound $(\prod_{k = 1}^d q_k)^{-1}$. In general, each $b_j(\boldsymbol{q}, R)$ has an intricate dependence on $\boldsymbol{q}$, obtained by solving the optimization problem~\eqref{eq:bk}.  Finally, it is convenient to define
\begin{equation}
\label{eq:V-qR}
V_{\boldsymbol{q},R} 
:= 
\sum_{j = 1}^d \mathbb{E}[X_j^2] \cdot b_j(\boldsymbol{q}, R).
\end{equation}

With this notation in hand, we are now ready to state our main theorem:
\begin{tcolorbox}
\begin{theorem}
\label{thm:general-sgd-rate}
For any initialization $\widehat{\theta}^{(0)} \in\Theta_R$, the output of \Cref{alg:Proj-SA} satisfies
\[
    \E\,\big\|\widehat\theta^{(n)} - \thetastar \big\|^2
    \le 
    \frac{(e^R+1)^2}{n\mu^2} \cdot V_{\boldsymbol{q},R}.
\]
\end{theorem}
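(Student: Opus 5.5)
The plan is the standard strongly-monotone projected SA analysis, combined with a uniform second-moment bound on $\StochOp$. There are two ingredients.

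\textbf{Strong monotonicity.} By \Cref{lem:A-properties}, for $\theta\in\Theta_R$ we have $\langle \PopOp(\theta)-\PopOp(\thetastar),\theta-\thetastar\rangle \ge \alpha\norm{\theta-\thetastar}^2$ with $\PopOp(\thetastar)=0$. The constant $\alpha$ is obtained from the Hessian of $\Phi$ in \Cref{rem:Phi-objective}:
\[
\nabla^2\Phi(\theta)=\tfrac12\E\bigl[XX^\top e^{-Y\langle X,\theta\rangle/2}\bigr].
\]
Conditioning on $X$ and writing $u=\langle X,\theta\rangle$, $v=\langle X,\thetastar\rangle$, the conditional weight is $\{\cosh(u/2-v) \}/\cosh(v/2)$ up to factors. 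This gives a lower bound $\tfrac12\E[XX^\top]/\cosh(R/2)\succeq \alpha_R I$, possibly only up to the factor relating $\alpha_R$ to $(e^R+1)^{-1}$. I would take whatever constant the lemma gives; the target constant $(e^R+1)^2/\mu^2$ suggests effectively $\alpha\asymp \mu/(e^R+1)$ in the final bookkeeping.

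\textbf{Recursion.} Write $\Delta_t=\widehat\theta^{(t)}-\thetastar$. Non-expansiveness of $\Pi_{\Theta_R}$ and $\thetastar\in\Theta_R$, together with conditional unbiasedness (\Cref{lem:A-hat-properties}(a)) given $\mathcal F_t$, give
\[
\E\norm{\Delta_{t+1}}^2\le(1-2\eta_t\alpha_R)\,\E\norm{\Delta_t}^2+\eta_t^2\sup_{\theta\in\Theta_R}\E\norm{\StochOp(\theta)}^2 .
\]
With $\eta_t=1/(\alpha_R(t+2))$, a standard induction yields $\E\norm{\Delta_n}^2\lesssim G^2/(\alpha_R^2 n)$, where $G^2$ denotes the uniform second-moment bound. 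Indeed, with $a_t=(t+1)\E\norm{\Delta_t}^2$ one checks $a_{t+1}\le a_t+G^2/\alpha_R^2\cdot\frac{t+2}{(t+2)^2}$-type telescoping, so the $1/n$ rate with constant about $G^2/\alpha_R^2$ follows. It therefore suffices to show $\sup_\theta\E\norm{\StochOp(\theta)}^2\le C(R)\,V_{\boldsymbol q,R}$ with matching constants.

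\textbf{Variance bound (main obstacle).} Fix $j$ and condition on $(X,Y)$. The coordinates of $M$ are independent, so $\E\bigl[[\StochOp]_j^2\mid X,Y\bigr]$ factorizes as
\[
\frac{X_j^2}{q_j}e^{-Y\theta_jX_j}\prod_{k\ne j}\E\bigl[\widehat f_k^2\mid X,Y\bigr].
\]
Put $a_k=e^{-Y\theta_kX_k/2}$. Then
\[
\E[\widehat f_k^2]=a_k^2+\frac{1-q_k}{q_k}(a_k-1)^2 .
\]
I would bound this by $a_k^2\exp\phi_{q_k}(s_k)$ with $s_k=|1-a_k^{-1}|$, using $(a_k-1)^2/a_k^2=(1-a_k^{-1})^2$. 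The prefactor $\prod_k a_k^2=e^{-Y\langle X,\theta\rangle}\le e^{R}$ is then absorbed into the constant $(e^R+1)^2$-type factor. Two things remain to verify. First, $s_k\le \min(1,|\theta_kX_k|/2)$-ish; this is not exact for both signs, so I would instead split according to the sign of $Y\theta_kX_k$ or use $|1-e^{-x}|$ versus $|x|$ carefully, possibly pairing with the $e^{-Y\theta_k X_k}$ factor. Second, $\sum_k|\theta_kX_k|/2\le \norm\theta\norm X/2\le R/2$ by Cauchy--Schwarz. Given these, $\prod_{k\neq j}$ is at most the supremum defining $b_j(\boldsymbol q,R)$. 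Taking expectation over $(X,Y)$ and summing over $j$ then produces $\sum_j\E[X_j^2]b_j=V_{\boldsymbol q,R}$. Getting $s_k\in[0,1]$ with total mass at most $R/2$ while keeping the exponential prefactor bounded is the delicate step, and I expect the constant $(e^R+1)^2$ to arise from combining this prefactor with $1/\alpha_R^2=4\cosh^2(R/2)/\mu^2$, noting $4\cosh^2(R/2)e^{0}\le (e^{R/2}+e^{-R/2})^2\le (e^R+1)^2 e^{-R}$.
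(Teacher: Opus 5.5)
Your architecture (strong monotonicity with constant $\alpha_R$, conditional unbiasedness, a uniform second-moment bound, step size $\eta_t=1/\{\alpha_R(t+2)\}$) is the paper's. However, the step that carries the content of the theorem, $\E\|\StochOp(\theta)\|^2\le e^R V_{\boldsymbol q,R}$, is left open, and the route you sketch does not close it.

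\textbf{The variance bound.} Write $T_k:=-\tfrac12 Y\theta_kX_k$ and $a_k=e^{T_k}$. Whenever $T_k<0$, your choice $s_k=|1-a_k^{-1}|$ equals $e^{|T_k|}-1$. This exceeds $|T_k|$ and can exceed $1$, so $(s_k)$ need not lie in $[0,1]^d$ or satisfy $\sum_k s_k\le R/2$. You therefore cannot compare with the supremum defining $b_j(\boldsymbol q,R)$.

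Relatedly, the signed prefactor $\prod_k a_k^2=e^{-Y\langle X,\theta\rangle}\le e^R$ is the wrong quantity to keep. The saving $a_k^2<1$ on coordinates with $T_k<0$ has to be spent to tame those coordinates. The missing ingredient is that each coordinate's second moment is dominated by its value at $|T_k|$: for all $t\in\mathbb R$ and $\gamma\ge0$,
\[
e^{2t}+\gamma(e^t-1)^2\le e^{2|t|}+\gamma\bigl(e^{|t|}-1\bigr)^2=e^{2|t|}\bigl\{1+\gamma(1-e^{-|t|})^2\bigr\}\le e^{2|t|}\bigl\{1+\gamma\min(|t|,1)^2\bigr\}.
\]
This uses $|e^t-1|\le e^{|t|}-1$ and $1-e^{-x}\le\min(x,1)$. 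Apply it with $t=T_k$ and $\gamma=(1-q_k)/q_k$ for each $k\ne j$, and use $e^{2T_j}\le e^{2|T_j|}$. The conditional second moment of $[\StochOp(\theta)]_j$ is then at most
\[
\frac{X_j^2}{q_j}\,e^{2\sum_k|T_k|}\exp\Bigl\{\sum_{k\ne j}\phi_{q_k}(S_k)\Bigr\},\qquad S_k:=\min(|T_k|,1).
\]
A single Cauchy--Schwarz step, $\sum_k|T_k|\le\tfrac12\|\theta\|\|X\|\le R/2$, now does two jobs. It bounds the prefactor by $e^R$, and it makes $(S_k)$ feasible for the supremum defining $b_j$. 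This gives $e^RX_j^2\,b_j(\boldsymbol q,R)$, and summing over $j$ yields $e^RV_{\boldsymbol q,R}$.

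\textbf{The recursion.} Let $\delta_t:=\E\|\widehat\theta^{(t)}-\thetastar\|^2$ and $K:=e^RV_{\boldsymbol q,R}/\alpha_R^2$. The recursion is
\[
\delta_{t+1}\le\frac{t}{t+2}\,\delta_t+\frac{K}{(t+2)^2}.
\]
As you wrote it, with $a_t=(t+1)\delta_t$ and $a_{t+1}\le a_t+K/(t+2)$, the contraction factor $t/(t+1)$ has been discarded. The telescoped sum is then harmonic, which gives an extra $\log n$ rather than the $1/n$ rate. You must keep the contraction, in one of two ways.
\begin{itemize}
\item Weight by $t(t+1)$. This gives $(t+1)(t+2)\delta_{t+1}\le t(t+1)\delta_t+K$, hence $\delta_n\le K/(n+1)$. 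No condition on $\delta_0$ is needed, since $1-2\alpha_R\eta_0=0$.
\item Follow the paper and prove $\delta_t\le K/(t+1)$ by induction. This requires checking the base case $4R^2\le K$ via $b_j\ge1$, $\E[X_j^2]\ge\mu$ and $\mu\le1$.
\end{itemize}

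\textbf{Constants.} No hedging is needed. Exactly, $e^R/\alpha_R^2=4e^R\cosh^2(R/2)/\mu^2=(e^R+1)^2/\mu^2$, which turns $K/(n+1)$ into the stated bound.
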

\end{tcolorbox}
Since $\StochOp(\theta)$ is an unbiased estimate of the strongly monotone population operator $\PopOp(\theta)$, Theorem~\ref{thm:general-sgd-rate} is obtained by applying results from stochastic approximation~\citep[e.g.][]{NemirovskiJuditskyLanShapiro2009convexSA,Lan2020firstorder} in conjunction with a uniform bound on the variance of our stochastic estimate of the operator. Carefully controlling the variance (in \Cref{lem:A-hat-properties}) yields the delicate dependence on the terms $b_1(\boldsymbol{q}, R), \ldots, b_d(\boldsymbol{q}, R)$ in~\Cref{eq:bk}.

Two key consequences of this fine-grained result are worth emphasizing. First, the rate of estimation is always parametric and scales as $n^{-1}$, since the remaining multiplicative factors do not depend on the sample size $n$. Second, the behavior of the upper bound varies substantially across different regimes of the missingness parameters, since the scalars $b_1(\boldsymbol{q}, R), \ldots, b_d(\boldsymbol{q}, R)$ depend intricately on the observation probability vector $\boldsymbol{q}$. Nonetheless, one can extract a meaningful consequence from the aforementioned upper bound $\max_{j \in [d]} b_j(\boldsymbol{q}, R) \leq (\prod_{j = 1}^d q_j)^{-1}$, which is tight in low-dimensional problems. Applying this upper bound in conjunction with Theorem~\ref{thm:general-sgd-rate} and the assumed covariate bound $\| X \|_2 \leq 1$, we obtain
\begin{align} \label{eq:complete-case-compare}
    \E\,\big\|\widehat\theta^{(n)} - \thetastar \big\|^2
    \le
    \frac{(e^R+1)^2}{n\mu^2}\, \frac{1}{\prod_{j = 1}^d q_j}.
\end{align}
The dependence in the upper bound~\eqref{eq:complete-case-compare} on the observation probability vector $\boldsymbol{q}$ is comparable to that of complete-case logistic regression~\citep{vach2012logistic}.
Indeed, complete-case estimation retains only those observations for which every coordinate is observed, and so the number of samples retained is of the order $n \prod_{j = 1}^d q_j$. Its corresponding missingness penalty is therefore of the order $(\prod_{j = 1}^d q_j)^{-1}$. By  Eq.~\eqref{eq:complete-case-compare},  the missingness penalty of our estimator is never higher than that of the complete-case estimator, and the rates attained by both estimators are comparable in low-dimensional problems in which $b_j(\boldsymbol{q}, R) \asymp (\prod_{k = 1}^d q_k)^{-1}$ for all $j \in [d]$. However,
as we demonstrate shortly, in many cases the missingness penalty of our estimator can be significantly lower than that of its complete-case counterpart. 

To showcase these improvements over the complete-case estimator, it is helpful to specialize Theorem~\ref{thm:general-sgd-rate} to the special case of homogeneous missingness, where we have $q_1 = \cdots = q_d = q$. In this case, $b_1(\boldsymbol{q}, R) = \cdots = b_d(\boldsymbol{q}, R)$ due to symmetry, and the result admits further simplification.
Let $x_\star$ denote the unique positive solution to
\begin{equation}
\label{eq:def-h-func}
    h(x) := \frac{2x}{1+x} - \log(1+x) = 0,
\end{equation}
and define $q_\star := (1+x_\star)^{-1}$ and
$\kappa_\star := \sqrt{x_\star} (1+x_\star)^{-1} = q_\star\sqrt{x_\star}$.
Numerically, $x_\star \approx 3.92155$, $q_\star \approx 0.20319$, and $\kappa_\star \approx 0.40237$.
Let $\overline{\phi}_q: [0,\infty) \to [0, \log(1/q)]$ denote the least concave majorant of the function $\phi_q$ in~\Cref{eq:phi-function}, with the convention that $\overline{\phi}_q$ takes the constant value $\log(1/q)$ when its argument is greater than $1$. 
We establish in \Cref{claim:phiq-LCM} in \Cref{appsec:proof-upper} that 
\begin{equation} 
\label{eq:phiq-LCM-definition}
\overline\phi_q(s)
\,=\,
\begin{cases} 
  \, s\log\mfrac{1}{q}, &\text{if}\,~ q \in [q_\star,1] ~\text{and}~ s \in [0,1], \\ 
  \, 2\kappa_\star\sqrt{\mfrac{1-q}{q}}\,s, &\text{if}\,~ q \in (0,q_\star) ~\text{and}~ s\in[0,\tau_q],\\ 
  \, \log\Bigl(1+\mfrac{1-q}{q}s^2\Bigr), &\text{if}\,~ q \in (0,q_\star) ~\text{and}~ s\in[\tau_q,1] \\
  \, \log\mfrac{1}{q}, &\text{if}\,~ s > 1.
\end{cases}
\end{equation}
where $\tau_q := \sqrt{x_\star q (1-q)^{-1}}$ takes values in the range $(0,1)$ for all $q \in (0, q_\star)$.
The following corollary specializes Theorem~\ref{thm:general-sgd-rate} to the case of homogeneous missingness, and is proved in \Cref{proof:cor:sgd-rate}. 
\begin{tcolorbox}
\begin{corollary}
\label{cor:sgd-rate}
Suppose we have homogeneous missingness, with $q_1 = \cdots = q_d = q$. Then, for any initialization $\widehat{\theta}^{(0)} \in\Theta_R$, the output of \Cref{alg:Proj-SA} satisfies
\[
    \E\,\big\|\widehat\theta^{(n)} - \thetastar \big\|^2
    \le 
    \frac{(e^R+1)^2}{nq\mu^2} \, \exp\Biggl\{ (d - 1) \cdot \overline{\phi}_q \biggl( \frac{R}{2(d-1)} \biggr) \Biggr\}.
\]
\end{corollary}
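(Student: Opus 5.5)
We derive the corollary directly from \Cref{thm:general-sgd-rate}. With $q_1 = \cdots = q_d = q$, symmetry makes all the $b_j(\boldsymbol{q},R)$ equal to a common value $b(q,R)$. Because $\sum_j \E[X_j^2] = \E\norm{X}_2^2 \le 1$, we obtain $V_{\boldsymbol{q},R} \le b(q,R)$. The whole task is therefore to show
\[
b(q,R) \le \frac{1}{q}\exp\Bigl\{(d-1)\,\overline{\phi}_q\Bigl(\frac{R}{2(d-1)}\Bigr)\Bigr\}.
\]
(When $d=1$ the exponent vanishes and $b = 1/q$ holds trivially, so we may take $d\ge 2$.)

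\textbf{Step 1: reduction to $d-1$ variables.} Fix $j$. The objective in~\eqref{eq:bk} depends only on $(s_k)_{k\ne j}$. Taking $s_j = 0$ loosens the constraint, so
\[
b(q,R) = \frac1q \exp\Bigl\{\sup \sum_{k\ne j}\phi_q(s_k)\Bigr\},
\]
where the supremum is over $s_k\in[0,1]$ with $\sum_{k\ne j}s_k \le R/2$.

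\textbf{Step 2: concave majorant and Jensen.} Since $\phi_q \le \overline{\phi}_q$ on $[0,1]$, we have $\sum_{k\ne j}\phi_q(s_k)\le \sum_{k\ne j}\overline{\phi}_q(s_k)$. By concavity of $\overline{\phi}_q$ on $[0,\infty)$, Jensen's inequality gives
\[
\sum_{k\ne j}\overline{\phi}_q(s_k) \le (d-1)\,\overline{\phi}_q\Bigl(\frac{1}{d-1}\sum_{k\ne j}s_k\Bigr).
\]
Because $\overline{\phi}_q$ is non-decreasing, the right-hand side is at most $(d-1)\overline{\phi}_q\bigl(R/(2(d-1))\bigr)$. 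Substituting this into \Cref{thm:general-sgd-rate} yields the stated bound.

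\textbf{Step 3: properties of $\overline{\phi}_q$.} The argument needs $\overline{\phi}_q$ to be concave, non-decreasing, and a majorant of $\phi_q$ on all of $[0,\infty)$, including under the convention that it equals the constant $\log(1/q)$ beyond $1$. On $[0,1]$, a least concave majorant of a non-decreasing function is non-decreasing. The constant extension preserves concavity because $\overline{\phi}_q(1) = \phi_q(1) = \log(1/q)$ is the maximum value, so the extension glues a flat piece onto a non-decreasing concave function. This is the main point that needs care, and I would check it with a short argument. The explicit formula~\eqref{eq:phiq-LCM-definition} (\Cref{claim:phiq-LCM}) is not needed for the inequality itself; it only makes the bound interpretable.
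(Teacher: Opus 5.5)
Your proposal is correct and follows the paper's proof essentially step for step. The shared steps are: bound $V_{\boldsymbol{q},R}$ by the common value of the $b_j$ using $\sum_j \E[X_j^2]\le 1$, drop $s_j$ from the optimization defining $b_j$, and apply Jensen's inequality together with monotonicity of the concave majorant $\overline{\phi}_q$. Your Step 3 adds something the paper omits: it justifies that the least concave majorant of a non-decreasing function is non-decreasing and that the flat extension beyond $1$ remains concave, whereas the paper asserts these properties by convention.
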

\end{tcolorbox}
In the low-dimensional regime where $R > 2(d-1)$, Corollary~\ref{cor:sgd-rate} should be interpreted as recovering the complete-case rate alluded to above. In particular, since this regime corresponds to the final case in the RHS of~\Cref{eq:phiq-LCM-definition}, the upper bound simplifies to
\[
\E\,\big\|\widehat\theta^{(n)} - \thetastar \big\|^2
    \le
    \frac{(e^R+1)^2}{nq^d\mu^2},
\]
which is comparable to the complete-case rate.
However, in all other regimes, we obtain a strictly better rate than complete-case estimation.
Take for instance the regime $R \leq 2 \tau_q (d - 1)$ and $q \leq q_\star$, in which the problem is high-dimensional and we have significant missingness. This corresponds to the second case in the RHS of~\Cref{eq:phiq-LCM-definition}, and our bound simplifies to
\[
\E\,\big\|\widehat\theta^{(n)} - \thetastar \big\|^2
    \le
    \frac{(e^R+1)^2}{nq\mu^2} \cdot \exp\biggl\{C\frac{R}{\sqrt{q}}\biggr\}
\]
for some universal constant $C>0$. Since $R/\sqrt{q}\lesssim d-1$ in this regime, the resulting missingness penalty is much smaller than the complete-case penalty $q^{-d}$. Similar improvements over the complete-case rate are evident in other regimes of $(R, d, q)$.

To summarize, in Corollary~\ref{cor:sgd-rate} we obtain a result for homogeneous missingness with explicit constants. However, one disadvantage is that this result has several regimes depending on the joint values taken by $(R, d, q)$. Given that $\mu$ and $d$ are intrinsically related (see~\Cref{ineq:dimension-mu-relation}), it is useful to have a simplified characterization that depends only on the relative magnitudes of $R\mu$ and $q$.
\Cref{cor:sgd-rate-simplified} below groups the behavior of the upper bound into three such regimes. The proof is provided in \Cref{proof:cor:sgd-rate-simplified}.

\begin{tcolorbox}
\begin{corollary}
\label{cor:sgd-rate-simplified}
Under the conditions of \Cref{cor:sgd-rate}, there exists a universal constant $C>0$ such that
\begin{align}
\label{eq:sgd-rate-simplified}
    \E\,\bigr\|\widehat{\theta}^{(n)}-\thetastar\bigr\|^2
    \le
    \frac{4\,e^{2R}}{nq\mu^2} \cdot
    \begin{cases}
        \,\displaystyle \exp\left\{C\frac{R}{\sqrt{q} }\log(2-q)\right\}, & \text{if}\,~ R\mu\le\sqrt{q},\\ 
        \,\displaystyle \exp\left\{\frac{C}{\mu}\log\left(1+\frac{1-q}{q} \, R^2\mu^2\right)\right\}, & \text{if}\,~ \sqrt{q}<R\mu\le1,\\ 
        \,\displaystyle \exp\left\{\frac{C}{\mu}\log\frac{1}{q}\right\}, & \text{if}\,~ R\mu>1.
    \end{cases}
\end{align}
\end{corollary}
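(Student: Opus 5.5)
The plan is to start from \Cref{cor:sgd-rate} and bound the exponent $E := (d-1)\,\overline{\phi}_q\bigl(R/\{2(d-1)\}\bigr)$ separately in each of the three regimes, using only the explicit form~\eqref{eq:phiq-LCM-definition} and the relation $d\mu\le 1$ from~\eqref{ineq:dimension-mu-relation}. The prefactor is handled first: $(e^R+1)^2\le 4e^{2R}$. It then remains to show $E\le$ the stated exponent, with a universal $C$. Two structural facts drive everything. First, $\overline\phi_q$ is concave with $\overline\phi_q(0)=0$, so $m\mapsto m\,\overline\phi_q(R/(2m))$ is nondecreasing in $m$. Hence we may replace $d-1$ by any $m\ge d-1$, in particular $m = 1/\mu$. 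Second, $\overline\phi_q\le\log(1/q)$ and $\overline\phi_q(s)\le \overline\phi_q$-majorant bounds from the explicit pieces.

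\emph{Regime $R\mu>1$.} I would use $\overline\phi_q\le\log(1/q)$ directly, giving $E\le (d-1)\log(1/q)\le \mu^{-1}\log(1/q)$.

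\emph{Regime $\sqrt q<R\mu\le1$.} Take $m=1/\mu$, so that $E\le \mu^{-1}\overline\phi_q(R\mu/2)$ with argument $s=R\mu/2\le 1/2$. The task is to show $\overline\phi_q(s)\lesssim \phi_q(2s)$-type bounds when $s\gtrsim\sqrt q$. For $q<q_\star$ and $s\ge\tau_q$, we have $\overline\phi_q=\phi_q$ directly. For $s<\tau_q$, the linear piece is $2\kappa_\star\sqrt{(1-q)/q}\,s$. Since $s\ge\sqrt q/2$, the quantity $x=\frac{1-q}{q}s^2\gtrsim 1-q$ is bounded below by a constant (as $q<q_\star$). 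Also $\sqrt{x}\lesssim \log(1+x)$ for $x$ in the bounded range $[c,x_\star]$. Together these give $\overline\phi_q(s)\lesssim \log(1+\frac{1-q}{q}R^2\mu^2)$. For $q\ge q_\star$, we have $\overline\phi_q(s)=s\log(1/q)$. Because $q$ is bounded away from $0$ and $s\ge\sqrt q/2$, the ratio $s\log(1/q)/\log(1+\frac{1-q}{q}s^2)$ is bounded by a universal constant. This ratio blows up only as $q\to1$, where both terms behave like $(1-q)\cdot(\cdot)$, so I will check that limit explicitly: $s(1-q)$ versus $(1-q)s^2$, with ratio $1/s\le 2/\sqrt q\lesssim 1$.

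\emph{Regime $R\mu\le\sqrt q$.} Here I want $E\lesssim R q^{-1/2}\log(2-q)$. Using monotonicity in $m$ together with the fact that $\overline\phi_q(s)\le s\cdot\sup_{u}\overline\phi_q(u)/u$ (concavity, slope at $0$), we get $E\le \frac{R}{2}\,\overline\phi_q'(0^+)$. The slope is $\log(1/q)$ if $q\ge q_\star$, and $2\kappa_\star\sqrt{(1-q)/q}$ otherwise. In both cases it is $\lesssim q^{-1/2}\log(2-q)$. For small $q$, $\log(2-q)\asymp1$. For $q$ near $1$, $\log(1/q)\asymp 1-q\asymp\log(2-q)$, and $\sqrt q$ is bounded. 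This regime in fact needs no condition on $R\mu$; the condition only matters for which bound is smallest.

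The main obstacle is the middle regime, specifically verifying uniformly in $q\in(0,1)$ that the linear-piece or $s\log(1/q)$ value is within a universal constant of $\log(1+\frac{1-q}{q}(R\mu)^2)$ when $R\mu>\sqrt q$. I would handle this by an elementary case split on $q\lessgtr q_\star$ and on $x\lessgtr1$, using $\log(1+x)\ge x/2$ for $x\le1$ together with the limit checks above. Finally, I would absorb all factors of $2$ (from $s=R\mu/2$ versus $R\mu$) into $C$ using $\log(1+4x)\le4\log(1+x)$.
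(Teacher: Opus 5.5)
Your proposal is correct and follows essentially the paper's proof: use monotonicity of $D\mapsto D\,\overline\phi_q(R/(2D))$ together with $d\le 1/\mu$ to reduce to bounding $\mu^{-1}\overline\phi_q(R\mu/2)$, then run the three-regime case analysis, splitting on $q\lessgtr q_\star$ and on which piece of $\overline\phi_q$ is active. In the first and third regimes you take a slightly shorter route than the paper: you bound by the slope $\overline\phi_q'(0^+)$ and by the global bound $\overline\phi_q\le\log(1/q)$, respectively, rather than evaluating each piece separately, but the argument is the same in substance.
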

\end{tcolorbox}

When $q=1$, all three regimes collapse and \Cref{eq:sgd-rate-simplified} yields an upper bound on the order $e^{2R}/(n\mu^2)$. This is the well-known rate  for vanilla logistic regression under our assumptions\footnote{To obtain the claimed result for logistic regression, take $d= 1/\mu$ in~\citet[Proposition~10]{Bach2014SGDlogistic}. Also,~\citet{Bach2014SGDlogistic} bounds the squared parameter error in terms of the inverse square of the smallest eigenvalue of the population Hessian. In our setting, this smallest eigenvalue of the Jacobian of $\PopOp$ satisfies (see \Cref{eq:PopOp-hessian} in the proof of \Cref{lem:A-properties})
\[
\lambda_{\mathrm{min}} \bigl(\mcD\PopOp(\theta)\bigr) 
\ge
\frac{\mu}{2 \cosh(R/2)}
\asymp
\mu \, e^{-R/2},
\]
which leads to an exponential deterioration with $R$.}~\citep[e.g.][Proposition~10]{Bach2014SGDlogistic}. However, our estimator does not correspond to logistic regression even when observations are complete, and to our knowledge, this is the first such result for projected stochastic gradient descent on the exponential loss even without missingness.
Moreover, when $q = 1$, the rate presented in Corollary~\ref{cor:sgd-rate-simplified} is known to be minimax optimal under our assumptions by the Bayesian Cram\'er--Rao bounds of \citet{LeeCourtade2020linearGLM}.

The optimality of our estimator when $q=1$ is encouraging, but an interesting and important question is whether the three regimes in the upper bound~\eqref{eq:sgd-rate-simplified} are sharp in any fundamental sense when $q<1$. 
All three regimes in~\Cref{eq:sgd-rate-simplified}, for instance, exhibit a different dependence on $q$ than in linear regression problems, where it is known that the missingness penalty is on the order $q^{-2}$~\citep{wang2019rate,risebrow2026semi}.
The question of whether this nonstandard dependence on $q$ is fundamental in the nonlinear logistic regression problem forms the focus of our next section.

\subsection{A minimax lower bound}
\label{subsec:main-lower}

We now show a minimax lower bound that matches the rate in \Cref{cor:sgd-rate-simplified}, up to universal constants in the exponent. Throughout this subsection, we restrict attention to homogeneous missingness, and the curvature regime $\mu\in(0,\nicefrac{1}{3}]$. This curvature restriction excludes only the two lowest-dimensional cases, namely $d = 1,2$, since by \Cref{ineq:dimension-mu-relation}, $\mu>1/3$ only allows $d < 3$. Note that if $d=1,2$, then complete-case estimation is already effective~\citep{vach2012logistic}.

For any $P_X \in \mathcal{P}_{\mu}$, $\thetastar \in \mathbb{R}^d$, and $q\in(0,1]$, let $P^{(\thetastar)} := \mathsf{Law}(Y,M,Z) \in \mathcal{P}(\{\pm 1\} \times \{0,1\}^d \times \mathbb{R}^d)$ denote the corresponding observed-data distribution under the well-specified logistic model with~$\thetastar$ as the true regression parameter and the MCAR mechanism with homogeneous observation probability $q$. In particular, 
\begin{align}
\label{eq:Pthetastar}
X \sim P_X, \quad Z = X \odot M, \text{ where } M \sim \BER(q)^{\otimes d} \indep \{X, Y\}, \quad \text{and} \quad \mathbb{P}(Y = 1 \mid X) = \sigma(X^{\top} \thetastar).
\end{align}
Define the minimax risk
\begin{equation}
\label{def:minimax-risk}
\mathfrak{R}_n(R, \mu, q) 
:= 
\inf_{\widehat{\theta}} \,
\sup_{(P_X,\thetastar) \in (\mathcal{P}_{\mu},\Theta_R)} \; 
\mathbb{E}_{P^{(\thetastar)}}\Bigl[\bigl\| \widehat{\theta} - \thetastar\bigr \|_2^2\Bigr].
\end{equation}

Thus, $\mathfrak{R}_n(R,\mu,q)$ measures the worst-case minimax estimation risk over all covariate distributions and dimensions jointly permitted by the model class $\mathcal{P}_{\mu}$ and parameter class $\Theta_R$, in problems with homogeneous missingness and observation probability $q$.

\begin{tcolorbox}
\begin{theorem}
\label{thm:minimax-lower-bound}
Suppose we have homogeneous missingness, with $q_1 = \cdots = q_d = q$. There exist universal constants $c_0,c_1,C_0,C_1>0$ such that the following holds. Assume the sample-size condition
\begin{equation}
\label{eq:sample-size-cond-main-thm}
    n \ge
    \frac{C_0}{\mu^2} \cdot \max\Biggl\{
        \frac{\exp(2c_1 R)}{qR^2}\;\!,\;\!
        \exp\biggl(
            C_1 \min\biggl\{
                \frac{R}{\sqrt{q}},\frac{1}{\mu}
            \biggr\}
            \log\biggl(\frac{e}{q}\biggr)
        \biggr)
    \Biggr\}.
\end{equation}
Then, the minimax risk $\mathfrak{R}_n(R,\mu,q)$ over the class $(\mathcal{P}_{\mu},\Theta_R)$, as defined in \Cref{def:minimax-risk}, satisfies 
\[
    \mathfrak{R}_n(R,\mu,q)
    \ge
    \frac{c_0 \, e^{c_1 R}}{nq\mu^2} \cdot
    \begin{cases}
        \displaystyle
        \exp\left\{ c_1\frac{R}{\sqrt{q}} \log(2-q)\right\},
        & \text{if } R\mu\le\sqrt{q},
        \\ 
        \displaystyle
        \exp\left\{
            \frac{c_1}{\mu} \log\left(1+\frac{1-q}{q}R^2\mu^2\right)
        \right\},
        & \text{if } \sqrt{q}<R\mu\le1,
        \\ 
        \displaystyle
        \exp\left\{\frac{c_1}{\mu} \log\left(\frac{1}{q}\right)\right\},
        & \text{if } R\mu>1.
    \end{cases}
\]
\end{theorem}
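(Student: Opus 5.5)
We reduce the lower bound to a two-point (Le Cam) argument, with a carefully designed pair of hypotheses. We choose a common covariate law $P_X \in \mathcal{P}_\mu$ and two parameters $\theta^{(0)},\theta^{(1)} \in \Theta_R$ with separation $\|\theta^{(0)}-\theta^{(1)}\|_2 = \delta$. Le Cam's inequality gives
\[
\mathfrak{R}_n \gtrsim \delta^2\bigl(1-\TV(P^{(\theta^{(0)})\otimes n},P^{(\theta^{(1)})\otimes n})\bigr),
\]
and we control the total variation via $\chi^2$ tensorization: $1+\chi^2(P_0^{\otimes n},P_1^{\otimes n}) = (1+\chi^2(P_0,P_1))^n$. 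It then suffices to show
\[
\chi^2\bigl(P^{(\theta^{(1)})},P^{(\theta^{(0)})}\bigr) \lesssim \delta^2 \mu^2 q\, e^{-c_1 R}\, \mathrm{pen}(R,\mu,q)^{-1},
\]
where $\mathrm{pen}$ is the exponential missingness factor in the three regimes. We then set $\delta^2 \asymp 1/\bigl(n\mu^2 q\, e^{-c_1R}\,\mathrm{pen}^{-1}\bigr)$. The sample-size condition~\eqref{eq:sample-size-cond-main-thm} is exactly what guarantees that this $\delta$ is small enough for both hypotheses to stay in $\Theta_R$, and that the perturbation stays in the regime where the linearized $\chi^2$ computation is valid.

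The construction uses $d' \asymp \min\{R/\sqrt q,\,1/\mu\}$ ``active'' coordinates, which carry a large shared signal, plus one target coordinate carrying the perturbation. To meet the eigenvalue constraint with $\|X\|_2\le 1$, we pad with independent small coordinates. The active coordinates are taken as scaled Rademacher variables $X_k = \epsilon_k/\sqrt{m}$, with $m \asymp 1/\mu$, so that $\lambda_{\min}\gtrsim\mu$. The base parameter puts weight $s$ on each active coordinate, with $\sum s \le R/2$, mirroring the optimization in~\eqref{eq:bk}. The perturbation is hidden in a Boolean function. We let $\theta^{(1)}$ differ from $\theta^{(0)}$ along a direction such that the change in $\sigma(X^\top\theta)$ is proportional to the parity $\prod_{k\in A}\epsilon_k$ over the active block $A$. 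If necessary we keep $\theta^{(1)}$ linear but modify $P_X$ jointly, so that the induced difference of regression functions has only a top-degree Fourier component.

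Under MCAR, the observed-data likelihood ratio integrates out every unobserved coordinate. Any parity involving a missing coordinate then averages to zero, so the two observed laws differ only on the event that all of $A$ is observed. This drives the $\chi^2$ contribution down by a factor $q^{|A|}$, and more precisely by the product of per-coordinate factors $\bigl(1+\frac{1-q}{q}s_k^2\bigr)^{-1} = e^{-\phi_q(s_k)}$ once the logistic nonlinearity and the $q$-weighted Fourier expansion are tracked. The three regimes come from optimizing the number of active coordinates and the per-coordinate signal $s$ against the budget $\sum s\le R/2$ and the cap $d'\lesssim 1/\mu$. In the first regime, $R\mu\le\sqrt q$, we take $s\asymp\sqrt q$ with $R/\sqrt q$ coordinates, which is the linear part of $\overline{\phi}_q$. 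In the second, $s\asymp R\mu$ with $1/\mu$ coordinates. In the third, $s\asymp 1$, giving $q^{-1/\mu}$. The factor $e^{c_1R}$ comes from placing the signal so that $\sigma'(X^\top\theta)\asymp e^{-R}$ on the relevant region, which shrinks the Fisher information.

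The main obstacle is the $\chi^2$ computation. We must show that, after marginalizing the missing coordinates, the logistic likelihood ratio's deviation from 1 is controlled by $\delta\cdot\prod_k\bigl(q\text{-weighted Fourier coefficient}\bigr)$, with matching magnitude $e^{-\sum\phi_q(s_k)}$. This requires expanding $\sigma(\sum_k s_k\epsilon_k/\sqrt m+\cdots)$ in the Walsh basis and bounding the top coefficients, perhaps by using the product form of $\cosh$ in the exponential-loss analogue. Our first approach is to choose the alternative so that the log-odds difference factorizes as a product of $\tanh$-type terms over the active coordinates. The required product structure then appears directly, and each coordinate contributes the ratio $q/(q+(1-q)s^2)$.
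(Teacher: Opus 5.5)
Your proposal has two genuine gaps.

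\textbf{The two-point reduction cannot give $\mu^{-2}$.} A single two-point (Le Cam) test loses a factor $1/\mu$, which is a dimension effect and requires many hypotheses. Take $q=1$ and $R=1$, where the theorem asserts $\mathfrak{R}_n(R,\mu,q)\gtrsim 1/(n\mu^2)$.

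\emph{Common covariate law.} For $\delta\le 1$,
$\chi^2\bigl(P^{(\theta^{(1)})},P^{(\theta^{(0)})}\bigr)=\E_X\bigl[\chi^2\bigl(\mathsf{Law}_1(Y\mid X),\mathsf{Law}_0(Y\mid X)\bigr)\bigr]\ge c\,e^{-R}\,\E\bigl[(X^\top(\theta^{(1)}-\theta^{(0)}))^2\bigr]\ge c\,e^{-R}\mu\delta^2$.
So the inequality you say suffices, $\chi^2\lesssim\delta^2\mu^2qe^{-c_1R}\mathrm{pen}^{-1}$ (here $\mathrm{pen}=1$), is false as $\mu\to0$. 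Le Cam then yields at most $\delta^2\asymp e^{R}/(n\mu)$.

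\emph{Tilting $P_X$ does not help.} Write $L=\mathrm{d}P_{1,X}/\mathrm{d}P_{0,X}$ and $u=X^\top(\theta^{(1)}-\theta^{(0)})$. Then, for $\delta\le1$, $\chi^2\ge\chi^2(P_{1,X},P_{0,X})+c\,e^{-R}\E_0[L^2u^2]$. Cauchy--Schwarz, together with $\E_1[u^2]\ge\mu\delta^2$, gives $\chi^2\gtrsim\min\{e^{-R}\mu\delta^2,\mu^2\}$. The requirement $n\chi^2\lesssim1$ combined with $n\gtrsim\mu^{-2}$ rules out the second branch.

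\emph{What the paper does.} It obtains $\mu^{-2}$ from Assouad's lemma on a hypercube of $p\asymp1/\mu$ directions. The perturbed covariate is $e_{m+K}$ with $K$ uniform on $[p]$. Each edge then has $\chi^2$ equal to $1/p$ times the two-point divergence, while the $p$ separations $8\alpha^2$ add up, giving $p^2/(nq\,\mathcal{B}_{m,\lambda,q})$ (Lemma~\ref{lem:generic-assouad-reduction}). The factor $e^{cR}$ also comes from a separate Assouad construction (Proposition~\ref{prop:baseline-assouad}). It is merged with the missingness bound via $\max\{a,b\}\ge\sqrt{ab}$, not carried by one pair.

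\textbf{The hiding construction is left open.} The step you flag as the main obstacle is where the real idea lies, and your sketch does not supply it.

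\emph{Why it is not automatic.} With $\theta$ linear and $P_X$ fixed, $\sigma(X^\top\theta^{(1)})-\sigma(X^\top\theta^{(0)})$ is $\sigma'$-weighted. It is therefore not a pure top-degree function of the active signs, and its conditional mean given a partial pattern does not vanish in general. Once $P_X$ is tilted, you must simultaneously match the covariate marginals on every hidden pattern and the conditional law of the always-observed $Y$. Both hypotheses must also stay well-specified logistic models in $\mathcal{P}_\mu$ with a nonnegative covariate density.

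\emph{The paper's construction.}
\begin{itemize}
\item Under the null, $T_j=YA_j$ are i.i.d.\ with mean $\rho=\tanh\lambda$.
\item The covariate law is tilted by $F_\alpha(a,b)\cosh\{s(a)+\alpha b\}/\cosh\{s(a)\}$, where $F_\alpha=\cosh\alpha-b\sinh\alpha\,J(a)$.
\item The hiding function is $J=\sum_{|U|\ \mathrm{odd}}g_U\rho^{-|U|}\chi_U$, built from the Fourier coefficients of the majority function.
\item Then $J$ is odd, and $\E_0[J(T)\mid T_S]=1$ whenever $|S|\le r=(m-1)/2$.
\item Consequently the observed likelihood ratio is exactly $1$ on $\{|S|\le r\}$ (Lemma~\ref{lem:observed-likelihood-ratio-exact}).
\end{itemize}

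This is a majority (threshold) hiding, not a parity hiding. Under the biased null law of $T$, a parity tilt is not hidden, since $\E_0[\chi_A(T)\mid T_S]=\rho^{|A\setminus S|}\chi_S(T_S)$ for $S\subset A$.

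The leakage on larger patterns is computed exactly by Parseval in the basis $\prod_{j\in U}(T_j-\rho)/\sqrt{1-\rho^2}$. This bounds the observed-data $\chi^2$ by $Cq\alpha^2[\beta(q/\sinh^2\lambda)]^m$, which is the rigorous form of your per-coordinate factor.

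The sample-size condition is needed not only for $\theta\in\Theta_R$ but also to keep the tilt valid: $\alpha\|J\|_\infty\le1/2$, with $\|J\|_\infty\le(2+\sinh^{-2}\lambda)^{m/2}$. This is the source of the $\log(e/q)$ factor. Your regime-wise choices of per-coordinate signal and number of active coordinates do agree with the paper's.
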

\end{tcolorbox}

Comparing \Cref{thm:minimax-lower-bound} with \Cref{cor:sgd-rate-simplified}, we see that the upper and lower bounds match, up to universal constants, in their dependence on $n$, $q$, $R$, and $\mu$. In particular, both bounds exhibit the same three regimes according to the relative magnitudes of $R\mu$, $\sqrt{q}$ and $1$, with the same dependence on the problem parameters within each regime. Thus, up to universal constants in the exponents (and the  multiplicative factor $c_0$), \Cref{thm:minimax-lower-bound} establishes the minimax optimality of the rate achieved by our stochastic approximation procedure.

Given our goal of establishing that Corollary~\ref{cor:sgd-rate-simplified} is near-optimal, the sample-size requirement in the minimax lower bound \Cref{eq:sample-size-cond-main-thm} should not be viewed as a restrictive condition. Indeed, let $n_0(R,\mu,q)$ denote the right-hand side of \Cref{eq:sample-size-cond-main-thm}. 
If $n \lesssim n_0(R,\mu,q)$, then the upper bound in \Cref{cor:sgd-rate-simplified} is vacuous, and we can only infer that the error is bounded above on the order $R^2$. This upper bound is trivial given that this is the error attained by the all-zero estimator that ignores the data. Consequently, \Cref{thm:minimax-lower-bound} establishes the sharpness of Corollary~\ref{cor:sgd-rate-simplified} precisely in the regime in which the upper bound provides a non-trivial result.

It is worth commenting on the proof of \Cref{thm:minimax-lower-bound}, which is one of our main technical contributions. It combines two complementary Assouad constructions. 
The first captures the intrinsic difficulty already present in the fully observed problem: by placing a large common component in the regression parameter and then perturbing orthogonal directions, we construct neighboring logistic laws whose observed-data likelihoods become exponentially difficult to distinguish as $R$ grows. This portion of our proof can be viewed as parallel to existing Bayesian Cram\'er--Rao techniques~\citep{LeeCourtade2020linearGLM}.
On the other hand, the second construction is specific to missing covariates, and forms the main technical ingredient of the proof. We first present a two-point construction where we introduce a collection of coupled Rademacher covariates, and carefully design the coupling such that the observed-data likelihood ratio between the pair is exactly $1$ when too few of these covariates are observed. The two hypotheses are thus indistinguishable on \emph{all} such missingness patterns. Our coupling is designed using the coefficients of a Boolean function, whose conditional expectation remains equal to $1$ whenever fewer than half of the covariates are observed.
We then embed our two-point construction into a multidimensional Assouad hypercube, and optimizing the number and strength of the auxiliary coordinates yields the three-regime lower bound presented in \Cref{thm:minimax-lower-bound}.


\section{Proof of the upper bound}
\label{sec:proof-upper}

The proof of Theorem~\ref{thm:general-sgd-rate} proceeds by establishing several properties---detailed below---of the operator~$\PopOp$ and then invoking standard stochastic approximation convergence machinery.  

\subsection{Preliminaries}
The following lemma demonstrates that the population operator $\PopOp$ in \Cref{eq:A-theta} is well-behaved. Recall the definition of $\alpha_R$ from~\Cref{eq:def-eta-alpha}.

\begin{lemma}
    \label{lem:A-properties}
    Let $P_X \in \mathcal{P}_{\mu}$.
    Let $\thetastar \in \mathbb{R}^{\mathrm{dim}(X)}$ and suppose that the response variable $Y$ satisfies $\mathbb{P}(Y = 1 \mid X) = \sigma(X^{\top}\thetastar) = 1 - \mathbb{P}(Y = -1 \mid X)$.  Then,
    \begin{itemize}
        \item[(a)] $\PopOp(\thetastar) = 0$.
        \item[(b)] $\PopOp(\cdot)$ is $\alpha_R$--strongly monotone over $\Theta_R$, i.e.
        \[
    \ip{\PopOp(\theta)-\PopOp(\theta')}{\theta-\theta'}
    \ge
    \alpha_R\|\theta-\theta'\|^2
    \qquad\text{for all}\,~
    \theta, \theta' \in \Theta_R.
\]
    \end{itemize}
\end{lemma}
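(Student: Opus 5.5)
For part (a), I will condition on $X$ and use the explicit form of the conditional law of $Y$. Writing $u := \ip{X}{\thetastar}$, the model gives $\bbP(Y=\pm1\mid X) = e^{\pm u/2}/(e^{u/2}+e^{-u/2})$. The conditional expectation of $-Y\exp\{-\tfrac12 Y u\}$ given $X$ is then
\[
\frac{-e^{u/2}e^{-u/2} + e^{-u/2}e^{u/2}}{e^{u/2}+e^{-u/2}} = 0 .
\]
Multiplying by $X$ and applying the tower property yields $\PopOp(\thetastar)=0$. Integrability causes no trouble because $\norm{X}\le 1$ and $\thetastar$ is fixed, so the integrand is bounded.

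For part (b), I will use the fact noted in Remark~\ref{rem:Phi-objective} that $\PopOp=\nabla\Phi$. I will compute the Jacobian by differentiating under the expectation; this is justified by dominated convergence, since $\norm{X}\le1$, $|Y|=1$ and $\theta$ ranges over a bounded set. Using $Y^2=1$, this gives
\[
\mcD\PopOp(\theta) = \tfrac12\,\E\bigl[XX^\top \exp\{-\tfrac12 Y\ip{X}{\theta}\}\bigr].
\]
For $\theta\in\Theta_R$, Cauchy--Schwarz gives $|\ip{X}{\theta}|\le R$, so the exponential weight is at least $e^{-R/2}$. This yields $\mcD\PopOp(\theta)\succeq \tfrac12 e^{-R/2}\mu I$.

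The stated constant $\alpha_R=\mu/\{2\cosh(R/2)\}$ is at most $\tfrac12 \mu e^{-R/2}\cdot 2e^{R/2}/(e^{R/2}+e^{-R/2})$. I will check that $\tfrac12 e^{-R/2}\ge 1/\{2\cosh(R/2)\}$ fails, since $\cosh(R/2)\le e^{R/2}$ gives the opposite inequality. So the crude bound does not reach $\alpha_R$, and I need to exploit the randomness of $Y$.

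To do so, I will condition on $X$. With $v=\ip{X}{\theta}$ and $u=\ip{X}{\thetastar}$,
\[
\E[\exp\{-\tfrac12 Yv\}\mid X] = \frac{e^{(u-v)/2}+e^{-(u-v)/2}}{e^{u/2}+e^{-u/2}} = \frac{\cosh((u-v)/2)}{\cosh(u/2)} \ge \frac{1}{\cosh(u/2)} \ge \frac{1}{\cosh(R/2)},
\]
using $|u|\le R$ because $\thetastar\in\Theta_R$. Substituting this into the Jacobian gives $\mcD\PopOp(\theta)\succeq \tfrac{1}{2\cosh(R/2)}\E[XX^\top]\succeq \alpha_R I$.

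Finally, convexity of $\Theta_R$ lets me integrate along the segment $\theta'+s(\theta-\theta')$, which stays in $\Theta_R$:
\[
\ip{\PopOp(\theta)-\PopOp(\theta')}{\theta-\theta'} = \int_0^1 (\theta-\theta')^\top \mcD\PopOp\bigl(\theta'+s(\theta-\theta')\bigr)(\theta-\theta')\,ds \ge \alpha_R\norm{\theta-\theta'}^2 .
\]

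The main obstacle is the step that obtains $\cosh(R/2)$ rather than $e^{R/2}$. This requires averaging over $Y$ given $X$ before lower bounding, and it uses the assumption $\thetastar\in\Theta_R$, which the lemma needs implicitly for the constant $\alpha_R$.
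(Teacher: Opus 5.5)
Your proposal is correct and follows the paper's proof essentially step for step: the same conditional computation for (a), the same Jacobian $\tfrac12\E[XX^\top e^{-Y\ip{X}{\theta}/2}]$ lower bounded via $\E[e^{-Yv/2}\mid X]=\cosh((u-v)/2)/\cosh(u/2)\ge 1/\cosh(R/2)$, and the same integration along the segment (the paper invokes the mean value theorem on the same map). Your remark that the bound $|u|\le R$ needs $\thetastar\in\Theta_R$, and not merely $\thetastar\in\R^{\dim(X)}$ as the lemma is written, is accurate; the paper's proof relies on this standing assumption in exactly the same way.
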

Lemma~\ref{lem:A-properties} is proved in Section~\ref{sec:lem-A-proof}.
Part $(a)$ of the above lemma reveals that $\thetastar$ is a zero of the operator $\PopOp$, and taken together, both parts of Lemma~\ref{lem:A-properties} imply that $\thetastar$ is the unique solution to the population variational inequality~\eqref{eq:population-VI}.

Since we only have access to missing data, we are unable to compute the population operator $\PopOp$ and therefore use the empirical surrogate $\StochOp$ as in \Cref{eq:A-hat}. The following lemma establishes that the empirical operator is a good approximation of its population-based counterpart. Recall the scalars $(b_j(\boldsymbol{q}, R))_{j \in [d]}$ from~\Cref{eq:bk} and $V_{\boldsymbol{q},R}$ from \Cref{eq:V-qR}.

\begin{lemma}
\label{lem:A-hat-properties}
    The empirical operator $\StochOp$ in \Cref{eq:A-hat} satisfies the following:
    \begin{itemize}
        \item[(a)] For every $\theta \in \Theta_R$, $\mathbb{E}\bigl[\StochOp(\theta)\bigr] = \PopOp(\theta)$.
        \item[(b)] For every $\theta \in \Theta_R$, 
        \[
        \mathbb{E}\bigl[\bigl\| \StochOp(\theta)\bigr\|^2\bigr] \leq \exp(R) \cdot V_{\boldsymbol{q},R}.
        \]
    \end{itemize}
\end{lemma}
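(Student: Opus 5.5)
Part (a) follows from the coordinate-wise unbiasedness identity \eqref{eq:gjXj-generic-exp} together with independence of the masks; part (b) is a direct second-moment computation, again exploiting that independence.

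\emph{Part (a).} Fix $\theta\in\Theta_R$ and $j\in[d]$, and condition on $(X,Y)$. The factor $-Y\frac{Z_j}{q_j}\exp\{-\frac12 Y\theta_jZ_j\}$ is exactly $\widehat g_j(M_j,Z_j;q_j)$ with $g_j(x)=-Yx\exp\{-\frac12 Y\theta_j x\}$, since $g_j(0)=0$. Each $\widehat f_k(\theta)$ equals $\widehat g_k$ with $g_k(x)=\exp\{-\frac12 Y\theta_k x\}$, since $g_k(0)=1$. The masks $M_1,\ldots,M_d$ are independent of one another and of $(X,Y)$, so the conditional expectation of the product factorizes. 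By \eqref{eq:gjXj-generic-exp} it then equals $-YX_j\exp\{-\frac12 Y\ip{X}{\theta}\}$. Taking the outer expectation and using \eqref{eq:op-expand} gives $\E[\StochOp(\theta)]_j=[\PopOp(\theta)]_j$.

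\emph{Part (b).} Again condition on $(X,Y)$ and use independence of the masks to factorize the second moment. This gives
\[
\E\bigl[[\StochOp(\theta)]_j^2\mid X,Y\bigr]=\frac{X_j^2}{q_j}e^{-Y\theta_jX_j}\prod_{k\ne j}\E\bigl[\widehat f_k(\theta)^2\mid X,Y\bigr].
\]
Write $a_k:=e^{-\frac12 Y\theta_kX_k}$. A direct computation gives
\[
\E[\widehat f_k^2\mid X,Y]=(1-q_k)+\frac{1}{q_k}\bigl(1-q_k+q_ka_k\bigr)^2=a_k^2+\frac{1-q_k}{q_k}(a_k-1)^2 .
\]
Factoring out $a_k^2$ and taking the product, the total exponential prefactor becomes $\prod_k a_k^2=e^{-Y\ip{X}{\theta}}\le e^{\|X\|\|\theta\|}\le e^{R}$ by Cauchy--Schwarz. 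What remains is
\[
\prod_{k\ne j}\Bigl(1+\frac{1-q_k}{q_k}(1-a_k^{-1})^2\Bigr)=\exp\Bigl\{\sum_{k\ne j}\phi_{q_k}\bigl(|1-e^{\frac12 Y\theta_kX_k}|\bigr)\Bigr\}.
\]

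The main obstacle is matching this with the definition \eqref{eq:bk} of $b_j$, whose arguments must lie in $[0,1]$ with total sum at most $R/2$. The quantity $|1-e^{u}|$ need not lie in $[0,1]$ when $u>0$, so the factorization above is the wrong one in that case. I would therefore choose the factorization per coordinate according to the sign of $u_k:=\frac12 Y\theta_kX_k$. We have $\E[\widehat f_k^2]=(1-q_k)(1-a_k)^2/q_k\cdot$ (up to rearrangement) $+\,\max(a_k,1)^2$-type bound. More precisely, one can write
\[
\E[\widehat f_k^2]=m_k^2\Bigl(1+\tfrac{1-q_k}{q_k}\bigl(1-\tfrac{\min(a_k,1)}{\max(a_k,1)}\bigr)^2\Bigr)
\]
with $m_k=\max(a_k,1)$, or use $a_k^2$ when $a_k\ge1$. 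Then $s_k:=1-e^{-|u_k|}\in[0,1]$ and $s_k\le|u_k|$, so
\[
\sum_k s_k\le\tfrac12\sum_k|\theta_k||X_k|\le\tfrac12\|\theta\|\|X\|\le R/2 .
\]
The leftover prefactor is
\[
\prod_k\max(a_k,1)^2=\exp\Bigl\{\sum_k(-2u_k)_+\Bigr\},
\]
together with the factor $e^{-Y\theta_jX_j}$. I need this to be at most $e^{R}$. It is bounded by $e^{\sum_k|\theta_kX_k|}\le e^{\|\theta\|\|X\|}\le e^R$, provided I arrange the split so that only $\sum_k|2u_k|$ appears. Since $2|u_k|=|\theta_kX_k|$ and $\sum_k|\theta_kX_k|\le R$, this works.

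Hence
\[
\E\bigl[[\StochOp]_j^2\mid X,Y\bigr]\le e^R X_j^2\,\frac{1}{q_j}\exp\Bigl\{\sum_{k\ne j}\phi_{q_k}(s_k)\Bigr\}\le e^R X_j^2\,b_j(\boldsymbol q,R).
\]
Taking expectations and summing over $j$ gives $\E\|\StochOp(\theta)\|^2\le e^R V_{\boldsymbol q,R}$.
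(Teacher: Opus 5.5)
Your proposal is correct and follows essentially the paper's route: condition on $(X,Y)$, factorize over the independent masks, compute $\E[\widehat f_k^2\mid X,Y]=a_k^2+\frac{1-q_k}{q_k}(a_k-1)^2$, and bound each factor by an exponential prefactor times $1+\frac{1-q_k}{q_k}s_k^2$ with $s_k\in[0,1]$ and $\sum_k s_k\le R/2$. The paper uses the prefactor $e^{2|u_k|}$ and $s_k=\min(|u_k|,1)$, whereas you use $\max(a_k,1)^2$ and $s_k=1-e^{-|u_k|}$; both lead to the same $e^R$ and $b_j(\boldsymbol{q},R)$. There are two small slips: the intermediate term in your computation should read $\frac{1}{q_k}(q_k+a_k-1)^2$, not $\frac{1}{q_k}(1-q_k+q_ka_k)^2$, and your display with $m_k=\max(a_k,1)$ is an inequality (in the right direction) rather than an identity when $a_k<1$.
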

Lemma~\ref{lem:A-hat-properties} is proved in Section~\ref{sec:lem-Ahat-proof}. Part $(a)$ of the above lemma establishes that the empirical operator $\StochOp$ is an unbiased estimator of the population operator $\PopOp$ and part $(b)$ shows that its second moment can be controlled at the level $V_{\boldsymbol{q}, R}$.  In the following section, we will use these two lemmas to prove Theorem~\ref{thm:general-sgd-rate}.

\subsection{Proof of Theorem~\ref{thm:general-sgd-rate}}
Recall the definition of $\widehat\theta^{(t+1)}$ from \Cref{eq:Proj-SA}. We have
\begin{align*}
     \bigl \| \widehat\theta^{(t+1)} - \thetastar \bigr \|^2
     &= \bigl \| \Pi_{\Theta_R}\bigl\{ \widehat\theta^{(t)} - \eta_t \StochOp(\widehat\theta^{(t)}; Y^{(t+1)}, M^{(t+1)}, Z^{(t+1)})\bigr\} - \Pi_{\Theta_R}(\thetastar) \bigr \|^2 \\
     &\leq \bigl \| \widehat\theta^{(t)} - \eta_t \StochOp(\widehat\theta^{(t)}; Y^{(t+1)}, M^{(t+1)}, Z^{(t+1)}) - \thetastar \bigr\|^2 \\
     &= \bigl \| \widehat\theta^{(t)} - \thetastar \bigr \|^2 - 2\eta_t \bigl\langle \widehat\theta^{(t)} - \thetastar, \StochOp(\widehat\theta^{(t)}; Y^{(t+1)}, M^{(t+1)}, Z^{(t+1)})\bigr\rangle \\ 
     &\hspace{6.5cm} + \eta_{t}^2 \bigl \| \StochOp(\widehat\theta^{(t)}; Y^{(t+1)}, M^{(t+1)}, Z^{(t+1)})\bigr\|^2,
\end{align*}
Next, let $\mathcal{F}_t := \sigma\bigl( \widehat\theta^{(0)}, \{Y^{(s)}, M^{(s)}, Z^{(s)}\}_{s=1}^t \bigr)$ denote the $\sigma$-algebra generated by the observations before the $t$th update, where $\mathcal{F}_0 := \sigma\bigl(\widehat\theta^{(0)}\bigr)$. 
Then
\begin{align}
\label{eq:error-expand}
    \E\Bigl[\bigl \| \widehat\theta^{(t+1)} - \thetastar \bigr \|^2 \Big| \mathcal{F}_t\Bigr] &\leq 
    \bigl \| \widehat\theta^{(t)} - \thetastar \bigr \|^2 - 2\eta_t \bigl\langle \widehat\theta^{(t)} - \thetastar, \mathbb{E}\bigl[\StochOp(\widehat\theta^{(t)}; Y^{(t+1)}, M^{(t+1)}, Z^{(t+1)}) \mid \mathcal{F}_t\bigr]\bigr\rangle \nonumber\\ 
    &\hspace{5cm} + \eta_{t}^2 \mathbb{E}\bigl[\bigl \| \StochOp(\widehat\theta^{(t)}; Y^{(t+1)}, M^{(t+1)}, Z^{(t+1)})\bigr\|^2 \mid \mathcal{F}_t\bigr] \nonumber\\
    &\overset{\1}{\leq} \bigl \| \widehat\theta^{(t)} - \thetastar \bigr \|^2 - 2\eta_t \bigl\langle \widehat\theta^{(t)} - \thetastar, \PopOp(\widehat\theta^{(t)})\bigr\rangle + \eta_{t}^2 \cdot e^R \cdot V_{\boldsymbol{q}, R} \nonumber \\
    &= \bigl \| \widehat\theta^{(t)} - \thetastar \bigr \|^2 - 2\eta_t \bigl\langle \widehat\theta^{(t)} - \thetastar, \PopOp(\widehat\theta^{(t)}) - \PopOp(\widehat\thetastar)\bigr\rangle + \eta_{t}^2 \cdot e^R \cdot V_{\boldsymbol{q}, R} \nonumber \\
    &\overset{\2}{\leq} (1 - 2 \alpha_R \eta_t) \, \bigl\|\widehat\theta^{(t)}-\thetastar\bigr\|^2 + \eta_t^2 \cdot e^R \cdot V_{\boldsymbol{q}, R},
\end{align}
where in $\1$ we have applied Lemma~\ref{lem:A-hat-properties}\emph{(a)} to simplify the inner product, and Lemma~\ref{lem:A-hat-properties}\emph{(b)} to upper bound the second moment term, while in $\2$ we have invoked the strong monotonicity of the operator~$\PopOp$. 
Marginalizing over $\mathcal{F}_t$ then yields the inequality
\[
\E\bigl \| \widehat\theta^{(t+1)} - \thetastar \bigr \|^2  
\le (1 - 2 \alpha_R \eta_t) \, \mathbb{E}\bigl\|\widehat\theta^{(t)}-\thetastar\bigr\|^2 + \eta_t^2 \cdot e^R \cdot V_{\boldsymbol{q}, R}.
\]
Thus, writing $\Delta_t := \mathbb{E}\|\widehat\theta^{(t)} - \thetastar\|^2$, substituting the step size choice $\eta_t = \{\alpha_R (t+2)\}^{-1}$ from \Cref{eq:def-eta-alpha} and simplifying, we obtain the recursive relation
\begin{align} 
\label{ineq:Deltat-ineq1}
\Delta_{t+1} \leq \frac{t}{t+2} \Delta_t + \frac{e^R}{\alpha_R^2 (t+2)^2} V_{\boldsymbol{q}, R}.
\end{align}
We claim that this recursive inequality yields 
\begin{align}
    \label{ineq:recursive-ineq-Deltat}
    \Delta_t\leq \frac{e^R}{t+1} \dfrac{V_{\boldsymbol{q}, R}}{\alpha_R^2},
\end{align}
and prove this by induction.

\noindent \underline{Base case: $t = 0$}.  Since 
$\widehat\theta^{(0)},\thetastar\in\Theta_R$, we have 
\begin{align*}
    \Delta_0 = \E\bigl\|\widehat\theta^{(0)}-\thetastar\bigr\|^2 \le 4 R^2 &\overset{\1}{\leq} \bigl(e^R + 1\bigr)^2 \cdot \frac{1}{\mu} \sum_{j = 1}^d b_j(\boldsymbol{q}, R) \\
    &\overset{\2}{\leq} \frac{(e^R+1)^2}{\mu^2} \,
    \Big\{ \sum_{j=1}^d \E[X_j^2] \cdot  b_j(\boldsymbol{q}, R) \Big\} \overset{\3}{=} \frac{e^R \cdot V_{\boldsymbol{q}, R}}{\alpha_R^2},  
\end{align*}
where $\1$ follows from the fact that by definition~\eqref{eq:bk}, we have $b_j \geq q_j^{-1} \geq 1$ and $\mu \leq 1/d \leq 1$, $\2$ follows from the fact $\mathbb{E}[X_j^2] \geq \lambda_{\min}\bigl(\E[XX^\top]\bigr) \geq \mu$, and $\3$ follows from the definition of $\alpha_R$ in \Cref{eq:def-eta-alpha} which leads to the identity
\begin{equation}
\label{eq:mu-alpha-identity}
\frac{(e^R + 1)^2}{\mu^2} 
= \frac{e^R (e^{R/2} + e^{-R/2})^2}{\mu^2}
= \frac{4e^R\cosh^2(R/2)}{\mu^2}
= \frac{e^R}{\alpha_R^2}.
\end{equation}

\noindent \underline{Inductive step}. Assume that the inequality~\eqref{ineq:recursive-ineq-Deltat} holds at iteration $t$.  Applying the recursive relation~\eqref{ineq:Deltat-ineq1}, we obtain
\begin{align*}
\Delta_{t+1} 
&\leq \frac{t}{t+2} \Delta_t + \frac{e^R}{\alpha_R^2 (t+2)^2} V_{\boldsymbol{q}, R} \leq \frac{e^R \cdot V_{\boldsymbol{q}, R}}{\alpha_R^2} \, \biggl[\frac{t}{(t+1)(t+2)} + \frac{1}{(t+2)^2}\biggr]
\,\leq\, \frac{1}{t+2} \cdot \frac{e^R \cdot V_{\boldsymbol{q}, R}}{\alpha_R^2}.
\end{align*}
This establishes Ineq.~\eqref{ineq:recursive-ineq-Deltat} by induction, and we conclude that
\[
    \E\bigl\|\widehat\theta^{(n)}-\thetastar\bigr\|^2
    \,=\,
    \Delta_n
    \le
    \frac{e^R}{n+1}
    \dfrac{V_{\boldsymbol{q}, R}}{\alpha_R^2}
    \overset{\1}{\le}
    \frac{(e^R+1)^2}{n\mu^2} \cdot V_{\boldsymbol{q},R},
\]
where $\1$ uses the same identity as in \Cref{eq:mu-alpha-identity}. 
\hfill\qed


\section{Proof of the lower bound}
\label{sec:proof-lower}

The proof of \Cref{thm:minimax-lower-bound} is based on two complementary lower bounds. The first captures the intrinsic difficulty associated with large parameter radius $R$, and yields an exponential dependence on $R$. The second is tailored to the missing covariates setting, and captures the additional difficulty induced by the MCAR mechanism, leading to the regime-dependent penalty in $q$, $R\mu$, and $\mu$. We establish these two bounds separately and then combine to obtain the claimed minimax lower bound.

The first construction is captured by the following proposition, which is proved in Section~\ref{subsec:baseline-assouad} below. 
\begin{tcolorbox}
\begin{proposition}
\label{prop:baseline-assouad}
There exists a universal constant $c>0$ such that, for every observation probability $q\in(0,1]$, radius $R>0$, and curvature parameter $0<\mu\le\nicefrac{1}{3}$, the minimax risk satisfies
\[
    \mathfrak{R}_n(R,\mu,q)
    \ge
    c\cdot\min\left\{
        R^2,\frac{e^{R/4}}{nq\mu^2}
    \right\}.
\]
\end{proposition}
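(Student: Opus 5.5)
We will use an Assouad hypercube built on a specific covariate distribution, plus a large common component in the parameter. Take $d = \lfloor 1/(3\mu)\rfloor \ge 1$ orthogonal ``perturbation'' directions together with one extra ``bias'' coordinate. Let $X$ take the form $X = \xi_0 e_0 + \sum_{j=1}^{d} \xi_j e_j/\sqrt{3d}$, rescaled so that $\|X\|_2\le 1$. Here $\xi_0$ is a fixed constant of order one, say $1/\sqrt{3}$, and the $\xi_j$ are independent Rademacher signs, possibly with $\xi_0$ itself randomized by a sign so that $\E[XX^\top]$ is diagonal. Its entries are then of order $1/d \gtrsim \mu$, so $P_X \in \mathcal{P}_\mu$ once the constants are tuned; the restriction $\mu\le 1/3$ is what guarantees this dimension is available.

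The hypotheses are $\theta_v = \theta_0 + \delta\sum_{j=1}^d v_j e_j$ for $v\in\{\pm1\}^d$. The common part $\theta_0$ is a multiple of $e_0$ with $\langle X,\theta_0\rangle \approx R/2$ in absolute value, so the logistic probabilities sit near saturation with $\sigma'(\cdot)\asymp e^{-R/2}$. We choose $\delta$ small enough that $\|\theta_v\|\le R$, i.e.\ $\delta^2 d\lesssim R^2$, and also $\delta/\sqrt{d}\lesssim 1$. Assouad's lemma then gives
\[
\mathfrak{R}_n \gtrsim d\,\delta^2 \Bigl(1-\max_{v\sim v'} \TV\bigl((P^{(\theta_v)})^{\otimes n},(P^{(\theta_{v'})})^{\otimes n}\bigr)\Bigr).
\]

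To bound the total variation between neighbours differing in coordinate $j$, we first observe that if $M_j=0$, the observed law of $(Y,M,Z)$ is in fact identical under $\theta_v$ and $\theta_{v'}$. This is because $\xi_j$ is an independent symmetric sign multiplying a perturbation $\pm\delta\xi_j/\sqrt{3d}$, so marginalizing out $\xi_j$ gives the same mixture under both hypotheses. Hence the per-sample $\chi^2$ (or KL) divergence is at most $q$ times the full-data divergence. The full-data KL for logistic models is bounded by $\E[\sigma'(\langle X,\tilde\theta\rangle)\langle X,\theta_v-\theta_{v'}\rangle^2] \lesssim e^{-R/2}\,\delta^2/d$. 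The saturation factor is uniform because $|\langle X,\theta\rangle|$ stays within $R/2\pm O(1)$ across the hypercube, given $\delta/\sqrt{d}\lesssim1$. The $n$-sample KL is therefore $\lesssim n q e^{-R/2}\delta^2/d$. Setting $\delta^2 \asymp \min\{R^2/d,\, d\, e^{R/2}/(nq)\}$ makes this at most a small constant, and Pinsker's inequality then bounds the total variation by $1/2$.

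Substituting back, the risk is $\gtrsim d\delta^2 \asymp \min\{R^2,\, d^2 e^{R/2}/(nq)\}$. With $d\asymp 1/\mu$ this is $\gtrsim \min\{R^2, e^{R/4}/(nq\mu^2)\}$, where we deliberately weaken $e^{R/2}$ to $e^{R/4}$ to absorb constant losses from the bias coordinate and norm rescalings. The step needing most care is the dimension accounting. We must place $\langle X,\theta_0\rangle\approx R/2$ while keeping $\|X\|\le1$, $\lambda_{\min}\gtrsim\mu$, and $\|\theta_v\|\le R$ simultaneously. The bias coordinate may force $\theta_0$ of norm about $R$, which uses part of the budget; we therefore set $\|\theta_0\|=R/2$ and reserve $R/2$ for the perturbations. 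We must also check the uniform bound $\sigma'\lesssim e^{-R/2}$ along the segment between hypotheses, which may cost constant factors in the exponent. For small $R$ ($R\lesssim1$) no saturation is needed, and the same argument with $\theta_0=0$ gives $\min\{R^2,1/(nq\mu^2)\}$ directly.
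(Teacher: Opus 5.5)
Your architecture matches the paper's: Assouad over $\asymp 1/\mu$ orthogonal directions on a large common component, exact indistinguishability on $\{M_j=0\}$ giving the factor $q$, and a logistic saturation factor $e^{-cR}$. But the step you treat as bookkeeping, uniform saturation across the hypercube, fails for your \emph{dense} covariate design. Since every perturbation coordinate is active in every sample,
$\langle X,\theta_v\rangle-\langle X,\theta_0\rangle=\frac{\delta}{\sqrt{3d}}\sum_{k=1}^d v_k\xi_k=:\delta S$.
This has typical size $\delta/\sqrt3$ and maximal size $\delta\sqrt{d/3}$; the $O(\delta/\sqrt d)$ you quote is only the contribution of the flipped coordinate $j$. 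The natural bound is then
$\E[\sigma'(\langle X,\theta_v\rangle)]\le e^{-a}\E[e^{\delta|S|}]\le 2e^{-a+\delta^2/6}$ with $a=\langle X,\theta_0\rangle$.
The loss is genuine, not an artifact of the bound. Take $\xi_0=1/\sqrt3$, $\|\theta_0\|=R/2$ and $\delta=R/(2\sqrt d)$; the event $\{\xi_k=-v_k\ \text{for all }k\}$ has probability $2^{-d}$, and on it $\langle X,\theta_v\rangle=0$. Hence $\E[\sigma'(\langle X,\theta_v\rangle)]\ge 2^{-d}/4$, which is far above $e^{-R/4}$ once $R\gtrsim d$.

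Your choice $\delta^2\asymp\min\{R^2/d,\,de^{R/2}/(nq)\}$ gives $\delta^2\gg R$ exactly when $R\gg d\asymp 1/\mu$ and $n$ is moderate. So in the regime $R\mu\gtrsim 1$, one of the three regimes of the main theorem, your neighbour bound $\mathrm{KL}\lesssim nq\,e^{-cR}\delta^2/d$ is unjustified. The side condition $\delta/\sqrt d\lesssim 1$ does not rescue it. Your choice of $\delta$ does not imply it (on the $R^2/d$ branch it forces $R\lesssim d$). Even when it holds, the fluctuation $\delta\sqrt{d/3}$ can be of order $d$. Capping $\delta^2\lesssim R$ restores saturation but yields only $\min\{dR,\,d^2e^{R/4}/(nq)\}$, short of the target by a factor up to $R\mu$.

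The missing idea is a \emph{sparse} perturbation design, as in the paper. Take $p\asymp 1/\mu$, draw $K\sim\mathrm{Unif}\{1,\dots,p\}$ and an independent sign $B$, and set $X=\frac{1}{\sqrt2}e_1+\frac{B}{\sqrt2}e_{K+1}$ and $\theta_\omega=\frac{R}{\sqrt2}e_1+2\sqrt2\,\alpha\sum_k\omega_k e_{k+1}$.
\begin{itemize}
\item $\E[XX^\top]=\mathrm{diag}(\frac12,\frac{1}{2p}I_p)$ still has $\lambda_{\min}\gtrsim\mu$.
\item $\frac12 X^\top\theta_\omega=\frac R4+\alpha B\omega_K$ deviates from the common part by at most $\alpha$, deterministically.
\item $\alpha\le R/24$ follows from $\alpha^2\le c'R^2/p$ with small $c'$, and gives the uniform factor $e^{-R/4}$.
\item Conditioning on $K$ (data processing) reduces a neighbouring pair to a $1/p$-weighted two-point logistic comparison. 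That comparison vanishes when $M_{K+1}=0$ and has an exact $\chi^2$ when $M_{K+1}=1$.
\end{itemize}
With this design the rest of your argument goes through unchanged.
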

\end{tcolorbox}

The second result leads to the three regimes that appear in the statement of \Cref{thm:minimax-lower-bound}; the proposition is proved in Section~\ref{subsec:large-R}.

\begin{tcolorbox}
\begin{proposition}
\label{prop:regime-dependent-lower}
There exist universal constants $c,C>0$ such that the following holds.
Let $q\in(0,1]$, $R>0$, and $0<\mu\le\nicefrac{1}{3}$. Suppose that
\[
    n \ge
    \frac{C}{\mu^2} \cdot \max\Biggl\{
        \frac{1}{qR^2}\;\!,\;\!
        \exp\biggl(
            C \min\biggl\{
                \frac{R}{\sqrt{q}},\frac{1}{\mu}
            \biggr\}
            \log\biggl(\frac{e}{q}\biggr)
        \biggr)
    \Biggr\}.
\]
Then, the minimax risk satisfies
\[
    \mathfrak{R}_n(R,\mu,q)
    \ge
    \frac{c}{nq\mu^2} \cdot
    \begin{cases}
        \displaystyle
        \exp\left\{ c\frac{R}{\sqrt{q}}\right\},
        & \text{if } R\mu\le\sqrt{q},\\ 
        \displaystyle
        \exp\left\{
            \frac{c}{\mu} \log\left(1+\frac{1-q}{q}R^2\mu^2\right)
        \right\},
        & \text{if } \sqrt{q}<R\mu\le1,\\ 
        \displaystyle
        \exp\left\{\frac{c}{\mu}\log\left(\frac{1}{q}\right)\right\},
        & \text{if } R\mu>1.
    \end{cases}
\]
\end{proposition}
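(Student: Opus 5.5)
The plan is to reduce the bound to Assouad's lemma over a hypercube of $K$ independent \emph{blocks}, where each block carries a two-point construction that is exactly invisible unless many of its coordinates are observed. The block size $m$ and the perturbation strength are tuned separately in each of the three regimes. The sample-size condition will only be used to guarantee that the chosen perturbation satisfies $\delta\lesssim R$, so that the hypotheses stay inside $\Theta_R$.

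\textbf{Step 1 (two-point block).} Fix a block of $m$ coordinates and let the covariates in the block be $a\cdot\varepsilon$ with $\varepsilon\in\{\pm1\}^m$. The two hypotheses share a large common parameter component $s\mathbf 1_m$ and differ by a small perturbation of size $\delta$ along a fixed direction $v$ in the block. Their covariate laws $P_\varepsilon^{\pm}$ are allowed to differ by a coupling term. I will choose $P^{\pm}_\varepsilon$ so that the joint densities $p^{\pm}(\varepsilon,y)=P^{\pm}_\varepsilon(\varepsilon)\,\sigma\bigl(y\,a\langle\varepsilon,\theta^{\pm}\rangle\bigr)$ have, for each $y$, identical Fourier--Walsh coefficients on every set $S$ with $|S|<m/2$. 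Then for every missingness pattern observing fewer than $m/2$ block coordinates, the marginals of $(Y,Z_S)$ coincide, so the likelihood ratio is exactly $1$.

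Concretely, I will take $P^\pm_\varepsilon\propto 1\pm\delta\,\beta(\varepsilon)$, where $\beta$ is a Boolean-coefficient function chosen to cancel the low-degree part of $\sigma(a\langle\varepsilon,\theta^+\rangle)-\sigma(a\langle\varepsilon,\theta^-\rangle)$. This is the "Boolean function whose conditional expectation equals $1$ when fewer than half the coordinates are observed" mentioned after \Cref{thm:minimax-lower-bound}. I will then check two things: that these are genuine probability measures, which requires $\delta\|\beta\|_\infty\le 1/2$, and that $\lambda_{\min}(\E[XX^\top])\gtrsim a^2$, so that $a^2\asymp\mu$ after normalising by $\|X\|_2\le1$ across $K\asymp 1/(\mu m)$ blocks.

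\textbf{Step 2 ($\chi^2$ per block).} Conditioning on $M$, the $\chi^2$ divergence between the observed laws is at most $\P(\mathrm{Bin}(m,q)\ge m/2)$ times the worst-case $\chi^2$ on observed patterns. That worst case is of order $\delta^2 a^2\,e^{O(sam)}$, because the logistic variance is of order $e^{-|x^\top\theta|}$ and the coupling amplifies it. Bounding $\P(\mathrm{Bin}(m,q)\ge m/2)\le (4q)^{m/2}$ and reading the result as an effective sample-size reduction yields a per-block divergence $n\,\delta^2\mu\,q\,\Gamma(m,q,s)^{-1}$. Here $\Gamma$ is the exponential factor $\exp\{c\,m\log(1/q)\}$ when $m$ is unconstrained, and is smaller when the norm budget $s\sqrt m\lesssim R$ binds.

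\textbf{Step 3 (Assouad and optimisation).} With $K\asymp 1/(\mu m)$ blocks, Assouad gives a risk of at least $K\delta^2$ whenever the per-block divergence is $\lesssim1$. Taking $\delta^2\asymp\Gamma/(nq\mu)$ therefore gives risk $\gtrsim \Gamma/(nq\mu^2 m)$, and the factor $1/m$ is absorbed into the exponent. The remaining constraints are $m\le 1/\mu$ (dimension), $\|\theta\|\le R$, and the requirement that the common component be large enough for the tail amplification, roughly $s\asymp\sqrt{q}$ per coordinate. Optimising $m$ under these constraints gives three regimes. When $R\mu\le\sqrt q$ we take $m\asymp R/\sqrt q$, which yields $e^{cR/\sqrt q}$. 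When $\sqrt q<R\mu\le1$ we take $m\asymp1/\mu$ with $s\asymp R\mu$, which yields $\exp\{\frac c\mu\log(1+\frac{1-q}{q}R^2\mu^2)\}$. When $R\mu>1$ the factor saturates at $\exp\{\frac c\mu\log\frac1q\}$.

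\textbf{Main obstacle.} I expect Step 1 to be the hardest part. The cancellation of all Fourier coefficients of degree below $m/2$ must be exact, while the coupling function must keep $\|\beta\|_\infty$ small enough that the covariate laws remain nonnegative and retain curvature $\mu$. My first attempt would build $\beta$ from a symmetric (threshold or parity-type) function of $\sum_i\varepsilon_i$. By symmetry this reduces the cancellation to a one-dimensional Krawtchouk-polynomial problem in $\sum_i\varepsilon_i$, where the size of the required correction can be bounded explicitly.
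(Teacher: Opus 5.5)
There is a genuine gap, in two places. \emph{First, the block architecture cannot reach the regime-I rate.} Hiding is governed by the per-coordinate logit $\lambda=as$ of the common component, which must be $\gtrsim\sqrt{q}$; this is your Step~3 constraint, read as a constraint on the logit. You either concatenate the $K\asymp 1/(\mu m)$ blocks at scale $a\asymp\sqrt{\mu}$, or select one at random at scale $1/\sqrt{m}$. In both cases the $K$ separate common components have total norm of order at least $\lambda\sqrt{m/\mu}$, and your budget ``$s\sqrt{m}\lesssim R$'' does not account for this. With $\lambda\asymp\sqrt{q}$ and $m\asymp R/\sqrt{q}$ this norm is $\sqrt{R\sqrt{q}/\mu}$. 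That exceeds $R$ exactly when $R\mu<\sqrt{q}$, i.e.\ throughout regime~I. At best you can take $m\lesssim R^2\mu/q$, which gives only $\exp\{c\,\tfrac{R}{\sqrt{q}}\cdot\tfrac{R\mu}{\sqrt{q}}\}$. With concatenated blocks there is a further problem: $Y$ depends on the sum of all block logits, so a cancellation tuned to one block's logit need not survive the other blocks' offsets.

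The paper instead uses a \emph{single, shared} hiding block $A\in\{\pm1\}^m$ at the heavy scale $1/\sqrt{2m}$. Its norm cost is only $2\sqrt{2}\,m\lambda$, and its curvature $\operatorname{sech}^2(\lambda)/(2m)\ge\mu$ needs only $m\lesssim 1/\mu$. The $p\asymp 1/\mu$ Assouad directions go on separate coordinates, $X=\frac{1}{\sqrt{2m}}\sum_j A_je_j+\frac{B}{\sqrt{2}}e_{m+k}$ with $k$ uniform on $[p]$. One hiding block then masks every direction at once, and each edge's $\chi^2$ gains a factor $1/p$. The risk is $p^2/(nq\,\mathcal{B}_{m,\lambda,q})$, with no $1/m$ loss. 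Your factor $1/m\asymp\mu$ can wipe out the gain when that gain is only polynomial in $1/\mu$, e.g.\ when $\frac{1}{\mu}\log(1+\frac{1-q}{q}R^2\mu^2)\asymp\log(1/\mu)$.

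\emph{Second, Step~2 omits the cost that actually drives all three regimes.} Your worst-case factor $e^{O(sam)}$ \emph{decreases} as the common component shrinks, so nothing in Step~2 justifies the constraint $s\gtrsim\sqrt{q}$. Taken at face value with $s\to0$ and $m\asymp 1/\mu$, it gives a lower bound of order $(4q)^{-c/\mu}/(nq\mu)$ even at $R=\sqrt{q}$. For $q<1/4$ and small $\mu$ this contradicts the $O(1/(nq\mu^2))$ upper bound of \Cref{cor:sgd-rate-simplified}.

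In fact the coupling must blow up as $s\to0$. Your $\beta$ can have no Fourier mass on sets $U$ with $1\le|U|<m/2$, otherwise the covariate marginals differ. Take $P^\pm=2^{-m}(1\pm\delta\beta)$, $\theta^\pm=s\mathbf{1}_m\pm\tfrac{\delta}{2}v$, $\sigma^\pm(\varepsilon):=\sigma(a\langle\varepsilon,\theta^\pm\rangle)$ and $s=0$. Then $\sigma^++\sigma^-\equiv1$, so the low-degree part of $(\sigma^++\sigma^-)\beta$ vanishes and nothing cancels the degree-one part of $\sigma^+-\sigma^-$.

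The paper sidesteps your Step~1 by generating the null backwards: $Y$ uniform, $T_j$ i.i.d.\ with $\mathbb{E}_0T_j=\tanh\lambda$, and $A=YT$. The complete-data likelihood ratio is then $e^{\alpha V}\{\cosh\alpha-V\sinh\alpha\,J(T)\}$ with $V=YB$ independent of $T$, and hiding reduces to $\mathbb{E}_0[J(T)\mid T_S]=1$ for $|S|\le r$. The majority-based solution has character coefficients $g_U\tanh(\lambda)^{-|U|}$. On the full pattern, $\mathbb{E}_0[(J(T)-1)^2]$ is at least of order $\sinh(\lambda)^{-2m}/m$. For small $q$ and $\lambda\asymp\sqrt{q}$, even its product with $\mathbb{P}(\mathrm{Bin}(m,q)\ge m/2)$ is exponentially large, so a ``probability times worst case'' bound cannot work. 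What works is the level-by-level Parseval identity in the biased product basis, averaged over the random observed set $S$:
\[
\mathbb{E}_0\bigl[(J_S(T_S)-1)^2\bigr]=\sum_{U\neq\varnothing}\Bigl(\frac{q}{\sinh^2\lambda}\Bigr)^{|U|}h_U^2\le\mathcal{B}_{m,\lambda,q},
\]
with $h_U$ as in \Cref{lem:exact-hU}. It matches the chance $q^{|U|}$ of observing all of $U$ against the cost $\sinh(\lambda)^{-2|U|}$.

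This balance is what forces $\lambda\asymp\sqrt{q}$ in regime~I. With $\lambda=2\min\{R\mu,1\}$, it is also what produces $\log(1+\frac{1-q}{q}R^2\mu^2)$ in regimes~II--III. Your $(4q)^{m/2}$ has no $R\mu$-dependence, so that formula is asserted rather than derived. For the same reason, the sample-size hypothesis is not only for $\delta\lesssim R$. Its exponential term is there to guarantee $\alpha\|J\|_\infty\le\frac12$ and $\alpha^2\le q$, since $\|J\|_\infty^2\le(2+1/\sinh^2\lambda)^m$. Finally, you still need separate arguments in two edge cases. When $R/\sqrt{q}=O(1)$, $m\ge5$ is unavailable, and the paper uses \Cref{prop:baseline-assouad}. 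When $\mu$ is bounded below, the paper uses its $m=1$ construction.
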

\end{tcolorbox}

We now show that \Cref{prop:baseline-assouad,prop:regime-dependent-lower} together imply \Cref{thm:minimax-lower-bound}.

\paragraph{Proof of Theorem~\ref{thm:minimax-lower-bound}.}
First, choose the universal constants $c_1, C_0, C_1$ in the sample-size condition Eq.~\eqref{eq:sample-size-cond-main-thm} to satisfy 
\begin{equation}
\label{eq:sample-size-cond-master}
    n \ge
    \frac{C}{\mu^2} \cdot \max\Biggl[
        \frac{e^{R/8}}{qR^2}\;\!,\;\!
        \exp\biggl(
            C \min\biggl\{
                \frac{R}{\sqrt{q}},\frac{1}{\mu}
            \biggr\}
            \log\biggl(\frac{e}{q}\biggr)
        \biggr)
    \Biggr].
\end{equation}
Since $e^{R/8} \geq 1$, this implies that the lower bound on the sample size $n$ in Proposition~\ref{prop:regime-dependent-lower} is in force.  

Applying Proposition~\ref{prop:baseline-assouad}, we obtain the lower bound
\begin{equation}
\label{eq:baseline-lower-final}
    \mathfrak R_n(R,\mu,q) \ge c\cdot\min\left\{
        R^2,\frac{e^{R/4}}{nq\mu^2} 
    \right\} 
    \ge
    c \frac{e^{R/8}}{nq\mu^2},
\end{equation}
where the final inequality follows as the sample-size condition
in Eq.~\eqref{eq:sample-size-cond-master} implies $e^{R/8}/(nq\mu^2) \leq R^2$.

Continuing, we define the quantity
\[
    \zeta(q,R,\mu)
    :=
    \begin{cases}
        \displaystyle \frac{R}{\sqrt q}\log(2-q),
        & \text{if } R\mu\le\sqrt q,\\ 
        \displaystyle \frac{1}{\mu}\log\left(1+\frac{1-q}{q}R^2\mu^2\right),
        & \text{if } \sqrt q<R\mu\le1,\\ 
        \displaystyle \frac{1}{\mu}\log(1/q),
        & \text{if } R\mu>1,
    \end{cases}
\]
and apply Proposition~\ref{prop:regime-dependent-lower} in conjunction with the numeric inequality $\log(2 - q) \leq 1$ to obtain the lower bound 
\begin{equation}
\label{eq:regime-lower-final}
    \mathfrak R_n(R,\mu,q)
    \ge
    \frac{c}{nq\mu^2}\exp\{c\,\zeta(q,R,\mu)\}.
\end{equation}
Finally, combining \Cref{eq:baseline-lower-final,eq:regime-lower-final} and using $\max\{a,b\}\ge\sqrt{ab}$, we obtain
\[
    \mathfrak R_n(R,\mu,q)
    \ge
    \frac{c\,e^{R/16}}{nq\mu^2} \exp\left\{\frac{c}{2}\zeta(q,R,\mu)\right\}.
\]
This yields the desired result. \hfill \qed

\medskip\noindent 
It remains to prove \Cref{prop:baseline-assouad,prop:regime-dependent-lower}, which we do next. The proofs of some of the lemmas that we use in the proofs of these two propositions are deferred to \Cref{appsec:proof-lower}.

\subsection{Proof of \Cref{prop:baseline-assouad}}
\label{subsec:baseline-assouad}
The proof is based on a careful application of Assouad's lemma.  In particular, in this proof, we will let $\mathrm{dim}(X) = p+1$, where $p$ is defined as 
\begin{equation}
\label{eq:baseline-p}
    p:=\max\left\{1,\left\lfloor\frac{1}{4\mu}\right\rfloor\right\}.
\end{equation}
By construction,
\begin{equation}
\label{eq:baseline-p-bounds}
    \frac{1}{8\mu}\le p\le\frac{1}{\mu},
\end{equation}
so that $p \asymp 1/\mu$.  We then define a family of complete data distributions $P_{\omega} = \mathsf{Law}(X,Y) \in \mathcal{P}(\mathbb{R}^{p+1} \times \{-1,1\})$ and missing data distributions $Q_{\omega}  = \mathsf{Law}(Y, M, Z)$.

We define the complete data distributions $P_{\omega}$ as follows.  First, we define a common covariate distribution.  To this end, let $K \sim \operatorname{Unif}(1,\ldots,p)$, and let $B$ denote a Rademacher chosen independently of $K$. We then define the covariates
\begin{equation}
\label{eq:baseline-X}
    X:=\frac{1}{\sqrt{2}}e_1+\frac{B}{\sqrt{2}}e_{K+1},
\end{equation}
and take $P_X$ to denote its distribution.  Then, for each $\omega \in \{-1, 1\}^p$ we define the parameter
\begin{equation}
\label{eq:baseline-theta-omega}
    \theta_\omega:=\frac{R}{\sqrt{2}}e_1+2\sqrt{2}\,\alpha\sum_{k=1}^p\omega_k e_{k+1},
\end{equation}
where $\alpha>0$ denotes a scalar to be chosen.  Conditional on $X$, the response $Y$ follows the logistic model with underlying coefficient $\theta_{\omega}$, i.e. $\mathbb{P}(Y = 1 \mid X) = \sigma(X^{\top}\theta_{\omega}) = 1 - \mathbb{P}(Y = -1 \mid X)$, and taken together we have $P_{\omega} = P_X \times P_{Y \mid X}$.   Some straightforward calculations yield that for every $\omega\in\{-1,1\}^p$, we have
\begin{enumerate}[label=(\roman*)]
\item $\norm{X}_2=1 \quad \text{almost surely}$,

\item
$
    \E_{P_X}\bigl[XX^\top\bigr]
    =
    \operatorname{diag}\Bigl(\mfrac{1}{2},\mfrac{1}{2p}I_p\Bigr) 
    \succeq 
    \mu I_{p+1},
$

\item 
$\norm{\theta_\omega}_2^2 = \mfrac{R^2}{2} + 8p\alpha^2.$
\end{enumerate}

\smallskip\noindent
In particular, if $\alpha^2\le\nicefrac{R^2}{16p}$, then $\theta_\omega\in\Theta_R$.
Thus, for every $\omega\in\{-1,1\}^p$, we have $P_X \in \mathcal{P}_{\mu}$ (see \Cref{eq:def-P-mu}) and $\theta_{\omega} \in \Theta_R$ and so each instance lies within our class.

We next construct $Q_{\omega}$.  To this end, let $M=(M_1,\ldots,M_{p+1})$ have independent $\BER(q)$ coordinates and suppose that $M$ is independent of the fully observed data $(X,Y)$.  Let $Z := X \odot M$ and let $Q_\omega = \mathsf{Law}(Y, M, Z)$.  The following lemma controls the $\chi^2$ distance between $Q_{\omega}$ and $Q_{\omega'}$ when $\omega$ and $\omega'$ are neighbors.  For $k\in[p]$ and $\omega\in\{-1,1\}^p$, let $\omega^{(k)}$ denote the vertex obtained by flipping the sign of $\omega_k$, i.e., by replacing $\omega_k$ by $-\omega_k$.
\begin{tcolorbox}
\begin{lemma}
\label{lem:baseline-assouad-divergence}
Suppose that $0 < \alpha\le \nicefrac{R}{24}$. Then, for every $k\in[p]$ and every $\omega\in\{-1,1\}^p$,
\begin{equation}
\label{eq:baseline-edge-chi2-bound}
    \chi^2\bigl(Q_{\omega^{(k)}},Q_\omega\bigr)
    \le
    \frac{16q\alpha^2}{p}\exp\biggl(-\frac{R}{4}\biggr).
\end{equation}
Consequently,
\begin{equation}
\label{eq:baseline-edge-chi2-n}
    \chi^2\Bigl(Q_{\omega^{(k)}}^{\otimes n},Q_\omega^{\otimes n}\Bigr)
    \le
    \exp\biggl\{
        \frac{16nq\alpha^2}{p}\exp\biggl(-\frac{R}{4}\biggr)
    \biggr\}-1.
\end{equation}
\end{lemma}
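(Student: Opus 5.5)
The proof computes the $\chi^2$ divergence directly, by conditioning on the mask and on the covariate.

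\emph{Reduction to a single pattern.} Under both $Q_\omega$ and $Q_{\omega^{(k)}}$, the law of $M$ is the same, and given $M$ the observed covariates $Z$ have the same law, since $P_X$ is common to both. The two distributions differ only in the conditional law of $Y$ given $(M,Z)$. Consequently
\[
\chi^2\bigl(Q_{\omega^{(k)}},Q_\omega\bigr)=\E_{M,Z}\Bigl[\chi^2\bigl(\mathsf{Law}_{\omega^{(k)}}(Y\mid M,Z),\mathsf{Law}_{\omega}(Y\mid M,Z)\bigr)\Bigr].
\]
Now $\theta_\omega$ and $\theta_{\omega^{(k)}}$ differ only in coordinate $k+1$. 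The covariate $X$ has a nonzero entry in coordinate $k+1$ only on the event $\{K=k\}$, which has probability $1/p$. Off this event, $\ip{X}{\theta_\omega}=\ip{X}{\theta_{\omega^{(k)}}}$ for every realization, so the conditional laws of $Y$ coincide. This holds even after integrating over the missing coordinates, because the posterior of $(K,B)$ given $(M,Z)$ is the same under both hypotheses.

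\emph{Where the divergence can be positive.} Suppose $M_{k+1}=1$ and $Z_{k+1}\neq0$. Then $K=k$ and $B$ are revealed, and $X$ is fully known: the first coordinate is the constant $1/\sqrt2$, so it is known even when masked. In that case $P(Y=1\mid\cdot)$ equals $\sigma(a\pm2\alpha)$ under the two hypotheses, where $a=R/2$. This event has probability $q/p$. When $M_{k+1}=0$, the distinguishing coordinate is hidden. I expect this case to be the main obstacle. The likelihood of $Y$ is then a mixture over the unobserved $(K,B)$. The mixture includes the component $K=k$ with $B=\pm1$ equally likely, and $\sigma(a+2\alpha\omega_k)$ and $\sigma(a-2\alpha\omega_k)$ average to the same value regardless of the sign of $\omega_k$. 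Flipping $\omega_k$ merely swaps $B\mapsto-B$, so the mixture is unchanged and the conditional laws agree exactly. I will check this symmetry carefully, including the case where other coordinates $j+1$ are observed to be zero, which tells us $K\neq j$ or $M_{j+1}=0$. The swap argument works in all of these cases because $B$ stays uniform and independent of everything else given $K=k$. Thus only the event of probability $q/p$ contributes.

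\emph{Bernoulli $\chi^2$ bound and tensorization.} For Bernoulli laws with success probabilities $s'$ and $s$, $\chi^2=(s'-s)^2/\{s(1-s)\}$. Here $s=\sigma(a+2\alpha\omega_k)$ and $s'=\sigma(a-2\alpha\omega_k)$. By the mean value theorem, $|s'-s|\le4\alpha\max_{|u|\le2\alpha}\sigma'(a+u)$. Since $\alpha\le R/24$, we have $2\alpha\le R/12$, so $a+u\ge R/2-R/12\ge R/4+R/6$. Using $\sigma'(t)\le e^{-|t|}$ and $s(1-s)=\sigma'(a\pm 2\alpha)\ge\tfrac14e^{-(a+2\alpha)}$, the ratio is at most
\[
16\alpha^2\cdot4\,e^{-2(a-2\alpha)+(a+2\alpha)}=64\alpha^2e^{-a+6\alpha}\le64\alpha^2e^{-R/2+R/4}.
\]
The constant comes out as $64$ rather than the claimed $16$, so I will need to be more careful here. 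One option is to use $\sigma'(t)=\sigma(t)(1-\sigma(t))$ so that ratios of $\sigma'$ values telescope, giving $\chi^2\le 16\alpha^2\,\sigma'(\xi)^2/\sigma'(a\pm2\alpha)$ and a constant of $16$ up to $e^{O(\alpha)}$ factors, which the slack between $R/2$ and $R/4$ absorbs. Multiplying by the event probability $q/p$ gives the first bound, \Cref{eq:baseline-edge-chi2-bound}. The second bound, \Cref{eq:baseline-edge-chi2-n}, then follows from the product identity $1+\chi^2(P^{\otimes n},Q^{\otimes n})=(1+\chi^2(P,Q))^n\le e^{n\chi^2(P,Q)}$.
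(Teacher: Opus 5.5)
Your proposal is correct and follows essentially the paper's route: only the event $\{K=k,\,M_{k+1}=1\}$ of probability $q/p$ contributes, the hidden case cancels by the $B\mapsto -B$ symmetry, and a Bernoulli $\chi^2$ computation followed by tensorization finishes the proof. The differences are that the paper conditions on the latent $K$ via the data-processing inequality rather than on the posterior of $(K,B)$, and evaluates the two-point divergence exactly as $\sinh^2(2\alpha)/\cosh^2(R/4\mp\alpha)$ rather than by the mean value theorem.

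Your proposed repair of the constant does go through. Write $\chi^2=16\alpha^2\sigma'(\xi)^2/\sigma'(\eta)$ with $\eta=R/2\pm2\alpha$. Then use $\sigma'(\xi)\le e^{-(R/2-2\alpha)}$ and $\sigma'(\xi)/\sigma'(\eta)\le e^{|\xi-\eta|}\le e^{4\alpha}$, the latter because $|(\log\sigma')'|=|1-2\sigma|\le 1$. This gives $16\alpha^2e^{-R/2+6\alpha}\le 16\alpha^2e^{-R/4}$, exactly as in the paper. Bounding $\sigma'(\xi)^2$ and $1/\sigma'(\eta)$ separately is what cost you the extra factor of $4$.
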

\end{tcolorbox}
We provide the proof of Lemma~\ref{lem:baseline-assouad-divergence} in Section~\ref{proof:lem:baseline-assouad-divergence}.  The proof techniques analogous to those for the fully observed setting, but we need to condition on the probability of observing the $k$-th coordinate, from which the multiplicative factor of $q$ arises.  

Equipped with the $\chi^2$ bound from Lemma~\ref{lem:baseline-assouad-divergence}, we complete the proof of Proposition~\ref{prop:baseline-assouad}. Choose
\begin{equation}
\label{eq:baseline-alpha-choice}
    \alpha^2
    :=
    c'
    \min\left\{ \frac{R^2}{p},\,\frac{p\,\exp(R/4)}{nq} \right\},
\end{equation}
where $c' > 0$ is a universal constant chosen small enough such that $\alpha^2\le\nicefrac{R^2}{16p}$ and $\alpha\le\nicefrac{R}{24}$. 
In particular, for all $\omega \in \{-1,+1\}^p$, every $P_\omega$ belongs to the required model class, and invoking \Cref{lem:baseline-assouad-divergence} yields
\[
    \chi^2\left(Q_{\omega^{(k)}}^{\otimes n},Q_\omega^{\otimes n}\right)
    \le
    \exp(16c')-1.
\]
Thus, choosing $c'$ smaller than $\frac{1}{16} \log(2)$, we obtain that
\begin{equation}
\label{eq:baseline-edge-TV}
    \TV\left(Q_{\omega^{(k)}}^{\otimes n},Q_\omega^{\otimes n}\right)
    \le\frac{1}{2}
\end{equation}
for every $k$.  Since $\| \theta_{\omega^{(k)}} - \theta_{\omega} \|^2 \geq 8\alpha^2$, it follows from Assouad's lemma~\cite[e.g.][Lemma 8.7]{SamworthShah2026modern} that 
\[
    \mathfrak{R}_n(R,\mu,q)
    \ge
    2c' \cdot
    \min\biggl\{
        R^2,\,\frac{p^2\exp(R/4)}{nq}
    \biggr\} \overset{\1}{\geq} c \cdot 
    \min\biggl\{
        R^2,\,\frac{\exp(R/4)}{nq\mu^2}
    \biggr\}.
\]
Step $\1$ follows since by \Cref{eq:baseline-p-bounds}, we have $p\ge1/(8\mu)$.  \hfill $\qed$

\subsection{Proof of \Cref{prop:regime-dependent-lower}}
\label{subsec:large-R}

We now turn to the proof of \Cref{prop:regime-dependent-lower} by developing a second lower-bound construction that captures the additional statistical difficulty induced by missing covariates. The main idea is to construct well-specified logistic models whose regression parameters are separated, while the difference between their observed-data distributions is largely concealed by the missingness mechanism. The construction relies on a collection of auxiliary Rademacher covariates together with a carefully chosen Boolean function that ensures that hypotheses are indistinguishable whenever sufficiently few of these auxiliary covariates are observed.

Given that the proof ideas are delicate, we present the argument in three steps.
\begin{enumerate}[label={(\arabic*)}]
    \item 
    In \Cref{subsubsec:large-R-two-point}, we construct a pair of well-specified logistic models $P_0$ and $P_1$, indexed by an odd integer $m$ and parameters $\lambda,\alpha>0$. The parameter $\alpha$ controls the separation between the two regression coefficients, while $m$ controls the number of auxiliary covariates, and $\lambda$ determines how effectively the perturbation is hidden by missingness. We establish both validity of the construction and an upper bound on the $\chi^2$-divergence between the corresponding observed-data laws. This section conveys the key Boolean hiding construction underlying the lower bound.

    \item 
    In \Cref{subsubsec:large-R-assouad}, we carefully embed this two-point experiment into a $p$-dimensional hypercube $\{P_\omega:\omega\in\{0,1\}^p\}$. Along each edge of the hypercube, the parameter separation is inherited from the two-point construction, whereas the divergence between neighboring observed-data laws is reduced by an additional factor $1/p$. These two properties provide the ingredients for an Assouad argument, and yield a stronger lower bound in terms of $m,\lambda,\alpha$, and $p$.

    \item Finally, in \Cref{subsubsec:large-R-regimes}, we choose the free parameters  in our construction depending on the problem parameters $(R, \mu, q)$. Different choices of $m$ and $\lambda$ lead to the three regimes in the statement of the lower bound.
\end{enumerate}

\addtocontents{toc}{\protect\setcounter{tocdepth}{3}}
\subsubsection{A two-point construction}
\label{subsubsec:large-R-two-point}
\addtocontents{toc}{\protect\setcounter{tocdepth}{2}}

We begin with a two-point experiment that forms the basic building block of the lower bound. 
Unless otherwise stated, throughout this subsubsection let \(m:=2r+1\ge5\) be an odd integer that will be related to the covariate dimension. The definitions below also apply verbatim when \(m=1\).
Let $\lambda>0$, and define
\begin{equation}
\label{eq:def-rho}
    \rho:=\tanh(\lambda).
\end{equation}
The parameter $\lambda$ (and through it, $\rho$) will determine the strength of the baseline logistic signal, while a second parameter $\alpha>0$, introduced later, will determine the separation between the two hypotheses.

Eventually, our goal is to carefully construct two distribution-parameter pairs $(P_0,\theta_0)$ and $(P_1,\theta_1)$ that give rise to valid logistic models, but missingness in the corresponding observed laws $Q_0$ and $Q_1$ makes it hard to discern between $\theta_0$ and $\theta_1$. In particular, our two-point test will be given by the hypotheses
\begin{align*}
    H_0: &\quad (X,Y)\sim P_0,
    \qquad
    \thetastar=\theta_0,
    \\ 
    H_1: &\quad (X,Y)\sim P_1,
    \qquad
    \thetastar=\theta_1.
\end{align*}

\paragraph{Null distribution $(P_0)$.}
We define the null law $P_0$ through the following construction. Let $Y$ and~$B$ be independent uniform Rademacher random variables. Independently of $(Y,B)$, let $T_1,\ldots,T_m$ be independent biased (i.e. asymmetric) Rademacher random variables satisfying
\begin{equation}
\label{eq:def-T}
    \mathbb{P}(T_j=t)
    =
    \frac{\exp(\lambda t)}{2\cosh(\lambda)},
    \qquad\text{for}\,~ t\in\{-1,1\}.
\end{equation}
Thus, $\mathbb{E}_0[T_j]=\tanh(\lambda)=\rho$. Define
$A_j:=YT_j$ for $j\in[m]$, and define
\[
    X
    :=
    \left( \frac{A_1}{\sqrt{2m}},\ldots,\frac{A_m}{\sqrt{2m}},\frac{B}{\sqrt{2}} \right)^\top
    \in\mathbb{R}^{m+1}.
\]
Thus, $P_0$ is the joint distribution of $(X,Y)$, or equivalently of $(A,B,Y)$, induced by the preceding construction. We denote the corresponding marginal distribution of $X$ by $P_{0,X}$.

As we show in Lemma~\ref{lem:two-point-validity} to follow, choosing the data distribution to be $P_0$ is consistent with the following choice of null parameter:
\begin{equation}
\label{eq:two-point-theta0}
    \theta_0
    :=
    2\sqrt{2m}\,\lambda\,\big(\underbrace{1,\ldots,1}_{m ~\text{times}},0\big)^\top
    :=
    2\sqrt{2m}\,\lambda\, \big(\boldsymbol{1}_m^\top,0\big)^\top.
\end{equation}

The following notational convention will be useful going forward. For $a=(a_1,\ldots,a_m)\in\{-1,1\}^m$, let us denote the scaled sum by
\begin{equation}
\label{eq:s-a}
    s(a) := \lambda\sum_{j=1}^m a_j = \lambda \, a^\top\boldsymbol{1}_m.
\end{equation}
Then, for
$
    x(a,b)
    :=
    \Bigl(
        \mfrac{a_1}{\sqrt{2m}},\ldots,\mfrac{a_m}{\sqrt{2m}},\mfrac{b}{\sqrt{2}}
    \Bigr)^\top
$
and $\theta_0$ chosen as above,
we have
\begin{equation}
\label{eq:s-a-cond}
    \frac{1}{2} \bigl\langle x(a,b),\theta_0\bigr\rangle = s(a).
\end{equation}
For each $a \in \{-1, 1\}^m$ and $b,y \in \{-1, 1\}$, some calculations (presented in the proof of Lemma~\ref{lem:two-point-validity}) yield
\begin{align} \label{eq:label-distribution-null}
\mathbb{P}_0(Y = y \;|\; X = x(a, b)) = \frac{\exp\bigl\{y(s(a))\bigr\}}
    {2\cosh\bigl\{s(a)\bigr\}}
\end{align}
and
\begin{align} \label{eq:covariate-dist-null}
\mathbb{P}_0(X = x(a, b)) = \frac{\cosh\{s(a) \}}{2(2\cosh(\lambda))^m}.
\end{align}

\paragraph{The alternative distribution $(P_1)$.}
For $\alpha>0$, define the alternative parameter
\begin{equation}
\label{eq:two-point-theta1}
    \theta_1
    :=
    \theta_0+2\sqrt{2}\,\alpha e_{m+1}.
\end{equation}
Thus, following \Cref{eq:s-a-cond},
\[
    \frac{1}{2}\bigl\langle x(a,b),\theta_1\bigr\rangle
    =
    s(a)+\alpha b.
\]
We directly prescribe the conditional distribution of $Y$ under the alternative by
\begin{equation}
\label{eq:two-point-P1-conditional}
    \mathbb{P}_1(Y=y\mid X=x(a,b))
    :=
    \frac{\exp\bigl\{y(s(a)+\alpha b)\bigr\}}
    {2\cosh\bigl\{s(a)+\alpha b\bigr\}},
    \qquad
    y\in\{-1,1\}.
\end{equation}
To specify the marginal distribution of $X$, let $P_{0,X}$ denote the covariate marginal under $P_0$, and define $P_{1,X}$ through the following Radon--Nikodym derivative:
\begin{equation}
\label{eq:generic-two-point-P1X}
    \frac{\mathrm{d}P_{1,X}}{\mathrm{d}P_{0,X}}\bigl(x(a,b)\bigr)
    :=
    F_{\alpha}(a,b) \cdot
    \frac{\cosh\{s(a)+\alpha b\}}{\cosh\{s(a)\}},
\end{equation}
where $F_\alpha:\{-1,1\}^{m+1}\to\mathbb{R}$ is a function to be chosen appropriately below. Whenever the right-hand side is non-negative and has expectation 1 (with respect to $P_{0,X}$), \Cref{eq:generic-two-point-P1X} defines a valid covariate distribution. 

There are many possible choices of $F_\alpha$. We choose $F_{\alpha}$ so that, after the MCAR mechanism is applied, the resulting observed likelihood ratio is exactly equal to $1$ whenever at most $r := \frac{m-1}{2}$ of the first $m$ covariates are observed. The explicit choice of $F_{\alpha}$ that we make is\footnote{Another natural choice (see Remark~\ref{rem:proof-intuition} for intuition) is $F_{\alpha}(a, b) = \exp \big\{ J(a) \cdot b \big\}$ for a function $J$ satisfying~\Cref{eq:required-properties-J}. But we pursue the choice in~\Cref{eq:def-F-alpha} since it simplifies the algebra.}
\begin{equation}
\label{eq:def-F-alpha}
    F_\alpha(a,b) := \cosh(\alpha) - b\sinh(\alpha) \cdot J(a),
\end{equation}
where $J:\{-1,1\}^m\to\mathbb{R}$ is a function satisfying 
\begin{equation}
\label{eq:required-properties-J}
    J(-t)=-J(t),
    \quad\;
    \mathbb{E}_0\bigl[J(T)\bigr]=1,
    \quad\;
    \mathbb{E}_0\bigl[J(T) \,\big|\, T_S\bigr]=1
    \quad
    \text{for every }S\subseteq[m]\text{ with }|S|\leq r.
\end{equation}
It is not immediately clear that there exists a function $J$ satisfying all the properties mentioned in Eq.~\eqref{eq:required-properties-J}, but we construct such a function $J$ in the next paragraph.

\begin{remark} 
\label{rem:proof-intuition}
At this juncture, it is useful to pause to understand the intuition behind our construction through a heuristic calculation. 
For a complete observation $(Y,X)=(y,x(a,b))$, combining \Cref{eq:generic-two-point-P1X} with the conditional laws of $Y$ under $P_0$ and $P_1$ yields
\[
\frac{\mathrm{d}P_1}{\mathrm{d}P_0}(y,x(a,b))
=
e^{\alpha yb}F_\alpha(a,b)
=
e^{\alpha yb}\bigl\{\cosh(\alpha)-yb\sinh(\alpha) \cdot J(T)\bigr\}.
\]
If, heuristically, we could take $J(T)=1$, then
\[
\frac{\mathrm{d}P_1}{\mathrm{d}P_0}(y,x(a,b))
=
e^{\alpha yb}\bigl\{\cosh(\alpha)-yb\sinh(\alpha)\bigr\}
=
e^{\alpha yb}e^{-\alpha yb}
=
1.
\]
Thus, the factor $F_\alpha$ is designed to cancel exactly the change in the conditional law of $Y$ induced by perturbing $\theta_0$ to $\theta_1$. Of course, an odd function $J$ cannot satisfy $J(T) \equiv c$ uniformly for any constant $c \neq 0$. Instead, we construct $J$ so that $\E_0[J(T)\mid T_S]=1$ whenever $|S|\le r$, since this suffices to control the observed-data likelihood ratio $\mathrm{d}Q_1/\mathrm{d}Q_0$; see \Cref{lem:observed-likelihood-ratio-exact}.
\end{remark}

\paragraph{Construction of the function $J$.}
For all $U\subseteq[m]$, define
\begin{equation}
\label{eq:def-chiU-prod}
    \chi_U(t):=\prod_{j\in U}t_j,
    \qquad\text{for}\,~ t = (t_1,\ldots,t_m)\in\{-1,1\}^m,
\end{equation}
with $\chi_\varnothing\equiv1$. Let us consider the majority function defined by
\begin{equation*}
    f(t):=\operatorname{Maj}_m(t)
    :=\operatorname{sign}\Biggl\{\,\sum_{j=1}^m t_j\Biggr\}.
\end{equation*}
Since $f : \{-1,1\}^m \to \{-1,1\}$ is a Boolean function, it admits a unique Walsh--Fourier expansion \citep[Theorem 1.1]{odonnell2021analysisbooleanfunctions} under the uniform measure on $\{-1,1\}^m$, given by
\begin{equation}
\label{eq:def-gU}
    f(t) = \sum_{U\subseteq[m]} g_U \, \chi_U(t),
    \quad\text{where}\quad\!
    g_U := \frac{1}{2^m} \sum_{t\in\{-1,1\}^m} f(t) \, \chi_U(t).
\end{equation}
Finally, using the above coefficients, define
\begin{equation}
\label{eq:def-J}
    J(t)
    :=
    \sum_{\substack{U\subseteq[m]\\ |U|\ \mathrm{odd}}} g_U\rho^{-|U|}\chi_U(t),
    \qquad\text{for}\,~ t\in\{-1,1\}^m.
\end{equation}
Therefore, the alternative covariate distribution $P_{1,X}$ is defined through
\begin{equation}
\label{eq:two-point-P1X}
    \frac{\mathrm{d}P_{1,X}}{\mathrm{d}P_{0,X}}\bigl(x(a,b)\bigr)
    :=
    \bigl(\cosh(\alpha) - b\sinh(\alpha) \cdot J(a) \bigr) \,
    \frac{\cosh\{s(a)+\alpha b\}}{\cosh\{s(a)\}},
\end{equation}
where $J(\cdot)$ is defined in \Cref{eq:def-J}. 
The particular choice of the majority function ensures that $J$ has the properties~\eqref{eq:required-properties-J}. We show this in the following lemma, which also establishes that our constructions in both the null and alternative models lie in the model class. A proof can be found in \Cref{proof:lem:two-point-validity}.

\begin{tcolorbox}
\begin{lemma}
\label{lem:two-point-validity}
Let $m=2r+1\ge5$ and $\lambda>0$. Then the following hold.

\begin{enumerate}[label=(\alph*)]
    \item The function $J(\cdot)$ defined in \Cref{eq:def-J} satisfies all the properties in \Cref{eq:required-properties-J}, viz.,
    \begin{equation*}
        J(-t)=-J(t),
        \quad~
        \mathbb{E}_0\bigl[J(T)\bigr]=1,
        \quad\text{and}\quad
        \mathbb{E}_0\bigl[J(T) \,\big|\, T_S\bigr]=1
        \quad
        \forall\,S\subseteq[m]\text{ with }|S|\leq r.
    \end{equation*} 
    Moreover, $J$ satisfies the uniform bound:
    \begin{equation*}
        \|J\|_\infty
        \le
        \left( 2+\frac{1}{\sinh^2(\lambda)} \right)^{m/2}.
    \end{equation*}

    \item Suppose that $\alpha\|J\|_\infty\le\nicefrac{1}{2}$.
    Then, \Cref{eq:two-point-P1X} defines a valid covariate distribution $P_{1,X}$. Also, $P_0$ and $P_1$ are well-specified logistic regression models with parameters $\theta_0$ and $\theta_1$, respectively. Furthermore, the complete-data likelihood ratio satisfies
    \begin{equation}
    \label{eq:complete-data-likelihood-ratio}
        \frac{\mathrm{d}P_1}{\mathrm{d}P_0}(Y,A,B)
        =
        \exp(\alpha YB)
        \bigl\{\cosh(\alpha) - B\sinh(\alpha) \cdot J(A)\bigr\}.
    \end{equation}
    
    \item Suppose that $\alpha\|J\|_\infty\le\nicefrac{1}{2}$. Under both $P_0$ and $P_1$, $\|X\|_2=1$ almost surely, and
    \begin{equation}
        \mathbb{E}_0[XX^\top]
        =
        \mathbb{E}_1[XX^\top]
        =
        \frac{1}{2}
        \begin{bmatrix}
            \mfrac{1}{m}
            \left\{(1-\rho^2) I_m + \rho^2 \boldsymbol{1}_m \boldsymbol{1}_m^\top \right\} & 0\\ 
            0 & 1
        \end{bmatrix}
        \succeq
        \frac{\operatorname{sech}^2(\lambda)}{2m} I_{m+1}.
    \end{equation}
\end{enumerate}
\end{lemma}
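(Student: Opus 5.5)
The plan is to verify the three parts by direct computation on the finite cube $\{-1,1\}^{m+1}$. Two tools do most of the work. The first is the product structure of $P_0$: since the $T_j$ are i.i.d.\ with mean $\rho$, for every $U,S\subseteq[m]$ we have $\E_0[\chi_U(T)\mid T_S]=\chi_{U\cap S}(T_S)\,\rho^{|U\setminus S|}$. The second is an \emph{oddness trick}. From the computation $\P_0(A=a,B=b,Y=y)=e^{y s(a)}/\{4(2\cosh\lambda)^m\}$, which also yields \Cref{eq:label-distribution-null,eq:covariate-dist-null}, the $A$-marginal under $P_0$ is $\cosh s(a)/(2\cosh\lambda)^m$. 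Hence, for any function $g$ on $\{-1,1\}^m$, the sums $\sum_a g(a)\sinh s(a)$ (when $g$ is odd) and $\sum_a g(a)\cosh s(a)$ (when $g$ is even) both equal $\sum_a g(a)e^{s(a)}$. That is, each equals $(2\cosh\lambda)^m\,\E_0[g(T)]$, so sums over the covariates reduce to $P_0$-expectations in $T$.

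For part (a), oddness of $J$ is immediate. The majority function $f$ is odd, so $g_U=0$ whenever $|U|$ is even, and each $\chi_U$ with $|U|$ odd is odd. In particular $J=\sum_U g_U\rho^{-|U|}\chi_U$ with no parity restriction needed. The conditional identity is the heart of the matter. By the product formula above,
\[
\E_0[J(T)\mid T_S]=\sum_{U} g_U\,\rho^{-|U\cap S|}\chi_{U\cap S}(T_S)=\tilde f\bigl(T_S/\rho,\;\boldsymbol 1_{S^c}\bigr),
\]
where $\tilde f$ is the multilinear (Fourier) extension of $f$. Viewed as a multilinear polynomial in the $S$-variables, $u\mapsto \tilde f(u,\boldsymbol 1_{S^c})$ agrees with $f$ at every Boolean point. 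At each such point $|S^c|\ge m-r=r+1$ coordinates equal $+1$, so the majority equals $1$. A multilinear polynomial that equals $1$ on all of $\{-1,1\}^{|S|}$ is identically $1$, so evaluating it at the non-Boolean point $T_S/\rho$ still gives $1$. The case $S=\varnothing$ gives $\E_0[J(T)]=f(\boldsymbol 1)=1$. For the sup-norm bound I will use Cauchy--Schwarz together with Parseval ($\sum_U g_U^2=1$):
\[
\|J\|_\infty\le\Bigl(\sum_U g_U^2\Bigr)^{1/2}\Bigl(\sum_U \rho^{-2|U|}\Bigr)^{1/2}=(1+\rho^{-2})^{m/2},
\]
and $\rho^{-2}=\coth^2\lambda=1+1/\sinh^2\lambda$.

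For part (b), positivity comes first. Since $\|J\|_\infty\ge \E_0 J=1$, we have $\alpha\le 1/2$, so $\sinh\alpha\le\alpha\cosh\alpha$ and hence $F_\alpha\ge\cosh(\alpha)(1-\alpha\|J\|_\infty)>0$. Next, to check normalization, I expand $\cosh(s\pm\alpha)$ and sum over $b$, which gives
\[
\sum_b F_\alpha\cosh(s+\alpha b)=2\cosh s\cosh^2\alpha-2\sinh^2\alpha\,J(a)\sinh s.
\]
Summing over $a$ against $1/\{2(2\cosh\lambda)^m\}$ and applying the oddness trick together with $\E_0 J=1$ yields $\cosh^2\alpha-\sinh^2\alpha=1$. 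Well-specification of $P_1$ holds by construction, because the conditional law \eqref{eq:two-point-P1-conditional} is logistic in $\tfrac12\langle x,\theta_1\rangle=s(a)+\alpha b$; for $P_0$ it is \Cref{eq:label-distribution-null}. Finally, multiplying the covariate Radon--Nikodym derivative by the ratio of the conditional laws of $Y$, the $\cosh$ factors cancel and leave \eqref{eq:complete-data-likelihood-ratio}.

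For part (c), $\|X\|^2=m\cdot\tfrac1{2m}+\tfrac12=1$. Under $P_0$, $\E[A_jA_k]=\E[T_jT_k]=\rho^2$ for $j\ne k$, and $B$ is independent with mean zero, which gives the displayed matrix. Its smallest eigenvalue is $\min\{(1-\rho^2)/(2m),\,1/2\}=\operatorname{sech}^2\lambda/(2m)$. The main obstacle is showing that $P_1$ has the same second moments, and I will handle it with the same $b$-summation as in part (b). For $\E_1[A_jA_k]$, the $\cosh^2\alpha$ term contributes $\rho^2\cosh^2\alpha$. The $J$ term contributes $-\sinh^2\alpha\,\E_0[T_jT_kJ(T)]=-\rho^2\sinh^2\alpha$, using the conditional property with $|S|=2\le r$, which is exactly where $m\ge5$ is needed; the total is $\rho^2$. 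For $\E_1[A_jB]$, summing $b\,F_\alpha\cosh(s+\alpha b)$ over $b$ gives $2\sinh\alpha\cosh\alpha\,(\sinh s-J\cosh s)$. Multiplying by $a_j$ and using the oddness trick reduces this to a multiple of $\E_0[T_j]-\E_0[T_jJ(T)]=\rho-\rho=0$, by the conditional property with $|S|=1$. Diagonal entries equal $1/(2m)$ or $1/2$ automatically because the coordinates are $\pm$ constants, so $\E_1[XX^\top]=\E_0[XX^\top]$.
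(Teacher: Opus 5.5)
Your proof is correct and follows essentially the same argument as the paper. For part (a) you use the product formula for $\mathbb{E}_0[\chi_U(T)\mid T_S]$ together with uniqueness of the multilinear expansion of the restricted majority, and Cauchy--Schwarz with Parseval for the sup-norm. For parts (b) and (c) you use $\cosh^2\alpha-\sinh^2\alpha=1$ with $\mathbb{E}_0 J=1$, and the conditional property of $J$ for $|S|\le 2$ (where $m\ge5$ enters), to get normalization and matching second moments. The differences are cosmetic. You evaluate the multilinear extension at $(T_S/\rho,\boldsymbol 1_{S^c})$ instead of stating a separate cancellation lemma, and you compute expectations by summing over $b$ with a parity identity in $a$, whereas the paper reparametrizes through the independent pair $(V,T)$ with $V=YB$ and $J(A)=YJ(T)$.
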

\end{tcolorbox}

Also, the regression parameters $\theta_0$ and $\theta_1$ satisfy $\|\theta_0\|_2^2 = 8m^2\lambda^2$, $\|\theta_1\|_2^2 = 8m^2\lambda^2+8\alpha^2$, and more importantly,
    \begin{equation}
    \label{eq:two-point-separation}
        \|\theta_1-\theta_0\|_2^2 = 8\alpha^2.
    \end{equation}
We next quantify how much of this parameter separation remains visible under missingness. 

\paragraph{Missingness mechanism and observed-data likelihood ratio.}
Under both hypotheses, let the vector of missingness indicators $M=(M_1,\ldots,M_{m+1})$ have independent $\BER(q)$ coordinates, independently of $(X,Y)$. Write
\[
    Z := M\odot X.
\]

For $i\in\{0,1\}$, denote by $Q_i$ the observed-data law of $(Y,M,Z)$ when the complete-data distribution of $(Y,X)$ is $P_i$. Define
\begin{equation}
\label{eq:two-point-observed-masks}
    S:=\bigl\{j\in[m]:M_j=1\bigr\},
    \qquad
    D:=M_{m+1}.
\end{equation}
Thus, $S$ is the set of observed coordinates among the first $m$ coordinates, whereas $D$ indicates whether the last coordinate is observed or not. For notational convenience, we also define
\begin{equation}
\label{eq:def-V}
    V:=YB.
\end{equation}

In the following lemma, we derive the exact, observed-data likelihood ratio based on a single sample $(Y,M,Z)$ as a function of $S$ and $D$.

\begin{tcolorbox}
\begin{lemma}
\label{lem:observed-likelihood-ratio-exact}
Recall the definitions of the Boolean hiding function $J(\cdot)$ from \Cref{eq:def-J}, and the vector $T = (T_1,\ldots,T_m)$ of asymmetric Rademacher random variables from \Cref{eq:def-T}. 
For every deterministic $U \subseteq [m]$, denote
\[
    J_U(T_U) := \mathbb{E}_0\bigl[J(T) \bigm| T_U\bigr].
\]
The observed-data likelihood ratio based on a single observation $(Y,M,Z)$ satisfies
\[
    \widetilde{L}_\alpha := \frac{\mathrm{d}Q_1}{\mathrm{d}Q_0}(Y,M,Z)
    =
    \begin{cases}
        \displaystyle
        \exp(\alpha V)
        \left\{
            \cosh(\alpha) - V \sinh(\alpha) \cdot J_S(T_S)
        \right\},
        & \text{if}~\, D=1,
        \\
        \displaystyle
        \cosh^2(\alpha) - \sinh^2(\alpha) \cdot J_S(T_S),
        & \text{if}~\, D=0.
    \end{cases}
\]
In particular, $\widetilde{L}_\alpha=1$ on the event $\{|S|\leq r\}$.
\end{lemma}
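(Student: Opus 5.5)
The plan is to compute $\mathrm{d}Q_1/\mathrm{d}Q_0$ as a conditional expectation of the complete-data likelihood ratio from \Cref{lem:two-point-validity}(b), given everything that is observed. The first step is a general fact. Since the mask $M$ has the same law under both hypotheses and is independent of $(X,Y)$, the observed data $(Y,M,Z)$ is a measurable function of $(Y,X,M)$. Moreover, $P_1\ll P_0$, because the Radon--Nikodym derivative is finite on the finite support. Hence for any bounded test function $h$,
\[
\E_1[h(Y,M,Z)] = \E_0\Bigl[h(Y,M,Z)\,\tfrac{\mathrm{d}P_1}{\mathrm{d}P_0}(Y,A,B)\Bigr],
\]
and therefore
\[
\widetilde L_\alpha = \E_0\Bigl[\tfrac{\mathrm{d}P_1}{\mathrm{d}P_0}(Y,A,B)\Bigm| Y,M,Z\Bigr].
\]
This works because the likelihood ratio of $M$ is $1$. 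Everything is finite and discrete, so no measure-theoretic subtleties arise.

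The second step rewrites \Cref{eq:complete-data-likelihood-ratio} in terms of the independent primitives $(Y,B,T)$ of $P_0$. Because $A=YT$ and $J$ is odd, we have $J(A)=J(YT)=YJ(T)$ for $Y\in\{-1,1\}$. Hence $B\,J(A)=V J(T)$, and the complete-data ratio becomes
\[
e^{\alpha V}\bigl\{\cosh(\alpha)-V\sinh(\alpha)J(T)\bigr\}.
\]
Next I identify the observed $\sigma$-field. Given $M$, we observe $Y$, $S$, $D$, and $A_S=YT_S$, which is equivalent to observing $T_S$ because $Y$ is observed. When $D=1$ we also observe $B$, equivalently $V$. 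Under $P_0$, the variables $Y$, $B$, $T$ and $M$ are mutually independent, and $S$ is a deterministic function of $M$. So conditioning on $(M,Y,T_S)$ and possibly $B$ reduces the conditional expectation of $J(T)$ to $J_S(T_S)=\E_0[J(T)\mid T_S]$, evaluated at the realized set $S$.

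The third step splits into cases on $D$. If $D=1$, then $V$ is observed, and pulling it out of the conditional expectation gives
\[
e^{\alpha V}\bigl\{\cosh\alpha - V\sinh\alpha\, J_S(T_S)\bigr\}.
\]
If $D=0$, then $B$ is unobserved and, under $P_0$, is uniform and independent of $(Y,T,M)$. So $V$ is a uniform Rademacher independent of the conditioning variables, and I average over it using $\E[e^{\alpha V}]=\cosh\alpha$ and $\E[Ve^{\alpha V}]=\sinh\alpha$. This yields
\[
\cosh^2\alpha-\sinh^2\alpha\,J_S(T_S).
\]

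Finally, on the event $\{|S|\le r\}$, \Cref{lem:two-point-validity}(a) gives $J_S(T_S)=1$. The $D=1$ case then becomes $e^{\alpha V}(\cosh\alpha-V\sinh\alpha)=e^{\alpha V}e^{-\alpha V}=1$, using $V\in\{\pm1\}$. The $D=0$ case becomes $\cosh^2\alpha-\sinh^2\alpha=1$.

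The main point requiring care is the second step. One must correctly identify the observed information, namely that $A_S$ together with $Y$ is equivalent to $T_S$, and use the oddness of $J$ to move $Y$ out of $J(A)$. This is what makes the conditional expectation reduce exactly to $J_S(T_S)$, with no leftover dependence on the unobserved coordinates.
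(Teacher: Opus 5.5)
Your proposal is correct and follows essentially the same route as the paper. The paper likewise identifies $\sigma(Y,M,Z)=\sigma(Y,M,T_S,DV)$, shows $\mathrm{d}Q_1/\mathrm{d}Q_0=\mathbb{E}_0[L_\alpha\mid\sigma(Y,M,Z)]$ by a test-function argument using that $M$ has the same law under both hypotheses, rewrites $L_\alpha$ via $B\,J(A)=V\,J(T)$, and splits on $D$ using the independence of $V$, $T$ and $(Y,M)$ under $P_0$; your remark that $B$ is uniform and independent of $(Y,T,M)$ gives exactly the joint independence needed to factor $\mathbb{E}_0[Ve^{\alpha V}J(T)\mid\cdot\,]$ in the $D=0$ case.
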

\end{tcolorbox}

The key takeaway from \Cref{lem:observed-likelihood-ratio-exact} is the fact that the observed-data likelihood ratio is identically equal to $1$ whenever $|S|\le r$ (see \Cref{rem:proof-intuition}).
Thus, conditional on \emph{any} observed missingness pattern that reveals at most $r=\frac{m-1}{2}$ of the first $m$ auxiliary coordinates, the two hypotheses are completely indistinguishable. We crucially use the Boolean cancellation property of $J(\cdot)$ as in \Cref{eq:required-properties-J} in the proof of \Cref{lem:observed-likelihood-ratio-exact}, which can be found in \Cref{proof:lem:observed-likelihood-ratio-exact}.

Next, for $t\ge0$, define the function $\beta : [0,\infty) \to [0,\infty)$ such that
\begin{equation}
\label{eq:beta-function}
    \beta(t)
    :=
    \frac{t+\sqrt{t^2+4t}}{2},
\end{equation}
and let
\begin{equation}
\label{eq:def-Bmlq}
    \mathcal{B}_{m,\lambda,q}
    :=
    \biggl[
        \beta\biggl(\frac{q}{\sinh^2(\lambda)}\biggr)
    \biggr]^m.
\end{equation}
The next lemma gives a bound on the $\chi^2$--divergence between the observed-data distributions $Q_0$ and $Q_1$, provided $\alpha$ is chosen to be small enough.

\begin{tcolorbox}
\begin{lemma}
\label{lem:two-point-divergence}
Let \(m=2r+1\ge1\) be an odd integer. Suppose that
\[
    \alpha\|J\|_\infty\le\frac{1}{2},
    \qquad\text{and}\qquad
    0<\alpha^2\le q.
\]
Then, there exists a universal constant $C>0$ such that
\begin{equation}
\label{eq:two-point-observed-chi2}
    \chi^2(Q_1,Q_0)
    \le
    Cq\alpha^2\,\mathcal{B}_{m,\lambda,q}.
\end{equation}
Consequently, based on $n$ independent observations,
\begin{equation}
\label{eq:two-point-observed-chi2-n}
    \chi^2\left(Q_1^{\otimes n},Q_0^{\otimes n}\right)
    \le
    \exp\left\{
        Cnq\alpha^2\,\mathcal{B}_{m,\lambda,q}
    \right\}-1.
\end{equation}
\end{lemma}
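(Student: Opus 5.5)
Since the observed likelihood ratio is known explicitly, I will compute $\chi^2(Q_1,Q_0)=\E_{Q_0}[(\widetilde L_\alpha-1)^2]$ directly from Lemma~\ref{lem:observed-likelihood-ratio-exact}. I condition on the mask $(S,D)$, which has the same law under both hypotheses and is independent of $(Y,A,B)$. By that lemma, $\widetilde L_\alpha-1=0$ on $\{|S|\le r\}$, so only patterns with $|S|\ge r+1$ contribute.

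On $D=0$, write $\widetilde L_\alpha-1=\sinh^2(\alpha)\{1-J_S(T_S)\}$. On $D=1$, expand $e^{\alpha V}\{\cosh\alpha-V\sinh\alpha\,J_S\}-1$. Using $V\in\{\pm1\}$, we have $e^{\alpha V}(\cosh\alpha - V\sinh\alpha)=1$, so the difference equals $-e^{\alpha V}V\sinh(\alpha)\{J_S(T_S)-1\}$. Since $|V|=1$ and $e^{2\alpha}\le e$, this gives
\[
(\widetilde L_\alpha-1)^2\le C\alpha^2\{J_S(T_S)-1\}^2 \quad\text{on } D=1,
\]
and $(\widetilde L_\alpha-1)^2\le C\alpha^4\{J_S-1\}^2\le C q\alpha^2\{J_S-1\}^2$ on $D=0$, using $\alpha^2\le q$. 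Averaging over $D\sim\BER(q)$, we get
\[
\chi^2(Q_1,Q_0)\le Cq\alpha^2\sum_{|S|>r}q^{|S|}(1-q)^{m-|S|}\,\E_0\bigl[(J_S(T_S)-1)^2\bigr].
\]
Under $Q_0$, $T=YA$ has the biased product law (\ref{eq:def-T}), so the expectation is over $T\sim P_0$.

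The main work is bounding $\E_0[(J_S-1)^2]$. I will switch to the orthonormal basis $\phi_j:=(t_j-\rho)/\sqrt{1-\rho^2}$ for the biased product measure. Expanding $\chi_U(t)=\prod_{j\in U}(\rho+\sqrt{1-\rho^2}\,\phi_j)$ shows that $J$ has coefficient on $\phi_W$ equal to
\[
\widehat J(W)=(1-\rho^2)^{|W|/2}\sum_{U\supseteq W,\,|U|\text{ odd}}g_U\rho^{|U|-|W|}\rho^{-|U|}.
\]
Conditioning on $T_S$ keeps exactly the $W\subseteq S$. Since $J_S\equiv1$ for $|S|\le r$, all $\widehat J(W)$ with $1\le|W|\le r$ vanish and $\widehat J(\varnothing)=1$. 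Hence $\E_0[(J_S-1)^2]=\sum_{W\subseteq S,\,|W|>r}\widehat J(W)^2$.

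Bounding $|\widehat J(W)|\le(1-\rho^2)^{|W|/2}\rho^{-|W|}\sum_{U\supseteq W}|g_U|$ and using Parseval for majority, $\sum g_U^2=1$, together with Cauchy–Schwarz, I aim for
\[
\widehat J(W)^2\lesssim\Bigl(\frac{1-\rho^2}{\rho^2}\Bigr)^{|W|}c^{m}=\sinh^{-2|W|}(\lambda)\,c^{m}.
\]
I expect this to be the main obstacle, because the stray $c^m$ must be absorbed. Then, with $\gamma:=1/\sinh^2\lambda$, summing over the mask gives
\[
\sum_S q^{|S|}(1-q)^{m-|S|}\sum_{W\subseteq S,\,|W|>r}\gamma^{|W|}=\sum_{|W|>r}(q\gamma)^{|W|}.
\]
I then need to show that $\binom{m}{k}(q\gamma)^k$, summed over $k>r$ and combined with the coefficient growth, is at most $C\beta(q\gamma)^m$. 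The definition $\beta(t)=(t+\sqrt{t^2+4t})/2$, the root of $\beta^2=t(\beta+1)$, suggests a generating-function or AM–GM optimization over $k$: use $\binom{m}{k}\le x^{-k}(1+x)^m$ and optimize $x$. I would try this first, tuning how the $g_U$ mass is split via a weighted Cauchy–Schwarz so that $(1+x)$ and $t/x$ balance at $\beta$. If the direct route loses constants, I will fall back to the explicit Krawtchouk-type formula for majority's Fourier weights at each level, $\sum_{|U|=k}g_U^2\asymp k^{-3/2}$.

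Finally, the $n$-sample bound follows from the tensorization $1+\chi^2(Q_1^{\otimes n},Q_0^{\otimes n})=(1+\chi^2(Q_1,Q_0))^n\le\exp\{n\chi^2(Q_1,Q_0)\}$.
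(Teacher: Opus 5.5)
\textbf{There is a genuine gap at the coefficient bound.}

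Your reduction is correct and coincides with the paper's. Conditioning on $(S,D)$ and using $e^{\alpha V}\{\cosh(\alpha)-V\sinh(\alpha)\}=1$ gives $\chi^2(Q_1,Q_0)\le Cq\alpha^2\,\E_0[(J_S(T_S)-1)^2]$. Your expansion in the biased basis $\phi_W$ has coefficients $\widehat J(W)=(\sqrt{1-\rho^2}/\rho)^{|W|}h_W$, where $h_W:=\sum_{U\supseteq W}g_U$, and you correctly note that $h_W=0$ for $1\le|W|\le r$. After averaging over the mask this yields $\E_0[(J_S-1)^2]=\sum_{|W|>r}t^{|W|}h_W^2$ with $t=q/\sinh^2(\lambda)$.

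The failure is in the next step. Bounding $|h_W|$ by $\sum_{U\supseteq W}|g_U|$ throws away a sign cancellation that is exponentially large in $m$. No reweighting of Cauchy--Schwarz afterwards can restore it, because the loss already occurs at the triangle inequality.

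\emph{Concrete witness.} For $|W|=r+1$ the true value is $|h_W|=2^{-r}$. The single term $U=[m]$ already gives $\sum_{U\supseteq W}|g_U|\ge|g_{[m]}|=2^{1-m}\binom{2r}{r}\asymp r^{-1/2}$. So your bound on the level-$(r+1)$ contribution is at least of order $\binom{m}{r+1}t^{r+1}/r\asymp 2^m t^{r+1}/r^{3/2}$. Since $2\sqrt{t}>\beta(t)$ for all $t<9/4$ (in particular for $t\le 1/4$, the range used later), this exceeds $\beta(t)^m=\mathcal{B}_{m,\lambda,q}$ by a factor exponential in $m$.

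\emph{Why nothing absorbs it.} The true $\mathcal{W}_{m,q}$ and $\mathcal{B}_{m,\lambda,q}$ have the same exponential rate, so there is no slack for a $c^m$ factor and no universal $C$ results. Your Cauchy--Schwarz bound $h_W^2\le 2^{m-|W|}$ is worse still. The fallback through level weights $\sum_{|U|=k}g_U^2$ controls only magnitudes and fails for the same reason. Note also that $h_W^2$ must supply \emph{decay}, not growth: $\sum_{k>r}\binom{m}{k}t^k$ by itself already exceeds $\beta(t)^m$ exponentially.

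\textbf{The missing idea is to evaluate $h_W$ exactly.} It is the top Fourier coefficient of the restricted majority $z\mapsto f(z,\mathbf{1}_{W^c})$ on $\{-1,1\}^W$. That function equals $+1$ precisely when $z$ has at most $r$ minus signs. The alternating binomial identity then gives, for $k=|W|\ge1$, $h_W=2^{1-k}\sum_{j=0}^{\min\{r,k\}}(-1)^j\binom{k}{j}$. This is $0$ for $k\le r$ and $2^{1-k}(-1)^r\binom{k-1}{r}$ for $k\ge r+1$ (\Cref{lem:exact-hU}). Hence $\mathcal{W}_{m,q}=4\sum_{k=r+1}^m\binom{m}{k}\binom{k-1}{r}^2(t/4)^k$.

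To finish, use $\binom{k-1}{r}s^r\le(1+s)^{k-1}$, the binomial theorem, and $4s\le(1+s)^2$. These give $\mathcal{W}_{m,q}\le[\{1+t(1+s)^2/4\}/s]^m$ for every $s>0$, and the minimum over $s$ is exactly $\beta(t)$. Your proposed generating-function optimization is the right kind of final step, but it only lands on $\beta(t)$ once this exact coefficient is in hand.
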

\end{tcolorbox}
The proofs of \Cref{lem:two-point-validity,lem:two-point-divergence} are deferred to \Cref{proof:lem:two-point-validity,proof:lem:two-point-divergence}, respectively. The two properties that will be carried into the multidimensional construction are the separation bound in \Cref{eq:two-point-separation} and the observed-data $\chi^2$--divergence bound in \Cref{eq:two-point-observed-chi2}.

\addtocontents{toc}{\protect\setcounter{tocdepth}{3}}
\subsubsection{A multidimensional Assouad extension}
\label{subsubsec:large-R-assouad}
\addtocontents{toc}{\protect\setcounter{tocdepth}{2}}

The two-point construction in \Cref{subsubsec:large-R-two-point} produces a pair of well-specified logistic models whose parameters are separated in $\ell_2^2$ by $8\alpha^2$, while the corresponding observed-data distributions remain close under missingness (see \Cref{eq:two-point-observed-chi2}). However, given that the construction is on two points, the perturbation acts in only a single parameter direction. To obtain the desired dependence on the curvature parameter $\mu$, we now distribute the same two-point experiment across several directions and apply Assouad's method. At a high level, this proof follows the structure of the proof of Proposition~\ref{prop:baseline-assouad} along with the properties of the two-point construction established above.

Let $p\ge1$ be an integer that will be chosen later, and let
\[
    K\sim\operatorname{Unif}(1,\ldots,p).
\]
For each
$
    \omega=(\omega_1,\ldots,\omega_p)\in\{0,1\}^p,
$
conditional on $K=k$, generate $(A,B,Y)$ according to the two-point model $P_{\omega_k}$ constructed in \Cref{subsubsec:large-R-two-point}. Thus, conditional on $K=k$, the observation is generated from $P_0$ when $\omega_k=0$, and from $P_1$ when $\omega_k=1$.

We embed the special coordinate $B$ into the $k$-th of $p$ possible directions. More precisely, define the $(m+p)$-dimensional covariate
\begin{equation}
\label{eq:large-R-assouad-X}
    X
    :=
    \frac{1}{\sqrt{2m}}
    \sum_{j=1}^m A_j e_j
    +
    \frac{B}{\sqrt{2}}e_{m+K},
\end{equation}
and, for each $\omega\in\{0,1\}^p$, define
\begin{equation}
\label{eq:large-R-assouad-theta}
    \theta_\omega
    :=
    2\sqrt{2m}\,\lambda
    \sum_{j=1}^m e_j
    +
    2\sqrt{2}\,\alpha
    \sum_{k=1}^p\omega_k e_{m+k}.
\end{equation}

Let $P_\omega$ denote the resulting joint distribution of $(X,Y)$.
Conditional on the event $K=k$, only the coordinate $e_{m+k}$ is active among the last $p$ coordinates. Therefore,
\begin{equation}
\label{eq:large-R-assouad-logit}
    \frac{1}{2}X^\top\theta_\omega
    =
    \lambda\sum_{j=1}^m A_j+\alpha\omega_k B
    =
    s(A)+\alpha\omega_k B.
\end{equation}
Consequently, conditional on $K=k$, the model $P_\omega$ coincides exactly with the two-point model $P_{\omega_k}$. In this sense, the $k$-th coordinate of $\omega$ determines which member of the two-point experiment is placed along the $k$-th Assouad direction.

The following lemma verifies that the resulting hypercube consists of valid logistic regression models and computes the separation between neighboring vertices.

\begin{tcolorbox}
\begin{lemma}
\label{lem:large-R-assouad-validity}
Let $m=2r+1\ge5$, $\lambda>0$, and $p\ge1$. Suppose that
\begin{equation}
\label{eq:large-R-assouad-validity-conditions}
    m\lambda
    \le
    \frac{R}{4\sqrt{2}},
    \qquad
    \alpha\|J\|_\infty
    \le
    \frac{1}{2},
    \qquad
    p\alpha^2
    \le
    \frac{3R^2}{32}.
\end{equation}
Then, the following hold.
\begin{enumerate}[label=(\alph*)]
    \item For every $\omega\in\{0,1\}^p$, $P_\omega$ is a well-specified logistic model with parameter $\theta_\omega\in\Theta_R$.

    \medskip
    \item Under all $P_{\omega}$, $\|X\|_2=1$ almost surely, for every $\omega\in\{0,1\}^p$. Moreover,
    \begin{equation}
    \label{eq:large-R-assouad-second-moment-lower}
        \mathbb{E}_\omega[XX^\top]
        =
        \frac{1}{2}
        \begin{bmatrix}
            \mfrac{1}{m}
            \left\{(1-\rho^2)I_m + \rho^2 \boldsymbol{1}_m \boldsymbol{1}_m^\top\right\} & 0 \\ 
            0 & \mfrac{1}{p}I_p
        \end{bmatrix}
        \succeq
        \frac{1}{2}
        \min\left\{
            \frac{\operatorname{sech}^2(\lambda)}{m},
            \frac{1}{p}
        \right\}
        I_{m+p}.
    \end{equation}

    \item Fix any $k\in[p]$ such that $\omega\in\{0,1\}^p$ satisfies $\omega_k=0$. Let $\omega^{(k)}$ be the neighboring vertex of $\omega$ obtained by replacing $\omega_k$ by $1$. Then,
    \begin{equation}
    \label{eq:large-R-assouad-edge-separation}
        \|\theta_{\omega^{(k)}}-\theta_\omega\|_2^2
        =
        8\alpha^2.
    \end{equation}
\end{enumerate}
\end{lemma}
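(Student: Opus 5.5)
The plan is to reduce everything to the two-point results of Lemma~\ref{lem:two-point-validity}, conditioning on the random direction $K$.

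\textbf{Part (a).} Conditional on $K=k$, the law of $(A,B,Y)$ is exactly $P_{\omega_k}$ from \Cref{subsubsec:large-R-two-point}. The condition $\alpha\|J\|_\infty\le 1/2$ makes Lemma~\ref{lem:two-point-validity}(b) applicable, so this conditional law is a valid distribution in which $Y$ given $(A,B)$ follows the logistic model with logit $s(A)+\alpha\omega_k B$.

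By \Cref{eq:large-R-assouad-logit}, this logit equals $\tfrac12 X^\top\theta_\omega$. The covariate $X$ determines $K$, since $K$ is the unique index $k$ with $X_{m+k}\neq0$, because $B=\pm1$. It also determines $(A,B)$. Hence
\[
\mathbb P(Y=1\mid X)=\sigma(X^\top\theta_\omega),
\]
so $P_\omega$ is a well-specified logistic model with parameter $\theta_\omega$.

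For the norm bound,
\[
\|\theta_\omega\|_2^2=8m^2\lambda^2+8\alpha^2\sum_k\omega_k\le 8m^2\lambda^2+8p\alpha^2 .
\]
The first condition gives $8m^2\lambda^2\le R^2/4$, and the third gives $8p\alpha^2\le 3R^2/4$. Summing yields $\|\theta_\omega\|_2^2\le R^2$, so $\theta_\omega\in\Theta_R$.

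\textbf{Part (b).} The bound $\|X\|_2^2=\sum_j A_j^2/(2m)+B^2/2=1$ holds almost surely, since all entries are $\pm1$.

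For the second moments, condition on $K$. For $j,l\le m$, the quantity $\mathbb E[A_jA_l\mid K=k]$ is the corresponding entry of the two-point second-moment matrix under $P_{\omega_k}$. By Lemma~\ref{lem:two-point-validity}(c), this entry is the same under $P_0$ and $P_1$. Therefore the top-left block is $\frac{1}{2m}\{(1-\rho^2)I_m+\rho^2\mathbf 1\mathbf 1^\top\}$ regardless of $\omega$.

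For the cross terms, $\mathbb E[A_jB\mid K=k]$ equals the $(j,m+1)$ entry of the two-point matrix, which is $0$ by Lemma~\ref{lem:two-point-validity}(c). For the bottom-right block, $X_{m+k}X_{m+l}=0$ for $k\neq l$, and $\mathbb E[X_{m+k}^2]=\mathbb P(K=k)/2=1/(2p)$.

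Finally, the top-left block has smallest eigenvalue $(1-\rho^2)/(2m)=\operatorname{sech}^2(\lambda)/(2m)$, because $\rho^2\mathbf 1\mathbf 1^\top\succeq0$. Combining this with the $1/(2p)$ bottom-right block gives \Cref{eq:large-R-assouad-second-moment-lower}.

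\textbf{Part (c).} The two parameters differ only in coordinate $m+k$, by $2\sqrt2\alpha$, so $\|\theta_{\omega^{(k)}}-\theta_\omega\|_2^2=8\alpha^2$ is immediate.

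\textbf{Main point of care.} The only subtle step is checking that conditioning on $K$ is legitimate, namely that $K$ is $X$-measurable. Without this, the logistic form could be lost upon mixing over $K$. With it, both the logistic form and the second-moment computation pass cleanly from the two-point construction.
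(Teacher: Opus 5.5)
Your proposal is correct and takes essentially the same route as the paper. Like the paper, you condition on the direction $K$ (which is determined by $X$), inherit the logistic form and the $\omega_k$-invariant second-moment matrix from \Cref{lem:two-point-validity}(b)--(c), average over the uniform $K$ to obtain the $1/(2p)$ block, and check the norm bound $8m^2\lambda^2+8\alpha^2\sum_k\omega_k\le R^2/4+3R^2/4$ and the separation $8\alpha^2$ directly.
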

\end{tcolorbox}

In particular, \Cref{eq:large-R-assouad-second-moment-lower} shows that the curvature condition
$
    \mathbb{E}_\omega\bigl[XX^\top\bigr]\succeq\mu I_{m+p},
$
holds whenever
\begin{equation}
\label{eq:large-R-assouad-curvature-condition}
    \mu
    \le
    \frac{1}{2}
    \min\left\{
        \frac{\operatorname{sech}^2(\lambda)}{m}, \frac{1}{p}
    \right\}.
\end{equation}

We next turn to controlling the distance between observed-data distributions. Under every $P_\omega$, let $M\in\{0,1\}^{m+p}$ have independent $\BER(q)$ coordinates, independently of $(X,Y)$, and let $Z:=M\odot X$. Denote the corresponding observed-data distribution of $(Y,M,Z)$ by $Q_\omega$.
Consider two neighboring vertices $\omega$ and $\omega^{(k)}$. Conditional on $K=\ell\neq k$, the two models coincide, since $\omega_\ell=\omega^{(k)}_\ell$. Conditional on $K=k$, they reduce exactly to the two-point pair $(P_0,P_1)$ from \Cref{subsubsec:large-R-two-point}. Since $\mathbb{P}(K=k)=1/p$, the distinguishability via $\chi^2$--divergence between the laws of neighboring vertices is therefore reduced by a factor $1/p$.

\begin{tcolorbox}
\begin{lemma}
\label{lem:large-R-assouad-hamming}
Suppose that
\[
    \alpha\|J\|_\infty\le\frac{1}{2},
    \qquad\text{and}\qquad
    0<\alpha^2\le q.
\]
Recall the definition of $\mathcal{B}_{m,\lambda,q}$ in \Cref{eq:def-Bmlq}. Fix any $k\in[p]$ such that $\omega\in\{0,1\}^p$ satisfies $\omega_k=0$. Let $\omega^{(k)}$ be obtained by replacing $\omega_k$ by $1$. Then, there exists a universal constant $C>0$ such that
\begin{equation}
\label{eq:large-R-assouad-edge-chi2}
    \chi^2\left(
        Q_{\omega^{(k)}},
        Q_\omega
    \right)
    \le
    \frac{Cq\alpha^2}{p}
    \mathcal{B}_{m,\lambda,q}.
\end{equation}
Consequently, based on $n$ independent observations,
\begin{equation}
\label{eq:large-R-assouad-edge-chi2-n}
    \chi^2\left(
        Q_{\omega^{(k)}}^{\otimes n},
        Q_\omega^{\otimes n}
    \right)
    \le
    \exp\left\{
        \frac{Cnq\alpha^2}{p}
        \mathcal{B}_{m,\lambda,q}
    \right\}-1.
\end{equation}
\end{lemma}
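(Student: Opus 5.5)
The plan is to reduce the edge divergence to the two-point bound in \Cref{lem:two-point-divergence} by conditioning on the mixing index $K$. The first step is a mixture decomposition of the observed law. Since $M$ is independent of $(X,Y,K)$ and $K$ is uniform, $Q_\omega=\frac1p\sum_{\ell=1}^p Q_\omega^{(\ell)}$, where $Q_\omega^{(\ell)}$ is the observed law conditional on $K=\ell$. Under $K=\ell$ the covariate has the form \eqref{eq:large-R-assouad-X}. Its first $m$ coordinates carry $A$, coordinate $m+\ell$ carries $B/\sqrt2$, and all other coordinates are zero.

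The index $K$ is not observed directly. The main technical point is that it can be recovered from $(M,Z)$ only when $M_{m+K}=1$, since $|B|=1$ means $Z_{m+K}\neq0$ exactly when coordinate $m+K$ is observed. Direct mixture handling is therefore the only real obstacle. I would avoid it by a data-processing argument on an augmented observation. Let $\widetilde Q_\omega$ be the joint law of $(K,Y,M,Z)$. By the data-processing inequality for $f$-divergences, $\chi^2(Q_{\omega^{(k)}},Q_\omega)\le\chi^2(\widetilde Q_{\omega^{(k)}},\widetilde Q_\omega)$.

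For the augmented laws, the marginal of $K$ is the same under both hypotheses, so the $\chi^2$ divergence splits by conditioning:
\[
\chi^2(\widetilde Q_{\omega^{(k)}},\widetilde Q_\omega)=\frac1p\sum_{\ell=1}^p\chi^2\bigl(Q^{(\ell)}_{\omega^{(k)}},Q^{(\ell)}_\omega\bigr).
\]
For $\ell\ne k$ we have $\omega_\ell=\omega^{(k)}_\ell$, so by \eqref{eq:large-R-assouad-logit} the two conditional models coincide and the term is zero. For $\ell=k$, the conditional observation $(Y,M,Z)$ is, up to a deterministic relabeling, the two-point observation $(Y,(M_{1:m},M_{m+k}),(Z_{1:m},Z_{m+k}))$ under $Q_0$ versus $Q_1$. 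The remaining coordinates contribute independent $\BER(q)$ masks with $Z=0$, and these are identical under both hypotheses. The divergence is invariant under appending an independent common component, so this term equals $\chi^2(Q_1,Q_0)$. Under the stated hypotheses $\alpha\|J\|_\infty\le1/2$ and $\alpha^2\le q$, \Cref{lem:two-point-divergence} bounds it by $Cq\alpha^2\mathcal B_{m,\lambda,q}$. This gives \eqref{eq:large-R-assouad-edge-chi2}. One should also check that $\omega_k=0$ gives $P_0$ and $\omega^{(k)}_k=1$ gives $P_1$, matching the orientation in the two-point lemma.

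Finally, \eqref{eq:large-R-assouad-edge-chi2-n} follows from the tensorization identity $1+\chi^2(P^{\otimes n},Q^{\otimes n})=\{1+\chi^2(P,Q)\}^n$ combined with $1+x\le e^x$.
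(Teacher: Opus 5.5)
Your proposal is correct and follows essentially the same route as the paper: augment the observation with $K$ and apply data processing, then split the $\chi^2$-divergence over the uniform $K$ so that only the $\ell=k$ term survives, with weight $1/p$. That term is identified with $\chi^2(Q_1,Q_0)$ after discarding the common independent masks of the inactive appended coordinates, and the bound then follows from \Cref{lem:two-point-divergence} together with tensorization.
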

\end{tcolorbox}

The proofs of \Cref{lem:large-R-assouad-validity,lem:large-R-assouad-hamming} are deferred to the appendix. To summarize their takeaways, neighboring parameters are separated by $8\alpha^2$ according to \Cref{eq:large-R-assouad-edge-separation}, while the corresponding observed-data divergence is bounded by \Cref{eq:large-R-assouad-edge-chi2}. 

It will also be convenient to define the shorthand
\begin{equation}
\label{eq:def-G-lambda}
    G_\lambda
    :=
    2+\frac{1}{\sinh^2(\lambda)}.
\end{equation}
By \Cref{lem:two-point-validity}, the Boolean hiding function satisfies $\|J\|_\infty^2\le (G_\lambda)^m$.
Thus, while $\mathcal{B}_{m,\lambda,q}$ (\Cref{eq:def-Bmlq}) determines how effectively missingness conceals the perturbation, $(G_\lambda)^m$ controls how small the perturbation size $\alpha$ must be for the construction to remain valid.

The following lemma summarizes the resulting Assouad lower bound. Given that the lower bound on the risk will be used repeatedly thereafter, we prove the lemma immediately after stating it.

\begin{tcolorbox}
\begin{lemma}
\label{lem:generic-assouad-reduction}
Let $m\ge5$ be odd, let $\lambda>0$, and let $p\ge1$ be an integer satisfying
\begin{equation}
\label{eq:generic-assouad-deterministic-conditions}
    m\lambda\le\frac{R}{4\sqrt{2}},
    \quad\,\text{and}\quad\,
    \mu \le
    \frac{1}{2} \min\left\{\frac{\operatorname{sech}^2(\lambda)}{m},\frac{1}{p}\right\}.
\end{equation}
Suppose further that, for a sufficiently large universal constant $C>0$,
\begin{equation}
\label{eq:generic-assouad-sample-size}
    n
    \ge
    \frac{C}{\mathcal{B}_{m,\lambda,q}}
    \max\left\{
        \frac{p}{q^2},\frac{p}{q}\, \bigl(G_\lambda\bigr)^m, \frac{p^2}{qR^2}
    \right\}.
\end{equation}
Then, there exists a universal constant $c>0$ such that
\begin{equation}
\label{eq:generic-assouad-lower-bound}
    \mathfrak{R}_n(R,\mu,q)
    \ge
    c\,
    \frac{p^2}
    {nq\,\mathcal{B}_{m,\lambda,q}}.
\end{equation}
\end{lemma}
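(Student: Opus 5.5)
We will mirror the proof of \Cref{prop:baseline-assouad}: choose the perturbation size $\alpha$ as large as the constraints of \Cref{lem:large-R-assouad-validity,lem:large-R-assouad-hamming} permit, while keeping the per-edge $\chi^2$-divergence bounded by a constant. Concretely, set
\[
    \alpha^2 := \frac{c'\,p}{nq\,\mathcal{B}_{m,\lambda,q}},
\]
where $c'>0$ is a small universal constant. The first step is to check the hypotheses of the two lemmas.

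\begin{itemize}
\item[(i)] The conditions $m\lambda\le R/(4\sqrt2)$ and the curvature inequality \eqref{eq:large-R-assouad-curvature-condition} are assumed directly in \eqref{eq:generic-assouad-deterministic-conditions}. Hence every $P_X$ lies in $\mathcal{P}_\mu$ by \Cref{lem:large-R-assouad-validity}(b).
\item[(ii)] The condition $\alpha^2\le q$ is equivalent to $n\ge c'p/(q^2\mathcal{B}_{m,\lambda,q})$. This follows from the first term in \eqref{eq:generic-assouad-sample-size} once $C\ge c'$.
\item[(iii)] The condition $\alpha\|J\|_\infty\le 1/2$ follows from $\|J\|_\infty^2\le (G_\lambda)^m$ (\Cref{lem:two-point-validity}(a)). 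It suffices that $\alpha^2 (G_\lambda)^m\le 1/4$, i.e.\ $n\ge 4c'p(G_\lambda)^m/(q\mathcal{B}_{m,\lambda,q})$, which is the second term of \eqref{eq:generic-assouad-sample-size}.
\item[(iv)] The condition $p\alpha^2\le 3R^2/32$ reads $n\ge 32c'p^2/(3qR^2\mathcal{B}_{m,\lambda,q})$, which is the third term.
\end{itemize}

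With all hypotheses in place, \Cref{lem:large-R-assouad-validity}(a) shows that every $\theta_\omega\in\Theta_R$ and every $P_\omega$ is a well-specified logistic model. So the hypercube $\{(P_{\omega,X},\theta_\omega)\}_{\omega\in\{0,1\}^p}$ lies inside the class over which $\mathfrak{R}_n$ takes its supremum.

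Next we control indistinguishability along edges. By \eqref{eq:large-R-assouad-edge-chi2-n} and our choice of $\alpha$,
\[
    \chi^2\bigl(Q_{\omega^{(k)}}^{\otimes n},Q_\omega^{\otimes n}\bigr)\le \exp(Cc')-1\le 1
\]
once $c'\le \log(2)/C$. Using $\TV\le \tfrac12\sqrt{\chi^2}$, this gives $\TV\le 1/2$ on every edge of the cube. Combined with the edge separation $\|\theta_{\omega^{(k)}}-\theta_\omega\|_2^2=8\alpha^2$ from \eqref{eq:large-R-assouad-edge-separation}, we then apply Assouad's lemma exactly as in \Cref{subsec:baseline-assouad}. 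Since the squared $\ell_2$ loss decomposes additively over the $p$ coordinates $e_{m+k}$, the lemma yields
\[
    \mathfrak{R}_n(R,\mu,q)\gtrsim p\cdot\alpha^2 = \frac{c'p^2}{nq\,\mathcal{B}_{m,\lambda,q}},
\]
which is \eqref{eq:generic-assouad-lower-bound}.

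The only delicate point is matching the Assouad lemma's hypotheses. The cube here is indexed by $\{0,1\}^p$ rather than $\{-1,1\}^p$, and we need the loss to be separable: $\|\widehat\theta-\theta_\omega\|^2\ge\sum_k(\widehat\theta_{m+k}-2\sqrt2\alpha\omega_k)^2$. This holds because the first $m$ coordinates of $\theta_\omega$ are shared by all vertices and only coordinate $m+k$ changes along edge $k$, so we simply drop the shared coordinates. Everything else is bookkeeping of universal constants, with $C$ in \eqref{eq:generic-assouad-sample-size} taken large relative to $1/c'$.
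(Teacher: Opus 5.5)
Your proposal is correct and follows essentially the same route as the paper: the choice $\alpha^2 \asymp p/(nq\,\mathcal{B}_{m,\lambda,q})$, the term-by-term check of $\alpha^2\le q$, $\alpha\|J\|_\infty\le 1/2$ and $p\alpha^2\le 3R^2/32$ against the three terms of \eqref{eq:generic-assouad-sample-size}, the $\chi^2$-to-$\TV\le 1/2$ step, and Assouad with edge separation $8\alpha^2$ giving $\gtrsim p\alpha^2$ all match. Your bookkeeping is slightly cleaner than the paper's because you keep the constant of \Cref{lem:large-R-assouad-hamming} separate from the constant in the sample-size condition; one small correction is that the relation you actually need is $C\ge 32c'/3$, i.e.\ $C$ large relative to $c'$ rather than to $1/c'$.
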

\end{tcolorbox}

\begin{proof}[Proof of \Cref{lem:generic-assouad-reduction}]
Choose
\begin{equation}
\label{eq:generic-assouad-alpha-choice}
    \alpha^2
    :=
    \widetilde{c} \frac{p}{nq\,\mathcal{B}_{m,\lambda,q}},
\end{equation}
where $\widetilde{c}>0$ is a universal constant defined as 
\[
\widetilde{c} := \min\biggl\{\frac{3C}{32},\frac{1}{C} \log(2) \biggr\},
\]
where $C$ is the constant in~\Cref{eq:generic-assouad-sample-size}.
We first verify that the resulting hypercube satisfies the assumptions of \Cref{lem:large-R-assouad-validity,lem:large-R-assouad-hamming}. By \Cref{eq:generic-assouad-sample-size}, we have
\[
    n \ge \frac{C}{\mathcal{B}_{m,\lambda,q}} \frac{p}{q^2}
    \quad\Longrightarrow\quad
    \alpha^2 \le \frac{\widetilde{c}}{C}q \le q,
\]
since, by definition, $\widetilde{c} \le 3C/32 < C$. Similarly, since $\|J\|_\infty^2\le G_\lambda^m$ by \Cref{lem:two-point-validity},
\[
    n \ge \frac{C}{\mathcal{B}_{m,\lambda,q}} \frac{p}{q} \bigl(G_\lambda\bigr)^m
    \quad\Longrightarrow\quad
    \alpha^2\|J\|_\infty^2 \le \frac{\widetilde{c}}{C} \le \frac{1}{4},
\]
since, by definition, $\widetilde{c} \le 3C/32 < C/4$. Thus, $\alpha\|J\|_\infty\le\nicefrac{1}{2}$. Finally,
\[
    n \ge \frac{C}{\mathcal{B}_{m,\lambda,q}} \frac{p^2}{qR^2}
    \quad\Longrightarrow\quad
    p\alpha^2 \le \frac{\widetilde{c}}{C}R^2 \le \frac{3R^2}{32}.
\] 
Therefore, all conditions of \Cref{lem:large-R-assouad-validity,lem:large-R-assouad-hamming} hold. 
Together with \Cref{eq:generic-assouad-deterministic-conditions}, \Cref{lem:large-R-assouad-validity} therefore ensures that every $P_\omega$ belongs to the required model class.
Hence, for every $k\in[p]$ and every neighboring pair $\bigl(\omega,\omega^{(k)}\bigr)$, \Cref{lem:large-R-assouad-hamming} and \Cref{eq:generic-assouad-alpha-choice} together yield
\[
    \chi^2\left(
        Q_{\omega^{(k)}}^{\otimes n},
        Q_\omega^{\otimes n}
    \right)
    \le
    \exp\left\{
        \frac{Cnq\alpha^2}{p}\,
        \mathcal{B}_{m,\lambda,q}
    \right\}-1
    \overset{\1}{=}
    \exp(\widetilde{c}\,C)-1
    \overset{\2}{\le}
    1,
\]
where step $\1$ follows directly from the definition of $\alpha^2$ in \Cref{eq:generic-assouad-alpha-choice}, and step $\2$ exploits the fact that $\widetilde{c} \le \frac{1}{C} \log(2)$ which implies $\exp(\widetilde{c}\,C) \le 2$. Lastly, the inequality $\TV(P,Q) \le \frac{1}{2}\sqrt{\chi^2(P,Q)}$ yields
\begin{equation}
\label{eq:generic-assouad-TV}
    \TV\left(
        Q_{\omega^{(k)}}^{\otimes n},
        Q_\omega^{\otimes n}
    \right)
    \le
    \frac{1}{2}
\end{equation}
uniformly over all neighboring vertices. 

To complete the proof, we employ a standard Assouad reduction (see, e.g., \citet[Lemma 2.12]{Tsybakov2009}, \citet[Lemma 8.7]{SamworthShah2026modern}). From \Cref{lem:large-R-assouad-validity}(c), we have separation $8\alpha^2$ in $\ell^2_2$ between two neighboring vertices $\omega$ and $\omega^{(k)}$, and from \Cref{eq:generic-assouad-TV}, we have that their corresponding observed-data laws are within $1/2$ in total variation distance. Thus, by a calculation similar to the proof of our baseline Assouad minimax risk lower bound in \Cref{subsec:baseline-assouad}, we obtain
\[
    \mathfrak{R}_n(R,\mu,q)
    \ge
    \frac{p\alpha^2}{2}
    =
    \frac{\widetilde{c}}{2}
    \frac{p^2}
    {nq\,\mathcal{B}_{m,\lambda,q}},
\]
which proves the lemma after renaming $c := \widetilde{c}/2$.
\end{proof}

In \Cref{subsubsec:large-R-regimes}, we choose $p,m,\lambda$, and $\alpha$ so that these bounds yield the desired rates in the three regimes.

\addtocontents{toc}{\protect\setcounter{tocdepth}{3}}
\subsubsection{Choice of parameters in the three regimes}
\label{subsubsec:large-R-regimes}
\addtocontents{toc}{\protect\setcounter{tocdepth}{2}}

We now choose the parameters in the multidimensional construction to obtain the three regimes in \Cref{prop:regime-dependent-lower}, by specializing \Cref{lem:generic-assouad-reduction} accordingly. 
These choices are made in the following claims:

\begin{tcolorbox}
\begin{claim}
\label{claim:lower-bound-regime-I}
Suppose that $R\mu\le\sqrt{q}$. There exist universal constants $c,C_0,C_1,C_2>0$ such that, whenever
\[
    n \ge
    \max\biggl\{
        \frac{C_0}{q\mu^2R^2},
        \frac{C_1}{\mu^2} \exp\Bigl( C_2\frac{R}{\sqrt{q}} \log\left(e/q\right) \Bigr)
    \biggr\},
\]
we have
\[
    \mathfrak{R}_n(R,\mu,q)
    \ge
    \frac{c}{nq\mu^2}
    \exp\left\{
        c\frac{R}{\sqrt{q}}
    \right\}.
\]
\end{claim}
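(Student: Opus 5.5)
We obtain \Cref{claim:lower-bound-regime-I} by specializing \Cref{lem:generic-assouad-reduction}. The idea is to choose $\lambda\asymp\sqrt q$, so that each auxiliary coordinate conceals the perturbation by a constant factor. We then take $m\asymp R/\sqrt q$ auxiliary coordinates, which is the largest number allowed by the radius constraint $m\lambda\lesssim R$. A small case where $R/\sqrt q$ is bounded is handled separately by \Cref{prop:baseline-assouad}.

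\emph{Parameter choices.} Set $\lambda:=\operatorname{arcsinh}(4\sqrt q)$, so that $\sinh^2(\lambda)=16q$. Then $\lambda\asymp\sqrt q$, and $\lambda\le\operatorname{arcsinh}(4)$ is bounded by a universal constant. With $t:=q/\sinh^2(\lambda)=1/16$ we get $\beta(1/16)\approx0.28\le e^{-1}$, and hence
\[
\mathcal{B}_{m,\lambda,q}\le e^{-m}.
\]
Also $G_\lambda=2+1/(16q)\le e/q$. Let $m$ be the largest odd integer with $m\lambda\le R/(4\sqrt2)$, so that $m\asymp R/\sqrt q$. Let $p:=\lfloor 1/(2\mu)\rfloor$, so that $p\asymp1/\mu$; note $p\ge1$ because $\mu\le1/3$.

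\emph{Case $R/\sqrt q\ge C'$ for a large universal $C'$, so that $m\ge5$.} We verify the conditions of \Cref{lem:generic-assouad-reduction}.
\begin{itemize}
\item The radius condition $m\lambda\le R/(4\sqrt2)$ holds by construction.
\item The curvature condition requires $\mu\le\tfrac12\operatorname{sech}^2(\lambda)/m$. Since $\operatorname{sech}^2(\lambda)\ge c$, it suffices that $m\lesssim 1/\mu$. This follows from $m\lesssim R/\sqrt q\le 1/\mu$, which uses exactly the regime assumption $R\mu\le\sqrt q$. I expect this to require shrinking $m$ by a universal constant factor, which only changes constants in the exponent. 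The condition $\mu\le 1/(2p)$ holds by the choice of $p$.
\item For the sample-size condition \eqref{eq:generic-assouad-sample-size}, we have $\mathcal{B}^{-1}\le e^{m}\le e^{CR/\sqrt q}$ and $G_\lambda^m\le\exp\{CR/\sqrt q\,\log(e/q)\}$. So the middle term $p\,G_\lambda^m/(q\mathcal{B})$ is at most $\mu^{-1}\exp\{C_2(R/\sqrt q)\log(e/q)\}$, once the factor $1/q\le e^{\log(e/q)}$ is absorbed into the exponent using $R/\sqrt q\ge C'$.
\item The first term $p/(q^2\mathcal B)$ is handled the same way.
\item The last term $p^2/(qR^2\mathcal B)$ is at most $\mu^{-2}e^{CR/\sqrt q}$, since $qR^2\gtrsim q^2$ is bounded below when $R/\sqrt q\ge C'$ (here I may need to use $1/(qR^2)\le 1/C'^2\cdot 1/q^2$ together with absorption of $1/q^2$ into the exponential). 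It is therefore covered by the $C_1\mu^{-2}\exp(\cdots)$ hypothesis.
\end{itemize}
\Cref{lem:generic-assouad-reduction} then gives
\[
\mathfrak R_n\ge c\,p^2/(nq\mathcal B)\ge \frac{c}{nq\mu^2}\,e^{m}\ge\frac{c}{nq\mu^2}e^{cR/\sqrt q}.
\]

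\emph{Case $R/\sqrt q<C'$.} Here $e^{cR/\sqrt q}$ is a universal constant, so it suffices to show $\mathfrak R_n\gtrsim 1/(nq\mu^2)$. \Cref{prop:baseline-assouad} gives $\mathfrak R_n\ge c\min\{R^2,e^{R/4}/(nq\mu^2)\}\ge c\min\{R^2,1/(nq\mu^2)\}$. The hypothesis $n\ge C_0/(q\mu^2R^2)$ makes the minimum equal to $1/(nq\mu^2)$.

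\emph{Main obstacle.} The delicate step is the bookkeeping of the sample-size condition \eqref{eq:generic-assouad-sample-size}. In particular, we must check that the polynomial factors $1/q$, $1/q^2$ and $1/(qR^2)$ can be absorbed into $\exp\{C_2(R/\sqrt q)\log(e/q)\}$. This uses $R/\sqrt q\ge C'$ and is the reason for the split into the two cases. We must also make sure the constant shrinkage of $m$ needed for the curvature constraint is compatible with $m\ge5$, which again is arranged by taking $C'$ large.
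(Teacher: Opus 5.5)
Your proposal is correct and follows the paper's proof. It uses the same split on whether $R/\sqrt q$ is bounded, with \Cref{prop:baseline-assouad} in the bounded case and \Cref{lem:generic-assouad-reduction} otherwise, taking $\lambda\asymp\sqrt q$, $m\asymp R/\sqrt q$ (shrunk by a universal factor, as the paper does with $m\le c'R/\sqrt q$) and $p\asymp1/\mu$; your choice $\sinh^2(\lambda)=16q$ only fixes $t=1/16$, where the paper's $\lambda=2\sqrt q$ gives $t\in[1/\sinh^2(2),1/4]$.

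One slip to fix: $\mathcal{B}_{m,\lambda,q}\le e^{-m}$ gives $\mathcal{B}_{m,\lambda,q}^{-1}\ge e^{m}$, which is the direction you correctly use for the final lower bound, not $\mathcal{B}_{m,\lambda,q}^{-1}\le e^{m}$. For the sample-size bookkeeping, use instead $\mathcal{B}_{m,\lambda,q}^{-1}=\beta(1/16)^{-m}\le e^{2m}$; this changes only constants.
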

\end{tcolorbox}

\begin{tcolorbox}
\begin{claim}
\label{claim:lower-bound-regimes-II-III}
Suppose that $R\mu>\sqrt{q}$. There exist universal constants $c_0,c_1,C_0,C_1>0$ such that, whenever
\begin{equation}
\label{eq:regimes-II-III-sample-size}
    n \ge
    \frac{C_0}{\mu^2}
    \exp\biggl\{ \frac{C_1}{\mu} \log\biggl(\frac{e}{q}\biggr) \biggr\},
\end{equation}
we have
\begin{equation}
\label{eq:lower-bound-regimes-II-III}
    \mathfrak{R}_n(R,\mu,q)
    \ge
    \frac{c_0}{nq\mu^2}
    \exp\biggl\{
        \frac{c_1}{\mu} \log\biggl( 1+\frac{1-q}{q} \min\{R^2\mu^2,1\} \biggr)
    \biggr\}.
\end{equation}
\end{claim}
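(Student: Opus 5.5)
The plan is to obtain \Cref{claim:lower-bound-regimes-II-III} directly from \Cref{lem:generic-assouad-reduction}, with the choices $p\asymp 1/\mu$, $m\asymp \varepsilon/\mu$ (odd, $m\ge5$) for a small universal constant $\varepsilon$, and $\lambda$ as large as the radius and curvature constraints allow. The key quantity is $\mathcal{B}_{m,\lambda,q}^{-1}=\beta(t)^{-m}$ with $t=q/\sinh^2(\lambda)$. Since $\beta(t)^2=t(\beta(t)+1)$, one computes
\[
\beta(t)^{-1}=\frac{\sqrt{1+4u}-1}{2},\qquad u:=\frac{\sinh^2(\lambda)}{q}.
\]
The right-hand side exceeds $1$ only once $u>2$. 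I will therefore arrange $u\ge 6$. In that range the elementary inequality $(\sqrt{1+4u}-1)/2\ge(1+u)^{1/4}$ holds: it is checked at $u=6$ and persists because the left side grows like $\sqrt u$. This gives
\[
\mathcal{B}_{m,\lambda,q}^{-1}\ge \exp\Bigl\{\tfrac{m}{4}\log(1+u)\Bigr\},
\]
with no additive constant loss. That matters, because any loss of order $m$ in the exponent could not be absorbed into a bound of the form $\exp(c_1/\mu)$.

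\textbf{Case $Rq$-small (regime II with $q\le q_0$).} Suppose $\sqrt q<R\mu\le 1$ and $q\le q_0$ for a small universal $q_0$. I take $m\asymp\varepsilon/\mu$ and $\lambda:=R/(4\sqrt2 m)\asymp R\mu/\varepsilon$. I would try the first choice $\lambda\asymp R\mu/\varepsilon$, and if the constants do not close, cap $\lambda$ at a constant and use the fact that $R\mu\le 1$.

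\begin{itemize}
\item The radius constraint $m\lambda\le R/(4\sqrt2)$ then holds with equality.
\item Since $\lambda\lesssim 1/\varepsilon$ is bounded, the curvature condition $\mu\le\tfrac12\min\{\operatorname{sech}^2(\lambda)/m,1/p\}$ holds once $\varepsilon$ is small compared to $\operatorname{sech}^2$ of that bound.
\item We get $u\gtrsim R^2\mu^2/(\varepsilon^2 q)\ge\varepsilon^{-2}\ge 6$.
\item Moreover $\log(1+u)\ge\log\bigl(1+\tfrac{1-q}{q}R^2\mu^2\bigr)$.
\end{itemize}

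\textbf{Case of constant $q$ or regime III.} Suppose instead $R\mu>1$, or $q>q_0$ (so that $R\mu>\sqrt{q_0}$ is at least a constant). Here I fix $\lambda=\lambda_0$, a large universal constant, and take $m\asymp e^{-2\lambda_0}/\mu$ so that the curvature condition holds.

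\begin{itemize}
\item The radius condition $m\lambda_0\lesssim \lambda_0e^{-2\lambda_0}/\mu\le R/(4\sqrt2)$ holds because $R\mu\gtrsim\sqrt{q_0}$ and $\lambda_0e^{-2\lambda_0}$ is tiny.
\item We get $u\ge \sinh^2(\lambda_0)\ge 6$.
\item Moreover $\log(1+u)\ge\log(1/q)\ge\log\bigl(1+\tfrac{1-q}{q}\min\{R^2\mu^2,1\}\bigr)$.
\end{itemize}

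In both cases, \Cref{lem:generic-assouad-reduction} gives
\[
\mathfrak R_n\gtrsim \frac{p^2}{nq}\,\mathcal B_{m,\lambda,q}^{-1}\gtrsim\frac{1}{nq\mu^2}\exp\Bigl\{\frac{c_1}{\mu}\log\Bigl(1+\frac{1-q}{q}\min\{R^2\mu^2,1\}\Bigr)\Bigr\}.
\]

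It remains to check that the sample-size condition \eqref{eq:generic-assouad-sample-size} is implied by \eqref{eq:regimes-II-III-sample-size}. For this I need upper bounds on the factors involved.

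\begin{itemize}
\item Since $\sinh^2\lambda\lesssim 1$, we have $u\lesssim 1/q$, so $\mathcal{B}^{-1}\le(1+\sqrt u)^m\le\exp\{Cm\log(e/q)\}$.
\item Since $\sinh^2\lambda\ge 6q$, we have $G_\lambda\le 2+1/(6q)\le C/q$, so $G_\lambda^m\le\exp\{m\log(C/q)\}$.
\item The term $p^2/(qR^2)\asymp 1/(q\mu^2R^2)$ is at most $1/(\mu^2 q^2)$ because $R\mu>\sqrt q$.
\item The term $p/q^2$ is at most $\mu^{-1}\exp\{2\log(1/q)\}$.
\end{itemize}

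All four are dominated by $C_0\mu^{-2}\exp\{C_1\mu^{-1}\log(e/q)\}$ because $m\lesssim 1/\mu$ and $\mu\le1/3$.

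The main obstacle is the bookkeeping of universal constants. The exponent must come out as a genuine multiple $c_1/\mu$ of the logarithm with no additive $O(m)$ loss, which is why I insist on $u\ge 6$. The borderline region $R\mu\approx\sqrt q$ is handled by the freedom to shrink $m$ by $\varepsilon$ while enlarging $\lambda$. The region of $q$ near $1$ is handled by switching to the constant-$\lambda_0$ choice.
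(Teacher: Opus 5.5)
\textbf{Gap: the range of constant-order $\mu$ is not covered.} Every step of your plan goes through \Cref{lem:generic-assouad-reduction}. That lemma needs an odd $m\ge 5$ together with the curvature condition $\mu\le\frac12\operatorname{sech}^2(\lambda)/m$. Since $\operatorname{sech}^2(\lambda)<1$, this forces $m<1/(2\mu)$, so the lemma cannot be invoked at all once $\mu\ge 1/10$. With your specific choices ($m\asymp\varepsilon/\mu$, or $m\asymp e^{-2\lambda_0}/\mu$), the construction exists only for $\mu$ below a small universal constant. The claim, however, must hold for every $0<\mu\le 1/3$, because it is used for all such $\mu$ in \Cref{prop:regime-dependent-lower}. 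On the leftover range it is not trivial. For instance, when $R\mu>1$ with $R$ of constant order, the target is $\frac{c_0}{nq\mu^2}\,q^{-c_1/\mu}$. This is a polynomial gain in $1/q$ over the bound $\gtrsim e^{R/4}/(nq\mu^2)$ supplied by \Cref{prop:baseline-assouad}, so a separate, fixed-dimensional construction is needed there.

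\textbf{What fills it.} The paper treats $\mu\in(\mu_{\mathrm{crit}},1/3]$ with the two-point construction specialized to $m=1$, where $J(T)=T/\rho$ (\Cref{lem:two-dimensional-patch,lem:moderate-mu-lower-bound}). \Cref{lem:two-point-validity}(c) is unavailable there, since it uses $m\ge5$ to get $\mathbb{E}_1[XX^\top]=\mathbb{E}_0[XX^\top]$. For $m=1$ one finds $\mathbb{E}_1[AB]=-\sinh(2\alpha)/\sinh(2\lambda)$, so the curvature condition $\sinh(2\alpha)/\sinh(2\lambda)\le 1-2\mu$ must be imposed by hand. With $\lambda=h/8$ and $h=\min\{R\mu,1\}$, one has $\chi^2(Q_1,Q_0)\lesssim q^2\alpha^2/\sinh^2(\lambda)$. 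Le Cam's method with $\alpha^2\asymp\sinh^2(\lambda)/(nq^2)$ then gives $\gtrsim h^2/(nq^2)$ under $n\gtrsim q^{-3}$, which your sample-size condition implies. Add the baseline bound $\gtrsim 1/(nq\mu^2)$ and choose $c_1<\mu_{\mathrm{crit}}$, so that $(1+y)^{c_1/\mu}\le 1+y$ with $y=\frac{1-q}{q}h^2$; this yields the claim in this range.

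\textbf{Small-$\mu$ part.} Your argument is sound provided you commit to the capped $\lambda$. With $\lambda=R/(4\sqrt2 m)$ uncapped, the curvature condition and your requirement $u\ge 6$ pull $\varepsilon$ in opposite directions. Shrinking $\varepsilon$ inflates $\lambda\asymp R\mu/\varepsilon$ and makes $\operatorname{sech}^2(\lambda)$ exponentially small, so ``$\varepsilon$ small compared to $\operatorname{sech}^2$ of that bound'' is circular. The paper avoids both the $q\le q_0$ split and the $u\ge6$ threshold by taking $\lambda=2h$. Then $z(z+1)=\sinh^2(2h)/q>4$ forces $z=\beta(t)^{-1}>4/3$. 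The inequality $1+z(z+1)/4\le z^2$ then gives $\log z\ge\frac12\log\bigl(1+\frac{1-q}{q}h^2\bigr)$ uniformly over regimes II and III.
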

\end{tcolorbox}

Taking these claims as given---their proofs are presented at the end of this section---we now complete the proof of Proposition~\ref{prop:regime-dependent-lower}. We treat the three regimes on $R\mu$, viz. $R\mu \leq \sqrt{q}$, $R\mu \in (\sqrt{q},1]$, and $R\mu > 1$, separately.

\begin{enumerate}[label=(\Roman*)]
    \item Suppose first that $R\mu\le\sqrt q$. Then, $R/\sqrt{q} \le 1/\mu$. Thus,
    \[
        \min\left\{\frac{R}{\sqrt q},\frac{1}{\mu}\right\}
        =
        \frac{R}{\sqrt q}.
    \]
    Hence, after enlarging the universal constant $C>0$ if necessary, the sample-size condition in \Cref{prop:regime-dependent-lower} implies the assumptions of \Cref{claim:lower-bound-regime-I}. Therefore,
    \[
        \mathfrak R_n(R,\mu,q)
        \ge
        \frac{c}{nq\mu^2}
        \exp\left\{ c\frac{R}{\sqrt q} \right\}.
    \]

    \item Now suppose that $\sqrt q<R\mu\le1$. Then, $R/\sqrt q > 1/\mu$, so
    \[
        \min\left\{\frac{R}{\sqrt q},\frac{1}{\mu}\right\}
        =
        \frac{1}{\mu}.
    \]
    Thus, the sample-size condition in \Cref{prop:regime-dependent-lower} implies that of \Cref{claim:lower-bound-regimes-II-III}. Since $\min\{R^2\mu^2,1\}=R^2\mu^2$, we obtain
    \[
        \mathfrak R_n(R,\mu,q)
        \ge
        \frac{c}{nq\mu^2}
        \exp\left\{
            \frac{c}{\mu}
            \log\left(1+\frac{1-q}{q}R^2\mu^2\right)
        \right\}.
    \]

    \item Finally, suppose that $R\mu>1$. Again $R/\sqrt q > R > 1/\mu$, so the sample-size condition in \Cref{prop:regime-dependent-lower} implies that of \Cref{claim:lower-bound-regimes-II-III}. Now $\min\{R^2\mu^2,1\}=1$, and hence
    \[
        \log\left(1+\frac{1-q}{q}\right)
        =
        \log\left(\frac{1}{q}\right).
    \]
    Therefore,
    \[
        \mathfrak R_n(R,\mu,q)
        \ge
        \frac{c}{nq\mu^2}
        \exp\left\{
            \frac{c}{\mu} \log\left(\frac{1}{q}\right)
        \right\}.
    \]
\end{enumerate}

\noindent It remains to prove the claims.

\paragraph{Proofs of \Cref{claim:lower-bound-regime-I,claim:lower-bound-regimes-II-III}.} To begin, fix universal constants $c',c''>0$ such that
\begin{equation}
\label{eq:choice-a-b}
    c'\le\min\left\{\frac{\operatorname{sech}^2(2)}{2},\frac{1}{8\sqrt{2}}\right\} < \frac{1}{25},
    \quad\,\text{and}\quad\,
    c''\le\frac{1}{2}.
\end{equation}

We also fix a sufficiently small universal constant
\begin{equation}
\label{eq:mu-critical}
    \mu_{\mathrm{crit}}\le\min\left\{\frac{c'}{6},\frac{c''}{2}\right\} < \frac{1}{100},
\end{equation}
and write $L_q:=\log(e/q) = 1 + \log(1/q) \ge 1$.

\begin{proof}[Proof of \Cref{claim:lower-bound-regime-I}]
We prove \Cref{claim:lower-bound-regime-I} by separating the cases where $R/\sqrt{q}$ is bounded, and where $R/\sqrt{q}$ is unbounded. Set $u:=R/\sqrt{q}$, so that $u\le\mu^{-1}$ in the regime of the claim. Also, choose $U_0\ge\max\bigl\{6/c',2/c'',1\bigr\}$.

\medskip\noindent
\underline{\emph{Case 1: $u\le U_0$}}.
By \Cref{prop:baseline-assouad} and the fact that $e^{R/4}\ge1$,
\[
    \mathfrak{R}_n(R,\mu,q)\ge c \cdot \min\left\{R^2,\frac{1}{nq\mu^2}\right\}.
\]
Hence, whenever $n\ge C/(q\mu^2R^2)$, we obtain $\mathfrak{R}_n(R,\mu,q)\ge c/(nq\mu^2)$. Since $u\le U_0$, we have
\begin{equation}
\label{eq:regime-I-small-u}
    \mathfrak{R}_n(R,\mu,q)\ge\frac{c}{nq\mu^2}\exp(cu)
    =\frac{c}{nq\mu^2}\exp\left\{c\frac{R}{\sqrt{q}}\right\}.
\end{equation}

\noindent
\underline{\emph{Case 2: $u > U_0$}}.
Set
\begin{equation}
\label{eq:regime-I-parameters}
    \lambda:=2\sqrt{q},
    \quad\,\text{and}\quad\,
    p:=\left\lfloor\frac{c''}{\mu}\right\rfloor.
\end{equation}

Also, let $m$ be the largest positive odd integer satisfying $m\le c'u$. Then, thanks to the choice of $U_0$, we have $m\ge5$, and since consecutive positive odd integers differ by $2$,
\begin{equation}
\label{eq:regime-I-m}
    \frac{2c'u}{3}\le m\le c'u.
\end{equation}
By \Cref{eq:choice-a-b}, $m\lambda\le2c'R\le R/(4\sqrt{2})$. Moreover, since $u\le\mu^{-1}$ and $\lambda\le2$,
\[
    m\le\frac{c'}{\mu}\le\frac{\operatorname{sech}^2(2)}{2\mu}
    \le\frac{\operatorname{sech}^2(\lambda)}{2\mu}.
\]
Also, $u>U_0\ge2/c''$ and $u\le\mu^{-1}$ imply $c''/\mu>2$. Thus,
\begin{equation}
\label{eq:regime-I-p}
    \frac{c''}{2\mu}\le p\le\frac{c''}{\mu}.
\end{equation}
Since $c''\le1/2$, we also have $p\le (2\mu)^{-1}$. Therefore, the deterministic conditions \eqref{eq:generic-assouad-deterministic-conditions} in \Cref{lem:generic-assouad-reduction} are satisfied.

We next control $\mathcal{B}_{m,\lambda,q}$ (\Cref{eq:def-Bmlq}). For the present choice of $\lambda$, let $t := q/\sinh^2(2\sqrt{q})$.
Since $x\mapsto\sinh(x)/x$ is increasing on $(0,\infty)$, $1/\sinh^2(2)\le t\le1/4$. Recalling the function $\beta(\cdot)$ from \Cref{eq:beta-function}, define
\[
    \underline{\beta}:=\beta\left(\frac{1}{\sinh^2(2)}\right)>0,
    \qquad
    \overline{\beta}:=\beta\left(\frac{1}{4}\right)=\frac{1+\sqrt{17}}{8}<1.
\]

Since $\beta(\cdot)$ is a strictly increasing function on $[0,\infty)$, we have $\underline{\beta}\le\beta(t)\le\overline{\beta}$. Hence, using \Cref{eq:regime-I-m},
\begin{equation}
\label{eq:regime-I-B-bounds}
    \exp(\underline{c}u)\le\mathcal{B}_{m,\lambda,q}^{-1}\le\exp(\overline{c}u)
\end{equation}
for universal constants $\underline{c},\overline{c}>0$. 

It remains to verify the sample-size condition \eqref{eq:generic-assouad-sample-size} in \Cref{lem:generic-assouad-reduction}. Since $\sinh(2\sqrt{q})\ge2\sqrt{q}$, we have $G_\lambda\le2+(4q)^{-1}\le C/q$, and therefore $(G_\lambda)^m \le \exp(CuL_q)$. Combining this with \Cref{eq:regime-I-p,eq:regime-I-B-bounds}, and using $u\ge1$, $R^2=u^2q$, and $\mu\le1$, yields
\begin{equation}
\label{eq:regime-I-sample-size-control}
    \frac{1}{\mathcal{B}_{m,\lambda,q}}
    \max\left\{\frac{p}{q^2},\frac{p}{q}\,\bigl(G_\lambda\bigr)^m,\frac{p^2}{qR^2}\right\}
    \le\frac{C}{\mu^2}\exp(C'uL_q)
\end{equation}
for universal constants $C,C'>0$. Consequently, the sample-size condition \eqref{eq:generic-assouad-sample-size} of \Cref{lem:generic-assouad-reduction} holds whenever
\begin{equation}
\label{eq:regime-I-large-u-sample-size}
    n\ge\frac{C}{\mu^2}
    \exp\left\{C'\frac{R}{\sqrt{q}}\log\left(\frac{e}{q}\right)\right\}.
\end{equation}

Applying \Cref{lem:generic-assouad-reduction} and using \Cref{eq:regime-I-p,eq:regime-I-B-bounds}, we conclude that
\begin{equation}
\label{eq:regime-I-large-u}
    \mathfrak{R}_n(R,\mu,q)
    \ge\frac{c}{nq\mu^2}
    \exp\left\{c\frac{R}{\sqrt{q}}\right\}.
\end{equation}

Combining \Cref{eq:regime-I-small-u,eq:regime-I-large-u}, we obtain the desired lower bound throughout $R\mu\le\sqrt{q}$: provided
\[
    n\ge\frac{C_0}{q\mu^2R^2},
    \quad\,\text{and}\quad\,
    n\ge\frac{C_1}{\mu^2}
    \exp\left\{C_2\frac{R}{\sqrt{q}}\log\left(\frac{e}{q}\right)\right\},
\]
for sufficiently large universal constants $C_0,C_1,C_2>0$,
\begin{equation}
\label{eq:lower-bound-regime-I}
    \mathfrak{R}_n(R,\mu,q)
    \ge\frac{c}{nq\mu^2}
    \exp\left\{c\frac{R}{\sqrt{q}}\right\}.
\end{equation}
This proves \Cref{claim:lower-bound-regime-I}.
\end{proof}

\begin{proof}[Proof of \Cref{claim:lower-bound-regimes-II-III}]
Set $h:=\min\{R\mu,1\}$. Then, $\sqrt{q}\le h\le1$ and $h\le R\mu$. We first consider the range $0<\mu\le\mu_{\mathrm{crit}}$, where the Boolean construction with $m\ge5$ applies.

\medskip\noindent
\underline{\emph{Case 1: $0<\mu\le\mu_{\mathrm{crit}}$}}.
Let $m$ be the largest positive odd integer satisfying $m\le c'/\mu$, and set
\begin{equation}
\label{eq:regimes-II-III-parameters}
    \lambda := 2h = 2 \min\{R\mu,1\},
    \quad\,\text{and}\quad\,
    p:=\left\lfloor\frac{c''}{\mu}\right\rfloor.
\end{equation}
Since $\mu\le\mu_{\mathrm{crit}}\le\min\{c'/6,c''/2\}$, we have
\begin{equation}
\label{eq:regimes-II-III-mp}
    \frac{2c'}{3\mu}\le m\le\frac{c'}{\mu},
    \quad\,\text{and}\quad\,
    \frac{c''}{2\mu}\le p\le\frac{c''}{\mu}.
\end{equation}
Moreover, $m\lambda\le 2c'R\le R/(4\sqrt{2})$, while $\lambda\le2$ and \Cref{eq:choice-a-b} give
\[
    m\le\frac{c'}{\mu}\le\frac{\operatorname{sech}^2(2)}{2\mu}
    \le\frac{\operatorname{sech}^2(\lambda)}{2\mu}.
\]
Since $c''\le1/2$, we have $p\le1/(2\mu)$. Hence, the deterministic conditions of \Cref{lem:generic-assouad-reduction} are satisfied.

We next bound $\mathcal{B}_{m,\lambda,q}$. First, note that for any $t > 0$, $\beta(t)$ satisfies the following:
\begin{align}
\label{eq:beta-quadratic}
    \beta(t)^2
    &= 
    \frac{1}{4} (t^2 + 2t\sqrt{t^2 + 4t} + t^2 + 4t)
    =
    \frac{2t}{4} \Bigl[t + \sqrt{t^2 + 4t} + 2\Bigr]
    =
    \frac{t}{2} \Bigl[2 \, \beta(t) + 2\Bigr]
    =
    t\{1+\beta(t)\}.
\end{align}
Write $t:=q/\sinh^2(2h)$ and $z:=1/\beta(t)$. \Cref{eq:beta-quadratic} yields that
\[
    z(z+1)=\frac{1}{t}=\frac{\sinh^2(2h)}{q}.
\]
Since $h\ge\sqrt{q}$ and $\sinh(2h)>2h$, we have $z(z+1)>4$, and hence $z>4/3$. Furthermore,
\[
    \frac{1-q}{q}h^2\le\frac{h^2}{q}\le\frac{z(z+1)}{4}.
\]
For $z>4/3$, $1+z(z+1)/4\le z^2$, and therefore
\[
    \log(z)\ge\frac{1}{2}\log\Bigl(1+\frac{1-q}{q}h^2\Bigr).
\]
Since $\mathcal{B}_{m,\lambda,q}^{-1}=z^m$, \Cref{eq:regimes-II-III-mp} yields
\begin{equation}
\label{eq:regimes-II-III-B-lower}
    \mathcal{B}_{m,\lambda,q}^{-1}
    \ge
    \exp\biggl\{
        \frac{c'}{3\mu}
        \log\biggl(
            1+\frac{1-q}{q}\min\{R^2\mu^2,1\}
        \biggr)
    \biggr\}.
\end{equation}

It remains to verify the sample-size requirement. Since $2\sqrt{q}\le\lambda\le2$, we have $G_\lambda\le C/q$. Moreover, $z(z+1)=\sinh^2(2h)/q\le C/q$, so $z\le C/\sqrt{q}$, and hence
\[
    \mathcal{B}_{m,\lambda,q}^{-1}\le\exp(CmL_q),
    \quad\,\text{and}\quad\,
    \bigl(G_\lambda\bigr)^m\le\exp(CmL_q).
\]
Using $m\le c'/\mu$, $p\le c''/\mu$, and $R\mu>\sqrt{q}$, it follows that
\begin{equation}
\label{eq:regimes-II-III-sample-control}
    \frac{1}{\mathcal{B}_{m,\lambda,q}}
    \max\left\{
        \frac{p}{q^2}, \frac{p}{q}\bigl(G_\lambda\bigr)^m, \frac{p^2}{qR^2}
    \right\}
    \le
    \frac{C}{\mu^2}
    \exp\left\{\frac{C'}{\mu}L_q\right\}.
\end{equation}
Thus, for sufficiently large universal constants $C_0,C_1>0$, the sample-size condition
\begin{equation*}
    n\ge\frac{C_0}{\mu^2}
    \exp\left\{\frac{C_1}{\mu}\log\left(\frac{e}{q}\right)\right\}
\end{equation*}
allows us to invoke \Cref{lem:generic-assouad-reduction}. Combining that lemma with \Cref{eq:regimes-II-III-mp,eq:regimes-II-III-B-lower} yields
\begin{equation*}
    \mathfrak{R}_n(R,\mu,q)
    \ge
    \frac{c_0}{nq\mu^2}
    \exp\biggl\{
        \frac{c_1}{\mu}
        \log\biggl(
            1+\frac{1-q}{q}\min\{R^2\mu^2,1\}
        \biggr)
    \biggr\},
\end{equation*}
for universal constants $c_0,c_1>0$.

\noindent
\underline{\emph{Case 2: $\mu_{\mathrm{crit}}<\mu\le\nicefrac{1}{3}$}}.
In this range, the choice $m\asymp\mu^{-1}$ may no longer yield $m\ge5$, so we instead use a two-point construction with $m=1$. 
Since $\mu\ge\mu_{\mathrm{crit}}$, the quantity $1/\mu$ is bounded by a universal constant, and a fixed-dimensional construction is sufficient. 

Write $h:=\min\{R\mu,1\}$, and choose
\[
    \lambda:=\frac{h}{8},
    \quad~\text{and}\quad~
    \alpha^2:=c'''\frac{\sinh^2(\lambda)}{nq^2}
\]
for a sufficiently small universal constant $c'''$. In this case, we may assume a less stringent sample-size condition
\begin{equation}
\label{eq:moderate-mu-sample-size}
    n\ge C q^{-3}
\end{equation}
for a sufficiently large universal constant $C > 0$. We show the same lower bound on the minimax risk as in \Cref{eq:lower-bound-regimes-II-III} holds under the sample-size condition in \Cref{eq:moderate-mu-sample-size}. For brevity, we defer the formal construction and statement (see \Cref{lem:moderate-mu-lower-bound}) to \Cref{proof:lem:moderate-mu-lower-bound}.

Also, note that there exist sufficiently large universal constants $C_0,C_1>0$ such that the sample-size condition as in \Cref{eq:regimes-II-III-sample-size} implies the sample-size condition as in \Cref{eq:moderate-mu-sample-size}, since for $\mu_{\mathrm{crit}}<\mu\le\nicefrac{1}{3}$, choosing $C_1\ge1$ yields
\[
    \exp\left\{\frac{C_1}{\mu}\log\left(\frac{e}{q}\right)\right\}
    \ge q^{-C_1/\mu}
    \ge q^{-3}.
\]

Combining the two cases, we have
\begin{equation}
\label{eq:lower-bound-regimes-II-III-unified}
    \mathfrak{R}_n(R,\mu,q)
    \ge
    \frac{c_0}{nq\mu^2}
    \exp\biggl\{
        \frac{c_1}{\mu}
        \log\Bigl(
            1+\frac{1-q}{q}\min\{R^2\mu^2,1\}
        \Bigr)
    \biggr\}
\end{equation}
whenever $R\mu>\sqrt{q}$, and the sample-size condition in \Cref{eq:regimes-II-III-sample-size} holds. This proves \Cref{claim:lower-bound-regimes-II-III}.
\end{proof}


\section{Discussion} 
\label{sec:discussion}

We have studied parameter estimation in well-specified logistic regression with missing covariates in a setting where the covariate distribution is unknown. 
By formulating the problem as $Z$-estimation with a monotone operator, we constructed a distribution-free estimating equation and then solved for its root via a computationally efficient stochastic approximation procedure. Our results show that estimation is possible at a parametric rate without modeling or estimating the covariate distribution. Moreover, the matching lower bounds that we derive also show that the nonstandard dependence of our upper bounds on the missingness probability is intrinsic to the problem. The key takeaway is that missingness in covariates can impose a substantially more intricate statistical cost in logistic regression than in the extensively studied linear regression problem. We conclude by discussing a few natural open directions.

A first open direction is to weaken the distributional assumptions in our model. Although we only assume bounded covariates (viz., $\norm{X}_2\le1$ almost surely) and a very minimal curvature condition (viz., $\E[XX^\top] \succeq \mu I_d$), a natural question to ask would be whether similar guarantees can be obtained for unbounded covariates under suitable tail or moment assumptions (e.g., sub-Gaussian or sub-exponential tails). 
More broadly, we could allow $(X, Y)$ to be drawn from a general joint distribution, and seek to estimate the population minimizer of the logistic loss $\thetastar$. Is it possible to estimate this quantity at a parametric rate without distributional assumptions? Our current approach crucially exploits the fact that $P_{Y|X}$ is a logistic regression model to construct our population operator $\PopOp$. Constructing an analogous estimating equation in the misspecified setting remains open.

A second direction concerns logistic regression with missing covariates, but with a structured parameter $\thetastar$. In this work, we impose no structural assumption on $\thetastar$, but in many high-dimensional and nonparametric problems, the regression parameter may be sparse or possess some other low-dimensional structure~\citep[see, e.g.,][]{PlanVershynin-2013-robust1bit,AbramovichGrinshtein-2019-HDlogistic}. Is it possible to exploit such structure in the presence of missing covariates? More importantly, does the statistical price of missingness (in a minimax sense) depend on the sparsity level, or some more refined notion of effective dimension? It would also be interesting to design computationally efficient estimators that attain such rates. Another related direction is to go beyond minimax characterizations and design instance-optimal stochastic approximation schemes using our operator or its relatives. This would involve moving beyond the vanilla stochastic approximation scheme in \Cref{alg:Proj-SA} and incorporating, e.g., careful variance reduction techniques that have recently been explored in this context for stochastic convex optimization~\cite[see, e.g.,][]{LiweiAshwin2026visor}.

A third direction is to consider a broader class of nonlinear regression problems with MCAR covariates.  As noted in Section~\ref{sec:related}, our estimator falls in the large class of augmented inverse propensity weighted (AIPW) estimators~\citep{robins1994estimation}.  In general, it is difficult to use the optimal AIPW estimating equation.  Since our estimator is a member of the AIPW class and additionally enjoys monotonicity---thereby ensuring computationally efficient estimation---it is natural to ask whether it is possible to extend the ideas developed here to generic nonlinear regression problems.  In particular, it would be interesting to find monotone operators in the corresponding AIPW class that can be computed efficiently and enjoy optimality guarantees in a minimax sense.

A final set of open directions is to study our problem under broader settings of missingness.  Indeed, in this paper, we focused on an MCAR mechanism with known observation probabilities.  In practice, the MCAR assumption is often overly restrictive and real missingness patterns may be missing not at random (MNAR); parametric and nonparametric tests are discussed by \citet{little1988test} and \citet{berrett2023optimal} respectively.   While the general MNAR assumption may be too broad for our setting,~\citet{Kabir2026EstMCAR} proposed the \emph{realizable contamination model} which interpolates between an MCAR mechanism and an arbitrary MNAR mechanism.  We conclude by asking: What do optimal procedures look like in the realizable contamination model and what are the optimal tradeoffs between accuracy, robustness, and computational efficiency when we move beyond MCAR?


\section*{Acknowledgments and disclosure of AI usage}

JRC and AP were supported in part by the National Science Foundation through grant CCF-2107455, and a Google Research Scholar Award. JRC was also partially supported by the Ronald~J. and Carol T. Beerman President's Fellowship at Georgia Tech. KAV performed a portion of this work at Georgia Tech, where he was partially supported by the Gary C. Butler Family Postdoctoral Fellowship. RJS was supported by European Research Council Advanced Grant 101019498.

The methodology, results and proofs were developed by the authors. All writing was done by the authors. \textsc{OpenAI}'s GPT 5.6 Sol\footnote{\url{https://openai.com/index/previewing-gpt-5-6-sol/}} and some earlier models were used to aid in the proofs of some of the technical results (Lemmas~11, 12, 17, 20, and 22), to assist in writing code for the simulations, and for proofreading the final draft of the paper. \textsc{Undermind}\footnote{\url{https://www.undermind.ai/}} and \textsc{Anthropic}'s Claude Opus 5.5\footnote{\url{https://www.anthropic.com/claude-opus-5-5}} were used during a preliminary literature survey.

\bibliographystyle{abbrvnat-clean}
\bibliography{refs}

\bigskip\bigskip

\newpage
\appendix

\begin{center}
\LARGE\bfseries\mbox{Appendix}
\end{center}

\section{Proofs of results associated with the upper bound}
\label{appsec:proof-upper}

In this appendix, we prove the preliminary lemmas that were used in the proof of Theorem~\ref{thm:general-sgd-rate}.

\subsection{Proof of \Cref{lem:A-properties}} \label{sec:lem-A-proof}

We prove the two parts of the lemma separately.

\noindent \emph{(a)} 
Let $\eta^\star := \ip{X}{\thetastar}$.  Then
\[
    \bbP(Y=1\mid X)
    =
    \frac{e^{\eta^\star/2}}{e^{\eta^\star/2}+e^{-\eta^\star/2}},
    \qquad\text{and}\qquad
    \bbP(Y=-1\mid X)
    =
    \frac{e^{-\eta^\star/2}}{e^{\eta^\star/2}+e^{-\eta^\star/2}}.
\]
Therefore,
\begin{align*}
    \E\bigl[
        Y e^{-Y\eta^\star/2}
        \mid X
    \bigr]
    &=
    \bbP(Y=1\mid X) \cdot e^{-\eta^\star/2}
    +
    (-1) \cdot \bbP(Y=-1\mid X) \cdot e^{\eta^\star/2}
    \\ 
    &=
    \frac{e^{\eta^\star/2} e^{-\eta^\star/2}}
    {e^{\eta^\star/2}+e^{-\eta^\star/2}}
    -
    \frac{e^{-\eta^\star/2}e^{\eta^\star/2}}
    {e^{\eta^\star/2}+e^{-\eta^\star/2}}
    =0.
\end{align*}
Multiplying by $-X$ and taking expectations yields $\PopOp(\thetastar)=0$, as desired.

\noindent
\emph{(b)}
Since $X$ is bounded almost surely and $\theta \in \Theta_R$, we may differentiate under the expectation to obtain
\[
    \mathcal{D} \PopOp(\theta)
    =
    \frac{1}{2}
    \E\Bigl[
        XX^\top
        \exp\bigl\{-\tfrac{1}{2}Y\ip{X}{\theta}\bigr\}
    \Bigr],
\]
where $\mathcal{D}$ denotes the Jacobian operator.
Fix $\theta\in\Theta_R$ and write $\eta:=\ip{X}{\theta}$. For $y\in\{-1,1\}$,
\begin{align*}
    \bbP(Y=y\mid X)
    =
    \sigma(y\eta^\star)
    &= \frac{\exp(y\eta^\star/2)}
    {\exp(y\eta^\star/2)+\exp(-y\eta^\star/2)}
    =
    \frac{\exp(y\eta^\star/2)}
    {2\cosh(\eta^\star/2)}.
\end{align*}
Therefore,
\begin{align*}
    \E\bigl[
        \exp\bigl(-\tfrac{1}{2}Y\eta\bigr)
        \bigm| X
    \bigr]
    &=
    \frac{\exp\{(\eta^\star-\eta)/2\} + \exp\{-(\eta^\star-\eta)/2\}}{2\cosh(\eta^\star/2)}
    =
    \frac{\cosh\{(\eta^\star-\eta)/2\}}{\cosh(\eta^\star/2)}.
\end{align*}
Since $\cosh(u)\geq1$ for every $u\in\R$, and
$
    |\eta^\star|
    =
    |\langle X,\thetastar\rangle|
    \le
    \norm{X}\norm{\thetastar}
    \le
    R,
$
we obtain
\[
    \E\bigl[
        \exp\bigl(-\tfrac{1}{2}Y\eta\bigr) \!\bigm|\! X
    \bigr] 
    \ge
    \frac{1}{\cosh(R/2)}.
\]
Consequently,
\begin{align}
\label{eq:PopOp-hessian}
    \mathcal{D} \PopOp(\theta)
    =
    \frac{1}{2}
    \E\Bigl[
        XX^\top \exp\Bigl(-\mfrac{1}{2} Y\eta\Bigr)
    \Bigr]
    \succeq
    \frac{1}{2\cosh(R/2)}
    \E[XX^\top]
    \succeq
    \frac{\mu}{2\cosh(R/2)}I_d 
    =
    \alpha_R I_d.
\end{align}
Applying the mean value theorem to the map $t \mapsto \big\langle \PopOp\bigl(\theta' + t (\theta - \theta')\bigr) - \PopOp(\theta'), \theta - \theta'\big\rangle$, we then have  
\[
    \ip{\PopOp(\theta)-\PopOp(\theta')}{\theta-\theta'}
    \ge
    \alpha_R\|\theta-\theta'\|^2
    \qquad\text{for all}\,~
    \theta, \theta' \in \Theta_R,
\]
as claimed.
\qed

\subsection{Proof of \Cref{lem:A-hat-properties}} \label{sec:lem-Ahat-proof}

\noindent \emph{(a)}
Fix $\theta\in\Theta_R$. Conditional on $(X,Y)$, the masks $M_1,\ldots,M_d$ are independent with $M_j \sim \BER(q_j)$. 
For each $j \in [d]$, we have
\begin{align}
    \E\left[
        \widehat f_j(\theta) \,\middle|\, Y,X
    \right]
    &=
    1+
    \E\biggl[
        \frac{M_j}{q_j}
        \Bigl\{
            \exp\bigl(-\tfrac{1}{2}Y\theta_j Z_j\bigr)-1
        \Bigr\} \,\bigg|\, Y,X
    \biggr] \nonumber\\ 
    &=
    1+
    \Bigl\{
        \exp\bigl(-\tfrac{1}{2}Y\theta_j X_j\bigr)-1
    \Bigr\} 
    = 
    \exp\Bigl\{-\tfrac{1}{2}Y\theta_j X_j\Bigr\}.
\label{eq:fj-hat-exp}
\end{align}
Hence
\begin{align*}
    \E\Bigl[
        [\StochOp(\theta)]_j \,\Big|\, Y,X
    \Bigr]
    &=
    \E\biggl[
        -Y\frac{Z_j}{q_j}
        \exp\Bigl\{-\mfrac{1}{2}Y\theta_j Z_j\Bigr\}
        \prod_{k\ne j}\widehat{f}_k(\theta)\,\bigg|\, Y,X
    \biggr]\\
    &\overset{\1}{=}
    -Y X_j
    \exp\Bigl\{-\mfrac{1}{2}Y\theta_j X_j\Bigr\}
    \prod_{k\ne j} \E\Bigl[\widehat{f}_k(\theta) \,\Big|\, Y,X\Bigr]
    \\
    &\overset{\2}{=}
    -Y X_j
    \exp\Bigl\{-\mfrac{1}{2}Y\theta_j X_j\Bigr\}
    \prod_{k\ne j} \exp\Bigl\{-\mfrac{1}{2}Y\theta_k X_k\Bigr\}
    \\
    &=
    -Y X_j
    \exp\Bigl\{-\mfrac{1}{2}Y\ip{X}{\theta}\Bigr\},
\end{align*}
where step $\1$ uses $\E\Big[-Y\frac{Z_j}{q_j}
        \exp\Bigl\{-\mfrac{1}{2}Y\theta_j Z_j\Bigr\} \,|\, X_j, Y\Big] = -Y X_j
    \exp\Bigl\{-\mfrac{1}{2}Y\theta_j X_j\Bigr\}$, and step $\2$ uses \Cref{eq:fj-hat-exp}. Taking an expectation over $(X,Y)$ therefore completes the proof.

\medskip\noindent 
\emph{(b)}
Fix $\theta\in\Theta_R$ and $j \in [d]$, and define
\[
    T_j:=-\mfrac{1}{2}Y\theta_jX_j,
    \qquad\text{for}\,~ j\in[d].
\]
Since $Y^2=1$, we have
\begin{align}
\E\Bigl[ \bigl[\StochOp(\theta)\bigr]_j^2 \,\Big|\, Y,X \Bigr] 
&= \E\biggl[
    \frac{M^2_j X_j^2}{q_j^2}
    e^{2T_j} \prod_{k\ne j} \widehat f_k(\theta)^2 \,\bigg|\, Y,X
\biggr] 
=
    \frac{X_j^2}{q_j} e^{2T_j}
    \prod_{k\ne j}
    \E\Bigl[ \widehat f_k(\theta)^2 \Bigm| Y,X \Bigr].
\label{eq:StochOp-second-moment-exact}
\end{align}
Now
\begin{align}
    \E\Bigl[
        \widehat f_k(\theta)^2
        \,\Big|\, Y,X
    \Bigr] &= \E\biggl[\Bigl(1 + \frac{M_k}{q_k}(e^{T_k}-1)\Bigr)^2\Bigm| Y,X]
    =
    1+2\bigl(e^{T_k}-1\bigr)+\mfrac{1}{q_k} \bigl(e^{T_k}-1\bigr)^2 \nonumber\\ 
    &=
    e^{2T_k} + \Bigl(\mfrac{1}{q_k}-1\Bigr) \bigl(e^{T_k}-1\bigr)^2 \nonumber\\ 
    &=
    e^{2T_k}
    \Bigl[
        1 + \Bigl(\mfrac{1}{q_k}-1\Bigr) \bigl(1-e^{-T_k}\bigr)^2
    \Bigr]
    \,=:\,
    e^{2T_k} \Bigl[ 1 + \gamma_{q_k} (1-e^{-T_k})^2 \Bigr],
\label{eq:f-hat-sq-exact-1}
\end{align}
where we define $\gamma_q := \frac{1}{q} - 1 = \frac{1-q}{q}$ for convenience.  But for $t \in \mathbb{R}$ and $\gamma \geq 0$, we have
\begin{align} 
\label{eq:numeric-ineqs}
e^{2t} \bigl[ 1 + \gamma (1-e^{-t})^2 \bigr] = e^{2t} + \gamma (e^{t} - 1)^2 \overset{\1}{\leq} e^{2|t|} \bigl[ 1 + \gamma (1-e^{-|t|})^2 \bigr] \leq e^{2|t|} \bigl[ 1 + \gamma \min(|t|, 1)^2 \bigr],
\end{align}
where inequality $\1$ follows because $e^{2t} \leq e^{2|t|}$ and $|e^t - 1| \leq e^{|t|} - 1$.

Combining~\Cref{eq:StochOp-second-moment-exact}, \Cref{eq:f-hat-sq-exact-1} and \Cref{eq:numeric-ineqs} yields
\begin{align}
\mathbb{E}\Bigl[\bigl[ \StochOp(\theta)\bigr]_j^2 \,\Big|\, Y,X \Bigr]
&\leq 
\frac{X_j^2}{q_j} 
 \, e^{2T_j} \cdot
\exp\biggl\{
2\sum_{k \neq j} |T_k| + \sum_{k \neq j} \log \bigl(1 + \gamma_{q_k} \min(|T_k|, 1)^2 \bigr)
\biggr\} \nonumber\\
&\leq  
\frac{X_j^2}{q_j} 
 \cdot
\exp\biggl\{
2\sum_{k=1}^d |T_k| + \sum_{k \neq j} \log \bigl(1 + \gamma_{q_k} \min(|T_k|, 1)^2 \bigr)
\biggr\}. \label{eq:StochOp-second-moment-bound-1.5}
\end{align}

Now recall the function $\phi_q$ from~\Cref{eq:phi-function}, and set $S_j := \min(|T_j|, 1)$.  By Cauchy--Schwarz,
\[
\sum_{j = 1}^d S_j \leq \sum_{j = 1}^d |T_j| =\frac{1}{2}\sum_{j=1}^d |\theta_jX_j|\le\frac{1}{2}\norm{\theta}\norm{X}\le\frac{R}{2}, 
\]
so by~\Cref{eq:StochOp-second-moment-bound-1.5},
\begin{align}
\mathbb{E}\Bigl[\bigl[ \StochOp(\theta)\bigr]_j^2 \,\Big|\, Y,X \Bigr]
&\le
\frac{e^R \cdot X_j^2}{q_j}\,
\exp\biggl\{
\sum_{k \neq j} \phi_{q_k} \bigl(S_k\bigr)
\biggr\}
\leq e^R X_j^2 \cdot b_j(\boldsymbol{q}, R).
\label{eq:StochOp-second-moment-bound-2}
\end{align}
Summing~\Cref{eq:StochOp-second-moment-bound-2} over $j \in [d]$ and taking a further expectation over $Y,X$ completes the proof.
\qed

\subsection{Proof of Corollary~\ref{cor:sgd-rate}}
\label{proof:cor:sgd-rate}

The corollary follows directly from Theorem~\ref{thm:general-sgd-rate} when $d=1$, so we assume that $d \geq 2$ for the rest of this proof.
Recall the least concave majorant $\overline\phi_q$ of $\phi_q$~\eqref{eq:phi-function} on $[0,\infty)$. By our convention, the function $\overline\phi_q$ is non-decreasing and concave on its entire domain $[0,\infty)$. By definition, $\phi_q(s)\le\overline\phi_q(s)$ and $\overline\phi_q$ is concave. Jensen's inequality implies that for any fixed $j \in [d]$,
\begin{align}
\label{eq:phi-jensen}
    \sum_{k \neq j} \phi_q(s_k)
    &\le\sum_{k \neq j} \overline\phi_q(s_k)
    \overset{\1}{\le} (d-1)\,\overline\phi_q\biggl(\frac{1}{d-1}\sum_{k \neq j} s_k\biggr) \overset{\2}{\leq} (d-1)\,\overline\phi_q\Bigl(\mfrac{R}{2(d-1)}\Bigr),
\end{align}
where in step $\1$, we have used Jensen's inequality and in step $\2$ we have the constraint that $\sum_{k \neq j} s_k \leq R/2$ along with the fact that $\overline{\phi}_q$ is non-decreasing on $[0,\infty)$.

Next, observe that when $q_1 = \cdots = q_d = q \in (0,1]$, for any $j \in [d]$,
\begin{equation*}
b_j(\boldsymbol{q}, R)
= 
\frac{1}{q}
\sup_{\substack{\boldsymbol{s} \in [0,1]^d \\ \sum_{k = 1}^d s_k \leq R/2}}  
\exp\biggl\{\,\sum_{k \neq j} \phi_{q}(s_k) \biggr\}
\overset{\1}{=}
\frac{1}{q}
\sup_{\substack{(s_k)_{k \neq j} \in [0,1]^{d-1} \\[1mm] \sum_{k \neq j} s_k \leq R/2}}  
\exp\biggl\{\,\sum_{k \neq j} \phi_{q}(s_k) \biggr\},
\end{equation*}
where step $\1$ follows since the objective function does not depend on $s_j$, so we can set $s_j = 0$. Consequently, the resulting optimization problem is identical for each $j \in [d]$, up to relabeling of the coordinates; i.e.,
\[
b_1(\boldsymbol{q}, R) = \cdots = b_d(\boldsymbol{q}, R) =: b_\ast(\boldsymbol{q},R). 
\]

Thus, a direct application of Theorem~\ref{thm:general-sgd-rate} yields
\begin{align*}
\E\,\big\|\widehat\theta^{(n)} - \thetastar \big\|^2
&\le 
\frac{(e^R+1)^2}{n\mu^2} \cdot \biggl\{\,b_{\ast}(\boldsymbol{q}, R) \cdot \sum_{j = 1}^d \mathbb{E}[X_j^2] \cdot  \biggr\} 
\overset{\1}{\leq}
\frac{(e^R+1)^2}{n\mu^2} \cdot b_{\ast}(\boldsymbol{q}, R),
\end{align*}
where step $\1$ follows because $\| X \|_2 \leq 1$. Combining with~\Cref{eq:phi-jensen} yields
\[
\E\,\big\|\widehat\theta^{(n)} - \thetastar \big\|^2
\le \frac{(e^R+1)^2}{nq\mu^2} \cdot \exp\Big\{ (d-1)\,\overline\phi_q\Bigl(\mfrac{R}{2(d-1)}\Bigr) \Big\},
\]
as claimed.
\qed

For completeness, we formally state and prove that $\overline\phi_q$ defined in \Cref{eq:phiq-LCM-definition} is indeed the least concave majorant of $\phi_q$ on $[0,1]$.

\begin{claim}
\label{claim:phiq-LCM}
The least concave majorant $\overline\phi_q$ of the function $\phi_q : [0,1] \to [0,\log(1/q)]$ has the closed-form expression given by \Cref{eq:phiq-LCM-definition}, with the convention that $\overline{\phi}_q(s) = \log(1/q)$ for all $s > 1$.
\end{claim}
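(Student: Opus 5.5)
We need to prove that the least concave majorant of $\phi_q(s)=\log(1+\gamma s^2)$ on $[0,1]$, with $\gamma:=(1-q)/q$, is given by \Cref{eq:phiq-LCM-definition}. The plan uses the standard characterization of the least concave majorant of a function that vanishes at the origin and is convex near it and concave further out. First we study the shape of $\phi_q$. Differentiating gives
\[
\phi_q'(s)=\frac{2\gamma s}{1+\gamma s^2},
\qquad
\phi_q''(s)=\frac{2\gamma(1-\gamma s^2)}{(1+\gamma s^2)^2}.
\]
Hence $\phi_q$ is convex on $[0,\gamma^{-1/2}]$ and concave on $[\gamma^{-1/2},\infty)$. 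Since $\phi_q(0)=0$, the least concave majorant on $[0,1]$ has one of two forms. Either it is the chord from $0$ to $(1,\log(1/q))$, or it is a tangent line from the origin touching $\phi_q$ at some point $\tau\in(0,1)$ and then following $\phi_q$ on $[\tau,1]$. The extension by the constant $\log(1/q)$ for $s>1$ keeps the function concave and non-decreasing, because $\phi_q$ is non-decreasing and its majorant has non-negative slope at $s=1$.

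Next we locate the tangency point. A tangent line from the origin touching at $\tau$ must satisfy $\phi_q(\tau)=\tau\,\phi_q'(\tau)$. Substituting $x=\gamma\tau^2$, this becomes
\[
\log(1+x)=\frac{2x}{1+x},
\]
that is, $h(x)=0$ with $h$ as in \Cref{eq:def-h-func}.

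We then analyze $h$. Its derivative is
\[
h'(x)=\frac{2}{(1+x)^2}-\frac{1}{1+x}=\frac{1-x}{(1+x)^2}.
\]
So $h$ increases on $[0,1]$ and decreases afterwards, with $h(0)=0$ and $h(x)\to-\infty$ as $x\to\infty$. It follows that $h$ has a unique positive root $x_\star$, with $h>0$ on $(0,x_\star)$ and $h<0$ beyond it. This gives $\tau_q=\sqrt{x_\star/\gamma}=\sqrt{x_\star q/(1-q)}$. The condition $\tau_q<1$ is equivalent to $\gamma>x_\star$, i.e.\ $q<(1+x_\star)^{-1}=q_\star$.

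The slope of the tangent line is
\[
\phi_q'(\tau_q)=\frac{2\gamma\tau_q}{1+x_\star}=2\sqrt{\gamma}\,\frac{\sqrt{x_\star}}{1+x_\star}=2\kappa_\star\sqrt{\gamma},
\]
which matches the second case of \Cref{eq:phiq-LCM-definition}.

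To verify the majorant property, the key fact is that the function $g(s):=\phi_q(s)/s$ has derivative proportional to
\[
s\,\phi_q'(s)-\phi_q(s)=\frac{2x}{1+x}-\log(1+x)=h(x),
\qquad x=\gamma s^2.
\]
Therefore $g$ increases until $s=\tau_q$ and decreases afterwards.

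\textbf{Case $q<q_\star$.} Here $\tau_q<1$. The line $s\,g(\tau_q)$ dominates $\phi_q=s\,g(s)$ on $[0,1]$, since $g(s)\le g(\tau_q)$ everywhere. On $[\tau_q,1]$, $\phi_q$ is concave, because $\tau_q^2\gamma=x_\star>1$ places this interval in the concave region. The pieced function is $C^1$ at $\tau_q$ (tangency), hence concave overall, and it majorizes $\phi_q$. It is also the least such majorant: any concave majorant $\psi$ satisfies $\psi\ge\phi_q$ on $[\tau_q,1]$, and by concavity $\psi$ lies above the chord joining $(0,0)$ to $(\tau_q,\phi_q(\tau_q))$ on $[0,\tau_q]$, since $\psi(0)\ge 0$ and $\psi(\tau_q)\ge\phi_q(\tau_q)$.

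\textbf{Case $q\ge q_\star$.} Here $\tau_q\ge 1$, so $g$ is non-decreasing on $[0,1]$. Thus $\phi_q(s)\le s\,\phi_q(1)=s\log(1/q)$, and this chord is again the least concave majorant by the same chord argument.

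The main obstacle is the bookkeeping that shows the pieced function is concave exactly at the junction, together with the uniqueness of $x_\star$. Both are handled by the sign analysis of $h$ above, so I expect no real difficulty beyond careful algebra.
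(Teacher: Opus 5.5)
Your proposal is correct and follows essentially the same route as the paper: monotonicity of $\phi_q(s)/s$ via the sign of $h(\gamma_q s^2)$, tangency at $\tau_q$ together with concavity of $\phi_q$ on $[\tau_q,1]$ (because $x_\star>1$), and the chord argument through $(0,0)$ for minimality. The only small point is that the paper treats $q=1$ separately, where $\gamma_q=0$, $\tau_q$ is undefined and $\phi_q\equiv 0$; your case $q\ge q_\star$ should note this degenerate case explicitly rather than writing $\tau_q\ge 1$.
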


\begin{proof}[Proof of \Cref{claim:phiq-LCM}]
For $q<1$, write
\[
    r_q(s):=\frac{\phi_q(s)}{s}=\frac{\log\bigl(1+\gamma_qs^2\bigr)}{s},\qquad s\in(0,1].
\]
Differentiating $r_q(s)$, we obtain
\[
    r_q'(s)
    =
    \frac{1}{s^2}\left\{\frac{2\gamma_qs^2}{1+\gamma_qs^2}-\log\bigl(1+\gamma_qs^2\bigr)\right\}
    =
    \frac{h(\gamma_qs^2)}{s^2},
\]
where we recall from \Cref{eq:def-h-func} that
\[
    h(x) =\frac{2x}{1+x}-\log(1+x),\qquad\text{for}\,~ x\ge0.
\]
Note that
\[
    h'(x)=\frac{1-x}{(1+x)^2}.
\]
Thus, $h$ is strictly increasing on $[0,1)$ and strictly decreasing on $(1,\infty)$. Since $h(0)=0$, $h(1)=1-\log(2)>0$, and $h(x)\to-\infty$ as $x\to\infty$, there exists a unique $x_\star>1$ such that
$h(x_\star)=0$.

\medskip\noindent
\underline{Case 1 ($q_\star \le q \le 1$)}.
First, consider the case that $q_\star\le q<1$. Then $\gamma_q\le x_\star$, so $h(\gamma_qs^2)\ge0$ for every $s\in[0,1]$. Hence $r_q$ is non-decreasing on $(0,1]$, and therefore
\[
    \phi_q(s)\le s\,\phi_q(1)=s\log\frac{1}{q},\qquad s\in[0,1].
\]
The function $s\mapsto s\log(1/q)$ is a concave majorant of $\phi_q$. Moreover, any concave majorant $g$ of $\phi_q$ satisfies
\[
    g(s) \ge (1-s) \cdot g(0)+s \cdot g(1) \ge s\,\phi_q(1)=s \,\log\frac{1}{q}.
\]
Consequently,
\begin{equation}
\label{eq:phi-concave-majorant-large-q}
    \overline\phi_q(s)=s\log\frac{1}{q},\qquad q_\star\le q<1,\quad s\in[0,1].
\end{equation}
For $q=1$, we have $\gamma_q=0$, so $\phi_q\equiv0$, and \Cref{eq:phi-concave-majorant-large-q} remains valid.

\medskip\noindent
\underline{Case 2 ($0 < q < q_\star$)}.
Now consider the case that $0<q<q_\star$. Then, $\gamma_q>x_\star$. Define
\[
    \tau_q:=\sqrt{\frac{x_\star}{\gamma_q}}=\sqrt{\frac{x_\star q}{1-q}}\in(0,1).
\]

Note that since $r_q'(s) = h(\gamma_qs^2)/s^2$, it has the same sign as $h(\gamma_qs^2)$. Thus, the function $r_q$ is increasing on $(0,\tau_q]$ and decreasing on $[\tau_q,1]$. Hence,
\[
    \phi_q(s)\le\frac{\phi_q(\tau_q)}{\tau_q}s,
    \qquad\text{for}\,~ s\in[0,\tau_q].
\]
Furthermore,
\begin{align*}
    \frac{\phi_q(\tau_q)}{\tau_q}
    &=\sqrt{\frac{\gamma_q}{x_\star}}\log(1+x_\star)
    =\sqrt{\frac{\gamma_q}{x_\star}} \frac{2 x_\star}{1+x_\star}
    =2\kappa_\star\sqrt{\gamma_q}.
\end{align*}

Also, note that
\[
    \phi_q''(s)=\frac{2\gamma_q(1-\gamma_qs^2)}{(1+\gamma_qs^2)^2}.
\]
Since $x_\star>1$, we have $\tau_q>1/\sqrt{\gamma_q}$, and therefore $\phi_q$ is concave on $[\tau_q,1]$. Moreover,
\[
    \phi_q'(\tau_q)=\frac{2\gamma_q\tau_q}{1+\gamma_q\tau_q^2}=\frac{2\sqrt{\gamma_q x_\star}}{1+x_\star}=2\kappa_\star\sqrt{\gamma_q}=\frac{\phi_q(\tau_q)}{\tau_q}.
\]
Thus, the linear function on $[0,\tau_q]$ joins $\phi_q$ tangentially at $\tau_q$. It follows that
\begin{equation}
\label{eq:phi-concave-majorant-small-q}
    \overline\phi_q(s)=
    \begin{cases}
        \displaystyle 2\kappa_\star\sqrt{\tfrac{1-q}{q}}\,s, & 0\le s\le\tau_q,\\ 
        \displaystyle \log\left(1+\tfrac{1-q}{q}s^2\right), & \tau_q\le s\le1.
    \end{cases}
\end{equation}
This proves that $\overline\phi_q$ is a concave majorant of $\phi_q$ on $[0,1]$. To prove minimality, let $g$ be any concave majorant of $\phi_q$. For $s\in[0,\tau_q]$, concavity of $g$ yields
\[
    g(s)
    \ge
    \left(1-\frac{s}{\tau_q}\right)g(0)+\frac{s}{\tau_q}g(\tau_q)
    \ge
    \frac{s}{\tau_q}\phi_q(\tau_q)
    =
    \overline\phi_q(s),
\]
while for $s\in[\tau_q,1]$, we necessarily have $g(s)\ge\phi_q(s)=\overline\phi_q(s)$. Thus, $\overline\phi_q$ is indeed the least concave majorant of $\phi_q$ on $[0,1]$, and given by: 
\begin{equation*}
\overline\phi_q(s)
\,=\,
\begin{cases} 
  \, s\log\frac{1}{q}, &\text{if}\,~ q \in [q_\star,1) ~\text{and}~ s\in[0,1], \\ 
  \, 2\kappa_\star\sqrt{\frac{1-q}{q}}\,s, &\text{if}\,~ q \in (0,q_\star] ~\text{and}~ s\in[0,\tau_q],\\ 
  \, \log\left(1+\frac{1-q}{q}s^2\right), &\text{if}\,~ q \in (0,q_\star] ~\text{and}~ s\in[\tau_q,1].
\end{cases}
\end{equation*}
which proves the claim since by convention, we have $\overline{\phi}_q(s) = \log(1/q)$ for $s \geq 1$.
\end{proof}

\subsection{Proof of \Cref{cor:sgd-rate-simplified}}
\label{proof:cor:sgd-rate-simplified}

For $D>0$, denote the RHS of \Cref{eq:phi-jensen} as a function in $D$ by
\begin{equation}
\label{eq:xi-D}
    \xi_q(D):=D\,\overline\phi_q\Bigl(\mfrac{R}{2D}\Bigr).
\end{equation}

\begin{claim}
\label{claim:xi-D-inc}
The function $\xi_q : (0,\infty) \to \R$ defined in \Cref{eq:xi-D} is non-decreasing.
\end{claim}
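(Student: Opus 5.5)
The claim is that $D \mapsto D\,\overline\phi_q(R/(2D))$ is non-decreasing on $(0,\infty)$. This is the perspective function of a concave function that vanishes at the origin, so the natural route is the standard fact that, for concave $g$ on $[0,\infty)$ with $g(0)\ge 0$, the ratio $g(x)/x$ is non-increasing in $x>0$. Writing $x := R/(2D)$, we have
\[
\xi_q(D) = \frac{R}{2}\cdot\frac{\overline\phi_q(x)}{x}.
\]
As $D$ increases, $x$ decreases. So it suffices to show that $x\mapsto \overline\phi_q(x)/x$ is non-increasing on $(0,\infty)$.

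First, recall from the proof of \Cref{cor:sgd-rate} that $\overline\phi_q$ is concave and non-decreasing on all of $[0,\infty)$. This uses the convention $\overline\phi_q(s)=\log(1/q)$ for $s>1$; concavity survives gluing the constant piece because $\overline\phi_q$ is non-decreasing and concave on $[0,1]$ and the continuation is constant at the value $\overline\phi_q(1)=\log(1/q)$. Second, $\overline\phi_q(0)=0$ by \eqref{eq:phiq-LCM-definition}.

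Now take $0<x<y$. Concavity gives
\[
\overline\phi_q(x)=\overline\phi_q\Bigl(\tfrac{x}{y}\,y+\bigl(1-\tfrac{x}{y}\bigr)\cdot 0\Bigr)\ge \tfrac{x}{y}\,\overline\phi_q(y)+\bigl(1-\tfrac{x}{y}\bigr)\overline\phi_q(0)=\tfrac{x}{y}\,\overline\phi_q(y).
\]
Hence $\overline\phi_q(x)/x\ge\overline\phi_q(y)/y$. Translating back through $x=R/(2D)$, for $D_1<D_2$ we get $\xi_q(D_1)\le\xi_q(D_2)$.

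The only step requiring care is the concavity of the extended $\overline\phi_q$ across $s=1$. On $[0,1]$, \eqref{eq:phiq-LCM-definition} has a nonnegative slope everywhere: the linear pieces have positive slope, and $\phi_q'(s)=2\gamma_q s/(1+\gamma_q s^2)\ge0$. Appending a constant (slope zero) therefore keeps the slopes non-increasing, so the extension is concave. Alternatively, the case $s>1$ can be checked directly: there $\overline\phi_q(s)/s=\log(1/q)/s$ is decreasing, and it matches the ratio at $s=1$, which is at most every earlier ratio by the argument on $[0,1]$.
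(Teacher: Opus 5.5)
Your proof is correct and follows essentially the same route as the paper: both rest on the fact that concavity of $\overline\phi_q$, together with $\overline\phi_q(0)=0$, makes $u\mapsto\overline\phi_q(u)/u$ non-increasing. The only difference is cosmetic. The paper avoids the gluing at $s=1$ by splitting at $D=R/2$: for $D\le R/2$ it notes $\xi_q(D)=D\log(1/q)$ is linear, and it runs the ratio argument only on $(0,1]$. That is exactly your alternative check at the end.
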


We take \Cref{claim:xi-D-inc} as given for the moment, postponing its proof to the end of the section. Since $d\le1/\mu$ from \Cref{ineq:dimension-mu-relation}, combining \Cref{claim:xi-D-inc} with \Cref{eq:phi-jensen} implies that
\begin{equation}
\label{eq:phi-jensen-mu}
   (d-1)\,\overline\phi_q\Bigl(\mfrac{R}{2(d-1)}\Bigr) =
    \xi_q(d-1)
    \le
    \xi_q(d)
    \le
    \xi_q\left(\frac{1}{\mu}\right)
    =
    \frac{1}{\mu} \, \overline\phi_q\Bigl(\mfrac{R\mu}{2}\Bigr).
\end{equation}

Substituting the expression for $\overline{\phi}_q$ from \Cref{claim:phiq-LCM}, we have the following two cases:

\begin{enumerate}[label=(\alph*)]
\item 
If $q \in [q_\star,1]$, then 
\begin{align} \label{eq:simplify-phibar-1}
    \frac{1}{\mu}\overline\phi_q\Bigl(\mfrac{R\mu}{2}\Bigr)
    = \min\left\{\frac{R}{2},\frac{1}{\mu}\right\}\log\frac{1}{q}
    =
    \begin{cases}
    \,\mfrac{R}{2}\log\mfrac{1}{q}, & ~ \text{if}\,~ R\mu\le2,\\ 
    \,\mfrac{1}{\mu}\log\mfrac{1}{q}, & ~ \text{if}\,~ R\mu>2.
    \end{cases}
\end{align}

\item 
If $q \in (0,q_\star)$, then
\begin{align} 
\label{eq:simplify-phibar-2}
    \frac{1}{\mu}\overline\phi_q\Bigl(\mfrac{R\mu}{2}\Bigr)
    =
    \begin{cases}
        \,\kappa_\star R\sqrt{\mfrac{1-q}{q}}, & \text{if}\,~ R\mu\le2\tau_q \,,\\ 
        \,\mfrac{1}{\mu}\log\left(1+\mfrac{1-q}{q}\mfrac{R^2\mu^2}{4}\right), & \text{if}\,~ 2\tau_q < R\mu\le2.
    \end{cases}
\end{align}
\end{enumerate}
We now simplify the above expression case-by-case, writing $\frac{1}{\mu}\overline\phi_q\Bigl(\mfrac{R\mu}{2}\Bigr) =: \mathcal{K}(q,R,\mu)$ as shorthand.
Write $x:=R\mu$ and recall that $\gamma_q=(1-q)/q$. We bound $\mathcal{K}(q,R,\mu)$ separately in the three regimes.

\paragraph{\underline{Regime $R\mu\le\sqrt{q}$} :}
If $q<q_\star$, then $x\le\sqrt{q}<2\sqrt{x_\star q/(1-q)}$ and $\log(2-q) > \log(2-q_\star)$, so from the first case in~\Cref{eq:simplify-phibar-2} we have
\[
    \mathcal{K}(q,R,\mu)=\kappa_\star R\sqrt{\gamma_q}\le\kappa_\star\frac{R}{\sqrt{q}}\le\frac{\kappa_\star}{\log(2-q_\star)}\frac{R}{\sqrt{q}}\log(2-q).
\]
Next, using the fact that $\log(t) \le t-1$ with $t = \frac{1}{q}$ and $t = (2-q)^{-1}$, respectively, we obtain that
\begin{equation}
\label{eq:upper-cor-regime-I-ineqs}
    \log\frac{1}{q} \le \frac{1}{q} - 1 = \frac{1-q}{q}
    \quad\text{and}\quad
    \log(2-q)\ge 1 - \frac{1}{2-q} = \frac{1-q}{2-q}\ge\frac{1-q}{2}.
\end{equation}
If $q\ge q_\star$, then the first case in \Cref{eq:simplify-phibar-1} applies. Thus, \Cref{eq:upper-cor-regime-I-ineqs} yields
\[
    \mathcal{K}(q,R,\mu)
    = \frac{R}{2}\log\frac{1}{q}
    \le \frac{R}{q} \frac{1-q}{2}
    \le \frac{R}{q_\star}\log(2-q)
    \le \frac{1}{q_\star}\frac{R}{\sqrt{q}}\log(2-q).
\]
Therefore,
\[
    \mathcal{K}(q,R,\mu)\le C_{\mathrm I}\frac{R}{\sqrt{q}}\log(2-q),
    \qquad\text{where}\,~
    C_{\mathrm I}:=\max\left\{\frac{\kappa_\star}{\log(2-q_\star)},\frac{1}{q_\star}\right\}.
\]

\paragraph{\underline{Regime $\sqrt{q} < R\mu \le 1$} :}
Set $y:=x\sqrt{\gamma_q}=R\mu\sqrt{\gamma_q}$.

Suppose first that $q<q_\star$.
On the one hand, if the first case in \Cref{eq:simplify-phibar-2} applies, then
\[
    \sqrt{1-q_\star}<\sqrt{1-q}<y\le2\sqrt{x_\star}.
\]
Since $\log(1+y^2)\ge 1 - 1/(1+y^2) = y^2/(1+y^2)$, it follows that
\[
    y\le\left(y+\frac{1}{y}\right)\log(1+y^2)
    \le\left(2\sqrt{x_\star}+\frac{1}{\sqrt{1-q_\star}}\right)\log(1+y^2).
\]
Consequently, using $R = \frac{y}{\mu\sqrt{\gamma_q}}$, we obtain
\begin{equation}
\label{eq:cor-regime-II-i}
    \mathcal{K}(q,R,\mu)
    =\kappa_\star R \sqrt{\gamma_q}
    =\kappa_\star \frac{y}{\mu}
    \le\frac{c'}{\mu}\log(1+\gamma_qx^2),
    \quad~\text{where}\,~
    c':=\kappa_\star\left(2\sqrt{x_\star}+\frac{1}{\sqrt{1-q_\star}}\right).
\end{equation}
On the other hand, if the second case in \Cref{eq:simplify-phibar-2} applies, then directly
\begin{equation}
\label{eq:cor-regime-II-ii}
    \mathcal{K}(q,R,\mu)=\frac{1}{\mu}\log\left(1+\frac{\gamma_qx^2}{4}\right)\le\frac{1}{\mu}\log(1+\gamma_qx^2).
\end{equation}

Now suppose that $q\ge q_\star$. Only the first case in \Cref{eq:simplify-phibar-1} applies, and \Cref{eq:upper-cor-regime-I-ineqs} yields
\[
    \mathcal{K}(q,R,\mu)
    = \frac{R}{2} \log\frac{1}{q}
    \le \frac{x}{\mu} \frac{1}{q}\frac{1-q}{2}
    \le \frac{1}{q_\star}\frac{x}{\mu}\log(2-q).
\]
Moreover, $\sqrt{q} < x$ implies $1-q < \gamma_qx^2$, so $\log(2-q)\le\log(1+\gamma_qx^2)$. Since $x\le1$,
\begin{equation}
\label{eq:cor-regime-II-iii}
    \mathcal{K}(q,R,\mu) \le \frac{1}{q_\star} \frac{1}{\mu} \log(1+\gamma_qx^2).
\end{equation}
Thus, combining \Cref{eq:cor-regime-II-i}, \Cref{eq:cor-regime-II-ii}, and \Cref{eq:cor-regime-II-iii}, we obtain
\begin{equation*}
    \mathcal{K}(q,R,\mu)
    \le\frac{C_{\mathrm{II}}}{\mu}\log\left(1+\frac{1-q}{q}R^2\mu^2\right),
    \qquad\text{where}\,~
    C_{\mathrm{II}}:=\max\left\{c',1,\frac{1}{q_\star}\right\}.
\end{equation*}

\paragraph{\underline{Regime $R\mu > 1$} :}
If $x>2$, then the definition of $\overline\phi_q$ in \Cref{eq:phiq-LCM-definition} directly yields
\[
    \mathcal{K}(q,R,\mu)=\frac{1}{\mu}\log\frac{1}{q}.
\]
It therefore remains to consider $1<x\le2$. 

First suppose that $q\ge q_\star$. Then only the first case in \Cref{eq:simplify-phibar-1} applies, and yields
\[
    \mathcal{K}(q,R,\mu)=\frac{x}{2\mu}\log\frac{1}{q}\le\frac{1}{\mu}\log\frac{1}{q}.
\]

Next, suppose that $q<q_\star$. If the second case in \Cref{eq:simplify-phibar-2} applies, then
\[
    \mathcal{K}(q,R,\mu)=\frac{1}{\mu}\log\left(1+\frac{\gamma_qx^2}{4}\right)\le\frac{1}{\mu}\log(1+\gamma_q)=\frac{1}{\mu}\log\frac{1}{q}.
\]
If on the other hand the first case in \Cref{eq:simplify-phibar-2} applies, then $x\sqrt{\gamma_q}\le2\sqrt{x_\star}$. Using the definitions of $\kappa_\star$ and $x_\star$,
\[
    2\kappa_\star\sqrt{x_\star}
    =\frac{2x_\star}{1+x_\star}
    =\log(1+x_\star)
    =\log\frac{1}{q_\star}
    \le\log\frac{1}{q},
\]
we again obtain
\[
    \mathcal{K}(q,R,\mu)
    =\kappa_\star R \sqrt{\gamma_q}
    =\frac{\kappa_\star}{\mu}x\sqrt{\gamma_q}
    \le\frac{1}{\mu}\, 2\kappa_\star \sqrt{x_\star}
    \le\frac{1}{\mu}\log\frac{1}{q}.
\]

\paragraph{Putting together the pieces.} Choose $C:=\max\{C_{\mathrm I},C_{\mathrm{II}},1\}$. Combining the three upper bounds on $\mathcal{K}(q,R,\mu)$ obtained from regimes I, II, III yields
\[
    \frac{1}{\mu}\overline\phi_q\Bigl(\mfrac{R\mu}{2}\Bigr)
    =
    \mathcal{K}(q,R,\mu)
    \le
    \begin{cases}
        \,\displaystyle C\frac{R}{\sqrt{q} }\log(2-q), & \text{if}\,~ R\mu\le\sqrt{q},\\ 
        \,\displaystyle \frac{C}{\mu}\log\left(1+\frac{1-q}{q} \, R^2\mu^2\right), & \text{if}\,~ \sqrt{q}<R\mu\le1,\\ 
        \,\displaystyle \frac{C}{\mu}\log\frac{1}{q}, & \text{if}\,~ R\mu>1.
    \end{cases}
\]
Substituting this bound into \Cref{cor:sgd-rate}, and using the inequality $e^R + 1 \le 2e^R$ proves the desired result.
\qed

\noindent It remains to prove  Claim~\ref{claim:xi-D-inc}.

\paragraph{Proof of \Cref{claim:xi-D-inc}.}
Clearly, if $D\le R/2$, then $\xi_q(D)=D\,\overline\phi_q(1)$,
which is non-decreasing in $D$. 
If $D\ge R/2$, denoting $u := R/(2D)\in(0,1]$, we obtain
\[
    \xi_q(D)=\frac{R}{2}\frac{\overline\phi_q(u)}{u}.
\]
It thus suffices to show that the map $u\mapsto\overline\phi_q(u)/u$ is non-increasing on $(0,1]$. Take $0 < u < v \leq 1$ and note that by concavity of $\overline\phi_q$ on $[0,1]$ and the fact that $\overline\phi_q(0) = 0$, we have
\begin{align*}
    \overline\phi_q(u) 
    = \overline\phi_q\Bigl( \mfrac{u}{v} \cdot v + \bigl(1 - \mfrac{u}{v}\bigr) \cdot 0 \Bigr)
    \geq \mfrac{u}{v} \cdot \overline\phi_q(v) + \biggl(1 - \mfrac{u}{v}\biggr) \cdot \overline\phi_q(0) 
    = \frac{u}{v} \, \overline\phi_q(v),
\end{align*}
which proves that $\overline\phi_q(u)/u$ is non-increasing in $u$ on $(0,1]$. 
\qed

\section{Proofs of supporting lemmas associated with the lower bound}
\label{appsec:proof-lower}

In this section, we collect the deferred proofs of the lemmas stated in \Cref{sec:proof-lower}. \Cref{lem:baseline-assouad-divergence} is used in the proof of \Cref{prop:baseline-assouad}, while the rest are used in the proof of \Cref{prop:regime-dependent-lower}.

\subsection{Proof of \Cref{lem:baseline-assouad-divergence}}
\label{proof:lem:baseline-assouad-divergence}

Let $\widetilde Q_\omega$ denote the joint law of $(K,Y,M,Z)$ under $P_\omega$. Since $Q_\omega$ is the marginal law of $(Y,M,Z)$, the data-processing inequality yields
\begin{equation}
\label{eq:baseline-dpi-step}
    \chi^2\bigl(Q_{\omega^{(k)}},Q_\omega\bigr)
    \le
    \chi^2\Bigl(\widetilde Q_{\omega^{(k)}},\widetilde Q_\omega\Bigr).
\end{equation}

The distribution of $K$ is uniform on $[p]$ under both models. Conditional on $K=\ell\neq k$, we have $\omega_\ell = \omega^{(k)}_\ell$, so the conditional laws of $(Y,M,Z)$ coincide. Hence, the two models can differ only on the event $\{K=k\}$, which occurs with probability $1/p$.
Also denote $D:=M_{K+1}$ for convenience; this is the indicator that $B$ is observed.
Since $D\sim\BER(q)$ under both models and also $D \indep K$, we obtain
\begin{align}
&\chi^2\Bigl(\widetilde Q_{\omega^{(k)}},\widetilde Q_\omega\Bigr) \nonumber\\ 
&=
\sum_{m=0, 1} \sum_{\ell=1}^p
\mathbb{P}(K=\ell, D= m) \cdot \,
\chi^2\Bigl(
    \mathsf{Law}_{\omega^{(k)}}(Y,Z,M\mid K=\ell, D=m),
    \mathsf{Law}_\omega(Y,Z,M\mid K=\ell, D=m)
\Bigr) \nonumber\\ 
&\overset{\1}{=} 
\sum_{m = 0, 1} \sum_{\ell=1}^p
\mathbb{P}(K=\ell)\, \mathbb{P}(D= m) \cdot\,
\chi^2\Bigl(
    \mathsf{Law}_{\omega^{(k)}}(Y,Z,M\mid K=\ell, D=m),
    \mathsf{Law}_\omega(Y,Z,M\mid K=\ell, D=m)
\Bigr) \nonumber\\ 
&\overset{\2}{=}
\frac{q}{p}\cdot
\chi^2\Bigl(
    \mathsf{Law}_{\omega^{(k)}}(Y,Z,M\mid K=k, D=1),
    \mathsf{Law}_\omega(Y,Z,M\mid K=k, D=1)
\Bigr) \nonumber \\
&\qquad\qquad + \frac{1 - q}{p}\cdot
\chi^2\Bigl(
    \mathsf{Law}_{\omega^{(k)}}(Y,Z,M\mid K=k, D=0),
    \mathsf{Law}_\omega(Y,Z,M\mid K=k, D=0)
\Bigr),
\label{eq:baseline-chisq-UB-1}
\end{align}
where we use the independence of $D$ and $K$ in step $\1$, and use $\P(K=k) = 1/p$ and $\P(D=1) = q = 1 - \P(D=0)$ in step $\2$.

It remains to compute the $\chi^2$--divergence terms in \Cref{eq:baseline-chisq-UB-1}. First note that the covariate distribution is the same under the two laws, so it suffices to analyze the response distribution. We begin by conditioning on $K=k$. By \Cref{eq:baseline-X,eq:baseline-theta-omega}
\begin{align}
\label{eq:baseline-linear-predictor}
    \frac{1}{2}X^\top\theta_\omega
    &=\frac{1}{2}\left\{
        \frac{1}{\sqrt{2}}\frac{R}{\sqrt{2}}
        +
        \frac{B}{\sqrt{2}}\,2\sqrt{2}\,\alpha\omega_k
    \right\}
    =\frac{R}{4}+\alpha B\omega_k.
\end{align}
Write $R':=R/4$ and recall that $D=M_{k+1}$ on the event $\{K = k\}$. By \Cref{eq:baseline-linear-predictor},
\[
    \frac{1}{2}X^\top\theta_\omega=R'+\alpha B\omega_k,
    \qquad\text{and}\qquad
    \frac{1}{2}X^\top\theta_{\omega^{(k)}}=R'-\alpha B\omega_k.
\]

If $D=0$, then $B$ is unobserved. Since $B$ is a uniform Rademacher random variable under both models, marginalizing over $B$ gives the same distribution of $Y$. Thus, the conditional divergence is zero when $D=0$.

If $D=1$, then $B$ is observed through $Z_{k+1}=B/\sqrt{2}$. Recall that, since $Y\in\{-1,1\}$, the logistic model can be written as
\[
    \mathbb{P}_\theta(Y=y\mid X)
    =
    \frac{\exp\left\{yX^\top\theta/2\right\}}
    {2\cosh\left(X^\top\theta/2\right)},
    \qquad\text{for}\,~ y\in\{-1,1\}.
\]

Now for $a\in\mathbb{R}$, define
\[
    \pi_a(y):=\frac{\exp(ay)}{2\cosh(a)},
    \qquad\text{for}\,~ y\in\{-1,1\},
\]
so that $\pi_a$ is the conditional law of $Y$ whenever $X^\top\theta/2=a$. Conditional on $K=k$, $D=1$, and $B$, the conditional laws of $Y$ under $P_{\omega^{(k)}}$ and $P_\omega$ are therefore $\pi_{R'-\alpha B\omega_k}$ and $\pi_{R'+\alpha B\omega_k}$, respectively.

We first compute the divergence between a generic pair of laws having probability mass functions $\pi_{a+s}$ and $\pi_{a-s}$. For every $a,s\in\mathbb{R}$ and $y\in\{-1,1\}$, direct calculation shows that
\begin{align*}
    \frac{\pi_{a+s}(y)^2}{\pi_{a-s}(y)}
    =
    \frac{\cosh(a-s)}{2\cosh^2(a+s)}
    \exp\left\{y(a+3s)\right\}.
\end{align*}
Therefore,
\begin{align}
    \chi^2\bigl(\pi_{a+s},\pi_{a-s}\bigr)
    &=\sum_{y\in\{-1,1\}}\frac{\pi_{a+s}(y)^2}{\pi_{a-s}(y)} - 1 \nonumber\\
    &=\frac{\cosh(a-s)}{2\cosh^2(a+s)}
    \sum_{y\in\{-1,1\}}\exp\left\{y(a+3s)\right\} - 1 \nonumber\\
    &=\frac{\cosh(a-s)\cosh(a+3s)}{\cosh^2(a+s)} - 1 \nonumber\\
    &\overset{\1}{=} \frac{\cosh^2(a+s)+\sinh^2(2s)}{\cosh^2(a+s)} - 1
    =\frac{\sinh^2(2s)}{\cosh^2(a+s)},
\label{eq:baseline-chisq-pmf}
\end{align}
where in step $\1$, we have used the fact that  $\cosh(u-v)\cosh(u+v)=\cosh^2(u)+\sinh^2(v)$, with $u = a+s$ and $v = 2s$.

Since $B$ is observed when $D=1$ and is uniform on $\{-1,+1\}$ under both models, applying the identity~\eqref{eq:baseline-chisq-pmf} with $a=R'$ and $s=-\alpha B\omega_k$, and averaging over $B$, yields
\begin{align}
\label{eq:baseline-chisq-cond}
\begin{aligned}
    &\chi^2\Bigl(
        \mathsf{Law}_{\omega^{(k)}}(Y,Z,M\mid K=k,D=1),\,
        \mathsf{Law}_{\omega}(Y,Z,M\mid K=k,D=1)
    \Bigr)\\ 
    &\hspace{5cm}=
    \frac{\sinh^2(2\alpha)}{2}
    \left\{
        \frac{1}{\cosh^2(R'+\alpha)}
        +
        \frac{1}{\cosh^2(R'-\alpha)}
    \right\}.
\end{aligned}
\end{align}

Continuing from \Cref{eq:baseline-chisq-UB-1}, and using \Cref{eq:baseline-chisq-cond}, we obtain
\[
\chi^2\Bigl(\widetilde Q_{\omega^{(k)}},\widetilde Q_\omega\Bigr)
=
\frac{q}{p}\,
\frac{\sinh^2(2\alpha)}{2}
\left\{
    \frac{1}{\cosh^2(R'+\alpha)} + \frac{1}{\cosh^2(R'-\alpha)}
\right\}.
\]

Next, under the assumption $\alpha\le R/24=R'/6$, we have $R'-\alpha \ge \nicefrac{5R'}{6} > 0$. Since $x \mapsto \cosh^2(x)$ is strictly increasing for $x > 0$, we have $\cosh^2(R'+\alpha) \ge \cosh^2(R'-\alpha)$.
Putting together the pieces, we obtain
\begin{align*}
\chi^2\bigl(Q_{\omega^{(k)}}, Q_\omega\bigr)
&\le 
\chi^2\Bigl(\widetilde Q_{\omega^{(k)}},\widetilde Q_\omega\Bigr) \\ 
&\le
\frac{q}{p}\,
\frac{\sinh^2(2\alpha)}{\cosh^2(R'-\alpha)} \\ 
&\overset{\1}{\le}
\frac{16q\alpha^2}{p}\,e^{-2R'+6\alpha}
\overset{\2}{\le}
\frac{16q\alpha^2}{p}\,e^{-R'}
=
\frac{16q\alpha^2}{p}\,\exp(-R/4),
\end{align*}
where in step $\1$ we have used $\sinh(2\alpha)\le2\alpha e^{2\alpha}$ and $\cosh(R'-\alpha)\ge e^{R'-\alpha}/2$, and in step $\2$ we have used the fact that $6\alpha\le R'$.
This proves \Cref{eq:baseline-edge-chi2-bound}.
Finally, by tensorization of the $\chi^2$-divergence,
\[
    1+\chi^2\Bigl(Q_{\omega^{(k)}}^{\otimes n},Q_\omega^{\otimes n}\Bigr)
    =
    \bigl\{
        1+\chi^2\left(Q_{\omega^{(k)}},Q_\omega\right)
    \bigr\}^n
    \le
    \exp\left\{
        \frac{16nq\alpha^2}{p}\exp(-R/4)
    \right\}.
\]
Subtracting $1$ from both sides yields \Cref{eq:baseline-edge-chi2-n}, as claimed.
\qed

\subsection{Proof of \Cref{lem:two-point-validity}}
\label{proof:lem:two-point-validity}

We prove the lemma part-by-part.

\subsubsection{Proof of \Cref{lem:two-point-validity}\emph{(a)}}
We begin by showing that $J$ is an odd function.  To this end, note that since the product function is taken over indices with odd $|U|$, we have
\[
J(-t) 
= \sum_{\substack{U \subseteq [m]\\|U|\ \mathrm{odd}}} g_U\rho^{-|U|} \chi_U(-t)
= \sum_{\substack{U \subseteq [m]\\|U|\ \mathrm{odd}}} g_U\rho^{-|U|} \chi_U(t) \cdot (-1)^{|U|}
= -\sum_{\substack{U \subseteq [m]\\|U|\ \mathrm{odd}}} g_U\rho^{-|U|} \chi_U(t)
= -J(t).
\]

We next verify the expectation of $J(T)$ under the null distribution.  The independence of $T_1,\ldots,T_m$ and $\mathbb{E}_0[T_j]=\rho$ imply $\mathbb{E}_0\bigl[\chi_U(T)\bigr] = \rho^{|U|}$.
Therefore,
\[
    \mathbb{E}_0\bigl[J(T)\bigr]
    =
    \sum_{\substack{U \subseteq [m]\\|U|\ \mathrm{odd}}}
    g_U
    =
    f(\boldsymbol{1}_m)
    =
    1.
\]

We next consider the conditional expectation $\mathbb{E}[J(T) \mid T_S]$.  Fix $S\subseteq[m]$ with $|S|\leq r$. For every $U \subseteq [m]$, observe that
\begin{equation*}
    \mathbb{E}_0\bigl[\chi_U(T) \,\big|\, T_S\bigr]
    =
    \chi_{U\cap S}(T_S) \cdot \rho^{|U\setminus S|}.
\end{equation*}
Therefore, 
\begin{align*}
    \mathbb{E}_0\bigl[J(T) \,\big|\, T_S\bigr]
    &=
    \sum_{\substack{U \subseteq [m]\\|U|\ \mathrm{odd}}}
    g_U\rho^{-|U|}
    \chi_{U\cap S}(T_S) \cdot \rho^{|U\setminus S|} \\ 
    &\overset{\1}{=}
    \sum_{C\subseteq S}
    \rho^{-|C|}
    \chi_C(T_S)
    \sum_{\substack{
        U \subseteq [m]\\
        |U|\ \mathrm{odd}\\
        U\cap S=C
    }} g_U \\ 
    &\overset{\2}{=} 
    \sum_{C\subseteq S}
    \rho^{-|C|}
    \chi_C(T_S) \cdot \mathbbm{1}\{C=\varnothing\} 
    ~=~ \rho^{0} \chi_\varnothing(T_S) + 0 
    ~=~ 1,
\end{align*}
where step $\1$ uses a change of summation indices from $U \subseteq [m]$ to $C := U \cap S \subseteq S$ and step $\2$ follows from~\Cref{lem:boolean-cancellation}.
This proves our claimed result that $\mathbb{E}_0\bigl[J(T) \,\big|\, T_S\bigr] = 1$.

Finally, we prove the upper bound on $\|J\|_{\infty}$.  Note that for every $t\in\{-1,1\}^m$, the product function $\chi_U(\cdot)$ satisfies $|\chi_U(t)| \le 1$. We apply this fact in conjunction with the triangle inequality to obtain
\begin{align*}
    |J(t)|
    = \left| \sum_{\substack{U \subseteq [m]\\|U|\ \mathrm{odd}}}
    g_U\rho^{-|U|}\chi_U(t) \right| 
    &\leq \sum_{\substack{U \subseteq [m]\\|U|\ \mathrm{odd}}}
    \left|g_U\right| \, \rho^{-|U|}.
\end{align*}
Applying the Cauchy--Schwarz inequality then yields
\begin{align*}
    |J(t)|
    &\le
    \Biggl\{
        \sum_{\substack{U \subseteq [m]\\|U|\ \mathrm{odd}}} g_U^2
    \Biggr\}^{1/2}
    \Biggl\{
        \sum_{\substack{U \subseteq [m]\\|U|\ \mathrm{odd}}} \rho^{-2|U|}
    \Biggr\}^{1/2}\le
    \Biggl\{
        \sum_{U \subseteq [m]}g_U^2
    \Biggr\}^{1/2}
    \Biggl\{
        \sum_{U \subseteq [m]}\rho^{-2|U|}
    \Biggr\}^{1/2},
    \\
    &\overset{\1}{=} 
    1 \cdot \left[ \sum_{n=0}^m \binom{m}{n} \left(\rho^{-2}\right)^n \right]^{1/2} 
    \!\!= \left(1+\rho^{-2}\right)^{m/2}
    \!\!,
\end{align*}
where the penultimate  inequality follows since $\sum_{U \subseteq [m]} g_U^2 = 1$ (see Lemma~\ref{lem:majority-coefficients}) and the final inequality follows from the binomial formula.  Continuing, we note that since by definition, $\rho = \tanh(\lambda)$ (see \Cref{eq:def-rho}),
\begin{align*}
    1+\rho^{-2}
    =
    1 + \frac{\cosh^2(\lambda)}{\sinh^2{\lambda}}
    =
    \frac{\sinh^2(\lambda) + \cosh^2(\lambda)}{\sinh^2(\lambda)}
    =
    \frac{\sinh^2(\lambda) + 1 + \sinh^2(\lambda)}{\sinh^2(\lambda)}
    =
    2 + \frac{1}{\sinh^2(\lambda)}.
\end{align*}
Combining the previous displays then yields the bound
\[
\lvert J(t) \rvert \leq \left(1+\rho^{-2}\right)^{m/2}
    =
    \biggl( 2+\frac{1}{\sinh^2(\lambda)} \biggr)^{m/2}
\]
Since this holds for all $t\in\{-1,1\}^m$, the same upper bound holds for $\|J\|_{\infty}$.
\qed

\subsubsection{Proof of \Cref{lem:two-point-validity}\emph{(b)}}  We prove each part in turn, beginning with the validity of the covariate distribution $P_{1, X}$.  

\paragraph{Validity of the covariate distribution $P_{1,X}$.} We show that \Cref{eq:two-point-P1X} defines a valid probability distribution $P_{1,X}$ under the alternative. Write
\[
    F_\alpha(a,b) := \cosh(\alpha)-b\sinh(\alpha) \cdot J(a).
\]
Since $\alpha\|J\|_\infty\le\nicefrac12$ and $\tanh(\alpha)\le\alpha$, we deduce that
\[
    F_\alpha(a,b)
    =
    \cosh(\alpha) \bigl\{1-b\tanh(\alpha) \cdot J(a)\bigr\}
    \ge
    \cosh(\alpha) \bigl\{1-\tanh(\alpha) \cdot \|J\|_\infty\bigr\}
    \ge
    \frac{1}{2} \cosh(\alpha)>0.
\]
The ratio of hyperbolic cosines in \Cref{eq:two-point-P1X} is also strictly positive, so the proposed Radon--Nikodym derivative is positive.

It remains to verify that $\mathrm{d} P_{1, X}/\mathrm{d} P_{0, X}$ integrates to one.  Expanding yields
\begin{align*}
    \E_{0,X}\left[
        \frac{\mathrm{d}P_{1,X}}{\mathrm{d}P_{0,X}}(X)
    \right]
    &=
    \E_0\Bigl[F_\alpha(A,B)e^{\alpha YB}\Bigr] = \mathbb{E}_0\Bigl[F_\alpha(A,B) \mathbb{E}\Bigl[e^{\alpha YB} \mid A, B\Bigr]\Bigr].
\end{align*}
From \Cref{eq:label-distribution-null}, we compute
\[
    \E_0\bigl[e^{\alpha YB}\mid A=a,B=b\bigr]
    =
    \sum_{y\in\{-1,1\}}
    \frac{e^{ys(a)}}{2\cosh\{s(a)\}}e^{\alpha yb}
    =
    \frac{\cosh\{s(a)+\alpha b\}}{\cosh\{s(a)\}}.
\]
Combining the previous two displays thus yields
\[
    \E_{0,X}\left[
        \frac{\mathrm{d}P_{1,X}}{\mathrm{d}P_{0,X}}(X)
    \right]
    =
    \E_0\Bigl[F_\alpha(A,B)e^{\alpha YB}\Bigr].
\]
Let $V:=YB$. Under $P_0$, $V$ is a uniform Rademacher random variable independent of $T$. Moreover, since $A=YT$ and $J$ is odd by part (a),
\[
    J(A)=J(YT)=Y \cdot J(T),
    \qquad\text{so that}\qquad
    B \cdot J(A) = V \cdot J(T).
\]
Hence,
\[
    F_\alpha(A,B)
    =
    \cosh(\alpha)-V\sinh(\alpha) \cdot J(T).
\]
Using the independence of $V$ and $T$, together with $\E_0[J(T)]=1$ from part (a), we obtain
\begin{align*}
    \E_0\bigl[F_\alpha(A,B) \cdot e^{\alpha YB}\bigr]
    &=
    \cosh(\alpha)\E_0[e^{\alpha V}]
    -
    \sinh(\alpha)\E_0[Ve^{\alpha V}]\,\E_0[J(T)]\\
    &=
    \cosh^2(\alpha)-\sinh^2(\alpha)
    =
    1.
\end{align*}
Thus, \Cref{eq:two-point-P1X} indeed defines a valid probability distribution $P_{1,X}$.

\paragraph{Well-specified logistic model under the null.}
 Fix $a=(a_1,\ldots,a_m)\in\{-1,1\}^m$, $b\in\{-1,1\}$, and $y\in\{-1,1\}$. Since $A_j=YT_j$, the event $\{Y=y,A=a\}$ is equivalent to $\{Y=y,T_j=ya_j\text{ for every }j\in[m]\}$. By the construction of $P_0$,
\[
    \bbP_0(Y=y,A=a,B=b)
    =
    \frac{1}{4}
    \prod_{j=1}^m
    \frac{\exp(\lambda ya_j)}{2\cosh(\lambda)}
    =
    \frac{1}{4} \frac{\exp\{ys(a)\}}{\{2\cosh(\lambda)\}^m}.
\]
Summing over $y\in\{-1,1\}$ gives
\[
    \bbP_0(A=a,B=b)
    =
    \frac{\cosh\{s(a)\}}
    {2\{2\cosh(\lambda)\}^m}.
\]
Consequently,
\[
    \bbP_0(Y=y\mid X=x(a,b))
    =
    \bbP_0(Y=y\mid A=a,B=b)
    =
    \frac{\exp\{ys(a)\}}{2\cosh\{s(a)\}}.
\]
Since $\frac{1}{2} x(a,b)^\top\theta_0=s(a)$ by \Cref{eq:s-a-cond}, this is precisely the logistic model with parameter $\theta_0$.

\paragraph{Well-specified logistic model under the alternative.}
We next verify the asserted logistic model under the alternative and the parameter identities. By \Cref{eq:two-point-theta1},
\[
    \frac{1}{2} x(a,b)^\top\theta_1
    =
    s(a)+\alpha b.
\]
Thus, the conditional distribution prescribed in \Cref{eq:two-point-P1-conditional} is exactly
\[
    \bbP_1(Y=y\mid X=x)
    =
    \frac{\exp\{\frac{1}{2} yx^\top\theta_1\}}{2\cosh\{\frac{1}{2} x^\top\theta_1\}},
\]
so $P_1$ is a well-specified logistic regression model with parameter
$\theta_1$.

\paragraph{Complete-data likelihood ratio.}
By \Cref{eq:two-point-P1-conditional,eq:two-point-P1X,eq:generic-two-point-P1X},
\begin{align*}
    \frac{\mathrm{d}P_1}{\mathrm{d}P_0}(a,b,y)
    &=
    \frac{\mathrm{d}P_{1,X}}{\mathrm{d}P_{0,X}}
    \bigl(x(a,b)\bigr) \cdot 
    \frac{
        \mathbb{P}_1(Y=y\mid A=a,B=b)
    }{
        \mathbb{P}_0(Y=y\mid A=a,B=b)
    }
    \\
    &=
    F_\alpha(a,b) \cdot 
    \frac{\cosh\{s(a)+\alpha b\}}{\cosh\{s(a)\}}
    \cdot 
    \frac{\exp\{y(s(a)+\alpha b)\}}{2\cosh\{s(a)+\alpha b\}}
    \,
    \frac{2\cosh\{s(a)\}}{\exp\{ys(a)\}}
    \\
    &=
    F_\alpha(a,b) \cdot \exp(\alpha yb).
\end{align*}

Equivalently, using $V=YB$ and $B \cdot J(A) = V \cdot J(T)$,
\[
    \frac{\mathrm{d}P_1}{\mathrm{d}P_0}
    =
    e^{\alpha V} \bigl\{\cosh(\alpha)-V\sinh(\alpha) \cdot J(T)\bigr\}.
\]
This proves \Cref{eq:complete-data-likelihood-ratio}. 
\qed

\subsubsection{Proof of \Cref{lem:two-point-validity}\emph{(c)}}
First, under both $P_0$ and $P_1$, the covariate vector $X$ is supported on vectors of the form
\[
    X
    =
    \left(
        \frac{A_1}{\sqrt{2m}},\ldots, \frac{A_m}{\sqrt{2m}}, \frac{B}{\sqrt{2}}
    \right)^\top,
    \qquad\text{where}\,~ A_1,\ldots,A_m,B\in\{-1,1\}.
\]
Hence, we always have
\[
    \|X\|^2 = \frac{1}{2m}\sum_{j=1}^m A_j^2+\frac{B^2}{2} = 1,
\]
almost surely under both distributions.

Next, we prove that the second moment matrix is the same under both $P_0$ and $P_1$. Recall that $A_j=YT_j$ and $V=YB$. Since $Y^2 = 1$, we have the identities
\[
    A_j A_k=Y^2 T_j T_k = T_j T_k,
    \qquad\text{and}\qquad
    A_j B = Y A_j \cdot YB = T_j V.
\]
Under $P_0$, $V$ is a uniform Rademacher random variable independent of $T$, and by \Cref{lem:two-point-validity}(b),
\[
    L_\alpha
    :=
    \frac{\mathrm{d}P_1}{\mathrm{d}P_0}
    =
    e^{\alpha V}
    \left\{
        \cosh(\alpha) - V\sinh(\alpha) \cdot J(T)
    \right\}.
\]
Since $V$ is a uniform Rademacher, note that
\begin{align}
\begin{aligned}
\label{eq:cosh-sinh}
    \mathbb{E}[e^{\alpha V}] &= P(V=1) \cdot e^{\alpha(1)} + P(V=-1) \cdot e^{\alpha(-1)} = \frac{e^\alpha + e^{-\alpha}}{2} = \cosh(\alpha) =: c_\alpha, \\
    \mathbb{E}[V e^{\alpha V}] &= P(V=1) \cdot e^{\alpha(1)} - P(V=-1) \cdot e^{\alpha(-1)} = \frac{e^\alpha - e^{-\alpha}}{2} = \sinh(\alpha) =: s_\alpha.
\end{aligned}
\end{align}
Consider $j,k\in[m]$ with $j\ne k$. Using the independence of $V$ and $T$,
\begin{align*}
    \mathbb{E}_1[A_jA_k]
    &=
    \mathbb{E}_0[L_\alpha T_jT_k]
    =
    c_\alpha^2
    \mathbb{E}_0[T_jT_k]
    -
    s_\alpha^2
    \mathbb{E}_0[T_jT_kJ(T)].
\end{align*}
Since $m=2r+1\ge5$, we have $r\ge2$. Thus,
\Cref{eq:required-properties-J} applied with $S=\{j,k\}$ yields
\[
    \mathbb{E}_0[J(T)\mid T_j,T_k]=1.
\]
Consequently,
\[
    \mathbb{E}_0[T_jT_kJ(T)]
    =
    \mathbb{E}_0[T_jT_k],
\]
and hence for all $j,k \in [m]$,
\begin{equation}
\label{eq:second-moment-AiAj}
    \mathbb{E}_1[A_jA_k]
    =
    \left(c_\alpha^2-s_\alpha^2\right)
    \mathbb{E}_0[T_jT_k]
    =
    \mathbb{E}_0[T_jT_k]
    =
    \mathbb{E}_0[A_jA_k].
\end{equation}

Next, for each $j\in[m]$,
\begin{equation}
\label{eq:second-moment-AiB-alt-1}
    \mathbb{E}_1[A_jB]
    =
    \mathbb{E}_0[L_\alpha T_jV]
    =
    c_\alpha s_\alpha
    \mathbb{E}_0[T_j]
    -
    c_\alpha s_\alpha
    \mathbb{E}_0[T_jJ(T)].
\end{equation}
Since $r\ge1$, \Cref{eq:required-properties-J} applied with $S=\{j\}$ yields that $\mathbb{E}_0[J(T)\mid T_j]=1$, and therefore $\mathbb{E}_0[T_jJ(T)] = \mathbb{E}_0[T_j]$. It follows from \Cref{eq:second-moment-AiB-alt-1} that
\begin{equation}
\label{eq:second-moment-AiB-alt-2}
    \mathbb{E}_1[A_jB]
    =
    c_\alpha s_\alpha \mathbb{E}_0[T_j] - c_\alpha s_\alpha \mathbb{E}_0[T_jJ(T)]
    =
    c_\alpha s_\alpha \mathbb{E}_0[T_j] - c_\alpha s_\alpha \mathbb{E}_0[T_j]
    = 0.
\end{equation}
Under $P_0$, $T_j$ and $V$ are independent and $\mathbb{E}_0[V]=0$. Thus, using \Cref{eq:second-moment-AiB-alt-2},
\begin{equation}
\label{eq:second-moment-AiB-null}
    \mathbb{E}_0[A_jB] = \mathbb{E}_0[T_jV] = 0 = \mathbb{E}_1[A_jB].
\end{equation}
Also, $A_j^2=B^2=1$ almost surely under both distributions. 
Combining this with the equalities obtained in \Cref{eq:second-moment-AiAj,eq:second-moment-AiB-null}, we obtain that under $P_0$ and $P_1$, every entry of the second-moment matrix $\E[XX^\top]$ agrees, proving
\[
    \mathbb{E}_1[XX^\top] = \mathbb{E}_0[XX^\top].
\]

It remains to evaluate this common second-moment matrix. For simplicity, we operate under $P_0$. Note that for every $j,k\in[m]$ with $j\neq k$, the independence of $T_j$ and $T_k$ gives
\[
    \mathbb{E}_0[T_jT_k]
    =
    \mathbb{E}_0[T_j] \cdot \mathbb{E}_0[T_k]
    =
    \rho^2,
\]
whereas $\mathbb{E}_0[T_j^2]=\mathbb{E}_0[A_j^2]=1$. Thus,
\begin{equation}
\label{eq:second-moment-AAtop}
    \mathbb{E}_0[AA^\top]
    =
    (1-\rho^2) I_m + \rho^2\boldsymbol{1}_m\boldsymbol{1}_m^\top.
\end{equation}
Recalling that $X = \Bigl(\frac{A^\top}{\sqrt{2m}}, \frac{B}{\sqrt{2}}\Bigr)^\top$, we therefore obtain using \Cref{eq:second-moment-AAtop,eq:second-moment-AiB-null} that
\[
    \mathbb{E}_0[XX^\top]
    =
    \mathbb{E}_1[XX^\top]
    =
    \frac{1}{2}
    \begin{bmatrix}
        \mfrac{1}{m} \mathbb{E}_0[AA^\top] & \mfrac{1}{\sqrt{m}} \mathbb{E}_0[BA]\\
        \mfrac{1}{\sqrt{m}} \mathbb{E}_0[BA^\top] & \mathbb{E}_0[B^2]
    \end{bmatrix}
    =
    \frac{1}{2}
    \begin{bmatrix}
        \frac{1}{m}
        \bigl\{
        (1-\rho^2)I_m + \rho^2\boldsymbol{1}_m\boldsymbol{1}_m^\top \bigr\} & 0\\
        0 & 1
    \end{bmatrix}.
\]
Lastly, since $(1-\rho^2)I_m + \rho^2\boldsymbol{1}_m\boldsymbol{1}_m^\top \succeq (1-\rho^2)I_m$, and $1-\rho^2=\operatorname{sech}^2(\lambda)$, it follows that
\[
    \mathbb{E}_0[XX^\top]
    =
    \mathbb{E}_1[XX^\top]
    \succeq
    \frac{\operatorname{sech}^2(\lambda)}{2m}I_{m+1},
\]
which completes the proof of part (c). 
\qed

\subsection{Proof of \Cref{lem:observed-likelihood-ratio-exact}} 
\label{proof:lem:observed-likelihood-ratio-exact}

Since $Y$ is always observed and $T_j=YA_j$, observing $A_j$ is equivalent to observing $T_j$. Similarly, when $D=1$ (which we remind the reader is shorthand for the indicator $M_{m + 1}$), observing $B$ is equivalent to observing $V=YB$. Consequently, the $\sigma$-field generated by the observations may be written as
\begin{equation}
\label{eq:observed-sigma-field}
    \mathcal{O} := \sigma(Y,M,Z) = \sigma(Y,M,T_S,DV).
\end{equation}
Before establishing the desired property, we make the following claim, which ensures that the Radon--Nikodym derivative $\mathrm{d}Q_1/\mathrm{d}Q_0$ is well-defined, and is equal to the conditional expectation of the complete-data likelihood ratio $L_\alpha$ given $\mcO$.
\begin{claim}
\label{lem:observed-likelihood-ratio-general}
The distribution $Q_1$ is absolutely continuous with respect to $Q_0$, and
\[
    \frac{\mathrm{d}Q_1}{\mathrm{d}Q_0}(Y,M,Z)
    =
    \mathbb{E}_0 \big[L_\alpha\mid\mathcal{O}\big],
    \qquad
    Q_0\text{-almost surely}.
\]
\end{claim}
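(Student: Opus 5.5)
The plan is to realize both observed-data laws as pushforwards of the complete-data laws under one measurable map, and then use the standard fact that the density of a pushforward is the conditional expectation of the original density. Concretely, let $\widetilde P_i := P_i \otimes \BER(q)^{\otimes(m+1)}$ be the joint law of $(Y,A,B,M)$ under hypothesis $i\in\{0,1\}$. The mask law is the same under both hypotheses and independent of $(Y,A,B)$, so by \Cref{lem:two-point-validity}(b) we have $\widetilde P_1 \ll \widetilde P_0$ with
\[
    \frac{\mathrm{d}\widetilde P_1}{\mathrm{d}\widetilde P_0}(Y,A,B,M) = \frac{\mathrm{d}P_1}{\mathrm{d}P_0}(Y,A,B) = L_\alpha .
\]
This density is nonnegative because $\alpha\|J\|_\infty\le 1/2$, and it integrates to one.

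Next, let $\Gamma(Y,A,B,M):=(Y,M,M\odot X)$ be the observation map, so that $Q_i=\widetilde P_i\circ\Gamma^{-1}$. The key identity is that, for every bounded measurable $g$,
\[
    \E_{Q_1}[g]=\E_{\widetilde P_1}[g\circ\Gamma]=\E_{\widetilde P_0}\bigl[(g\circ\Gamma)\,L_\alpha\bigr]=\E_{\widetilde P_0}\bigl[(g\circ\Gamma)\,\E_0[L_\alpha\mid\sigma(\Gamma)]\bigr].
\]
By the Doob--Dynkin lemma, $\E_0[L_\alpha\mid\sigma(\Gamma)]=h\circ\Gamma$ for some measurable $h\ge 0$. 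Substituting, we get $\E_{Q_1}[g]=\E_{Q_0}[g\,h]$. Hence $Q_1\ll Q_0$ with $\mathrm{d}Q_1/\mathrm{d}Q_0=h$, and $h(Y,M,Z)=\E_0[L_\alpha\mid\mathcal O]$ holds $Q_0$-almost surely, because $\sigma(\Gamma)=\sigma(Y,M,Z)=\mathcal O$.

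It also helps to note why $\mathcal O$ has the description in \Cref{eq:observed-sigma-field}. The sample space is finite, since all coordinates are Rademacher or Bernoulli. On this finite space one can verify everything directly by summing $L_\alpha$ over the fibres $\Gamma^{-1}(y,m,z)$, which makes the absolute continuity transparent: any point with $Q_0(\{o\})=0$ has a fibre that is $\widetilde P_0$-null, and therefore also $\widetilde P_1$-null.

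I do not expect a real obstacle here. The only care needed is to state the conditional expectation on the enlarged space that includes the masks, and to note that the independence of $M$ from $(Y,X)$, with the same law under both hypotheses, is what keeps $L_\alpha$ free of any mask factor.
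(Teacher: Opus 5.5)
Your proposal is correct and follows essentially the same argument as the paper. You lift to the joint law of $(X,Y,M)$, whose density relative to the null is still $L_\alpha$ because the mask law is shared and independent of $(X,Y)$. You then apply the tower property against arbitrary bounded test functions $g(Y,M,Z)$. Your explicit Doob--Dynkin step, which writes $\mathbb{E}_0[L_\alpha \mid \mathcal{O}]$ as a measurable function of the observed data, just makes precise what the paper leaves implicit.
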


Deferring the proof of this claim to the end of the section, we apply it to write
\begin{equation}
\label{eq:observed-LR}
\widetilde{L}_{\alpha} = \mathbb{E}_0[L_{\alpha} \mid \mathcal{O}] = \mathbb{E}_0 \biggl[ \frac{\mathrm{d}P_1}{\mathrm{d} P_0} \; \Big \vert \; \mathcal{O}\biggr] = \mathbb{E}_0\bigl[\exp(\alpha V) \bigl\{\cosh(\alpha) - V\sinh(\alpha) \cdot J(T)\bigr\} \; \big \vert \; \mathcal{O}\bigr],
\end{equation}
where in the final step, we applied \Cref{lem:two-point-validity}(b) to expand the likelihood-ratio $\mathrm{d}P_1/\mathrm{d}P_0$.

Note that under $P_0$, the vector $T$ is conditionally independent of $V$ given $\mathcal{O}$, so that
\[
\mathbb{E}_0\bigl[\exp(\alpha V)\,  V\sinh(\alpha) \cdot J(T) \; \big \vert \; \mathcal{O}\bigr] = \sinh(\alpha) \cdot \mathbb{E}_0\bigl[V\exp(\alpha V)  \; \big \vert \; \mathcal{O}\bigr] \cdot \mathbb{E}_0 \bigl[J(T) \; \big \vert \; \mathcal{O}\bigr].
\]
We next simplify each of the terms in the product above in turn, beginning with the last term.  To this end, note that under $P_0$, the vector $T$ is independent of the tuple $(Y, V, M)$.  It follows that
\[
    \mathbb{E}_0[J(T)\mid\mathcal{O}]
    =
    \mathbb{E}_0[J(T)\mid S,T_S]
    =
    J_S(T_S).
\]

To compute $\mathbb{E}_0\bigl[V\exp(\alpha V)  \; \big \vert \; \mathcal{O}\bigr]$ we consider the case in which $D = 1$ and $D = 0$ separately.  If $D = 1$, then $V$ is $\mathcal{O}$-measurable and consequently \Cref{eq:observed-LR} yields
\begin{align*}
    \widetilde{L}_\alpha
    &=
    \exp(\alpha V)
    \bigl\{\cosh(\alpha) - V\sinh(\alpha) \cdot \mathbb{E}_0[J(T)\mid\mathcal{O}]\bigr\}
    \\
    &=
    \exp(\alpha V)
    \bigl\{\cosh(\alpha) - V\sinh(\alpha) \cdot J_S(T_S)\bigr\}.
\end{align*}

On the other hand, if $D=0$, $V$ is not observed; thus, $\mathcal{O} = \sigma(Y,M,T_S)$ as $DV = 0$, so $V \indep \mathcal{O}$ as  $\mathcal{O}$ carries no information about $V$. Therefore, since $V$ is a uniform Rademacher random variable independent of $(Y,T,M)$, it is independent of $\mathcal{O}$ on the event $\{D=0\}$. Hence,
\[
    \mathbb{E}_0[\exp(\alpha V)\mid\mathcal{O}]
    =
    \mathbb{E}_0[\exp(\alpha V)]
    =
    \cosh(\alpha)
\]
and
\[
    \mathbb{E}_0[V\exp(\alpha V) \mid\mathcal{O}]
    =
    \mathbb{E}_0[V\exp(\alpha V)  \mid T_S]
    =
    \sinh(\alpha),
\]
by the same calculation as in \Cref{eq:cosh-sinh}.  Combining the pieces, it follows that
\begin{align*}
    \widetilde{L}_\alpha
    &=
    \cosh^2(\alpha) - \sinh^2(\alpha) \cdot J_S(T_S)
    =
    1+\sinh^2(\alpha)
    \left\{
        1-J_S(T_S)
    \right\}.
\end{align*}

Finally, if $|S|\leq r$, then \Cref{eq:required-properties-J} gives $J_S(T_S)=1$. For $D=0$, the above display immediately yields $\widetilde{L}_\alpha=1$. For $D=1$,
\[
    \widetilde{L}_\alpha
    =
    \exp(\alpha V)
    \left\{
        \cosh(\alpha)-V\sinh(\alpha)
    \right\}
    =
    \exp(\alpha V)\exp(-\alpha V)
    =
    1,
\]
as desired. \hfill $\qed$

It remains to establish Claim~\ref{lem:observed-likelihood-ratio-general}.

\paragraph{Proof of \Cref{lem:observed-likelihood-ratio-general}.}
Let $g=g(Y,M,Z)$ be any bounded and measurable function of the observed data. Since $M$ is independent of $(X,Y)$ and has the same distribution (say, $P_M$) under both hypotheses, the likelihood ratio of the joint law of $(X,Y,M)$ under the alternative with respect to the null is $L_\alpha(X,Y)$. Hence
\begin{align*}
    \int g \, \mathrm{d}Q_1 
    = 
    \mathbb{E}_{Q_1}[g]
    &=
    \int g(y,m,m\odot x)\, P_1(\mathrm dx,\mathrm dy)\, P_M(\mathrm dm)
    \\
    &=
    \int g(y,m,m\odot x)\, L_\alpha(x,y)\, P_0(\mathrm dx,\mathrm dy)\, P_M(\mathrm dm)
    = \mathbb{E}_0\!\bigl[L_\alpha \cdot g(Y,M,Z)\bigr].
\end{align*}
Since $g(Y,M,Z)$ is $\mathcal{O}$-measurable, the tower property yields
\begin{align*}
    \int g \, \mathrm{d}Q_1 
    &=
    \mathbb{E}_0\!\Bigl[
        \mathbb{E}_0\bigl[L_\alpha \cdot g(Y,M,Z) \,\big|\, \mathcal{O}\bigr]
    \Bigr] 
    =
    \mathbb{E}_0 \!\Bigl[
        \mathbb{E}_0\bigl[L_\alpha \big| \mathcal{O}\bigr] \cdot g
    \Bigr] 
    =
    \int \mathbb{E}_0\bigl[L_\alpha \big| \mathcal{O}\bigr] \cdot g \, \mathrm{d}Q_0.
\end{align*}
As this identity holds for every bounded and measurable $g$, it follows that $Q_1\ll Q_0$ and
\[
    \frac{\mathrm{d}Q_1}{\mathrm{d}Q_0}(Y,M,Z)
    =
    \mathbb{E}_0 \bigl[L_\alpha\mid\mathcal{O}\bigr],
\]
$Q_0$-almost surely, as desired. \hfill \qed

\subsection{Proof of \Cref{lem:two-point-divergence}}
\label{proof:lem:two-point-divergence}

We note that the likelihood-ratio calculations in the proofs of \Cref{lem:two-point-validity}(b) and \Cref{lem:observed-likelihood-ratio-exact} do not use the restriction \(m\geq 5\), and remain valid verbatim for every odd \(m\geq 1\) under the condition \(\alpha\|J\|_\infty\leq 1/2\).

Our calculation hinges on the following lemma which establishes independence of the random variable $V$ and the random vector $T$.  We defer its proof to Section~\ref{sec:proof-lem-V-independence}.
\begin{lemma}
\label{lem:V-independence}
Recall the definition of $V$ from \Cref{eq:def-V}. Under $P_0$, $V$ is a uniform Rademacher random variable independent of ~$(Y,T_1,\ldots,T_m)$. 
\end{lemma}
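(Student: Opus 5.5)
The plan is to compute the joint law of $(Y,T,B)$ under $P_0$ directly and then express $V=YB$ in it. By construction, $Y$, $B$ and $T=(T_1,\ldots,T_m)$ are mutually independent under $P_0$: $Y$ and $B$ are uniform Rademacher, and $T$ has the product law \eqref{eq:def-T}. It therefore suffices to show that for every $v,y\in\{-1,1\}$ and $t\in\{-1,1\}^m$,
\[
\mathbb{P}_0(V=v,\,Y=y,\,T=t)=\tfrac12\,\mathbb{P}_0(Y=y,\,T=t).
\]

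To do this, use the fact that $Y\in\{-1,1\}$ means $Y^{-1}=Y$. Hence $\{V=v, Y=y\}=\{B=vy,\,Y=y\}$. Independence then gives
\[
\mathbb{P}_0(V=v,Y=y,T=t)=\mathbb{P}_0(B=vy)\,\mathbb{P}_0(Y=y)\,\mathbb{P}_0(T=t)=\tfrac12\,\mathbb{P}_0(Y=y)\,\mathbb{P}_0(T=t).
\]
Since $\mathbb{P}_0(Y=y)\,\mathbb{P}_0(T=t)=\mathbb{P}_0(Y=y,T=t)$, this factorization holds. Summing over $(y,t)$ shows that $V$ is uniform Rademacher, and the factorization itself is exactly independence of $V$ from $(Y,T)$.

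The only step needing care is to be sure we use the joint independence of $(Y,B,T)$ as specified in the definition of $P_0$, and not the induced covariate law \eqref{eq:covariate-dist-null}, under which $A$ and $Y$ are dependent. There is no real obstacle: the whole argument is the standard fact that multiplying a uniform Rademacher variable by an independent sign yields a uniform Rademacher variable independent of that sign.
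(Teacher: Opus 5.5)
Your proposal is correct and follows essentially the same route as the paper. Both rewrite $\{V=v,Y=y\}$ as $\{B=vy,Y=y\}$ and use the independence of $B$ from $(Y,T)$ built into $P_0$ to get $\mathbb{P}_0(V=v,Y=y,T=t)=\tfrac12\,\mathbb{P}_0(Y=y,T=t)$; the only cosmetic difference is that the paper checks uniformity of $V$ in a separate preliminary computation, whereas you obtain it by summing this factorization over $(y,t)$.
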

We invoke the tower property of expectation in conjunction with the missing completely at random nature of the missingness to rewrite
\begin{align}
\label{eq:chi-square-expansion-two-point-divergence}
    \chi^2(Q_1, Q_0) = \mathbb{E}_0\bigl[(\widetilde{L}_{\alpha} -1)^2\bigr] = q \,\mathbb{E}_0\bigl[(\widetilde{L}_{\alpha} -1)^2 \mid D = 1\bigr] + (1 - q) \mathbb{E}_0\bigl[(\widetilde{L}_{\alpha} -1)^2 \mid D = 0\bigr].
\end{align}
Applying Lemma~\ref{lem:observed-likelihood-ratio-exact} then yields the pair
\begin{align*}
    \widetilde{L}_{\alpha} &= \exp(\alpha V)
        \left\{
            \cosh(\alpha) - V \sinh(\alpha) \cdot J_S(T_S)
        \right\} &\text{ if }\,~ D = 1,\\
    \widetilde{L}_{\alpha} &= \cosh^2(\alpha) - \sinh^2(\alpha) \cdot J_S(T_S) &\text{ if }\,~ D = 0.
\end{align*}
Using the shorthand
\[
    \Delta_U := J_U(T_U)-1 = \mathbb{E}_0\bigl[J(T)\mid T_U\bigr] - 1,
\]
we thus deduce that
\begin{align*}
    \mathbb{E}_0\bigl[(\widetilde{L}_{\alpha} -1)^2 \mid D = 1\bigr] &= \mathbb{E}_0\bigl[\bigl(\exp(\alpha V)
        \left\{
            \cosh(\alpha) - V \sinh(\alpha) \cdot J_S(T_S)
        \right\} -1\bigr)^2 \mid D = 1\bigr]\\
        &= \mathbb{E}_0\bigl[\bigl(e^{\alpha V}
    \left\{
        \cosh(\alpha)
        -
        V\sinh(\alpha)(1+\Delta_S)
    \right\} -1\bigr)^2 \mid D = 1\bigr]\\
    &\overset{\1}{=} \mathbb{E}_0\bigl[\bigl(V e^{\alpha V}\sinh(\alpha) \cdot \Delta_S\bigr)^2 \mid D = 1\bigr]\\
    &\overset{\2}{=} \sinh^2(\alpha) \cdot \mathbb{E}_0[e^{2\alpha V}] \cdot \mathbb{E}_0[\Delta_S^2],
\end{align*}
where step $\1$ follows from the identity $
    e^{\alpha V}
    \left\{
        \cosh(\alpha)-V\sinh(\alpha)
    \right\}
    =
    1
$, and step $\2$ follows since by \Cref{lem:V-independence}, $V$ is independent of the pair $(S, T_S)$.

On the other hand,
\[
    \widetilde{L}_\alpha-1
    =
    -\sinh^2(\alpha) \cdot \Delta_S \quad \text{ if } \quad D = 0.
\]
Substituting these expressions into \Cref{eq:chi-square-expansion-two-point-divergence} yields
\begin{align}
    \chi^2(Q_1,Q_0)
    =
    \mathbb{E}_0
    \Bigl[
        \bigl(\widetilde{L}_\alpha-1\bigr)^2
    \Bigr]
    &=
    q\sinh^2(\alpha) \cdot \mathbb{E}_0[e^{2\alpha V}] \cdot \mathbb{E}_0[\Delta_S^2]
    +
    (1-q)\sinh^4(\alpha) \cdot \mathbb{E}_0[\Delta_S^2]\nonumber\\
    &= \left\{ q\sinh^2(\alpha)\cosh(2\alpha) + (1-q)\sinh^4(\alpha) \right\}
    \cdot \mathbb{E}_0[\Delta_S^2] \label{eq:exact-observed-chi-square}\\
    &\leq \left\{ q\sinh^2(\alpha)\cosh(2\alpha) + (1-q)\sinh^4(\alpha) \right\}
    \cdot \mathcal{B}_{m, \lambda, q}, \label{ineq:Chi-square-penultimate-calculation}
\end{align}
where in the penultimate step we applied the fact that $V$ is a uniform Rademacher random variable to compute $\mathbb{E}_0[e^{2\alpha V}] = \cosh(2\alpha)$ and the final step follows from the inequality
\begin{align}
\label{ineq:Delta-B}
    \mathbb{E}_0[\Delta_S^2] \leq \mathcal{B}_{m, \lambda, q},
\end{align}
whose proof we defer to the end of the section.

To conclude, note that for $0<\alpha\le1$, there exists a universal constant $C_1 \in (0,6)$ such that the polynomial upper bounds
\[
    \sinh^2(\alpha)\cosh(2\alpha)
    \le
    C_1\alpha^2,
    \qquad\text{and}\qquad
    \sinh^4(\alpha)
    \le
    C_1\alpha^4,
\]
hold.  Moreover, if $0 < \alpha^2 \leq q$, then $\alpha^4 \leq q\alpha^2$.  Substituting these bounds into Ineq.~\eqref{ineq:Chi-square-penultimate-calculation} yields
\begin{align*}
    \chi^2(Q_1, Q_0)
    &\leq \left\{ q\sinh^2(\alpha)\cosh(2\alpha) + (1-q)\sinh^4(\alpha) \right\}
    \cdot \mathcal{B}_{m, \lambda, q}
    \leq 2C_1 q \alpha^2 \mathcal{B}_{m, \lambda, q},
\end{align*}
which proves \Cref{eq:two-point-observed-chi2} upon taking $C = 2C_1$.  In turn, \Cref{eq:two-point-observed-chi2-n} follows by applying the tensorization property of $\chi^2$-divergence in conjunction with the numeric inequality $1 + x \leq e^x$ to obtain
\[
    1+\chi^2\left(Q_1^{\otimes n},Q_0^{\otimes n}\right)
    \le
    \left\{1+\chi^2(Q_1,Q_0)\right\}^n
    \le
    \exp\left\{
        Cnq\alpha^2\,\mathcal{B}_{m,\lambda,q}
    \right\},
\]
as desired. \hfill $\qed$

It remains to prove Ineq.~\eqref{ineq:Delta-B}.
\paragraph{Proof of Ineq.~\eqref{ineq:Delta-B}.} 
We define the shorthand
\begin{align}
    \label{eq:def-Wmq}
    \mathcal{W}_{m, q} := \mathbb{E}_0\bigl[\Delta_S^2] = \mathbb{E}_0[\{J_S(T_S) - 1\}^2\bigr]
\end{align}
\Cref{lem:conditional-parseval} provides the following explicit representation:
\begin{equation}
\label{eq:Wmq-closed}
    \mathcal{W}_{m,q}
    =
    4\sum_{k=r+1}^m
    \binom{m}{k}
    \binom{k-1}{r}^{\!2}
    \left(\frac{t}{4}\right)^k,
    \qquad\text{where}\,~ t:=\frac{q(1-\rho^2)}{\rho^2}=\frac{q}{\sinh^2(\lambda)}.
\end{equation}
Fix $s>0$. Note that for all natural numbers $k \ge r+1$,
\begin{equation}
\label{ineq:binom-expansion-ineq}
    \binom{k-1}{r}s^r
    \le
    \sum_{j=0}^{k-1}\binom{k-1}{j}s^j
    =
    (1+s)^{k-1}.
\end{equation}
Therefore, using \Cref{ineq:binom-expansion-ineq} in \Cref{eq:Wmq-closed}, we obtain 
\begin{align}
    \mathcal{W}_{m,q}
    &\le
    4
    \sum_{k=r+1}^m
    \binom{m}{k}
    \frac{(1+s)^{2k-2}}{s^{2r}}
    \left(\frac{t}{4}\right)^k \nonumber\\
    &\le
    \frac{4}{(1+s)^2s^{2r}}
    \sum_{k=0}^m
    \binom{m}{k}
    \left\{
        \frac{t(1+s)^2}{4}
    \right\}^k \nonumber \\
    &=
    \frac{4}{(1+s)^2s^{m-1}}
    \left\{
        1+\frac{t(1+s)^2}{4}
    \right\}^m
    =
    \frac{4s}{(1+s)^2}
    \left[
        \frac{
            1+\frac{t(1+s)^2}{4}
        }{s}
    \right]^m
    \le
    \Biggl[
        \frac{1+\frac{t(1+s)^2}{4}}{s}
    \Biggr]^m
\label{eq:Wmq-bound-1}
\end{align}
where we used $2r=m-1$, and the fact that $4s \le (1+s)^2$ for all $s \ge 0$. 

\smallskip
It remains to minimize
\[
    v(s) 
    := \frac{1+\frac{t(1+s)^2}{4}}{s}
    =
    \frac{1}{s}+\frac{t}{4}\left(s+2+\frac{1}{s}\right).
\]
Differentiation with respect to $s$ gives
\[
    v'(s) 
    =
    -\frac{1}{s^2} + \frac{t}{4} \left(1-\frac{1}{s^2}\right)
    =
    -s^{-2} \left(1 + \frac{t}{4}\right) + \frac{t}{4},
\]
which vanishes at $s_\star = \sqrt{4/t + 1}$. Also, $v''(s) = 2s^{-3}(1+t/4) > 0$ for all $s > 0$. Thus, $s_\star$ is the unique minimizer of $v(\cdot)$ on $(0,\infty)$, and
\[
    v(s_\star)
    = \inf_{s>0} v(s)
    = \inf_{s>0} \frac{1+\frac{t(1+s)^2}{4}}{s}
    = \frac{t+\sqrt{t^2+4t}}{2}
    = \beta(t).
\]
Hence, \Cref{eq:Wmq-bound-1} implies
\[
    \mathcal{W}_{m,q} \le \big[\beta(t)\big]^m = \mathcal{B}_{m,\lambda,q}.
\]
\qed

\subsubsection{Proof of \Cref{lem:V-independence}}\label{sec:proof-lem-V-independence}
We first verify that \(V\) is a uniform Rademacher random variable. Note that for \(v_0\in\{-1,1\}\),
\begin{align*}
    \mathbb{P}_0(V=v_0)
    &=
    \sum_{y\in\{-1,1\}}
    \mathbb{P}_0(Y=y,YB=v_0)
    =
    \sum_{y\in\{-1,1\}}
    \mathbb{P}_0(Y=y,B=v_0y).
\end{align*}
Since \(B\) is a uniform Rademacher random variable independent of \(Y\),
\[
    \mathbb{P}_0(Y=y,B=v_0y)
    =
    \mathbb{P}_0(Y=y) \cdot \mathbb{P}_0(B=v_0y)
    =
    \frac{1}{2}\mathbb{P}_0(Y=y).
\]
Therefore,
\[
    \mathbb{P}_0(V=v_0)
    =
    \frac{1}{2}
    \sum_{y\in\{-1,1\}}
    \mathbb{P}_0(Y=y)
    =
    \frac{1}{2}.
\]
Thus, \(V\) is a uniform Rademacher random variable.

We next prove that \(V\) is independent of \((Y,T_1,\ldots,T_m)\). Fix \(v_0,y\in\{-1,1\}\) and \(t=(t_1,\ldots,t_m)\in\{-1,1\}^m\). Since \(V=YB\), on the event \(\{Y=y\}\), the event \(\{V=v_0\}\) is equivalent to \(\{B=v_0y\}\). Hence,
\begin{align*}
    \mathbb{P}_0(V=v_0,Y=y,T=t)
    &=
    \mathbb{P}_0(B=v_0y,Y=y,T=t).
\end{align*}
By construction, \(B\) is independent of \((Y,T)\), and it is uniform on \(\{-1,1\}\). Consequently,
\begin{align*}
    \mathbb{P}_0(V=v_0,Y=y,T=t)
    &=
    \mathbb{P}_0(B=v_0y) \cdot \mathbb{P}_0(Y=y,T=t)
    =
    \frac{1}{2}
    \mathbb{P}_0(Y=y,T=t).
\end{align*}
Since \(\mathbb{P}_0(V=v_0)=1/2\), we conclude that
\[
    \mathbb{P}_0(V=v_0,Y=y,T=t)
    =
    \mathbb{P}_0(V=v_0) \cdot \mathbb{P}_0(Y=y,T=t).
\]
Because this identity holds for every \(v_0,y\in\{-1,1\}\) and every \(t\in\{-1,1\}^m\), \(V\) is independent of \((Y,T_1,\ldots,T_m)\).

Finally, for every deterministic \(S\subseteq[m]\), the vector \(T_S\) is a measurable function of \(T=(T_1,\ldots,T_m)\). Independence of \(V\) and \((Y,T)\) therefore implies, in particular, that $V \indep T_S$.

\subsection{Proof of \Cref{lem:large-R-assouad-validity}}

\paragraph{Proof of part \emph{(a)}.}
Since $\alpha\|J\|_\infty \le 1/2$, the alternative member of the two-point construction is well-defined. Conditional on $K=k$, \Cref{eq:large-R-assouad-logit} implies that
\[
    \mathbb{P}_\omega(Y=y\mid A=a,B=b,K=k)
    =
    \frac{
        \exp\left\{
            y\bigl(s(a)+\alpha\omega_k b\bigr)
        \right\}
    }{
        2\cosh\left\{
            s(a)+\alpha\omega_k b
        \right\}
    }.
\]
Since $K$, $A$, and $B$ are determined by $X$, this is precisely
\[
    \mathbb{P}_\omega(Y=y\mid X=x)
    =
    \frac{
        \exp\left(\frac{1}{2} yx^\top\theta_\omega\right)
    }{
        2\cosh\left(\frac{1}{2} x^\top\theta_\omega\right)
    }.
\]
Thus, the model is well specified with $\theta_\omega$ being the true parameter since by the definition in~\Cref{eq:large-R-assouad-theta},
\[
\|\theta_\omega\|^2 = 8m^2\lambda^2 + 8\alpha^2 \sum_k \omega_k^2 \le \frac{R^2}{4} + \frac{3R^2}{4} = R^2.
\]

\paragraph{Proof of part \emph{(b)}.}  The almost sure bound on $\| X \|$ holds by construction, so we focus on lower bounding $\mathbb{E}_{\omega}[X X^{\top}]$.

Conditional on $K=k$, the distribution of $(A,B,Y)$ is either $P_0$ or $P_1$, depending on whether $\omega_k=0$ or $\omega_k=1$. Since $m\ge5$, \Cref{lem:two-point-validity}(c) shows that the second-moment matrix of $(A,B)$ is the same under these two distributions.
Consequently, conditional on $K=k$,
\[
    \mathbb{E}_\omega[A_j A_{\ell}\mid K=k]
    =
    \mathbb{E}_0[A_j A_{\ell}],
    \qquad
    \mathbb{E}_\omega[A_j B\mid K=k]=0,
    \qquad
    \mathbb{E}_\omega[B^2 \mid K=k]=1,
\]
independently of the value of $\omega_k$.

Averaging over the uniform random variable $K$, we therefore obtain
\[
    \mathbb{E}_\omega[XX^\top]
    =
    \frac{1}{2}
    \begin{bmatrix}
        \mfrac{1}{m}
        \left\{
            (1-\rho^2)I_m
            +
            \rho^2\boldsymbol 1_m\boldsymbol 1_m^\top
        \right\} & 0 \\ 
        0 & \mfrac{1}{p} I_p
    \end{bmatrix}.
\]
The smallest eigenvalues of the upper-left and lower-right blocks are $\frac{1-\rho^2}{2m}$ and $1/(2p)$, respectively. Hence, for all $\omega \in \{0,1\}^p$,
\[
    \mathbb{E}_\omega[XX^\top]
    \succeq
    \frac{1}{2}
    \min\left\{
        \frac{1-\rho^2}{m},
        \frac{1}{p}
    \right\}
    I_{m+p}.
\]

\paragraph{Proof of part \emph{(c)}.} Recall the definition of $\theta_\omega$ from \Cref{eq:large-R-assouad-theta}. If $\omega \in \{0,1\}^p$ satisfies $\omega_k = 0$, for the neighboring vertex $\omega^{(k)}$ with $\omega^{(k)}_k = 1$, we have
\begin{align*}
\label{eq:large-R-assouad-theta}
    \theta_{\omega^{(k)}}
    &:=
    2\sqrt{2m}\,\lambda \sum_{j=1}^m e_j + 2\sqrt{2}\,\alpha \sum_{r=1}^p\omega^{(k)}_r e_{m+r} \\
    &=
    2\sqrt{2m}\,\lambda \sum_{j=1}^m e_j + 2\sqrt{2}\,\alpha \sum_{r \in [p] \setminus \{k\}} \omega^{(k)}_r e_{m+r} + 2\sqrt{2}\,\alpha \, \omega^{(k)}_k e_{m+k} \\
    &\overset{\1}{=}
    2\sqrt{2m}\,\lambda \sum_{j=1}^m e_j + 2\sqrt{2}\,\alpha \sum_{r = 1}^p \omega_r e_{m+r} + 2\sqrt{2}\,\alpha \, \omega^{(k)}_k e_{m+k} \\
    &=
    \theta_\omega + 2\sqrt{2}\,\alpha \,e_{m+k},
\end{align*}
where step $\1$ uses the facts that $\omega_r = \omega^{(k)}_r$ for all $r \in [p] \setminus \{k\}$ and $\omega_k = 0$.

Since $\norm{e_{m+k}} = 1$, the claimed result follows.

\subsection{Proof of \Cref{lem:large-R-assouad-hamming}}

Let $\widetilde Q_\omega$ denote the joint law of $(K,Y,M,Z)$ under $P_\omega$. Since $Q_\omega$ is the marginal law of $(Y,M,Z)$, by the data-processing inequality,
\begin{equation}
\label{eq:chi2-dpi-step}
    \chi^2\left( Q_{\omega^{(k)}},Q_\omega \right)
    \le
    \chi^2\big(
        \widetilde Q_{\omega^{(k)}},
        \widetilde Q_\omega
    \big).
\end{equation}

For $\ell\in[p]$ and $i\in\{0,1\}$, let $Q_{i,\ell}$ denote the conditional law of $(Y,M,Z)$ given $K=\ell$ when the selected bit equals $i$. Since $K$ is uniform on $[p]$,
\begin{align}
\label{eq:assouad-joint-neighbor}
    \chi^2\left( \widetilde Q_{\omega^{(k)}}, \widetilde Q_\omega \right)
    &=
    \frac{1}{p} \sum_{\ell=1}^p \chi^2\left( Q_{\omega^{(k)}_\ell,\ell}, Q_{\omega_\ell,\ell} \right)
    =
    \frac{1}{p} \chi^2\left( Q_{\omega^{(k)}_k,k}, Q_{\omega_k,k} \right)
    =
    \frac{1}{p} \, \chi^2\left( Q_{1,k},Q_{0,k} \right),
\end{align}
where the last two equalities use $\omega^{(k)}_\ell=\omega_\ell$ for $\ell\neq k$, together with $\omega_k=0$ and $\omega^{(k)}_k=1$.

Conditional on $K=k$, all $p$ many appended coordinates other than $m+k$ are identically zero, and their mask variables $(M_{m+\ell}:\ell\neq k)$ are independent of the remaining observed data and have the same distribution under $Q_{0,k}$ and $Q_{1,k}$. They therefore do not affect the $\chi^2$-divergence.

After removing these common coordinates, the resulting pair of observed laws is exactly the two-point pair $(Q_0,Q_1)$ from the preceding subsubsection. Thus,
\begin{equation}
\label{eq:conditional-neighbor-is-two-point}
    \chi^2(Q_{1,k},Q_{0,k})
    =
    \chi^2(Q_1,Q_0).
\end{equation}
Combining \Cref{eq:chi2-dpi-step}, \Cref{eq:assouad-joint-neighbor}, and \Cref{eq:conditional-neighbor-is-two-point} yields
\[
    \chi^2\left( Q_{\omega^{(k)}},Q_\omega \right)
    \,\le\,
    \chi^2\big(
        \widetilde Q_{\omega^{(k)}},
        \widetilde Q_\omega
    \big)
    \,\le\,
    \frac{1}{p} \, \chi^2(Q_1,Q_0).
\]

By assumption, $0<\alpha^2\le q$. Thus, applying Lemma~\ref{lem:two-point-divergence} yields
\[
    \chi^2\left(Q_{\omega^{(k)}},Q_\omega\right)
    \le 
    \frac{Cq\alpha^2}{p} \, \mathcal{B}_{m,\lambda,q}.
\]
Finally, by tensorization,
\begin{align*}
    1+
    \chi^2\big( Q_{\omega^{(k)}}^{\otimes n}, Q_\omega^{\otimes n} \big)
    &=
    \bigl\{
        1+\chi^2\left( Q_{\omega^{(k)}},Q_\omega \right)
    \bigr\}^{n}
    \le
    \exp\left\{ \frac{Cnq\alpha^2}{p} \, \mathcal{B}_{m,\lambda,q} \right\},
\end{align*}
which proves the desired result.
\qed

\section{Additional details of simulation in Figure~\ref{fig:intro-simulation}}
\label{sec:non-Gaussian-simulation-details}

In this section, we provide the detailed method that was used to generate Figure~\ref{fig:intro-simulation}.

\paragraph{Data-generating mechanism.}
For an even dimension $d$, let $K:=d/2$ and partition the covariate vector $X\in\mathbb{R}^d$ into the $K$ two-dimensional blocks $X_{[b]}:=(X_{2b-1},X_{2b})$, for $b \in [K]$.
The blocks are mutually independent. For each $b\in[K]$, independently draw $\xi_b$ uniformly from $\{-1,1\}$ and set
\[
X_{[b]}=
\begin{cases}
(\xi_b,-\xi_b),&\text{with probability }1-\epsilon,\\
(\xi_b,L\xi_b),&\text{with probability }\epsilon.
\end{cases}
\]
We take $\epsilon=0.2$ and $L=6$. Thus, $\mathbb{E}[X]=0$, and the covariance matrix of each block is
\[
\Sigma_{\mathrm{blk}}
=
\begin{bmatrix}
1 & \epsilon L-(1-\epsilon)\\
\epsilon L-(1-\epsilon) & (1-\epsilon)+\epsilon L^2
\end{bmatrix}
=
\begin{bmatrix}
1 & 0.4\\
0.4 & 8
\end{bmatrix}.
\]
Since the blocks are independent, the full covariance matrix is $\Sigma_X=I_K\otimes\Sigma_{\mathrm{blk}}$. Note that the covariates are decidedly non-Gaussian.

We choose a signal $\thetastar$ having a single spike in coordinate $2$ and small values in other even coordinates. In particular, all odd coordinates of $\thetastar$ are chosen to be zero, while
\[
\theta_2^\star := \alpha\sqrt{1-\delta^2},
\quad\;\text{and}\quad\;
\theta_{2b}^\star := \frac{\alpha\delta}{\sqrt{K-1}},
\quad\text{for}\,~ b=2,\ldots,K,
\]
where we take $\alpha=1.25$ and $\delta=0.15$. Consequently, $\|\theta^\star\|_2=\alpha$, and the first block of $\thetastar$ carries a fraction $1-\delta^2$ of the squared signal strength.

For each $i\in[n]$, let $X^{(i)}$ be an independent copy of $X$.
Conditional on $X^{(i)}$, the response $Y^{(i)}\in\{-1,1\}$ follows the logistic model
\[
\mathbb{P}\bigl(Y^{(i)}=y\mid X^{(i)}\bigr)
=
\sigma\bigl(y(X^{(i)})^\top\thetastar\bigr),
\qquad
\sigma(u)=\frac{1}{1+e^{-u}}.
\]
Missingness is homogeneous and introduced independently across observations and coordinates. In particular,
\[
M_j^{(i)} \overset{\scriptscriptstyle\mathrm{i.i.d.}}{\sim} \BER(q),
\qquad
Z_j^{(i)}:=M_j^{(i)}X_j^{(i)},
\qquad i\in[n],\ j\in[d],
\]
where $M_j^{(i)}=1$ indicates that the $j$-th coordinate of the $i$-th sample is observed. We consider the two settings $(d,q)\in\{(10,0.40),(100,0.85)\}$, use $B=100$ Monte Carlo replications, and vary the sample size over
\[
n\in\{200,500,1000,2000,5000,10000,20000,50000,100000,250000,500000, 1000000\}.
\]

\paragraph{Procedures compared.}
We compare the following estimators as already mentioned in \Cref{sec:intro}.

Let $\Theta_R:=\{\theta\in\mathbb{R}^d:\|\theta\|_2\leq R\}$, where $R=1.75$, and write
\[
\ell(\theta;y,x)
:=
\log\bigl(1+\exp(-yx^\top\theta)\bigr)
\]
for the logistic loss. We compare the following six estimators.

\begin{enumerate}[label=(\Roman*)]
\item 
As a gold-standard benchmark, we compute the logistic MLE from the complete, unmasked covariates:
\[
    \widehat\theta_{\mathrm{oracle}}^{(n)}
    \in
    \argmin_{\theta\in\Theta_R} \,
    \frac{1}{n}\sum_{i=1}^n 
    \ell\bigl(\theta;Y^{(i)},X^{(i)}\bigr).
\]

\item \emph{Complete-case MLE.}
Let $\mathcal{C}_n:=\{i \in [n] : M^{(i)}_j=1 \text{ for every }j\}$ denote the set of fully observed samples. When $\mathcal{C}_n \neq \varnothing$, we compute
\[
    \widehat\theta_{\mathrm{CC}}^{(n)}
    \in
    \argmin_{\theta\in\Theta_R} \,
    \frac{1}{|\mathcal{C}_n|}\sum_{i\in\mathcal{C}_n} 
    \ell\bigl(\theta;Y^{(i)},X^{(i)}\bigr).
\]
If no complete case is available, we set $\widehat\theta_{\mathrm{CC}}^{(n)}=0$. Notice that a sample is complete with probability $q^d$; thus the expected number of complete-case samples is $nq^d$.

\item \emph{Zero imputation.}
We replace every missing coordinate by zero, resulting in the imputed covariate $Z^{(i)} := M^{(i)} \odot X^{(i)}$, and compute
\[
    \widehat\theta_{\mathrm{ZI}}^{(n)}
    \in
    \argmin_{\theta\in\Theta_R} \,
    \frac{1}{n}\sum_{i=1}^n
    \ell\bigl(\theta;Y^{(i)},Z^{(i)}\bigr).
\]

\item \emph{Gaussian conditional-mean imputation.}
We introduce the moment-matched Gaussian model $G:=\mathcal{N}(0,\Sigma_X)$. We use the true covariance matrix $\Sigma_X$, rather than estimating it from the data. We impute the missing coordinates using
\[
\widetilde X_{\scriptscriptstyle\mathrm{GMI}}^{(i)}
:=
\mathbb{E}_G \Bigl[X^{(i)}\bigm|Z^{(i)},M^{(i)}\Bigr].
\]
The resulting estimator is
\[
    \widehat\theta_{\scriptscriptstyle\mathrm{GMI}}^{(n)}
    \in
    \argmin_{\theta\in\Theta_R} \,
    \frac{1}{n}\sum_{i=1}^n
    \ell\Bigl(\theta; Y^{(i)},\widetilde X_{\scriptscriptstyle\mathrm{GMI}}^{(i)}\Bigr).
\]
Because $\Sigma_X$ is block diagonal, the conditional means are computed separately within each two-dimensional block. A block with neither coordinate observed is imputed by its unconditional mean, which is zero.

\item \emph{Gaussian observed-data MLE.}
This procedure also approximates the distribution of $X$ by $G=\mathcal{N}(0,\Sigma_X)$ and uses the true covariance matrix $\Sigma_X$. Rather than imputing the missing coordinates, it integrates them out of the likelihood. In particular, define
\[
p_G^{(i)}(\theta)
:=
\mathbb{E}_G \Bigl[\sigma\bigl(Y^{(i)} \bigl\langle X^{(i)},\theta\bigr\rangle\bigr) \Bigm| Z^{(i)},M^{(i)}\Bigr].
\]
The Gaussian observed-data MLE is
\[
    \widehat\theta_{\scriptscriptstyle\mathrm{GMLE}}^{(n)}
    \in
    \argmin_{\theta\in\Theta_R}
    \left\{
        -\frac{1}{n}\sum_{i=1}^n
        \log p_G^{(i)}(\theta)
    \right\}.
\]
Conditional on $(Z^{(i)},M^{(i)})$ under $G$, note that $\bigl\langle X^{(i)},\theta \bigr\rangle$ is univariate Gaussian. 
We therefore approximate $p_G^{(i)}(\theta)$ using a $50$-point Gauss--Hermite quadrature. Both Gaussian procedures are supplied with the exact covariance matrix $\Sigma_X$, so that their performance is not affected by estimation error in the first two moments.

\item \emph{Proposed stochastic approximation.}
Recall the loss function $\Phi$ from \Cref{eq:Phi-theta}, and the fact that 
\[
\thetastar
=
\argmin_{\theta\in\Theta_R}\,
\underbrace{2 \, \mathbb{E}\left[
    \exp\left\{-\mfrac{1}{2}YX^\top\theta\right\}
\right]}_{\Phi(\theta)}.
\]
Also, recall that its gradient $\PopOp$, which is the population operator of interest, is not computable from the incomplete data. Thus, our proposed \Cref{alg:Proj-SA} uses as the descent direction the unbiased stochastic operator $\StochOp$ from \Cref{eq:A-hat} in place of $\PopOp$.
We initialize at $\widehat\theta^{(0)}=0$, set $\mu=\lambda_{\min}(\Sigma_X)$. After $n$ iterations, the output of \Cref{alg:Proj-SA} is $\widehat\theta^{(n)}$.

\end{enumerate}

\paragraph{Numerical optimization scheme for competing methods.}

For every competing method $\mathrm{M} \in\{\mathrm{oracle},\mathrm{CC},\mathrm{ZI},\mathrm{GMI},\mathrm{GMLE}\}$ and a sample size $n$, let $F_{\mathrm{M}}^{(n)}$ denote its corresponding objective function. Denote by $\Pi_{\Theta_R}$ the $\ell_2$--projection operator onto $\Theta_R$.

We use $\vartheta^{(\ell)}$ to denote the $\ell$-th iterate of the numerical optimizer. Starting from $\vartheta^{(0)}\in\Theta_R$, projected gradient descent performs the updates
\[
\vartheta^{(\ell+1)}
=
\Pi_{\Theta_R}\Bigl\{
\vartheta^{(\ell)}
-\gamma_\ell\nabla F_{\mathrm{M}}^{(n)}\bigl(\vartheta^{(\ell)}\bigr)
\Bigr\},
\]
until convergence, or after $300$ steps. The step size $\gamma_\ell$ is initialized using a Barzilai--Borwein rule~\citep{BarzilaiBorwein-1988-twopointGD} and then repeatedly halved via Armijo backtracking. We denote the final iterate obtained for procedure $\mathrm{M}$ at sample size $n$ by $\widehat\theta_{\mathrm{M}}^{(n)}$.

At the smallest sample size, i.e., $n = 200$, we take $\vartheta^{(0)} := 0$. For each subsequent sample size, we initialize $\vartheta^{(0)}$ at the estimate obtained for the same procedure at the preceding sample size.

\section{Auxiliary technical lemmas}
\label{appsec:aux-results}

In this appendix, we collect auxiliary technical lemmas that were used in the proofs.

\subsection{Technical lemmas related to Boolean Fourier analysis}

Our first section collects lemmas on Boolean Fourier analysis for the majority function. See~\citet{odonnell2021analysisbooleanfunctions} for a more general treatment.

\subsubsection{Fourier coefficients of the majority function}
In this section we provide several useful properties of the majority function
\begin{equation*}
    f(t):=\operatorname{Maj}_m(t)
    :=\operatorname{sign}\Biggl\{\,\sum_{j=1}^m t_j\Biggr\},
\end{equation*}
whose Fourier coefficients are denoted by $\{g_U:U \subseteq [m]\}$ as in \Cref{eq:def-gU}.

\begin{lemma}
\label{lem:majority-coefficients}
Let $r \in \mathbb{N} \cup \{0\}$ and suppose that $m = 2r + 1$.  The Fourier coefficients $\{g_U:U \subseteq [m]\}$ admit the representation
\[
g_U = 
\begin{cases}
~0\,, & \text{if~}\, |U|~ \text{is even};\\ 
~\displaystyle\frac{1}{2^{m-1}}
        \sum_{\substack{C\subseteq[m],\\ |C|\leq r}}
        (-1)^{|C\cap U|}\,, & \text{if~}\, |U|~ \text{is odd}.
\end{cases}
\]
Moreover, $\sum_{U \subseteq [m]} \, g_U^2 = 1$.
\end{lemma}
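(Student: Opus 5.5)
The plan is to represent the majority function through a threshold indicator, so that every Fourier coefficient becomes a finite sum that can be computed directly. Since $m=2r+1$ is odd, $\sum_j t_j\neq 0$, so $f(t)\in\{-1,1\}$ is well defined. Write $N(t):=|\{j:t_j=-1\}|$. Then $f(t)=1$ if and only if $N(t)\le r$. Identify each $t$ with the set $C(t):=\{j:t_j=-1\}$, so that $\chi_U(t)=(-1)^{|C\cap U|}$. This gives
\[
f(t)=2\,\mathbbm{1}\{|C(t)|\le r\}-1 .
\]

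Substituting this into the definition \eqref{eq:def-gU} gives
\[
g_U=\frac{2}{2^m}\sum_{|C|\le r}(-1)^{|C\cap U|}-\frac{1}{2^m}\sum_{C\subseteq[m]}(-1)^{|C\cap U|}.
\]
The second sum equals $2^m\mathbbm{1}\{U=\varnothing\}$. For $U\neq\varnothing$ the first term therefore already has the claimed form $2^{-(m-1)}\sum_{|C|\le r}(-1)^{|C\cap U|}$.

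It then remains to show that $g_U=0$ when $|U|$ is even. This follows from the oddness of majority, $f(-t)=-f(t)$, which holds because $\sum_j(-t_j)=-\sum_j t_j$. Combined with $\chi_U(-t)=(-1)^{|U|}\chi_U(t)$, the change of variables $t\mapsto -t$ in \eqref{eq:def-gU} gives $g_U=(-1)^{|U|+1}g_U$, so $g_U=0$ for even $|U|$. In particular this covers $U=\varnothing$, so no separate care is needed for the subtracted term. Since every odd $U$ is nonempty, the displayed formula holds exactly for odd $|U|$.

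Finally, the identity $\sum_U g_U^2=1$ is Parseval's identity for the uniform measure on $\{-1,1\}^m$ \citep{odonnell2021analysisbooleanfunctions}. The characters $\chi_U$ are orthonormal in $L^2(\mathrm{Unif})$, so
\[
\sum_U g_U^2=2^{-m}\sum_t f(t)^2=1,
\]
because $f^2\equiv 1$.

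I don't expect a real obstacle here. The only step needing care is the bookkeeping of the $U=\varnothing$ term in the indicator representation, and the parity argument disposes of it.
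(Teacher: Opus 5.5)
Your proof is correct. The even case and the Parseval step are the same as in the paper: you use oddness of $\operatorname{Maj}_m$ together with $\chi_U(-t)=(-1)^{|U|}\chi_U(t)$. The only difference is how you reduce the odd case to a sum over $|C|\le r$.

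The paper writes $g_U=2^{-m}\sum_{C}f_C(-1)^{|C\cap U|}$ with $f_C=\pm1$ and pairs each $C$ with its complement. Since $m$ is odd, exactly one of $C,C^c$ has size at most $r$, and $f_{C^c}=-f_C$. Because $|U|$ is odd, $(-1)^{|C^c\cap U|}=-(-1)^{|C\cap U|}$, so the two terms in each pair coincide.

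You instead write $f=2\mathbbm{1}\{|C(t)|\le r\}-1$ and remove the constant part using $\sum_{C\subseteq[m]}(-1)^{|C\cap U|}=2^{m-|U|}\prod_{j\in U}(1-1)=2^m\mathbbm{1}\{U=\varnothing\}$. State this one-line factorization rather than just asserting the identity.

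Your route never uses the parity of $|U|$ in deriving the formula, so it gives the representation for every nonempty $U$. Combined with the parity argument, it also shows that $\sum_{|C|\le r}(-1)^{|C\cap U|}=0$ for nonempty even $U$. The paper's pairing argument is equally short and self-contained, but it genuinely needs $|U|$ odd.
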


\begin{proof}
We first consider the case in which $\lvert U \rvert$ is even.  Recall that
\[
g_U = \frac{1}{2^m} \sum_{t\in\{-1,1\}^m} f(t) \, \chi_U(t).
\]
Since the majority function is odd and the Fourier characters satisfy $\chi_U(-x)=(-1)^{|U|}\chi_U(x)$, it follows that $g_U = (-1)^{|U|+1}g_U$.  Since $\lvert U \rvert$ is even, we deduce that $g_U = 0$. 

Next, consider the case in which $\lvert U \rvert$ is  odd.  For $x\in\{-1,1\}^m$, define the set $C(x) \subseteq [m]$ to denote the indices of the negative components of $x$:
\[
    C(x):=\{j\in[m]:x_j=-1\},
\]
and use this to write the Fourier characters as $\chi_U(x)=(-1)^{|C(x)\cap U|}$.  Further, note that since $m = 2r + 1$, it holds that $f(x) = 1$ if and only if $\lvert C(x) \rvert \leq r$.  Combining these two facts yields the equivalent representation 
\[
g_U 
= \frac{1}{2^m} \sum_{t\in\{-1,1\}^m} f(t) \, \chi_U(t) 
= \frac{1}{2^m} \sum_{C \subseteq [m]} f_C \cdot (-1)^{|C\cap U|},
\]
where $f_C=1$ if $|C|\leq r$, and $f_C=-1$ if $|C|\geq r+1$.

Continuing, pair every set $C\subseteq[m]$ with its complement $C^c$. Since $m=2r+1$, exactly one member of each pair has cardinality at most $r$. Moreover, $f_{C^c}=-f_C$, and
\[
    (-1)^{|C^c\cap U|}
    =
    (-1)^{|U|-|C\cap U|}
    =
    -(-1)^{|C\cap U|},
\]
where the last equality uses the fact that $|U|$ is odd. Hence the two terms in each pair are equal. It thus follows that
\[
    g_U
    =
    \frac{1}{2^{m-1}}
    \sum_{\substack{C\subseteq[m]\\|C|\leq r}}
    (-1)^{|C\cap U|},
\]
as desired.

Finally, we note that the relation $\sum_{U \subseteq [m]} g_U^2 = 1$ follows from Parseval's identity.
\end{proof}

\begin{lemma}
\label{lem:boolean-cancellation}
Let $r \in \mathbb{N} \cup \{0\}$ and let $m = 2r + 1$.  The Fourier coefficients $\{g_U\}_{U \subseteq [m]}$ of the majority function satisfy the following.  For every $S\subseteq[m]$ satisfying $|S|\leq r$, and every $C\subseteq S$,
\[
    \sum_{\substack{
        U \subseteq [m],\\
        |U|\ \mathrm{odd},\\
        U\cap S=C
    }} g_U
    \,=\,
    \mathbbm{1}\{C=\varnothing\}.
\]
\end{lemma}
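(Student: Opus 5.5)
The plan is to read the sum as a Fourier-analytic quantity: an average of the majority function $f$ over a subcube, with the coordinates in $S$ frozen. We have $|S|\le r$ and $C\subseteq S$. For $U$ with $U\cap S=C$ we write $U=C\cup W$ with $W\subseteq S^c$. By \Cref{lem:majority-coefficients}, $g_U=0$ whenever $|U|$ is even. So the parity restriction in the sum is vacuous, and
\[
\Sigma_C:=\sum_{\substack{U\subseteq[m],\ |U|\ \mathrm{odd},\\ U\cap S=C}} g_U=\sum_{W\subseteq S^c} g_{C\cup W}.
\]

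The key identity is the standard restriction formula. Fix $t_S\in\{-1,1\}^S$ and let $f_{t_S}(t_{S^c}):=f(t_S,t_{S^c})$. Its Fourier coefficient on the set $W\subseteq S^c$ is $\sum_{C\subseteq S} g_{C\cup W}\chi_C(t_S)$. Evaluating $f_{t_S}$ at the point $t_{S^c}=\boldsymbol{1}$ gives
\[
f(t_S,\boldsymbol{1}_{S^c})=\sum_{C\subseteq S}\chi_C(t_S)\sum_{W\subseteq S^c} g_{C\cup W}=\sum_{C\subseteq S}\Sigma_C\,\chi_C(t_S).
\]
This is simply the Fourier expansion~\eqref{eq:def-gU} evaluated at $t=(t_S,\boldsymbol{1})$, regrouped according to $U\cap S$.

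Next we evaluate the left-hand side. Since $|S^c|=m-|S|\ge r+1$, setting the coordinates in $S^c$ equal to $+1$ already produces at least $r+1$ positive entries among $m=2r+1$. Hence $\sum_j t_j\ge (r+1)-|S|+\ldots>0$; more precisely, $\sum_j t_j\ge (m-|S|)-|S|\ge 1$. Therefore $f(t_S,\boldsymbol{1})=1$ for every $t_S$.

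Finally, the function $t_S\mapsto 1$ has the unique Fourier expansion $\chi_\varnothing$ on $\{-1,1\}^S$, because the characters $\chi_C$, $C\subseteq S$, form an orthonormal basis. Matching coefficients gives $\Sigma_C=\mathbbm{1}\{C=\varnothing\}$.

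No step is a real obstacle. The only care needed is the regrouping: every $U\subseteq[m]$ decomposes uniquely as $C\cup W$ with $C=U\cap S$, so the double sum over $(C,W)$ covers the full sum with no repeats or omissions. The inequality $|S|\le r$ is used only to guarantee that majority is constant on the subcube.
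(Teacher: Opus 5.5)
Your proof is correct and follows the paper's own argument: expand $f$ at $(t_S,\boldsymbol{1}_{S^c})$, regroup by $C=U\cap S$, note that majority is identically $1$ there because $m-2|S|\ge 1$, and match coefficients by uniqueness of the Fourier expansion on $\{-1,1\}^S$. The only differences are that you invoke the vanishing of $g_U$ for even $|U|$ (\Cref{lem:majority-coefficients}) at the start rather than the end, and that you should delete the garbled first inequality containing the ellipsis in favour of your ``more precisely'' bound.
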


\begin{proof}
Fix $S\subseteq[m]$ with $|S|\leq r$. For every $x_S\in\{-1,1\}^S$, consider the vector $(x_S,\boldsymbol{1}_{S^c})\in\{-1,1\}^m$. Regardless of the values of $x_S$, this vector has at least $m-|S|$ positive coordinates and at most $|S|$ negative coordinates. Since
\[
    m-|S| \geq 2r+1-r = r+1 > |S|,
\]
the positive coordinates form a strict majority. Therefore,
\[
    f(x_S,\boldsymbol{1}_{S^c})=1
    \qquad
    \text{for every}\,~x_S\in\{-1,1\}^S.
\]
Using the Fourier expansion of $f$, we thus have
\begin{align}
    1
    =
    f(x_S,\boldsymbol{1}_{S^c})
    =
    \sum_{U\subseteq[m]} g_U \chi_U(x_S,\boldsymbol{1}_{S^c}) \nonumber
    &=
    \sum_{U\subseteq[m]} g_U \prod_{j\in U\cap S} x_j \nonumber \\
    &=
    \sum_{C\subseteq S} \Bigg(\sum_{\substack{U\subseteq[m]\\U\cap S=C}} g_U\Bigg) \prod_{j\in C} x_j \nonumber\\
    &=
    \Bigg(\sum_{\substack{U\subseteq[m]\\U\cap S=\varnothing}} g_U\Bigg) +
    \sum_{\varnothing \neq C\subseteq S} \Bigg(\sum_{\substack{U\subseteq[m]\\U\cap S=C}} g_U\Bigg) \prod_{j\in C} x_j.
\label{eq:extended-maj-fourier-1}
\end{align}

On the other hand, note that the constant function $1$ has Fourier coefficient $1$ at $C=\varnothing$ and $0$ otherwise, since
\begin{align}
\label{eq:extended-maj-fourier-2}
    1
    =
    1 + \sum_{\varnothing \neq C\subseteq S} 0 \cdot \prod_{j\in C} x_j
\end{align}

Since the functions $\{\chi_C:C\subseteq S\}$ form an orthonormal basis for the space of real-valued functions on $\{-1,1\}^S$~\citep[Theorem~1.5]{odonnell2021analysisbooleanfunctions}, the Fourier expansion is unique. Therefore, comparing the Fourier coefficients of $f(x_S,\boldsymbol{1}_{S^c}) \equiv 1$ from \Cref{eq:extended-maj-fourier-1,eq:extended-maj-fourier-2}, we obtain
\[
    \sum_{\substack{U\subseteq[m]\\U\cap S=C}} g_U
    =
    \mathbbm{1}\{C=\varnothing\},
    \qquad\text{for every}\,~ C\subseteq S.
\]

Finally, we use the result from \Cref{lem:majority-coefficients}~\citep[or,][Theorem~5.19]{odonnell2021analysisbooleanfunctions} that $g_U=0$ for every even-cardinality $U$, so the inner sum may equivalently be restricted to odd-cardinality $U$. Thus,
\[
    \sum_{\substack{
        U \subseteq [m]\\
        |U|\ \mathrm{odd}\\
        U\cap S=C
    }}
    g_U
    \,=\,
    \mathbbm{1}\{C = \varnothing\},
\]
which completes the proof.
\end{proof}

For the next pair of lemmas, it is useful to define the coefficients 
\begin{equation}
    \label{eq:def-hU}
    h_U := \sum_{W\supseteq U} g_W
    \qquad
    \text{for all}\,~ U \subseteq [m].
\end{equation}

\begin{tcolorbox}
\begin{lemma}
\label{lem:exact-hU}
For each $U \subseteq [m]$, the quantity $h_U$ as in \Cref{eq:def-hU} depends only on $k:=|U|$, and has the following closed-form expression
\[
    h_U
    =
    \begin{cases}
        \, 1, & \text{if}\;~ k=0, \\ 
        \, 0, & \text{if}\;~ 1\le k\le r, \\ 
        \, \displaystyle \dfrac{1}{2^{k-1}} \, (-1)^r \, \binom{k-1}{r}, & \text{if}\;~ r+1\le k\le m.
    \end{cases}
\]
\end{lemma}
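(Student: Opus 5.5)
The plan is to interpret $h_U$ as the value of a restricted majority function and then count. Taking $t_j = 1$ for $j\notin U$ in the Fourier expansion~\eqref{eq:def-gU}, and writing $x_U\in\{-1,1\}^U$ for the remaining coordinates, gives
\[
f(x_U,\boldsymbol{1}_{U^c}) = \sum_{C\subseteq U}\Bigl(\sum_{W\cap U = C} g_W\Bigr)\chi_C(x_U).
\]
The coefficient of $\chi_U(x_U)$, i.e.\ the top coefficient $C=U$, is $\sum_{W\supseteq U} g_W = h_U$. So $h_U$ equals the top Fourier coefficient of the restricted function $F_U(x_U) := f(x_U,\boldsymbol{1}_{U^c})$ on $\{-1,1\}^k$, namely
\[
h_U = 2^{-k}\sum_{x_U} F_U(x_U)\prod_{j\in U}x_j .
\]
This already shows that $h_U$ depends only on $k=|U|$, by symmetry of majority.

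The three cases then follow from this formula.
\begin{itemize}
\item For $k=0$, $h_\varnothing=f(\boldsymbol{1})=1$.
\item For $1\le k\le r$, the argument of \Cref{lem:boolean-cancellation} shows $F_U\equiv 1$, since $m-k\ge r+1$ of the coordinates are positive. Its top coefficient for nonempty $U$ is therefore $0$. (Alternatively, cite \Cref{lem:boolean-cancellation} with $S=U$, $C=U$, noting even $|W|$ contribute zero by \Cref{lem:majority-coefficients}.)
\item For $r+1\le k\le m$, let $j$ be the number of $-1$'s among $x_U$. Then $F_U=1$ iff $j\le r$, because the total number of negatives is $j$ and $m=2r+1$. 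Hence
\[
h_U = 2^{-k}\sum_{j=0}^k\binom{k}{j}(-1)^j s_j,
\qquad s_j=\begin{cases} 1, & j\le r,\\ -1, & j>r.\end{cases}
\]
\end{itemize}

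The main step is evaluating this alternating sum. Since $\sum_{j=0}^k\binom{k}{j}(-1)^j=0$ for $k\ge1$, the sum equals $2\sum_{j=0}^r(-1)^j\binom{k}{j}$. The standard partial alternating binomial identity $\sum_{j=0}^r(-1)^j\binom{k}{j}=(-1)^r\binom{k-1}{r}$ then gives the claim. This identity is proved by writing $\binom{k}{j}=\binom{k-1}{j}+\binom{k-1}{j-1}$ and telescoping, or by a one-line induction on $r$. Together these yield
\[
h_U = 2^{1-k}(-1)^r\binom{k-1}{r}.
\]
I expect no real obstacle beyond carefully justifying the identification of $h_U$ with the top coefficient of $F_U$, which relies on uniqueness of Fourier expansions on $\{-1,1\}^U$ exactly as in the proof of \Cref{lem:boolean-cancellation}.
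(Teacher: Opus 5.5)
Your proposal is correct and follows essentially the same route as the paper. Both identify $h_U$ as the top Fourier coefficient of the restricted majority $f(\cdot,\boldsymbol{1}_{U^c})$, count the $-1$ coordinates to reduce to an alternating binomial sum, and evaluate the partial sum $\sum_{j\le r}(-1)^j\binom{k}{j}=(-1)^r\binom{k-1}{r}$ by Pascal's rule telescoping. For $1\le k\le r$ you argue directly that the restriction is constant, while the paper observes that the sum is a complete alternating binomial sum; these are the same fact.
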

\end{tcolorbox}

\begin{proof}
Fix $U \subseteq [m]$, and write $k:=|U|$. Define the restriction
\[
    f_U(z) := f(z,\boldsymbol 1_{U^c}),
    \qquad
    \text{for}\;~z\in\{-1,1\}^U.
\]
Restricting the Fourier expansion of $f$ gives
\[
    f_U(z) = \sum_{W\subseteq[m]}g_W \cdot \chi_{W\cap U}(z).
\]
Therefore, the Fourier coefficient of the full monomial $\chi_U$ in $f_U$ is
\[
    \widehat{f}_U(U) = \sum_{W\supseteq U}g_W = h_U.
\]
Hence,
\begin{equation}
\label{eq:hU-restricted-Fourier}
    h_U
    =
    \frac{1}{2^{k}} \sum_{z\in\{-1,1\}^U} f_U(z) \cdot \chi_U(z).
\end{equation}

Suppose that $z$ has exactly $j$ coordinates equal to $-1$. Then, $\chi_U(z)=(-1)^j$, and the sum of all $m$ coordinates of $(z,\boldsymbol 1_{U^c})$ is
\[
    (m-k)+(k-2j) = m-2j = 2(r-j)+1.
\]
Since $m=2r+1$,
\[
    f_U(z)
    =
    \begin{cases}
        +1, & \text{if}\,~ j\le r,\\
        -1, & \text{if}\,~ j\ge r+1.
    \end{cases}
\]
Thus,
\begin{equation}
\label{eq:hU-binomial-sum}
    h_U
    = 2^{-k}
    \left\{
        \sum_{j=0}^{\min\{r,k\}}
        (-1)^j\binom{k}{j}
        -
        \sum_{j=r+1}^{k}
        (-1)^j\binom{k}{j}
    \right\},
\end{equation}
where the second sum is $0$ when $k\le r$.

\begin{itemize}
\item 
If $k=0$, then $h_\varnothing=f(\boldsymbol 1_m)=1$. Suppose henceforth that $k\ge1$. Since
\[
    \sum_{j=0}^k(-1)^j\binom{k}{j}=0,
\]
\Cref{eq:hU-binomial-sum} becomes
\[
    h_U
    =
    2^{1-k} \sum_{j=0}^{\min\{r,k\}} (-1)^j\binom{k}{j}.
\]
\item 
If $1\le k\le r$, the sum on the right is the complete alternating binomial sum, and hence $h_U=0$.

\item
If $k\ge r+1$,  we have
\begin{align*}
    h_U = 2^{1-k} \sum_{j=0}^r(-1)^j\binom{k}{j}
    &=
    2^{1-k} \sum_{j=0}^r(-1)^j
    \left\{
        \binom{k-1}{j}
        +
        \binom{k-1}{j-1}
    \right\}
    \\
    &=
    2^{1-k} \left[\sum_{j=0}^r(-1)^j\binom{k-1}{j}
    -
    \sum_{j=0}^{r-1}(-1)^j\binom{k-1}{j}\right]
    \\
    &=
    \frac{1}{2^{k-1}} (-1)^r\binom{k-1}{r}.
\end{align*}
\end{itemize}
This proves the claimed formula.
\end{proof}

\subsubsection{Fourier coefficient representation of \texorpdfstring{$\mathcal{W}_{m,q}$}{Wmq}}

Recall the definition of $\mathcal{W}_{m,q}$ from \Cref{eq:def-Wmq}. We derive its exact closed-form expression as a weighted sum of the family of coefficients $\{h_U : U \subseteq [m]\}$ defined in \Cref{eq:def-hU}.

\begin{tcolorbox}
\begin{lemma}
\label{lem:conditional-parseval}
$\mathcal{W}_{m,q}$, defined in \Cref{eq:def-Wmq}, admits the representation:
\begin{equation*}
    \mathcal{W}_{m,q}
    ~=
    \sum_{\varnothing\neq U \subseteq [m]}
    \left\{
        \frac{q(1-\rho^2)}{\rho^2}
    \right\}^{|U|}
    h_U^2.
\end{equation*}
Consequently, $\mathcal{W}_{m,q}$ has the following explicit expression in terms of $q$, $\rho$, and $m$:
\begin{equation*}
    \mathcal{W}_{m,q}
    \,=\,
    4 \sum_{k=r+1}^m
    \binom{m}{k} \binom{k-1}{r}^{\!2}
    \left\{
        \frac{q(1-\rho^2)}{4\rho^2}
    \right\}^{\!k}.
\end{equation*}
\end{lemma}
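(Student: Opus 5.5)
The plan is to compute $J_S(T_S)$ explicitly in an orthonormal basis adapted to the biased product measure of $T$. Conditional Parseval then gives $\mathbb{E}_0[\Delta_S^2\mid S]$, and averaging over the independent random set $S$ gives the result. Throughout we use that $S=\{j\in[m]:M_j=1\}$ is independent of $T$ under $P_0$ (MCAR), and that each $j$ lies in $S$ independently with probability $q$.

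\emph{Step 1: orthonormal basis.} For $j\in[m]$ set $\varphi_j:=(T_j-\rho)/\sqrt{1-\rho^2}$. Under $P_0$ these are independent with mean $0$ and variance $1$, so the products $\varphi_W:=\prod_{j\in W}\varphi_j$, $W\subseteq[m]$, are orthonormal in $L^2(P_0)$, and $\varphi_\varnothing\equiv 1$. Substituting $T_j=\rho+\sqrt{1-\rho^2}\,\varphi_j$ gives
\[
\chi_C(T)=\sum_{W\subseteq C}\rho^{|C|-|W|}(1-\rho^2)^{|W|/2}\varphi_W .
\]

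\emph{Step 2: expand the conditional expectation.} Fix a deterministic $S$. As in the proof of \Cref{lem:two-point-validity}(a),
\[
J_S(T_S)=\sum_{C\subseteq S}\rho^{-|C|}\chi_C(T_S)\,a_{S,C},\qquad a_{S,C}:=\sum_{U:\,U\cap S=C}g_U .
\]
Here the restriction to odd $|U|$ can be dropped because $g_U=0$ for even $|U|$ (\Cref{lem:majority-coefficients}). Insert Step 1 and collect the coefficient of $\varphi_W$ for $W\subseteq S$. The powers of $\rho$ combine to $\rho^{-|W|}(1-\rho^2)^{|W|/2}$ times $\sum_{W\subseteq C\subseteq S}a_{S,C}$. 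The key observation is that
\[
\sum_{W\subseteq C\subseteq S}a_{S,C}=\sum_{U:\,U\cap S\supseteq W}g_U=\sum_{U\supseteq W}g_U=h_W ,
\]
since $W\subseteq S$. Hence
\[
J_S(T_S)=\sum_{W\subseteq S}h_W\Bigl(\tfrac{1-\rho^2}{\rho^2}\Bigr)^{|W|/2}\varphi_W .
\]
The $W=\varnothing$ term equals $h_\varnothing=1$ by \Cref{lem:exact-hU}, so $\Delta_S$ is exactly the sum over $\varnothing\neq W\subseteq S$.

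\emph{Step 3: Parseval and averaging.} By orthonormality,
\[
\mathbb{E}_0[\Delta_S^2\mid S]=\sum_{\varnothing\ne W\subseteq S}h_W^2\Bigl(\tfrac{1-\rho^2}{\rho^2}\Bigr)^{|W|}.
\]
Taking the expectation over $S$, which is independent of $T$, and using $\mathbb{P}(W\subseteq S)=q^{|W|}$ gives the first representation.

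\emph{Step 4: explicit formula.} By \Cref{lem:exact-hU}, $h_W$ vanishes for $1\le|W|\le r$ and satisfies $h_W^2=4^{1-k}\binom{k-1}{r}^2$ when $|W|=k\ge r+1$. There are $\binom{m}{k}$ sets of size $k$, so
\[
\mathcal{W}_{m,q}=4\sum_{k=r+1}^m\binom{m}{k}\binom{k-1}{r}^{2}\Bigl\{\tfrac{q(1-\rho^2)}{4\rho^2}\Bigr\}^{k},
\]
which is the claimed closed form.

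The main obstacle is the bookkeeping in Step 2: showing that, after the change of basis, the coefficient of $\varphi_W$ depends only on $h_W$ and not on $S$. This hinges on the telescoping identity $\sum_{C\supseteq W}a_{S,C}=h_W$ for $W\subseteq S$. Everything else is routine.
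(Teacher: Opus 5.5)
Your proof is correct and is essentially the paper's argument. You use the same centered product basis $\varphi_W=\psi_W$, the same coefficients $(\tfrac{1-\rho^2}{\rho^2})^{|W|/2}h_W$, conditional Parseval, averaging over the independent set $S$ via $\mathbb{P}(W\subseteq S)=q^{|W|}$, and then \Cref{lem:exact-hU}.

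The only difference is the order of steps. The paper first expands $J(T)$ itself in the $\psi_U$ basis, so $h_U$ appears from a single interchange of sums. It then conditions on $T_S$ simply by dropping the terms with $U\not\subseteq S$. That ordering lets it skip your telescoping identity $\sum_{W\subseteq C\subseteq S}a_{S,C}=h_W$.
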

\end{tcolorbox}

\begin{proof}
Let us denote $\sigma_\rho:=\sqrt{1-\rho^2}$, and for all $U \subseteq [m] \setminus \varnothing$, define
\begin{equation}
\label{eq:psi-T}
    \psi_U(T)
    :=
    \prod_{j\in U} \underbrace{\frac{T_j-\rho}{\sigma_\rho}}_{=:\,\beta_{j}(\rho)},
    \qquad
    \psi_\varnothing\equiv1.
\end{equation}

Since $T_1,\ldots,T_m$ are independent, with $\mathbb{E}_0[T_j]=\rho$ and
$\operatorname{Var}_0(T_j)=1-\rho^2$, the collection $\{\psi_U:U \subseteq [m]\}$ is orthonormal under the null distribution of $T$. To see this, note the following.
\begin{itemize}
    \item For $A,B \subseteq [m]$ such that $A \neq B$, we have
    \begin{align*}
    \mathbb{E}\left[ \psi_A(T) \cdot \psi_B(T) \right]
    &=
    \mathbb{E}\left[ \prod_{j\in A \setminus B} \beta_{j}(\rho) \right] \cdot 
    \mathbb{E}\left[ \prod_{j\in B \setminus A} \beta_{j}(\rho) \right] \cdot 
    \mathbb{E}\left[ \prod_{j\in A \cap B} \beta_{j}(\rho)^2 \right] \\
    &= \prod_{j\in A \Delta B} \mathbb{E}\left[ \beta_{j}(\rho) \right] \cdot \prod_{j\in A \cap B} \mathbb{E}\left[ \beta_{j}(\rho)^2 \right] \\
    &= 0 \cdot \prod_{j\in A \cap B} \mathbb{E}\left[ \beta_{j}(\rho)^2 \right] \\
    &= 0.
    \end{align*}

    \item For $A = B$, we have
    \begin{align*}
    \mathbb{E}\left[ \psi_A(T)^2 \right]
    =
    \mathbb{E}\left[ \prod_{j\in A} \beta_{j}(\rho)^2 \right] 
    &= \prod_{j\in A} \mathbb{E}\left[ \beta_{j}(\rho)^2 \right] \\
    &= \prod_{j\in A} \frac{1}{1-\rho^2}\var\left[ T_{j} \right] \\
    &= \prod_{j\in A} \frac{1}{1-\rho^2} \left[ \frac{\exp(\lambda) + \exp(-\lambda)}{2 \cdot \cosh(\lambda)} - \rho^2 \right] 
    = \prod_{j\in A} \frac{1-\rho^2}{1-\rho^2} = 1.
    \end{align*}
\end{itemize}

Recall the definition of $\beta_j(\rho)$ in \Cref{eq:psi-T}. For every $j\in[m]$, write
\[
    T_j
    =
    \rho + \sigma_\rho \frac{T_j-\rho}{\sigma_\rho}
    :=
    \rho + \sigma_\rho \, \beta_j(\rho).
\]
Thus, for every $W\subseteq[m]$, we have
\begin{align*}
    \chi_W(T) = \prod_{j \in W} (\rho + \sigma_\rho \beta_j(\rho)) 
    &= \prod_{j \in W} \left( \sum_{\mathcal{I}_j \in \{0, 1\}} \rho^{1 - \mathcal{I}_j} (\sigma_\rho \beta_j(\rho))^{\mathcal{I}_j} \right) \\ 
    &= \sum_{(\mathcal{I}_j)_{j \in W} \in \{0, 1\}^{|W|}} \prod_{j \in W} \rho^{1 - \mathcal{I}_j} (\sigma_\rho \beta_j(\rho))^{\mathcal{I}_j} \\ 
    &= \sum_{U \subseteq W} \left( \prod_{j \in W \setminus U} \rho \right) \left( \prod_{j \in U} \sigma_\rho \beta_j(\rho) \right) \quad \text{(letting } U = \{j \in W : \mathcal{I}_j = 1\}) \\ 
    &= \sum_{U \subseteq W} \rho^{|W| - |U|} \sigma_\rho^{|U|} \prod_{j \in U} \beta_j(\rho)\\ 
    &= \rho^{|W|} \sum_{U \subseteq W} \left(\frac{\sigma_\rho}{\rho}\right)^{|U|} \prod_{j \in U} \beta_j(\rho)
    ~=~ \rho^{|W|} \sum_{U \subseteq W} \left(\frac{\sigma_\rho}{\rho}\right)^{|U|} \psi_U(T)
\end{align*}
Using the definition of $J$ in \Cref{eq:def-J}, and interchanging the two finite sums, we obtain
\begin{align*}
    J(T)
    =
    \sum_{W\subseteq[m]}
    g_W\rho^{-|W|}\chi_W(T)
    &= 
    \sum_{W\subseteq[m]}
    g_W\rho^{-|W|} 
    \rho^{|W|} \sum_{U \subseteq W} \left(\frac{\sigma_\rho}{\rho}\right)^{|U|} \psi_U(T)
    \\
    &=
    \sum_{U \subseteq [m]}
    \left(
        \frac{\sigma_\rho}{\rho}
    \right)^{|U|}
    \left(
        \sum_{W\supseteq U}g_W
    \right)
    \psi_U(T)\\ 
    &=
    \sum_{U \subseteq [m]}
    \left(
        \frac{\sigma_\rho}{\rho}
    \right)^{|U|}
    h_U \, \psi_U(T).
\end{align*}
For notational convenience, let us  define
\begin{equation}
    \label{eq:def-aU}
    a_U
    :=
    \left(
        \frac{\sigma_\rho}{\rho}
    \right)^{|U|}
    h_U,
    \qquad
    a_\varnothing = 1.
\end{equation}

Fix any deterministic set $\Bar{S}\subseteq[m]$. If $U\subseteq \Bar{S}$, then $\psi_U(T)$ is measurable with respect to $T_{\Bar{S}}$. If $U\nsubseteq \Bar{S}$, choose $j\in U\setminus \Bar{S}$. The factor $\beta_j(\rho) = (T_j-\rho)/\sigma_\rho$ is independent of $T_{\Bar{S}}$ and has mean zero. Hence,
\[
    \mathbb{E}_0[\psi_U(T)\mid T_{\Bar{S}}]
    \,=\,
    \begin{cases}
        \psi_U(T), & \text{if}\,~ U \subseteq \Bar{S},\\ 
        0, & \text{if}\,~ U\nsubseteq \Bar{S}.
    \end{cases}
\]
Therefore, since $a_{\varnothing} = 1$,
\[
    J_{\Bar{S}}(T_{\Bar{S}}) - 1
    =
    \sum_{\varnothing\neq U\subseteq \Bar{S}} a_U \, \psi_U(T).
\]
By orthonormality of the collection $\{\psi_U:U \subseteq [m]\}$, Parseval's theorem yields
\[
    \mathbb{E}_0
    \left[
        \left\{
            J_{\Bar{S}}(T_{\Bar{S}})-1
        \right\}^2
    \right]
    =
    \sum_{\varnothing\neq U\subseteq \Bar{S}} a_U^2
    =
    \sum_{\varnothing\neq U\subseteq [m]} a_U^2 \cdot \mathbf{1} \left\{U\subseteq \Bar{S}\right\}.
\]
Averaging over the independent random set $S$ yields
\begin{align*}
    \mathcal{W}_{m,q}
    =
    \sum_{\varnothing\neq U \subseteq [m]}
    a_U^2 \cdot \mathbb{P}(U\subseteq S)
    &=
    \sum_{\varnothing\neq U \subseteq [m]}
    q^{|U|}a_U^2 \\
    &=
    \sum_{\varnothing\neq U \subseteq [m]}
    q^{|U|}
    \left(
        \frac{\sigma_\rho}{\rho}
    \right)^{2|U|}
    h_U^2
    ~=
    \sum_{\varnothing\neq U \subseteq [m]}
    \left\{
        \frac{q(1-\rho^2)}{\rho^2}
    \right\}^{|U|}
    h_U^2,
\end{align*}
which proves the first claim.

Next, substituting the expression for $h_U$ from \Cref{lem:exact-hU} in the preceding representation yields
\begin{align*}
    \mathcal{W}_{m,q}
    &=
    \sum_{\{U \subseteq [m] \,:\, |U| \geq r+1\}}
    \left\{
        \frac{q(1-\rho^2)}{\rho^2}
    \right\}^{|U|}
    h_U^2
    \\
    &=\sum_{k=r+1}^m
    \binom{m}{k} 
    \left\{
        \frac{q(1-\rho^2)}{\rho^2}
    \right\}^{\!k}
    \left[
        \frac{1}{2^{k-1}} \ (-1)^r \ \binom{k-1}{r}
    \right]^2
    \\
    &= 4 \sum_{k=r+1}^m
    \binom{m}{k} \binom{k-1}{r}^{\!2}
    \left\{
        \frac{q(1-\rho^2)}{4\rho^2}
    \right\}^{\!k},
\end{align*}
as in the second claim.
\end{proof}

\subsection{Supporting lower bound lemmas for moderate curvature \texorpdfstring{$\mu$}{mu}}
\label{proof:lem:moderate-mu-lower-bound}

This section presents the portion of the lower bound argument for the regime $R\mu > \sqrt{q}$, which was claimed in the proof of \Cref{claim:lower-bound-regimes-II-III}. Recall that we first handled the case $\mu \in (0,\mu_{\mathrm{crit}}]$ using a Boolean construction with an odd number $m\asymp\mu^{-1}$ of auxiliary covariates. The argument required $m\ge5$, which was guaranteed by our choice of $\mu_{\mathrm{crit}}$ in \Cref{eq:mu-critical}. When $\mu \in (\mu_{\mathrm{crit}},1/3]$, however, a choice of $m$ of order $\mu^{-1}$ need not satisfy $m\ge5$, so the preceding construction cannot be applied directly.

For this range, $\mu$ is bounded below by a universal constant, and hence $\mu^{-1}$ is of constant order. A fixed-dimensional construction is therefore sufficient. We specialize the two-point construction of \Cref{subsubsec:large-R-two-point} to $m=1$ and use it to establish the required lower bound for $\mu \in (\mu_{\mathrm{crit}},1/3]$.

\begin{tcolorbox}
\begin{lemma}
\label{lem:two-dimensional-patch}
Consider the two-point construction specialized to $m=1$, and let $\lambda,\alpha>0$. Suppose that
\begin{equation}
\label{eq:two-dimensional-patch-conditions}
    \alpha^2\le q,\qquad
    \alpha\le\frac{\tanh(\lambda)}{2},\qquad
    \frac{\sinh(2\alpha)}{\sinh(2\lambda)}\le1-2\mu,\qquad
    8(\lambda^2+\alpha^2)\le R^2.
\end{equation}
Then, for all $i\in\{0,1\}$, $P_i$ is a well-specified logistic model satisfying
\[
    \|X\|_2=1\quad\text{a.s.},\quad\;
    \E_i[XX^\top]\succeq\mu I_2,\quad\;
    \theta_i\in\Theta_R,\quad\;
    \text{and}\quad\;
    \|\theta_1-\theta_0\|_2^2 = 8\alpha^2.
\]
Moreover, for a universal constant $C>0$,
\[
    \chi^2(Q_1,Q_0)
    \le
    C\frac{q^2\alpha^2}{\sinh^2(\lambda)}.
\]
Consequently, there exist sufficiently small universal constants $\widetilde{c},c>0$ such that, if
$
    \alpha^2 = \widetilde{c}\, \frac{\sinh^2(\lambda)}{nq^2}
$
and the conditions in \Cref{eq:two-dimensional-patch-conditions} hold, then
\begin{equation}
\label{eq:two-dimensional-patch-lower}
    \mathfrak{R}_n(R,\mu,q)
    \ge
    c\frac{\sinh^2(\lambda)}{nq^2}.
\end{equation}
\end{lemma}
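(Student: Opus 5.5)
The plan is to specialize the two-point construction of \Cref{subsubsec:large-R-two-point} to $m=1$, so that $r=0$, and to recheck by hand the one ingredient that used $m\ge5$: the equality of second moments in \Cref{lem:two-point-validity}(c). The likelihood-ratio calculations in \Cref{lem:two-point-validity}(b) and \Cref{lem:observed-likelihood-ratio-exact} hold verbatim for $m=1$, as noted at the start of the proof of \Cref{lem:two-point-divergence}.

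\emph{Validity and the norm conditions.} For $m=1$ the majority function is $f(t)=t$, so $g_{\{1\}}=1$ and $J(t)=t/\rho$ with $\rho=\tanh(\lambda)$. Hence $\|J\|_\infty=\coth(\lambda)$. The condition $\alpha\le\tanh(\lambda)/2$ is then exactly $\alpha\|J\|_\infty\le\nicefrac12$, and \Cref{lem:two-point-validity}(b) shows that $P_0$ and $P_1$ are well-specified logistic models with parameters $\theta_0$ and $\theta_1$. The remaining claims are direct:
\begin{itemize}
\item $\|X\|_2=1$ almost surely, since $X=(A/\sqrt2,B/\sqrt2)$ with $A,B\in\{-1,1\}$.
\item $\|\theta_0\|_2^2=8\lambda^2$ and $\|\theta_1\|_2^2=8(\lambda^2+\alpha^2)\le R^2$, so both parameters lie in $\Theta_R$.
\item $\|\theta_1-\theta_0\|_2^2=8\alpha^2$, by \Cref{eq:two-point-separation}.
\end{itemize}

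\emph{Curvature: the step I expect to be the main obstacle.} With $m=1$ the cancellation $\E_0[T_jJ(T)]=\E_0[T_j]$ fails, so the two covariate laws have different second moments and must be computed separately. Under $P_0$, $A$ and $B$ are independent Rademachers with $\E_0[AB]=0$, so $\E_0[XX^\top]=\tfrac12 I_2\succeq\mu I_2$ because $\mu\le\nicefrac13$.

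Under $P_1$, I will reuse the computation in \Cref{eq:second-moment-AiB-alt-1} with $L_\alpha$ from \Cref{lem:two-point-validity}(b) and $\E_0[TJ(T)]=\E_0[T^2]/\rho=1/\rho$. This gives
\[
\E_1[AB]=\cosh(\alpha)\sinh(\alpha)\bigl(\rho-\rho^{-1}\bigr)=-\frac{\sinh(2\alpha)}{2}\cdot\frac{1-\rho^2}{\rho}=-\frac{\sinh(2\alpha)}{\sinh(2\lambda)}=:\kappa .
\]
Since $A^2=B^2=1$ almost surely, $\E_1[XX^\top]=\tfrac12\begin{bmatrix}1&\kappa\\ \kappa&1\end{bmatrix}$. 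Its smallest eigenvalue is $(1-|\kappa|)/2$, and this is at least $\mu$ exactly when $\sinh(2\alpha)/\sinh(2\lambda)\le1-2\mu$, which is the third assumed condition.

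\emph{Divergence and the lower bound.} I will apply the exact identity \Cref{eq:exact-observed-chi-square}:
\[
\chi^2(Q_1,Q_0)=\bigl\{q\sinh^2(\alpha)\cosh(2\alpha)+(1-q)\sinh^4(\alpha)\bigr\}\,\mathcal W_{1,q}.
\]
By \Cref{lem:conditional-parseval} with $m=1$ and $r=0$, only the $k=1$ term survives, so $\mathcal W_{1,q}=t=q/\sinh^2(\lambda)$. Next, $\alpha\le\nicefrac12$ (because $\tanh(\lambda)<1$) and $\alpha^2\le q$ together bound the bracketed factor by $Cq\alpha^2$. This yields $\chi^2(Q_1,Q_0)\le Cq^2\alpha^2/\sinh^2(\lambda)$.

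Tensorization then gives $\chi^2(Q_1^{\otimes n},Q_0^{\otimes n})\le\exp\{Cnq^2\alpha^2/\sinh^2(\lambda)\}-1$. Choosing $\alpha^2=\widetilde c\,\sinh^2(\lambda)/(nq^2)$ with $\widetilde c\le\log(2)/C$ makes this at most $1$. Since $\TV\le\tfrac12\sqrt{\chi^2}$, the total variation is then at most $\tfrac12$. Le Cam's two-point method, with $\ell_2^2$ separation $8\alpha^2$, gives $\mathfrak R_n(R,\mu,q)\gtrsim\alpha^2$, which is \Cref{eq:two-dimensional-patch-lower}.
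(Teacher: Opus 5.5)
Your proposal is correct and follows the paper's proof essentially step for step: $J(T)=T/\rho$, a direct computation of $\E_1[AB]=-\sinh(2\alpha)/\sinh(2\lambda)$ in place of the $m\ge5$ second-moment cancellation, and $\mathcal{W}_{1,q}=q/\sinh^2(\lambda)$ inserted into the exact $\chi^2$ identity, followed by tensorization and Le Cam's two-point method. One cosmetic remark: you do not need $\mu\le 1/3$ to get $\E_0[XX^\top]=\tfrac12 I_2\succeq\mu I_2$, because the third condition in the lemma already forces $1-2\mu>0$.
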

\end{tcolorbox}

\begin{proof}
When $m=1$, the majority function is $f(t)=t$, so $g_{\{1\}}=1$ and hence $J(T)=\frac{T}{\rho}$.

The null covariance follows from $X=(A,B)^\top/\sqrt2$, together with
$A^2=B^2=1$ and $\mathbb{E}_0[AB]=0$. Under the alternative,
$AB=TV$ and
\[
    L_\alpha
    =
    e^{\alpha V}
    \left\{
        \cosh(\alpha) - V\sinh(\alpha) \cdot \frac{T}{\rho}
    \right\}.
\]
Since $T\indep V$ under $P_0$, $\mathbb{E}_0[T]=\rho$, $T^2=1$, and
\[
    \mathbb{E}_0[e^{\alpha V}]=\cosh(\alpha),
    \qquad\text{and}\qquad
    \mathbb{E}_0[Ve^{\alpha V}]=\sinh(\alpha),
\]
we obtain
\begin{align*}
    \mathbb{E}_1[AB]
    =\mathbb{E}_0[L_\alpha TV]
    &=
    \cosh(\alpha) \cdot \mathbb{E}_0[T] \cdot \mathbb{E}_0[Ve^{\alpha V}]
    -
    \frac{\sinh(\alpha)}{\rho} \cdot \mathbb{E}_0[T^2] \cdot \mathbb{E}_0[e^{\alpha V}] \\
    &=
    \cosh(\alpha)\sinh(\alpha)
    \left(\rho-\frac{1}{\rho}\right) \\
    &=
    \frac{1}{2}\sinh(2\alpha) \cdot 
    \left(\tanh(\lambda)-\coth(\lambda)\right) \\ 
    &=
    -\frac{1}{2}\sinh(2\alpha) \cdot \frac{2}{\sinh(2\lambda)} 
    =
    -\frac{\sinh(2\alpha)}{\sinh(2\lambda)}.
\end{align*}
Since $A^2=B^2=1$, the diagonal entries remain equal to $1/2$, and hence
\[
    \mathbb{E}_1[XX^\top]
    =
    \frac{1}{2}
    \begin{bmatrix}
        1 & -\mfrac{\sinh(2\alpha)}{\sinh(2\lambda)} \\ 
        -\mfrac{\sinh(2\alpha)}{\sinh(2\lambda)} & 1
    \end{bmatrix}
    \succeq
    \frac{1}{2}\left\{1-\frac{\sinh(2\alpha)}{\sinh(2\lambda)}\right\} I_2.
\]

The conditions $\alpha\le\rho/2$ and $8(\lambda^2+\alpha^2)\le R^2$ ensure, respectively, that $\alpha\|J\|_\infty\le1/2$ and that both parameters belong to $\Theta_R$. The assumption
\[
    \frac{\sinh(2\alpha)}{\sinh(2\lambda)}\le1-2\mu
\]
yields $\mathbb{E}_1[XX^\top]\succeq\mu I_2$, while $\mathbb{E}_0[XX^\top]=I_2/2\succeq\mu I_2$.

Also, since \(J(T)=T/\rho\), a direct calculation gives
\[
\mathcal{W}_{1,q}
= q\,\mathbb{E}_0\left[\left(\frac{T}{\rho}-1\right)^2\right]
= \frac{q(1-\rho^2)}{\rho^2}
= \frac{q}{\sinh^2(\lambda)}.
\]

Finally, note that whenever $0 < \alpha^2 \le q$, \Cref{eq:exact-observed-chi-square} and \Cref{lem:two-point-divergence} jointly imply that there exists a sufficiently large universal constant $C' > 0$ such that
\[
    \chi^2(Q_1,Q_0)
    \le C q\alpha^2\mathcal{W}_{1,q}
    = C'\frac{q^2\alpha^2}{\sinh^2(\lambda)}.
\]
Thus, with $\alpha^2=\frac{c'}{C'} \frac{\sinh^2(\lambda)}{nq^2}$, tensorization yields
\[
    \chi^2\left(Q_1^{\otimes n},Q_0^{\otimes n}\right)\le\exp(c')-1.
\]
Choosing $c'>0$ sufficiently small makes the corresponding total variation distance at most $1/2$. Since $\|\theta_1-\theta_0\|_2^2=8\alpha^2$, a standard application of Le Cam's two-point inequality (see, e.g., \citet{Wainwright2019highdim,SamworthShah2026modern}) gives the claimed lower bound.
\end{proof}

\begin{tcolorbox}
\begin{lemma}
\label{lem:moderate-mu-lower-bound}
Suppose that
\[
    \mu_{\mathrm{crit}}<\mu\le\mfrac{1}{3},
    \qquad
    R\mu>\sqrt{q}.
\]
If the sample-size condition $n \ge C q^{-3}$ holds for a sufficiently large universal constant $C>0$, then there exist universal constants $c_0,c_1>0$ for which
\begin{equation}
\label{eq:moderate-mu-unified-lower}
    \mathfrak{R}_n(R,\mu,q)
    \ge
    \frac{c_0}{nq\mu^2}
    \exp\left\{
        \frac{c_1}{\mu}
        \log\left(
            1+\frac{1-q}{q}\min\{R^2\mu^2,1\}
        \right)
    \right\}.
\end{equation}
\end{lemma}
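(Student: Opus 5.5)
We apply \Cref{lem:two-dimensional-patch} with the choices $h:=\min\{R\mu,1\}$, $\lambda:=h/8$ and $\alpha^2:=\widetilde c\,\sinh^2(\lambda)/(nq^2)$ already announced in the proof of \Cref{claim:lower-bound-regimes-II-III}. The plan has two parts. First, we check that the four conditions in \Cref{eq:two-dimensional-patch-conditions} hold under $n\ge Cq^{-3}$. Second, we show that the resulting bound $c\sinh^2(\lambda)/(nq^2)$ dominates the right-hand side of \Cref{eq:moderate-mu-unified-lower} after adjusting universal constants. Throughout, $\mu\in(\mu_{\mathrm{crit}},1/3]$, so $1/\mu$ and $\mu^{-2}$ lie between universal constants.

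\emph{Conditions.} Since $\sinh(\lambda)\le\sinh(1/8)\le 1$, we have $\alpha^2\le \widetilde c/(nq^2)\le \widetilde c\,q/C\le q$ once $n\ge Cq^{-3}$. For $\alpha\le\tanh(\lambda)/2$ we need $\alpha^2\lesssim\lambda^2$. Since $\sinh(\lambda)\asymp\lambda$ on $[0,1/8]$, this reduces to $\widetilde c/(nq^2)\lesssim 1$, which is immediate. The same comparison gives $\sinh(2\alpha)/\sinh(2\lambda)\lesssim\alpha/\lambda\lesssim(\widetilde c/(nq^2))^{1/2}$, and this is at most $1/3\le 1-2\mu$ for $C$ large. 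Finally, $8(\lambda^2+\alpha^2)\le R^2$: we have $8\lambda^2=h^2/8\le R^2\mu^2/8\le R^2/72$, and $8\alpha^2\lesssim\lambda^2/(nq^2)$ is smaller still. Hence \Cref{lem:two-dimensional-patch} yields $\mathfrak R_n(R,\mu,q)\ge c\,\sinh^2(\lambda)/(nq^2)\ge c'\,h^2/(nq^2)$.

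\emph{Matching the target.} We want $h^2/(nq^2)\gtrsim (nq\mu^2)^{-1}\exp\{(c_1/\mu)\log(1+\gamma_q h^2)\}$, where $\gamma_q=(1-q)/q$. Because $\mu\asymp1$, it suffices to show $h^2/q\gtrsim (1+\gamma_q h^2)^{c_1/\mu}$. Choose $c_1:=\mu_{\mathrm{crit}}$, so that the exponent $c_1/\mu$ is at most $1$. Then $(1+\gamma_qh^2)^{c_1/\mu}\le 1+\gamma_qh^2\le 1+h^2/q$. The regime assumption $R\mu>\sqrt q$ gives $h\ge\sqrt q$, hence $h^2/q\ge1$, and therefore $1+h^2/q\le 2h^2/q$. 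Combining these, the target bound holds with $c_0$ absorbing the factor $2$, the constant $\mu^2\le1/9$, and $c'$.

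\emph{Main obstacle.} The only delicate point is the interplay between the exponent $c_1/\mu$ and the polynomial gain $h^2/q$. The two-point construction produces only a polynomial factor in $1/q$, not an exponential one. So the argument works only because $\mu$ is bounded below here, which lets us take $c_1\le\mu_{\mathrm{crit}}$ and make the exponent at most one. We also need to confirm that the constant $c_1$ chosen in \Cref{claim:lower-bound-regimes-II-III} can be shrunk to $\min\{c_1,\mu_{\mathrm{crit}}\}$ without harming Case 1. This is harmless: decreasing $c_1$ only weakens a lower bound whose base $1+\gamma_q\min\{R^2\mu^2,1\}$ is at least $1$.
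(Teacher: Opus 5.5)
Your proposal is correct and follows the paper's proof essentially step for step: the same choices of $h$, $\lambda=h/8$ and $\alpha^2\propto\sinh^2(\lambda)/(nq^2)$, the same verification of the four conditions of \Cref{lem:two-dimensional-patch} from $n\ge Cq^{-3}$, and the same choice $c_1\le\mu_{\mathrm{crit}}$ so that the exponent $c_1/\mu$ is at most one. The only difference is that the paper also invokes \Cref{prop:baseline-assouad} to control the $1/(nq\mu^2)$ term, which you rightly observe is redundant because $h^2\ge q$ gives $1+\gamma_qh^2\le 2h^2/q$; one small correction is that absorbing $\mu^{-2}$ into $c_0$ uses $\mu>\mu_{\mathrm{crit}}$ (so $\mu^{-2}<\mu_{\mathrm{crit}}^{-2}$), not $\mu^2\le 1/9$.
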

\end{tcolorbox}

\begin{proof}
Write $h:=\min\{R\mu,1\}$. Set
\[
    \lambda:=\frac{h}{8},
    \quad~\text{and}\quad~
    \alpha^2:=c'' \frac{\sinh^2(\lambda)}{nq^2}.
\]
Since $R\mu>\sqrt{q}$, we have $\sqrt{q}\le h\le1$ and $0<\lambda\le1/8$. Also, the sample-size condition $n \ge Cq^{-3}$ implies that $1/(nq^2)\le q/C\le1/C$. In particular,
\[
    \alpha^2\le\frac{c'' \sinh^2(1/8)}{C}\,q\le q,
    \quad\;\text{and}\quad\;
    \frac{\alpha}{\tanh(\lambda)}
    =\cosh(\lambda)\sqrt{\frac{c''}{nq^2}}
    \le\cosh(1/8)\sqrt{\frac{c''}{C}}\le\frac{1}{2},
\]
after increasing $C$ if necessary.

We next verify the curvature condition. Since $\alpha^2\le q\le1$, we have $\alpha\le1$. Therefore, using $\sinh(2\alpha)\le2\alpha\cosh(2\alpha)\le2\alpha\cosh(2)$ and $\sinh(2\lambda)\ge2\lambda$,
\[
    \frac{\sinh(2\alpha)}{\sinh(2\lambda)}
    \le
    \cosh(2) \cdot \frac{\alpha}{\lambda}
    \le
    C\sqrt{\frac{c''}{nq^2}}
    \le
    C\sqrt{\frac{c''}{C}}
    \le\frac{1}{3} \le 1-2\mu,
\]
for $C$ sufficiently large. Finally, $h\le R\mu$ and $\mu\le1/3$ give $8\lambda^2=h^2/8\le R^2/72$. Moreover,
\[
    \alpha^2
    \le
    \frac{c''\cosh^2(1/8)}{64C}\,h^2,
\]
so, after increasing $C$ once more, $8(\lambda^2+\alpha^2)\le R^2$. Thus all the conditions of \Cref{lem:two-dimensional-patch} are satisfied.
Since $\sinh(\lambda)\ge\lambda=h/8$, \Cref{eq:two-dimensional-patch-lower} yields
\begin{equation}
\label{eq:moderate-mu-two-dimensional-bound}
    \mathfrak{R}_n(R,\mu,q)
    \ge
    c\frac{h^2}{nq^2}.
\end{equation}

We also use the baseline lower bound from \Cref{prop:baseline-assouad}. Since $R\mu>\sqrt{q}$, we have $R^2>q/\mu^2$, and therefore
\[
    \frac{1}{q\mu^2R^2}<\frac{1}{q^2}.
\]
Thus, under the minimum sample size assumption $n\ge C q^{-3}$, we have $1/(nq\mu^2)\le R^2$. Since $\exp(R/4)\ge1$, \Cref{prop:baseline-assouad} consequently yields
\begin{equation}
\label{eq:moderate-mu-baseline-bound}
    \mathfrak{R}_n(R,\mu,q)\ge\frac{c}{nq\mu^2}.
\end{equation}

Finally, choose a sufficiently small universal constant $c_1\in(0,\mu_{\mathrm{crit}})$, and define $\eta:=c_1/\mu\in(0,1)$ and $y:=(1-q)h^2/q$. Since $s\mapsto s^\eta$ is concave on $[0,\infty)$,
\[
    \frac{1}{nq\mu^2}(1+y)^\eta
    \le
    \frac{1}{nq\mu^2}(1+y)
    \le
    \frac{1}{nq\mu^2}
    +
    \frac{h^2}{nq^2\mu_{\mathrm{crit}}^2}.
\]
Combining \Cref{eq:moderate-mu-two-dimensional-bound,eq:moderate-mu-baseline-bound}, and adjusting the universal constant gives
\begin{equation*}
    \mathfrak{R}_n(R,\mu,q)
    \ge
    \frac{c_0}{nq\mu^2}
    \exp\biggl\{
        \frac{c_1}{\mu}
        \log\biggl(
            1+\frac{1-q}{q}\min\{R^2\mu^2,1\}
        \biggr)
    \biggr\},
\end{equation*}
where we used $h^2=\min\{R^2\mu^2,1\}$. This completes the proof.
\end{proof}

\end{document}